\theoremstyle{plain}
\newtheorem{axiom}{Axiom}
\newtheorem{claim}[axiom]{Claim}
\newtheorem{theorem}{Theorem}[section]
\newtheorem{lemma}[theorem]{Lemma}
\newtheorem{corollary}[theorem]{Corollary}
\newtheorem{proposition}[theorem]{Proposition}
\newtheorem{example}{Example}
\theoremstyle{remark}
\newtheorem{definition}[theorem]{Definition}
\newtheorem{assumption}{Assumption}
\newtheorem{remark}[theorem]{Remark}
\newcommand*{\bm}{\boldsymbol}
\newcommand{\bbC}{\mathbb{C}}
\newcommand{\bbP}{\mathbb{P}}
\newcommand{\bbR}{\mathbb{R}}
\newcommand{\bbS}{\mathbb{S}}
\newcommand{\calB}{\mathcal{B}}
\newcommand{\calD}{\mathcal{D}}
\newcommand{\calH}{\mathcal{H}}
\newcommand{\calN}{\mathcal{N}}
\newcommand{\calX}{\mathcal{X}}
\newcommand{\reg}{\varphi_\lambda}
\newcommand{\rem}{\psi_\lambda}
\newcommand{\R}{\mathbb{R}}
\newcommand{\mr}{\mathrm}
\newcommand{\hf}{\frac{1}{2}}
\providecommand{\ang}[1]{\left\langle{#1}\right\rangle}
\newcommand{\fstar}{f_{\rho}^*}
\newcommand{\flam}{f_{\lambda}}
\newcommand{\gtl}{\tilde{g}_X}
\newcommand*{\krr}{\mathtt{KRR}}
\newcommand*{\gf}{\mathtt{GF}}
\title{On the Saturation Effects of Spectral Algorithms \\ in Large Dimensions}
\author{%
  Weihao Lu \\
  Department of Statistics and Data Science\\
  Tsinghua University\\
  Beijing, China 100084 \\
  \texttt{luwh19@mails.tsinghua.edu.cn} \\
  \And
  Haobo Zhang \\
  Department of Statistics and Data Science\\
  Tsinghua University\\
  Beijing, China 100084 \\
  \texttt{zhang-hb21@mails.tsinghua.edu.cn} \\
  \And
  Yicheng Li \\
  Department of Statistics and Data Science\\
  Tsinghua University\\
  Beijing, China 100084 \\
  \texttt{liyc22@mails.tsinghua.edu.cn} \\
  \And
  Qian Lin\thanks{Corresponding author.} \\
  Department of Statistics and Data Science\\
  Tsinghua University\\
  Beijing, China 100084 \\
  \texttt{qianlin@tsinghua.edu.cn} \\
}
\begin{document}

\maketitle

\begin{abstract}
    The saturation effects, which originally refer to the fact that kernel ridge regression (KRR) fails to achieve the information-theoretical lower bound when the regression function is over-smooth, have been observed for almost 20 years and were rigorously proved recently for kernel ridge regression and some other spectral algorithms over a fixed dimensional domain. 
    The main focus of this paper is to explore the saturation effects for a large class of spectral algorithms (including the KRR, gradient descent, etc.) in large dimensional settings where $n \asymp d^{\gamma}$. 
    More precisely, we first propose an improved minimax lower bound for the kernel regression problem in large dimensional settings and show that the gradient flow with early stopping strategy will result in an estimator achieving this lower bound (up to a logarithmic factor). Similar to the results in KRR, we can further  determine the exact convergence rates (both upper and lower bounds) of a large class of (optimal tuned) spectral algorithms with different qualification $\tau$'s. In particular, we find that these exact rate curves (varying along $\gamma$) exhibit the periodic plateau behavior and the polynomial approximation barrier.
    Consequently, we can fully depict the saturation effects of the spectral algorithms and reveal a new phenomenon in large dimensional settings (i.e.,  the saturation effect occurs in large dimensional setting as long as the source condition $s>\tau$ while it occurs in fixed dimensional setting as long as $s>2\tau$).

\end{abstract}

\section{Introduction}

Let's assume we have $n$ i.i.d. samples $(x_{i}, y_{i})$ from a joint distribution supported on $\mathbb{R}^{d}\times \mathbb{R}$. 
The regression problem, one of the most fundamental problems in statistics,  aims to find a function $\hat{f}$ based on these samples such that the {\it excess risk}, $\|\hat{f} - f_{\star}\|_{L^2}^2
=\mathbb{E}_{x}[(f_{\star}(x)-\hat{f}(x))^{2}]$,
is small, where $f_{\star}(x)=\mathbb{E}[Y\vert x]$ is the {\it regression function}.
Many non-parametric regression methods are proposed to solve the regression problem by assuming that $f_{\star}$ falls into certain function classes, including polynomial splines \cite{Stone_Polynomial_1994}, local polynomials \cite{cleveland1979robust, stone1977consistent}, the spectral algorithms \cite{Caponnetto2006OptimalRF, caponnetto2007optimal, caponnetto2010cross}, etc.

Spectral algorithms, as a classical topic, have been studied since the 1990s. 
Early works treated certain types of spectral algorithms in their 
theoretical analysis (\cite{Caponnetto2006OptimalRF, caponnetto2007optimal, raskutti2014early, Lin_Optimal_2020}).
These works often consider $d$ as a fixed constant and impose the polynomial eigenvalue decay assumption under a kernel (i.e.,  there exist constants $0<\mathfrak{c} \leq \mathfrak{C}<\infty$, such that the eigenvalues of the kernel satisfy $\mathfrak{c} j^{-\beta} \leq \lambda_j \leq \mathfrak{C} j^{-\beta}$, $j \geq 1$ for certain $\beta>1$ depending on the fixed $d$). 
They further assume that $f_{\star}$ belongs to the reproducing kernel Hilbert space (RKHS) $\calH$ associated with the kernel.
Under the above assumptions, they then showed that the minimax rate of the excess risk of regression over the corresponding RKHS is lower bounded by $n^{-{\beta}/({\beta+1})}$ and that some (regularized) spectral algorithms, e.g., the kernel ridge regression (KRR) and the kernel gradient flow, can produce estimators achieving this minimax optimal rate.

However, subsequent studies have revealed that when higher regularity (or smoothness) of $f_{\star}$ is assumed, KRR fails to achieve the information-theoretical lower bound on the excess risk, while kernel gradient flow can do so.
Specifically, let's assume that $f_{\star}$ belongs to the {\it interpolation space} $[\calH]^{s}$ of the RKHS $\calH$ with $s > 0$ (see, e.g., \cite{steinwart2009optimal, dieuleveut2017harder, dicker2017kernel, pillaud2018statistical, Lin_Optimal_2020, fischer2020_SobolevNorm, celisse2021analyzing}). 
It is then shown that the information-theoretical lower bound on the excess risk is $n^{-{s\beta}/({s\beta+1})}$. 
When $0 < s \leq 2$, \cite{caponnetto2007optimal, Yao2007OnES, Lin_Optimal_2020, zhang2023optimality} have already shown that the upper bound of the excess risks of both KRR and the kernel gradient flow is $n^{-{s\beta}/({s\beta+1})}$, and hence they are minimax optimal.
On the contrary, when $s > 2$, \cite{Yao2007OnES, Lin_Optimal_2020} showed that the upper bound of the excess risks of kernel gradient flow is $n^{-{s\beta}/({s\beta+1})}$
while the best upper bound of the excess risks of KRR is $n^{-{2\beta}/({2\beta+1})}$ (\cite{caponnetto2007optimal}). \cite{bauer2007_RegularizationAlgorithms, Gerfo_spectral_2008, dicker2017kernel} conjectured that the convergence rate of KRR is bounded below by $n^{-{2\beta}/({2\beta+1})}$ and \cite{li2022saturation} rigorously proved it.
The above phenomenon is often referred to as the {\it saturation effect} of KRR:
\begin{center}
    {\it KRR is inferior to certain spectral algorithms, such as kernel gradient flow, when $s>2$.}
\end{center}

In recent years, neural network methods have gained tremendous success in many large-dimensional problems, such as computer vision \cite{he2016deep, krizhevsky2017imagenet} and natural language processing \cite{Devlin_BERT_2019}.
Several groups of researchers tried to explain the superior performance of neural networks on large-dimensional data from the aspects of "lazy regime" (\cite{arora2019fine, Du_gradient_2019_b, Du_gradient_2019_a, li2018learning}). 
They noticed that, when the width of a neural network is sufficiently large, its parameters/weights stay in a small neighborhood of their initial position during the training process.
Later, \cite{Jacot_NTK_2018, Arora_on_2019, Hu_Regularization_2021, Namjoon_Non_2022, jianfa2022generalization, li2023statistical} proved that the time-varying neural network kernel (NNK) converges (uniformly) to a time-invariant neural tangent kernel (NTK) as the width of the neural network goes to infinity, and thus the excess risk of kernel gradient flow with NTK converges (uniformly) to the excess risk of neural networks in the `lazy regime'.

Inspired by the concepts of the "lazy regime" and the uniform convergence of excess risk, 
the machine learning community has experienced a renewed surge of interest in large-dimensional spectral algorithms.
The earliest works focused on the consistency of two specific types of spectral algorithms: KRR and kernel interpolation (\cite{Liang_Just_2019, liang2020multiple, Ghorbani_When_2021, ghorbani2021linearized, mei2021learning, mei2022generalization, misiakiewicz_learning_2021, aerni2023strong, barzilai2023generalization}).
In comparison, results on large-dimensional kernel gradient flow were somewhat scarce, and these results largely mirrored those associated with KRR (e.g., \cite{Ghosh_three_2021}).
Recently, \cite{lu2023optimal} proved that large-dimensional kernel gradient flow is minimax optimal when $s=1$.
Then, \cite{zhang2024optimal} provided upper and lower bounds on the convergence rate on the excess risk of KRR for any $s>0$.
Surprisingly, they discovered that for $s>1$, the convergence rate of KRR did not match the lower bound on the minimax rate.
Unfortunately, they didn't prove that certain spectral algorithms can reach the lower bound on the minimax rate they provided, and hence they didn't rigorously prove that the saturation effect of KRR occurs in large dimensions. Instead, \cite{zhang2024optimal} only conjectured that certain spectral algorithms (e.g., kernel gradient flow) can provide minimax optimal estimators after their main results.

If \cite{zhang2024optimal}'s conjecture is true, 
then we can safely conclude that: when the regression function $f_{\star}$ is smooth enough, KRR is inferior to kernel gradient flow in large dimensions as well.
Consequently, previous results on large-dimensional KRR may not be directly extendable to large-dimensional neural networks, even if the neural networks are in the `lazy regime'. 
The main focus of this paper is to prove this conjecture by showing that kernel gradient flow is minimax optimal in large dimensions.

\subsection{Related work}

\paragraph{Saturation effects of fixed-dimensional spectral algorithms.} 
When the dimension $d$ of the data is fixed, the saturation effect of KRR has been conjectured for decades and is rigorously proved in the recent work \cite{li2022saturation}. 
Suppose $f_{\star} \in [\calH]^{s}$ with $s>2$. 
It is shown that: (i) the minimax optimal rate is $n^{-{s\beta}/({s\beta+1})}$ (\cite{rastogi2017optimal, Yao2007OnES, Lin_Optimal_2020}); and (ii) the convergence rate on the excess risk of KRR is $n^{-{2\beta}/({2\beta+1})}$ (\cite{li2022saturation}).
More recently, \cite{li2024generalization} determined the exact generalization error curves of a class of analytic spectral algorithms,
which allowed them to further show the saturation effect of spectral algorithms with finite qualification $\tau$ (see, e.g., Appendix \ref{appen:filter_func}): suppose $f_{\star} \in [\calH]^{s}$ with $s>2\tau$, then the convergence rate on the excess risk of the above spectral algorithms is $n^{-{2\tau\beta}/({2\tau\beta+1})}$.

\paragraph{New phenomena in large-dimensional spectral algorithms.} In the large-dimensional setting where $n \asymp d^{\gamma}$ with ${\gamma}>0$,
new phenomena exhibited in spectral algorithms are popular topics in recent machine-learning research. 
A line of work focused on the polynomial approximation barrier phenomenon (e.g., \cite{ghorbani2021linearized, Donhauser_how_2021, mei2022generalization, xiao2022precise, misiakiewicz_spectrum_2022, hu2022sharp}). They found that, for the square-integrable regression function, KRR and kernel gradient flow are consistent if and only if the regression function is a polynomial with a low degree.
Another line of work considered the benign overfitting of kernel interpolation (i.e., kernel interpolation can generalize) (e.g., \cite{Liang_Just_2019,liang2020multiple,aerni2023strong, barzilai2023generalization, zhang2024phase}).
Moreover, two recent work (\cite{lu2023optimal, 
 zhang2024optimal}) discussed two new phenomena exhibited in large-dimensional KRR and kernel gradient flow: the multiple descent behavior and the periodic plateau behavior. The multiple descent behavior refers to the phenomenon that the curve of the convergence rate ( with respect to $n$ ) of the optimal excess risk is non-monotone and has several isolated peaks and valleys; while the periodic plateau behavior refers to the phenomenon that the curve of the convergence rate ( with respect to $d$ ) of the optimal excess risk has constant values when $\gamma$ is within certain intervals.
Finally, \cite{zhang2024optimal} conjectured that the saturation effect of KRR occurs in large dimensions.
    The above works imply that these phenomena occur in many spectral algorithms in large dimensions, hence encouraging us to provide a unified explanation of these new phenomena.

\subsection{Our contributions}

In this paper, we focus on the large-dimensional spectral algorithms with inner product kernels, and we assume that the regression function falls into an interpolation space $[\calH]^{s}$ with $s>0$. We state our main results as follows:
\begin{theorem}[Restate Theorem 4.1 and 4.2, non-rigorous]
Let $s>0$, $\tau \geq 1$, and $\gamma>0$ be fixed real numbers. 
Denote $p$ as the integer satisfying $\gamma \in [p(s+1), (p+1)(s+1))$. Then under certain conditions, the excess risk of large-dimensional spectral algorithm with qualification $\tau$ satisfies
    \begin{align*}
    \mathbb{E} \left( \left\|\hat{f}_{\lambda^{\star}}  - f_{\star} \right\|^2_{L^2} \;\Big|\; X \right)
   =
   \left\{\begin{matrix}
       & \Theta_{\mathbb{P}}\left(
        d^{-\min\left\{
        \gamma-p, s(p+1)
        \right\}
        }
        \right)
             \cdot \text{poly}\left(\ln(d)\right), \quad  s \leq \tau \\
        & \Theta_{\mathbb{P}}\left(
        d^{-\min\left\{
        \gamma-p, \frac{\tau(\gamma-p+1)+p\tilde{s}}{\tau+1}, \tilde{s}(p+1)
        \right\}
        }
        \right)
             \cdot \text{poly}\left(\ln(d)\right), \quad  s > \tau,\\
\end{matrix}\right.
    \end{align*}
    where $\tilde{s} = \min\{s, 2\tau\}$.
\end{theorem}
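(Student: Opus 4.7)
The plan is to carry out a level-wise bias--variance analysis for filter-function estimators along the spectral decomposition of the inner-product kernel, and then to optimize over the regularization scale $\lambda$. For an inner-product kernel on the sphere, the Mercer eigenfunctions are spherical harmonics grouped by degree $\ell \geq 0$ with multiplicity $N_\ell \asymp d^\ell$ and eigenvalue $\mu_\ell \asymp d^{-\ell}$. Writing the estimator as $\hat f_\lambda = \varphi_\lambda(T_X) g_X$ with residual filter $\psi_\lambda(\mu) = 1 - \mu \varphi_\lambda(\mu)$, the conditional excess risk decomposes (up to standard cross terms) into a bias $\|\psi_\lambda(T_X) f_\star\|_{L^2}^2$ and a variance $\frac{\sigma^2}{n} \mathrm{tr}((T_X \varphi_\lambda(T_X))^2)$. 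The integer $p$ in the statement is precisely the harmonic degree for which $n \asymp d^\gamma$ is just large enough to control the mass of $f_\star$ up to degree $p$ under the source condition, so I would parametrize $\lambda \asymp d^{-q}$ with $q \in [p, p+1]$ and track every quantity as a power of $d$.

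\emph{Upper bound.} On the low-degree subspace $\bigoplus_{\ell \leq p} \mathcal H_\ell$ (of dimension $\asymp d^p \ll n$), $T_X$ concentrates around the population operator $T$ in operator norm, and I would extend the resolvent concentration estimates of \cite{lu2023optimal, zhang2024optimal} from $(T_X + \lambda)^{-1}$ to a general qualification-$\tau$ filter $\varphi_\lambda$ via functional calculus. On the complementary high-degree subspace, $T_X$ is essentially a scaled identity plus a Marchenko--Pastur-type perturbation of order $d^{-(p+1)}$, so $\varphi_\lambda(T_X)$ and $\psi_\lambda(T_X)$ act almost diagonally and the relevant trace and norm quantities collapse to sums over harmonic degrees. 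For the bias, combining the qualification identity $|\psi_\lambda(\mu) \mu^\tau| \lesssim \lambda^\tau$ with the source constraint $\sum_\ell \mu_\ell^{-s} \|\Pi_\ell f_\star\|^2 \lesssim 1$ yields a worst-case bound dominated by either the last smoothed level $\ell = p$ (with an exponent depending on $\tau$, $s$, and $q - p$) or by the first unsmoothed level $\ell = p+1$, which saturates at $d^{-\tilde s(p+1)}$ with $\tilde s = \min(s, 2\tau)$. The variance splits as a low-degree saturation $d^{p-\gamma}$ plus a high-degree term $d^{2q-p-1-\gamma}$ growing with $q$. Optimizing $\max(\text{bias}, \text{variance})$ over $q \in [p, p+1]$ then produces the three-term minimum in the theorem: the variance-edge term $\gamma - p$; the balanced term $[\tau(\gamma - p + 1) + p \tilde s]/(\tau + 1)$, active when the minimizing $q$ lies past $p + 1/2$ (which requires $s > \tau$); and the approximation-edge term $\tilde s(p+1)$. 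For $s \leq \tau$ the balanced term is dominated by the other two and the expression collapses to $\min(\gamma - p, s(p+1))$.

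\emph{Lower bound.} For the matching $\Omega_\mathbb{P}$ direction, in each subregime I would exhibit a specific $f_\star$ in the $[\mathcal H]^s$-ball whose conditional excess risk is bounded below by the stated rate. The variance-edge bound $d^{-(\gamma - p)}$ follows by anti-concentration of the noise-driven component, evaluated on a generic direction in the degree-$p$ harmonic space. The approximation-edge bound $d^{-\tilde s(p+1)}$ follows by taking $f_\star$ concentrated on degree $p+1$ with source norm saturating the constraint, so that $\psi_\lambda(T)$ acts as the identity on the support and the residual is exactly the source mass. The balanced middle bound, active only for $s > \tau$, follows by a two-level construction spreading $f_\star$ between degrees $p$ and $p+1$ at the critical ratio that realizes the bias--variance balance, analogous to the fixed-dimensional lower-bound construction of \cite{li2024generalization} adapted to the block spectrum of the inner-product kernel.

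\emph{Main obstacle.} The chief difficulty will be the tight block-level control of $\varphi_\lambda(T_X)$ and $\psi_\lambda(T_X)$ for a general qualification-$\tau$ filter: the existing large-dimensional machinery of \cite{lu2023optimal, zhang2024optimal} is built around the ridge resolvent $(T_X + \lambda)^{-1}$, and porting its concentration and operator-perturbation estimates uniformly in $\lambda$ to arbitrary filters requires either a functional-calculus reduction through the resolvent representation or a direct perturbative argument on each spectral block. A second, more conceptual, subtlety is that matching upper and lower bounds across all three subregimes produces the new large-dimensional saturation threshold at $s = \tau$ rather than $s = 2\tau$, because the balanced middle term of the optimal rate is active only when the minimizing $q$ exceeds $p + 1/2$, which in turn requires $s > \tau$. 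Aligning the adversarial lower-bound constructions with the correct harmonic degrees in each subregime of $\gamma$ is therefore essential for pinning down this transition sharply.
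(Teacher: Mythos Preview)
Your upper-bound plan is essentially the paper's: bias--variance decomposition, reduce the random quantities to the deterministic proxies $\mathcal{M}_{2,\varphi}(\lambda)=\sum_j \psi_\lambda(\lambda_j)^2 f_j^2$ and $\mathcal{N}_{2,\varphi}(\lambda)=\sum_j(\lambda_j\varphi_\lambda(\lambda_j))^2$, compute these for the inner-product kernel using $\mu_\ell\asymp d^{-\ell}$ and $N_\ell\asymp d^\ell$, and optimize over $\lambda\asymp d^{-q}$. You also correctly anticipate the key technical move: the paper extends the ridge-resolvent concentration to general filters precisely via the analytic functional calculus (contour integration over $\Gamma_\lambda$, the resolvent identity, and the single concentration bound $\|T_\lambda^{-1/2}(T-T_X)T_\lambda^{-1/2}\|\leq\sqrt{v}$). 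It does \emph{not} split into low/high-degree blocks or invoke Marchenko--Pastur on the high part; all concentration is global and routed through the effective-dimension quantity $\mathcal{N}_1(\lambda)$.

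Your lower-bound plan, however, is misaimed. The theorem is a pointwise $\Theta_{\mathbb{P}}$ statement for \emph{every} $f_\star$ satisfying the assumptions, not a minimax statement, so exhibiting adversarial $f_\star$ does not establish what is claimed. The paper instead proves two-sided approximations $\mathbf{Bias}^2(\lambda)=\mathcal{M}_{2,\varphi}(\lambda)+o_{\mathbb{P}}(\cdots)$ and $\mathbf{Var}(\lambda)=[1+o_{\mathbb{P}}(1)]\sigma^2\mathcal{N}_{2,\varphi}(\lambda)/n$, and then gets the $\Omega$-direction from deterministic lower bounds on these quantities. The bias lower bound is where Assumption~2(b) enters: it forces $\sum_{i\in\mathcal{I}_{d,k}}\mu_k^{-s}f_i^2\geq c_0$ for the relevant degrees $k\leq q$, which pins $\mathcal{M}_{2,\varphi}(\lambda)$ from below without any construction. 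For the saturation regime $s>\tau$, the extra ingredient is condition~(3) on the filter ($\psi_\lambda(\lambda_1)\geq \mathfrak{C}_6\lambda^\tau$ and $(z/\lambda)^{2\tau}\psi_\lambda^2(z)\geq \mathfrak{C}_7$ for $z>\lambda$), which makes the degree-$0$ through degree-$\ell$ contribution to $\mathcal{M}_{2,\varphi}$ genuinely of order $\lambda^{2\tau}d^{\ell(2\tau-s)}$ rather than merely upper-bounded by it. Your two-level adversarial construction is unnecessary and would only deliver a weaker conclusion.
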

More specifically, we list the main contributions of this paper as follows:
\begin{itemize}
    \item[(1)] 
    In Theorem \ref{thm:gradient_flow}, we show that the convergence rate on the excess risk of (optimally-tuned) kernel gradient flow in large dimensions is $\Theta_{\mathbb{P}}(d^{-\min\{
        \gamma-p, s(p+1)\}})\cdot \text{poly}(\ln(d))$, which matches the lower bound on the minimax rate given in 
        Theorem \ref{thm:modified_minimax_lower_bound} (up to a logarithmic factor).
    We find that kernel gradient flow is minimax optimal for any $s>0$ and any $\gamma>0$, and KRR is not minimax optimal for $s>1$ and for certain ranges of $\gamma$ (We provide a visual illustration in Figure \ref{figure_2}).
    Consequently, we rigorously prove that the saturation effect of KRR occurs in large dimensions.

    \item[(2)] In Theorem \ref{thm:modified_minimax_lower_bound}, we enhanced the previous minimax lower bound results given in \cite{lu2023optimal} and \cite{zhang2024optimal}. Specifically, we show that the minimax lower bound is $\Omega(d^{-\min\{
        \gamma-p, s(p+1)\}}) /  \text{poly}(\ln(d))$. 
    In comparison, the previous minimax lower bound is $\Omega(d^{-\min\{
        \gamma-p, s(p+1)\}}) / d^{\varepsilon}$ for any $\varepsilon>0$, and the additional term $d^{\varepsilon}$ changes the desired convergence rate.

    \item[(3)] In Section \ref{sec:kernel_methods}, we determine the convergence rate on the excess risk of large-dimensional spectral algorithms. 
    From our results, we find several new phenomena exhibited in spectral algorithms in large-dimensional settings.
    We provide a visual illustration of the above phenomena in Figure \ref{figure_1}: 
    i) The first phenomenon is the polynomial approximation barrier, and as shown in Figure \ref{fig_1_1}, when $s$ is close to zero, the curve of the convergence rate of spectral algorithm drops when $\gamma \approx p$ for any integer $p$ and will stay invariant for most of the other $\gamma$; 
    ii) The second one is the periodic plateau behavior, and as shown in Figure \ref{fig_1_2} and Figure \ref{fig_1_3}, when $0<s< 2\tau$ and $\gamma \in [p(s+1)+ s+(\max\{s, \tau\} - \tau)/\tau, (p+1)(s+1))$ for an integer $p \geq 0$, the convergence rate does not change when $\gamma$ varies;
    iii) The final one is the saturation effect, and as shown in Figure \ref{fig_1_3} and Figure \ref{fig_1_4}, when $s>\tau$, the convergence rate of spectral algorithm can not achieve the minimax lower bound for certain ranges of $\gamma$.
    A detailed discussion about the above three phenomena can be found in Section \ref{sec:kernel_methods}.
    
\end{itemize}

\begin{figure}[ht]
\vskip 0.05in
\centering
\subfigure[]{\includegraphics[width=0.24\columnwidth]{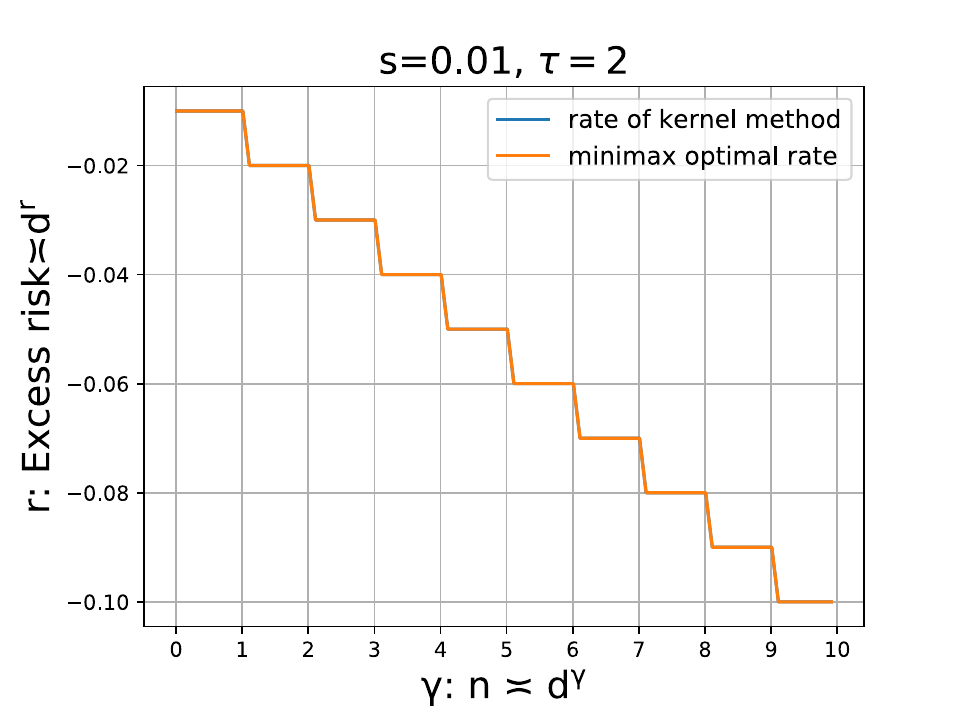}\label{fig_1_1}}
\subfigure[]{\includegraphics[width=0.24\columnwidth]{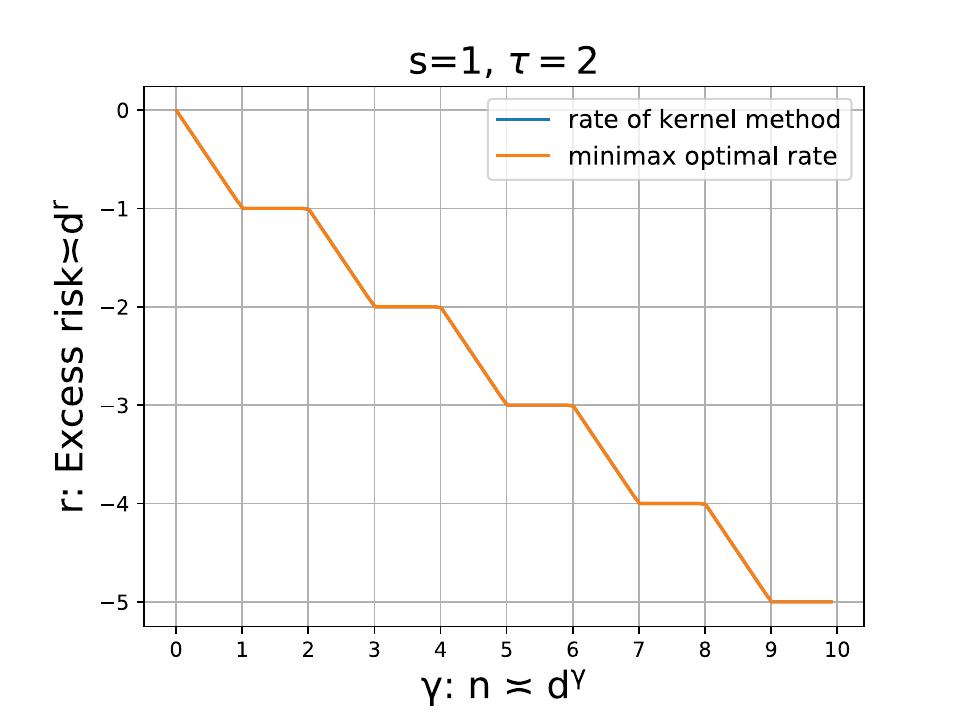}\label{fig_1_2}}
\subfigure[]{\includegraphics[width=0.24\columnwidth]{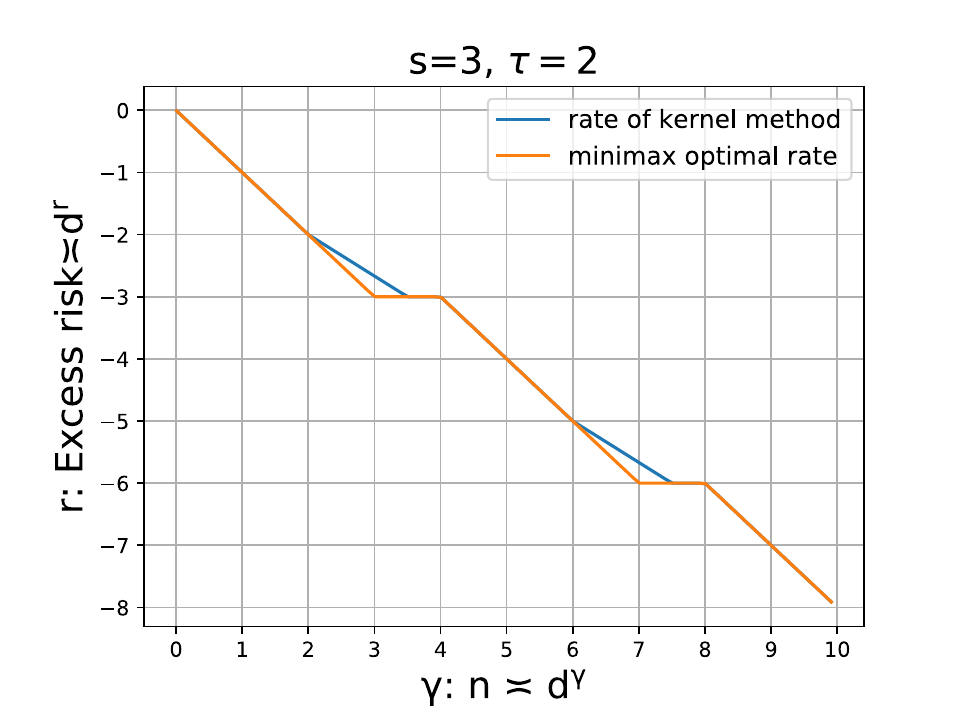}\label{fig_1_3}}
\subfigure[]{\includegraphics[width=0.24\columnwidth]{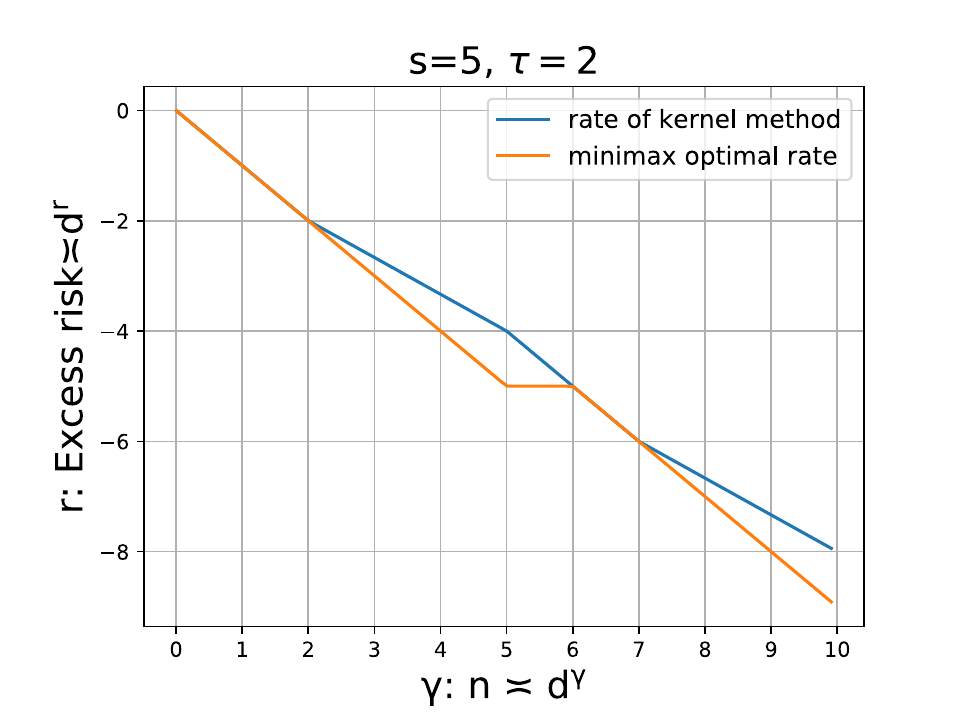}\label{fig_1_4}}

\caption{Convergence rates of spectral algorithm with qualification $\tau=2$ in Theorem \ref{thm:kernel_methods_bounds}, Theorem \ref{thm:kernel_methods_bounds_sat}, and corresponding minimax lower rates in Theorem \ref{thm:modified_minimax_lower_bound} with respect to dimension $d$. We present four graphs corresponding to four kinds of source conditions: $s = 0.01, 1, 3, 5$. The x-axis represents asymptotic scaling, $\gamma: n \asymp d^{\gamma}$; the y-axis represents the convergence rate of excess risk, $ r: \text{Excess risk} \asymp d^{r}$.              }
\label{figure_1}
\vskip 0.05in
\end{figure}

\section{Preliminaries}

Suppose that we have observed $n$ i.i.d. samples $(x_i, y_i), i \in [n]$ from the model:
\begin{equation}\label{equation:true_model}
    y=f_{\star}(x)+\epsilon,
\end{equation}
where $x_i$'s are sampled from $\rho_{\calX}$,  $\rho_{\calX}$ is the marginal distribution on $\mathcal{X}\subset \bbR^{d+1}$,
$y \in \mathcal{Y} \subset \mathbb{R}$,
$f_{\star}$ is some function defined on a compact set $\mathcal{X}$, and
\begin{displaymath}
    \mathbb{E}_{(x, y)\sim \rho} \left[ \epsilon^2 \;\Big|\; x \right] \leq \sigma^2,
    \quad \rho_{\calX}\text{-a.e. } x \in \mathcal{X},
  \end{displaymath} 
for some fixed constant $\sigma>0$, where $\rho$ is the joint distribution of $(x, y)$ on $\mathcal{X} \times \mathcal{Y}$. Denote  the $n\times1$ data vector of ${y_i}$'s  and the $n\times d$ data matrix of $x_i$'s by $Y$ and $X$ respectively.

\subsection{Kernel ridge regression and kernel gradient flow}

In this subsection, we introduce two specific spectral algorithms, kernel ridge regression and kernel gradient flow, which produce estimators of the regression function $f_{\star}$. A further discussion on general spectral algorithms will be provided in Section \ref{sec:kernel_methods}.

Throughout the paper, we denote $\calH$ as a separable RKHS on $\calX$ with respect to a continuous and positive definite kernel function $K(\cdot, \cdot): \calX \times \calX \to \mathbb{R}$ and there exists a constant $\kappa$ satisfying
$$
\max_{x \in \mathcal X} K(x, x) \leq \kappa^2.
$$

\paragraph{Kernel ridge regression}

Kernel ridge regression (KRR) constructs an estimator $\hat{f}_{\lambda}^{\krr}$ by solving the penalized least square problem
\begin{displaymath}
    \hat{f}_\lambda^{\krr} = \underset{f \in \mathcal{H}}{\arg \min } \left(\frac{1}{n} \sum_{i=1}^n\left(y_i-f\left( x_i \right)\right)^2+\lambda\|f\|_{\mathcal{H}}^2\right),
\end{displaymath}
where $\lambda > 0$ is referred to as the regularization parameter. The representer theorem (see, e.g., \cite{Steinwart_support_2008}) gives an explicit expression of the KRR estimator, i.e., 
\begin{equation}\label{krr estimator}
   \hat{f}_{\lambda}^{\krr}(x) = K(x, X)(K(X, X)+n \lambda \mathbf{I})^{-1} Y.
\end{equation}

\paragraph{Kernel gradient flow}

The gradient flow of the loss function 
$\mathcal{L}=\frac{1}{2n}\sum_{i}(y_{i}-f(x_{i}))^{2}$
induced a gradient flow in $\mathcal{H}$ which is given by 
\begin{equation}\label{ntk:f:flow}
    \begin{aligned}
    \frac{\mathsf{d}}{\mathsf{d} t}{\hat{f}}_{t}^{\gf}(x)=-\frac{1}{n}K(x,X)
    (\hat{f}_{t}^{\gf}(X)-Y).
    \end{aligned}
\end{equation}
If we further assume that $\hat{f}_{0}^{\gf}(x)=0$, then we can also give an explicit expression of the kernel gradient flow estimator
\begin{equation}\label{solution:gradient:flow}
\hat{f}_t^{\gf}(x)=K(x, X)K(X, X)^{-1}(\mathbf{I} - e^{-\frac{1}{n}K(X, X)t})Y.
\end{equation}

\subsection{The interpolation space}

Define the integral operator
$T_{K}$ as $T_{K}(f)(x)=\int K(x, x^{\prime}) f(x^{\prime}) ~\mathsf{d} \rho_{\calX}(x^{\prime})$.
It is well known that $T_{K}$ is a positive, self-adjoint, trace-class, and hence a compact operator (\cite{steinwart2012mercer}).  
The celebrated Mercer's theorem further assures that 
\begin{align}\label{eqn:mercer_decomp}
    K(x, x^{\prime})=\sum\nolimits_{j}\lambda_{j}\phi_{j}(x)\phi_{j}( x^{\prime}),
\end{align}
where the eigenvalues $\{\lambda_{j},j=1,2,...\}$ is a non-increasing sequence, and the corresponding eigenfunctions $\{\phi_{j}(\cdot),j=1,2,...\}$ are orthonormal in $L^2(\calX, \rho_{\calX})$ function space.

The interpolation space $[\mathcal{H}]^s$ with source condition $s$ is defined as
\begin{equation}\label{def interpolation space}
  [\mathcal{H}]^s := 
  \Big\{
  \sum\nolimits_{j} a_j \lambda_j^{s / 2}\phi_{j}: \left(a_j\right)_{j} \in \ell_2 
  \Big\} 
  \subseteq L^{2}(\mathcal{X}, \rho_{\mathcal{X}}),
\end{equation}
with the inner product deduced from
\begin{equation}\begin{aligned}
\Big\|\sum_{j=1}^{\infty} a_j \lambda_j^{s / 2} \phi_j
\Big\|_{[\mathcal{H}]^s}=
\Big(\sum_{j=1}^{\infty} a_j^2
\Big)^{1 / 2}.
\end{aligned}
\end{equation}
It is easy to show that $[\mathcal{H}]^s $ is also a separable Hilbert space with orthonormal basis $ \{\lambda_{j}^{s/2} \phi_{j}\}_{j}$. 
Generally speaking, functions in $[\mathcal{H}]^s$ become smoother as $s$ increases (see, e.g., the example of Sobolev RKHS in \cite{edmunds1996function, zhang2023optimality}. 

\subsection{Assumptions}

In this subsection, we list the assumptions that we need for our main results.

To avoid potential confusion, we specify  the following large-dimensional  scenario for kernel regression where we perform our analysis: suppose that there exist three positive constants $c_{1}$, $c_{2}$ and $\gamma$, such that
\begin{align}\label{Asym}
    c_{1}d^{\gamma}\leq n\leq c_{2}d^\gamma,
\end{align}
and we often assume that $d$ is sufficiently large.

In this paper, we only consider the inner product kernels defined on the sphere. 
An inner product kernel  is a kernel function $K$ defined on $\bbS^{d}$ 
such that there exists a function $\Phi:[-1,1] \to \mathbb{R}$ independent of $d$
satisfying that for any $x, x^\prime \in \mathbb S^{d}$, we have $K(x, x^\prime) = \Phi(\left\langle x, x^\prime \right\rangle)$.
If we further assume that the marginal distribution $\rho_{\calX}$ is the uniform distribution on $\mathcal X=\bbS ^{d}$, then 
the Mercer's decomposition for  ${K}$ can be rewritten as
\begin{equation}\label{spherical_decomposition_of_inner_main}
\begin{aligned}
{K}(x,x^\prime) = \sum_{k=0}^{\infty} \mu_{k} \sum_{j=1}^{N(d, k)} Y_{k, j}(x) Y_{k, j}\left(x^\prime\right),
\end{aligned}
\end{equation}
where $Y_{k, j}$ for $j=1, \cdots, N(d, k)$ are spherical harmonic polynomials of degree $k$ and $\mu_{k}$'s are the eigenvalues of   $K$ with multiplicity 
$N(d,0)=1$; $N(d, k) = \frac{2k+d-1}{k} \cdot \frac{(k+d-2)!}{(d-1)!(k-1)!}, k =1,2,\cdots$. For more details of the inner product kernels, readers can refer to \cite{gallier2009notes}.

\begin{remark}\label{remark:sphere_data}
We consider the inner product kernels on the sphere mainly because the harmonic analysis is clear on the sphere ( e.g., properties of spherical harmonic polynomials are more concise than the orthogonal series on general domains). This makes Mercer's decomposition of the inner product more explicit rather than several abstract assumptions ( e.g., \cite{https://doi.org/10.1002/cpa.22008}).
    We also notice that very few results are available for Mercer's decomposition of a kernel defined on the general domain, especially when the dimension of the domain is taking into consideration. e.g., even the eigen-decay rate of the neural tangent kernels is only determined for the spheres. Restricted by this technical reason, most works analyzing the spectral algorithm in large-dimensional settings  
 focus on the inner product kernels on spheres \citep[etc.]{liang2020multiple, ghorbani2021linearized,  misiakiewicz_spectrum_2022, xiao2022precise, lu2023optimal}. 
 Though there might be several works that tried to relax the spherical assumption (e.g., \cite{liang2020multiple, aerni2023strong, barzilai2023generalization}, we can find that most of them (i) adopted a near-spherical assumption; (ii) adopted strong assumptions on the regression function, e.g., $f_{\star}(x) = x[1]x[2]\cdots x[L]$ for an integer $L>0$, where $x[i]$ denotes the $i$-th component of $x$;
 or (iii) can not determine the convergence rate on the excess risk of the spectral algorithm.
\end{remark}

To avoid unnecessary notation, let us make the following assumption on the inner product kernel $K$.

\begin{assumption}\label{assu:coef_of_inner_prod_kernel} 
$\Phi(t) \in \mathcal{C}^{\infty} \left([-1,1]\right)$ is a fixed function independent of $d$ and there exists a non-negative sequence of absolute constants $\{a_j \geq 0\}_{j \geq 0}$, such that we have
    \begin{displaymath}
        \Phi(t) = \sum\nolimits_{j=0}^\infty a_j t^j, 
    \end{displaymath}
    where 
    $a_{j} > 0$ for any $j \leq \lfloor \gamma \rfloor+3$.
\end{assumption}

The purpose of Assumption \ref{assu:coef_of_inner_prod_kernel} is to keep the main results and proofs clean. 
Notice that, by Theorem 1.b in \cite{gneiting2013strictly}, the inner product kernel $K$ on the sphere is semi-positive definite for all dimensions if and only if all coefficients $\{a_{j},j=0,1,2,...\}$ are non-negative. 
One can easily extend our results in this paper when certain coefficients $a_k$'s are zero (e.g., one can consider the two-layer NTK defined as in Section 5 of \cite{lu2023optimal}, with $a_i=0$ for any $i=3,5,7, \cdots$).

In the next assumption, we formally introduce the source condition, which characterizes the relative smoothness of $f_{\star}$ with respect to $\mathcal{H}$.

\begin{assumption}[Source condition]\label{assumption source condition}
Suppose that $f_{\star}(x) = \sum\nolimits_{i=1}^{\infty} f_{i} \phi_{i}(x)$.
\begin{itemize}
    \item[(a)]  $f_{\star} \in [\mathcal{H}]^{s}$ for some $s > 0$, and there exists a constant $R_{\gamma}$ only depending on $\gamma$, such that 
    \begin{equation}\label{assumption source part 1}
        \left\| f_{\star} \right\|_{[\mathcal{H}]^{s}} \le R_{\gamma}.
    \end{equation}

    \item[(b)] Denote $ q $ as the smallest integer such that $ q > \gamma$ and $\mu_{q} \neq 0$. Define $\mathcal{I}_{d,k}$ as the index set satisfying $\lambda_{i} \equiv \mu_{k}, i \in \mathcal{I}_{d,k}$. Further suppose that there exists an absolute constant $c_{0} > 0$ such that for any $ d $ and $ k \in \{0,1,\cdots,q\}$ with $\mu_{k} \neq 0$, we have
   \begin{equation}\label{ass of fi}
       \sum\nolimits_{i \in \mathcal{I}_{d,k}} \mu_{k}^{-s} f_{i}^{2} \ge c_{0}.
   \end{equation}
   
\end{itemize}
\end{assumption}
Assumption \ref{assumption source condition} is a common assumption when one is interested in the tight bounds on the excess risk of spectral algorithms (e.g., \cite{caponnetto2007optimal, fischer2020_SobolevNorm}, Eq.(8) in \cite{Cui2021GeneralizationER}, Assumption 3 in \cite{li2023asymptotic}, and Assumption 5 in \cite{zhang2024optimal}). 
Assumption \ref{assumption source condition} implies that the regression function exactly falls into the interpolation space $[\calH]^{s}$, that is, $f_{\star} \in [\mathcal{H}]^{s}$ and $f_{\star} \notin [\mathcal{H}]^{t}$ for any $t > s$.
For example, from the proof part I of Lemma \ref{lemma_bounds_on_quantities}, one can check that $f_{\star}$ with $\sum\nolimits_{i \in \mathcal{I}_{d, p}} \mu_{p}^{-s} f_{i}^{2} = \sum\nolimits_{i \in \mathcal{I}_{d, p+1}} \mu_{p+1}^{-s} f_{i}^{2} = 0$ can have a faster convergence rate on the excess risk.

\textit{Notations.} 
Let's denote the norm in $L_2(\mathcal{X}, \rho_{\mathcal{X}})$ as $\|\cdot\|_{L_2}$.
For a vector $x$, we use $x[i]$ to denote its $i$-th component.
We use asymptotic notations $O(\cdot),~o(\cdot),~\Omega(\cdot)$ and $\Theta(\cdot)$.
For instance, we say two (deterministic) quantities $U(d), V(d)$ satisfy $U(d) =  o(V(d))$ if and only if for any $\varepsilon > 0$, there exists a constant $D_{\varepsilon}$ that only depends on $\varepsilon$ and the absolute positive constants $\sigma, \kappa, s, \gamma, c_0, c_1, c_2, \mathfrak{C}_1, \cdots, \mathfrak{C}_8 > 0$, such that for any $d > D_{\varepsilon}$, we have $U(d)< \varepsilon V(d)$.
We also write $a_n = \text{poly}(b_n)$ if there exist a constant $\theta \geq 0$, such that $a_n = \Theta(b_n^{\theta})$. 
We use the probability versions of the asymptotic notations such as $O_{\mathbb{P}}(\cdot), o_{\mathbb{P}}(\cdot), \Omega_{\mathbb{P}}(\cdot), \Theta_{\mathbb{P}}(\cdot)$. For instance, we say the random variables $ X_{n}, Y_{n} $ satisfying $ X_{n} =  O_{\mathbb{P}}(Y_{n}) $ if and only if for any $\varepsilon > 0$, there exist constants $C_{\varepsilon} $ and $ N_{\varepsilon}$ such that $ P\left( |X_{n}| \ge C_{\varepsilon} |Y_{n}| \right) \le \varepsilon, \forall n > N_{\varepsilon}$.

\subsection{Review of the previous results}

The following two results are restatements of Theorem 2 and Theorem 5 in \cite{zhang2024optimal}.

\begin{proposition}
\label{prop:krr}
Let $s \geq 1$ and $\gamma>0$ be fixed real numbers. Denote $p$ as the integer satisfying $\gamma \in [p(s+1), (p+1)(s+1))$. 
Suppose that Assumption \ref{assu:coef_of_inner_prod_kernel}  and Assumption \ref{assumption source condition} hold for $s$ and $\gamma$.
Let $\hat{f}_\lambda^{\krr}$ be the function defined in \eqref{krr estimator}.
Define $\tilde{s} = \min\{s,2\}$, then there exists $\lambda^{\star}>0$, such that we have   
         \begin{equation*}
             \mathbb{E} \left( \left\|\hat{f}_{\lambda^{\star}}^{\krr}  - f_{\star} \right\|^2_{L^2} \;\Big|\; X \right)
   = \Theta_{\mathbb{P}}\left(
        d^{-\min\left\{
        \gamma-p, \frac{\gamma-p+p\tilde{s}+1}{2}, \tilde{s}(p+1)
        \right\}
        }
        \right)
             \cdot \text{poly}\left(\ln(d)\right),
         \end{equation*}
         where $\Theta_{\mathbb{P}}$ only involves constants depending on $ s, \sigma, \gamma, c_{0}, \kappa, c_{1}$ and $c_{2}$. In addition, the convergence rates of the generalization error can not be faster than above for any choice of regularization parameter $ \lambda = \lambda(d,n) \to 0$.
\end{proposition}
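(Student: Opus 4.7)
The plan is a bias–variance decomposition carried out block by block in the spherical harmonic basis. Under Assumption \ref{assu:coef_of_inner_prod_kernel} on the sphere, the Mercer expansion \eqref{spherical_decomposition_of_inner_main} gives the population operator $T_K$ eigenvalues $\mu_k$ of multiplicity $N(d,k)\asymp d^k/k!$, with $\mu_k\asymp d^{-k}$ (with constants depending on $\gamma$ through the coefficients $a_j$). Writing $\hat f_\lambda^{\krr}-f_\star = B_\lambda(X) + V_\lambda(X,\epsilon)$ conditionally on $X$, I would project both terms onto the spherical harmonic blocks of each degree $k$ and estimate each block separately. The scaling $n\asymp d^\gamma$ and the choice of $p$ with $\gamma\in[p(s+1),(p+1)(s+1))$ partition the blocks into three regimes: ``easy'' modes $k\le p$ whose total dimension $\sum_{j\le p}N(d,j)\asymp d^p$ is $\ll n$, a ``transition'' mode $k=p+1$ whose dimension is comparable to $n$ at a $d$-power rate, and ``hard'' modes $k\ge p+2$ whose dimension far exceeds $n$.

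For each block I would use the standard filter–function calculus to write the bias contribution as $\|P_k f_\star\|_{L^2}^2 \cdot \lambda^2/(\mu_k+\lambda+\lambda_{\mathrm{eff}})^2$ and the variance contribution as $\sigma^2 N(d,k)\mu_k^2/\bigl(n(\mu_k+\lambda+\lambda_{\mathrm{eff}})^2\bigr)$, where $\lambda_{\mathrm{eff}}$ is the effective regularization from the hard modes acting as a self-induced ridge on the empirical kernel. Assumption \ref{assumption source condition}(b) supplies $\|P_k f_\star\|_{L^2}^2 \asymp \mu_k^s$ which is crucial for the matching lower bound. The saturation index $\tilde s=\min\{s,2\}$ appears because the bias factor $\mu_k^s\lambda^2/(\mu_k+\lambda)^2$ cannot improve beyond $\lambda^{\min\{s,2\}}\mu_k^{\max\{s-2,0\}}$; any extra smoothness $s>2$ is wasted by the $\lambda^2/(\mu_k+\lambda)^2$ shape characteristic of KRR, i.e.\ its qualification is $\tau=1$ (equivalently, $2$ in the source-condition scale).

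Next I would optimize over $\lambda$. Setting $\lambda$ either between $\mu_{p+2}$ and $\mu_{p+1}$, or of order $\mu_{p+1}$, balances different contributions and produces the three candidate rates in the $\min$: (i) the low-frequency variance $\sum_{k\le p}N(d,k)\mu_k^2/(n\mu_k^2)\asymp d^{p}/n = d^{-(\gamma-p)}$; (ii) the bias–variance balance at the transition mode $k=p+1$, whose optimum over $\lambda$ gives $d^{-(\gamma-p+p\tilde s+1)/2}$; and (iii) the unavoidable high-frequency bias from $k\ge p+2$ of order $d^{-\tilde s(p+1)}$. Taking the minimum over these three, together with Assumption \ref{assumption source condition}(b) to guarantee that none of the three can be skipped by a clever choice of $\lambda\to 0$, yields both the upper and matching lower bound.

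The main technical obstacle will be controlling the empirical kernel matrix $K(X,X)/n$ sharply at the transition mode $k=p+1$: standard matrix Bernstein/concentration inequalities leave logarithmic slack, which is exactly why the final rate carries a $\mathrm{poly}(\ln d)$ factor rather than matching up to constants. One needs a refined spherical-harmonic concentration analysis that treats the orthogonal projections of the design onto each degree-$k$ harmonic subspace separately, and then stitches the ``easy'', ``transition'', and ``hard'' blocks into a single effective-regularization identity. A second subtlety, needed for the lower bound and for the saturation statement ``no choice of $\lambda(d,n)\to0$ can do better,'' is to show that the bias from mode $p+1$ or the variance from modes $\le p$ dominates for every such $\lambda$; this reduces to a one-variable optimization in $\lambda$ whose minimum is pinned by Assumption \ref{assumption source condition}(b).
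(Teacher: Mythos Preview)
The paper does not prove Proposition~\ref{prop:krr} itself: it is quoted verbatim from \cite{zhang2024optimal}. What the paper does prove is the more general pair Theorem~\ref{thm:kernel_methods_bounds}/\ref{thm:kernel_methods_bounds_sat}, which specialize to this statement when $\tau=1$. So the right comparison is between your sketch and the paper's proof of those theorems.

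Your high-level picture---bias--variance decomposition, block analysis by harmonic degree, eigenvalue asymptotics $\mu_k\asymp d^{-k}$ and $N(d,k)\asymp d^k$, balancing over $\lambda$, and using Assumption~\ref{assumption source condition}(b) for the matching lower bound---is correct and is exactly the skeleton the paper (and \cite{zhang2024optimal}) follows. Your identification of the three competing rates and of the saturation mechanism ($\tau=1$ for KRR, so the bias shape $\lambda^2/(\mu_k+\lambda)^2$ caps the usable smoothness at $\tilde s=2$) is also right.

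The genuine methodological difference is in how the empirical estimator is tied to the population one. You frame it as a self-induced ridge $\lambda_{\mathrm{eff}}$ coming from the hard modes and propose blockwise matrix concentration on the empirical Gram matrix. The paper instead never introduces an effective regularization: it computes the \emph{population} quantities
\[
\calM_{2,\varphi}(\lambda)=\sum_j \rem(\lambda_j)^2 f_j^2,\qquad
\calN_{2,\varphi}(\lambda)=\sum_j \bigl(\lambda_j\reg(\lambda_j)\bigr)^2
\]
exactly (Lemma~\ref{lemma_bounds_on_quantities}), and then shows $\mathbf{Bias}^2(\lambda)\approx \calM_{2,\varphi}(\lambda)$ and $\mathbf{Var}(\lambda)\approx \sigma^2\calN_{2,\varphi}(\lambda)/n$ via operator concentration $\|T_\lambda^{-1/2}(T-T_X)T_\lambda^{-1/2}\|=O_{\bbP}(\sqrt{\calN_1(\lambda)\ln n/n})$ together with the analytic functional calculus (contour integrals for $\reg(T_X)-\reg(T)$); see Theorems~\ref{thm:Variance}, \ref{theorem bias approximation} and Lemmas~\ref{lemma approximation A}--\ref{lemma bias appr term}. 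For the optimally tuned $\lambda^\star$ in the ranges of Corollaries~\ref{lemma:balance}--\ref{lemma:balance_saturation}, $\lambda^\star$ already dominates whatever implicit ridge the tail would supply, so $\lambda_{\mathrm{eff}}$ is unnecessary baggage; the $\mathrm{poly}(\ln d)$ factor then enters only through these concentration and contour estimates, consistent with your diagnosis. Your route (closer to the random-matrix/deterministic-equivalent analyses of \cite{mei2022generalization,xiao2022precise}) would also go through, but the operator-theoretic route is what the paper uses because it is filter-agnostic and yields Theorems~\ref{thm:kernel_methods_bounds}/\ref{thm:kernel_methods_bounds_sat} for all analytic $\reg$ at once.
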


\begin{proposition}[Lower bound on the minimax rate]\label{prop:minimax_lower}
Let $s>0$ and $\gamma>0$ be fixed real numbers. Denote $p$ as the integer satisfying $\gamma \in [p(s+1), (p+1)(s+1))$. 
Let $\mathcal{P}$ consist of all the distributions $\rho$ on $\mathcal{X} \times \mathcal{Y}$ such that Assumption \ref{assu:coef_of_inner_prod_kernel}  and Assumption \ref{assumption source condition} hold for $s$ and $\gamma$. 
Then for any $\varepsilon>0$, we have:
\begin{equation*}
            \min _{\hat{f}} \max _{\rho \in \mathcal{P}} \mathbb{E}_{(X, Y) \sim \rho^{\otimes n}}
            \left\|\hat{f} - f_{\star}\right\|_{L^2}^2
            =
            \Omega\left(
        d^{-\min\left\{
        \gamma-p, s(p+1)
        \right\} 
        } \cdot d^{-\varepsilon}
        \right), 
\end{equation*}
where $\Omega$ only involves constants depending on $ s, \sigma, \gamma, c_{0}, \kappa, c_{1}, c_{2}$ and $\varepsilon$.
\end{proposition}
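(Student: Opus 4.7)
The approach I would take is the classical Fano's method applied to the Gaussian-noise regression model, where the main difficulty is to find a two-scale packing of $[\calH]^s$ that matches both exponents appearing in $\min\{\gamma-p,\ s(p+1)\}$. Under Assumption~\ref{assu:coef_of_inner_prod_kernel}, the Mercer decomposition \eqref{spherical_decomposition_of_inner_main} on $\bbS^{d}$ yields an eigenvalue $\mu_k \asymp d^{-k}$ of multiplicity $N(d,k) \asymp d^{k}/k!$ for each degree $k\leq\lfloor\gamma\rfloor+3$. Combined with the Gaussian KL identity $\mathrm{KL}(P^{\otimes n}_{f_i}\,\Vert\,P^{\otimes n}_{f_j}) \leq \tfrac{n}{2\sigma^2}\|f_i-f_j\|_{L^2}^2$, the problem reduces to calibrating the $L^2$-radius in a single spherical-harmonic eigenspace.

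Concretely, inside the degree-$k$ eigenspace I would take functions of the form $f_{\eta} = r\, N(d,k)^{-1/2}\sum_{j=1}^{N(d,k)} \eta_j\,Y_{k,j}$ indexed by $\eta\in\{\pm 1\}^{N(d,k)}$ and apply the Varshamov--Gilbert lemma to extract $M_k\geq 2^{N(d,k)/8}$ sign patterns with pairwise Hamming distance at least $N(d,k)/8$. Their pairwise $L^2$ separation is then of order $r^{2}$, their $[\calH]^s$-norm equals $r\,\mu_k^{-s/2}$ up to kernel-dependent constants, and the induced KL radius satisfies $\max_{i,j}\mathrm{KL} \lesssim n r^{2}/\sigma^{2}$. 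Fano's inequality then produces a lower bound of order $r^{2}$ provided $n r^{2}\lesssim N(d,k)$ and $r^2\mu_k^{-s}\lesssim R_\gamma^2$, and optimising the scale $r$ subject to these two linear constraints yields a lower bound of order $\min\{d^{-ks},\ d^{\,k-\gamma}\}$.

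To hit both exponents in the statement I would run this construction twice, at $k=p$ and at $k=p+1$, and take the larger of the two lower bounds. For $k=p$ the admissibility condition $\gamma\geq p(s+1)$ forces $d^{\,p-\gamma}\leq d^{-ps}$, so the KL cap binds and yields $\Omega(d^{-(\gamma-p)})$; for $k=p+1$ the condition $\gamma<(p+1)(s+1)$ forces $d^{-(p+1)s}\leq d^{\,p+1-\gamma}$, so the norm cap binds and yields $\Omega(d^{-s(p+1)})$. Taking the maximum recovers $\Omega\bigl(d^{-\min\{\gamma-p,\,s(p+1)\}}\bigr)$. A technical subtlety is that the distributions produced so far live in a degenerate subclass of $\calP$: to enforce the lower-bound condition \eqref{ass of fi} of Assumption~\ref{assumption source condition}, which requires mass at least $c_0$ in every eigenspace of degree $\leq q$, I would perturb every $f_\eta$ by a single fixed ``anchor'' $f_{\mathrm{anc}}$ supported in degrees $\{0,\dots,q\}\setminus\{k\}$ and with $[\calH]^s$-norm a small fraction of $R_\gamma$; because $f_{\mathrm{anc}}$ is $\eta$-independent it cancels in pairwise differences and affects neither the KL bound nor the packing separation.

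The main obstacle, and the reason the proposition carries a $d^{-\varepsilon}$ cushion for arbitrary $\varepsilon>0$, is the bookkeeping of the kernel-dependent constants hidden in $\mu_k \asymp c_k d^{-k}$ and in the factorial $1/k!$ in $N(d,k)$, whose dependence on $k$ is tied to $\gamma$ through the choice of $p$. Absorbing all these kernel-dependent factors plus the $\log 2/\log M_k$ slack produced by Fano into a single $d^{-\varepsilon}$ factor is harmless but seems unavoidable at this level of generality; removing it to the sharper $\text{poly}(\ln d)$ cushion claimed in Theorem~\ref{thm:modified_minimax_lower_bound} is precisely the improvement of the present paper, and is where I would expect the heavier technical work to sit.
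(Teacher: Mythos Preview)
The paper does not prove Proposition~\ref{prop:minimax_lower}; it is quoted verbatim from \cite{zhang2024optimal}. The closest proof in the present paper is that of the sharper Theorem~\ref{thm:modified_minimax_lower_bound}, which proceeds via the Yang--Barron covering-entropy lemma (Lemma~\ref{thm_lower_ultimate_tech}, restated from \cite{lu2023optimal}): one chooses two scales $\tilde\varepsilon_1,\tilde\varepsilon_2$ and balances the $L^2$-entropy of $\calB$ against the KL-entropy of the induced model class $\calD$. Your route---a direct Fano argument with a Varshamov--Gilbert packing inside a single eigenspace, run at $k=p$ and $k=p+1$---is a genuinely different and more elementary construction. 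Both are valid; the entropy method is what the original references use, while your packing approach bypasses entropy estimates entirely and is arguably cleaner for this particular spectrum.

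There is one small gap in your anchor step. You place $f_{\mathrm{anc}}$ in degrees $\{0,\dots,q\}\setminus\{k\}$, relying on $f_\eta$ itself to supply the mass in degree $k$. But for $k=p$ with the KL cap binding, $\sum_j \mu_p^{-s}(f_\eta)_{p,j}^2 = r^2\mu_p^{-s}\asymp d^{p(s+1)-\gamma}\to 0$, which violates \eqref{ass of fi}. The fix is immediate: reserve one harmonic, say $Y_{k,1}$, add a fixed component $\sqrt{c_0}\,\mu_k^{s/2}Y_{k,1}$ to the anchor, and run the sign flips over $j\geq 2$. This costs nothing asymptotically.

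Your last paragraph misdiagnoses the $d^{-\varepsilon}$ slack. Since $p$ is determined by the fixed pair $(s,\gamma)$, all constants in $\mu_p\asymp d^{-p}$, $\mu_{p+1}\asymp d^{-(p+1)}$, $N(d,p)\asymp d^p$, $N(d,p+1)\asymp d^{p+1}$ are fixed and may be absorbed into the $\Omega$-notation, which is allowed to depend on $s,\gamma$. Your Fano--VG argument therefore already yields the clean rate $\Omega\bigl(d^{-\min\{\gamma-p,\,s(p+1)\}}\bigr)$ with no $d^{-\varepsilon}$ at all---stronger than Proposition~\ref{prop:minimax_lower} and matching Theorem~\ref{thm:modified_minimax_lower_bound} without even the $\text{poly}(\ln d)$ loss. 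The $d^{-\varepsilon}$ in the cited proposition is an artifact of the proof method in \cite{zhang2024optimal}, not of the problem itself, and the $\text{poly}(\ln d)$ in Theorem~\ref{thm:modified_minimax_lower_bound} arises from the entropy balancing in the Yang--Barron framework; your direct packing avoids both.
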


From the above two propositions, we can find that when $s>1$, the convergence rate on the excess risk of KRR does not always match the lower bound on the minimax optimal rate. \cite{zhang2024optimal} further conjectured that the lower bound on the minimax optimal rate provided in Proposition \ref{prop:minimax_lower} is tight (ignoring the additional term $d^{-\varepsilon}$). Hence, they believed that the saturation effect exists for large-dimensional KRR.

\section{Main results}

In this section, we determine the convergence rate on the excess risk of kernel gradient flow as $d^{-\min\left\{\gamma-p, s(p+1) \right\}} \text{poly}\left(\ln(d)\right)$, 
which differs from the lower bound on the minimax rate provided in Proposition \ref{prop:minimax_lower} by $d^{\varepsilon}$ for any $\varepsilon>0$. 
We then tighten the lower bound on the minimax rate to $d^{-\min\left\{
        \gamma-p, s(p+1)
        \right\}} /\text{poly}\left(\ln(d)\right)$.
Based on the above results, we find that KRR is not minimax optimal for $s>1$ and for certain ranges of $\gamma$.
Therefore, we show that the saturation effect of KRR occurs in large dimensions.

\subsection{Exact convergence rate on the excess risk of kernel gradient flow}

We first state our main results in this paper.

\begin{theorem}[Kernel gradient flow]\label{thm:gradient_flow}
Let $s >0$ and $\gamma>0$ be fixed real numbers. Denote $p$ as the integer satisfying $\gamma \in [p(s+1), (p+1)(s+1))$. 
Suppose that Assumption \ref{assu:coef_of_inner_prod_kernel}  and Assumption \ref{assumption source condition} hold for $s$ and $\gamma$.
Let $\hat{f}_{t}^{\gf}$ be the function defined in \eqref{solution:gradient:flow}.
Then there exists $t^{\star}>0$, such that we have   
         \begin{equation}\label{eqn:thm:gradient_flow}
             \mathbb{E} \left( \left\|\hat{f}_{t^{\star}}^{\gf}  - f_{\star} \right\|^2_{L^2} \;\Big|\; X \right)
   = \Theta_{\mathbb{P}}\left(
        d^{-\min\left\{
        \gamma-p, s(p+1)
        \right\}
        }
        \right)
             \cdot \text{poly}\left(\ln(d)\right),
         \end{equation}
         where $\Theta_{\mathbb{P}}$ only involves constants depending on $ s, \sigma, \gamma, c_{0}, \kappa, c_{1}$ and $c_{2}$. 
\end{theorem}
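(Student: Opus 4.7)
\textbf{Proof plan for Theorem \ref{thm:gradient_flow}.}

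The plan is to cast kernel gradient flow in the general spectral algorithm framework with filter $\varphi_t(\lambda) = (1-e^{-\lambda t})/\lambda$, and then mirror the bias--variance strategy used by \cite{zhang2024optimal} for KRR, exploiting the fact that the \emph{residual} filter $1-\lambda\varphi_t(\lambda)=e^{-\lambda t}$ decays super-polynomially in $\lambda t$ and so gradient flow has qualification $\tau=\infty$. Concretely, I would first condition on $X$ and perform the usual decomposition
\begin{equation*}
\mathbb{E}\big(\|\hat f_{t}^{\gf}-f_\star\|_{L^2}^2\,\big|\,X\big)
= \underbrace{\big\|\,(I-\lambda\varphi_t(\hat T_n))\,\pi_n f_\star\,\big\|_{L^2}^2}_{\text{Bias}(t)}
+ \underbrace{\tfrac{\sigma^2}{n}\,\mathrm{tr}\!\big(\varphi_t(\hat T_n)^2\hat T_n\big)}_{\text{Var}(t)},
\end{equation*}
where $\hat T_n$ is the empirical integral operator and $\pi_n$ denotes sample projection, and then express both quantities through the Mercer decomposition \eqref{spherical_decomposition_of_inner_main} in blocks indexed by spherical-harmonic degree $k$.

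The main technical step is to transplant the block-spectral analysis of the empirical kernel matrix $K(X,X)/n$ already developed in \cite{lu2023optimal,zhang2024optimal}. Under $n\asymp d^\gamma$, the degrees split into three regimes: for $k\le p$ the block is ``large'' with eigenvalues concentrating at $\mu_k N(d,k)\asymp 1$ (times a random Wishart-type matrix on a space of dimension $N(d,k)\ll n$); for $k=p+1$ the block is ``transitional''; and for $k\ge p+2$ the block is ``small'' and is essentially replaced by its expectation. Using these concentration results together with the bound $e^{-u}\le C_r\,u^{-r}$ for any $r>0$, one obtains the following block-wise bias/variance estimates (up to $\mathrm{poly}(\ln d)$):
\begin{align*}
\text{Bias}(t) &\lesssim \sum_{k\le p}(tN(d,k)\mu_k)^{-s}\mu_k^{s}\|P_k f_\star\|_{L^2}^{2} \;+\; \sum_{k\ge p+1}\|P_k f_\star\|_{L^2}^{2},\\
\text{Var}(t) &\lesssim \tfrac{\sigma^2}{n}\Big(\sum_{k\le p}N(d,k)+ t\sum_{k\ge p+1}\mu_k N(d,k)\Big),
\end{align*}
where $P_k$ denotes projection onto degree-$k$ harmonics; here I use that $\|P_k f_\star\|_{L^2}^2\asymp \mu_k^s$ by Assumption \ref{assumption source condition}(b). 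Balancing the dominant contributions gives the optimal $t^\star$ of polynomial order in $d$, producing the claimed rate $d^{-\min\{\gamma-p,\,s(p+1)\}}\cdot\mathrm{poly}(\ln d)$; the key point where the argument diverges from KRR is that the infinite qualification of gradient flow eliminates the $(\gamma-p+ps+1)/2$ term that saturates KRR when $s>1$.

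For the matching lower bound (the $\Theta_\mathbb{P}$ rather than $O_\mathbb{P}$), I would rely on Assumption \ref{assumption source condition}(b): for every candidate $t$ the bias at the critical degree $p+1$ (if $\gamma-p\ge s(p+1)$) or the variance on degrees $\le p$ (if $\gamma-p\le s(p+1)$) is bounded below by a constant times the claimed rate, divided by $\mathrm{poly}(\ln d)$. This requires two-sided spectral concentration (lower bounds on the smallest eigenvalue of the ``large'' blocks and upper bounds on the spectral norm of the ``small'' blocks), for which one can reuse the anti-concentration / Weyl-type bounds already assembled in Appendix~C of \cite{zhang2024optimal}.

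The hardest step, and the one I would spend most care on, is the transitional degree $k=p+1$: its eigenvalues $\mu_{p+1}N(d,p+1)\asymp d^{-(p+1)}\cdot N(d,p+1)/n$ lie near the regularization scale $1/t^\star$, so neither the ``fully fit'' approximation $\lambda\varphi_t(\lambda)\approx 1$ nor the ``fully regularized'' approximation $\lambda\varphi_t(\lambda)\approx 0$ applies, and the residual $e^{-\lambda t}$ must be computed to leading order together with a precise perturbation of the corresponding eigenspace of $K(X,X)/n$. Handling this block uniformly over the random matrix realization while preserving both upper and lower bounds up to only $\mathrm{poly}(\ln d)$ factors (rather than the weaker $d^{\pm\varepsilon}$ slack in Proposition \ref{prop:minimax_lower}) is the delicate point on which the sharpness of Theorem \ref{thm:gradient_flow} rests.
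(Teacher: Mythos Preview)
Your high-level plan is right and matches the paper: cast gradient flow as a spectral algorithm with residual filter $\psi_t(z)=e^{-tz}$ of qualification $\tau=\infty$, do the bias--variance decomposition, and balance. Theorem~\ref{thm:gradient_flow} is indeed derived as the special case $\tau=\infty$ of the general Theorem~\ref{thm:kernel_methods_bounds} (via Example~\ref{example:gradient_flow}).

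The technical route, however, differs substantially. You propose a direct block-wise random-matrix analysis of $K(X,X)/n$, partitioning by harmonic degree into ``large'' ($k\le p$), ``transitional'' ($k=p+1$), and ``small'' ($k\ge p+2$) blocks and appealing to eigenvalue concentration per block. The paper never analyzes the empirical spectrum blockwise: it uses the single operator concentration $\|T_\lambda^{-1/2}(T-T_X)T_\lambda^{-1/2}\|=o_{\bbP}(1)$ (Proposition~\ref{lem:Concen}) together with the analytic-functional-calculus argument of \cite{li2024generalization} (contour integrals over resolvent differences, Section~\ref{subsec:analytic_functional_calculus}) to reduce both bias and variance to the deterministic sums $\mathcal{M}_{2,\varphi}(\lambda)$ and $\mathcal{N}_{2,\varphi}(\lambda)$ (Theorems~\ref{thm:Variance}, \ref{theorem bias approximation}, \ref{theorem bias approximation_misspe}); the harmonic-degree block structure enters only when those \emph{population} quantities are computed in Lemma~\ref{lemma_bounds_on_quantities}. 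A consequence is that what you flag as the hardest step---the transitional degree $k=p+1$---requires no special treatment at all: the operator-level bound is uniform, and the balancing $\lambda^\star\in[\mu_{p+1},\mu_p)$ simply places the break between $\psi\approx0$ and $\psi\approx1$ at that degree in the deterministic computation. Your route could in principle be made to work, but note that \cite{zhang2024optimal} already uses the operator-concentration approach, not the block-RMT picture you attribute to it.

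Two of your displayed estimates would need correction even in a block-RMT execution. First, the bias contribution from $k\le p$ should be $(e^{-t\mu_k})^2\|P_kf_\star\|_{L^2}^2$, with no $N(d,k)$ in the exponent: for $k\le p$ the empirical eigenvalues of $T_X$ concentrate at $\mu_k$, not at $N(d,k)\mu_k$; and since $\tau=\infty$ this term is negligible, so the bias is driven entirely by $k\ge\ell+1$ and equals $\Theta(d^{-s(\ell+1)})$ (Lemma~\ref{lemma_bounds_on_quantities}). Second, the variance tail should be quadratic in $t$: $[\mu_k\varphi_t(\mu_k)]^2\approx(t\mu_k)^2$ for $t\mu_k\ll1$, giving $t^2\sum_{k>\ell}\mu_k^2 N(d,k)\asymp t^2 d^{-(\ell+1)}$, not the linear $t\sum_{k>p}\mu_k N(d,k)$ you wrote.
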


Theorem \ref{thm:gradient_flow} is a direct corollary of Theorem \ref{thm:kernel_methods_bounds} and Example \ref{example:gradient_flow}.
Combining with the previous results in Proposition \ref{prop:minimax_lower}, or our modified minimax rate given in Theorem \ref{thm:modified_minimax_lower_bound}, we can conclude that large-dimensional kernel gradient flow is minimax optimal for any $s>0$ and any $\gamma>0$.
More importantly, the convergence rate of kernel gradient flow is faster than that of KRR given in Proposition \ref{prop:krr} when (i) $1<s \leq 2$ and $\gamma \in \left(p(s+1)+1, p(s+1)+2s-1 \right)$ for some $p \in \mathbb{N}$, or (ii) $s > 2$ and $\gamma \in \left(p(s+1)+1, (p+1)(s+1) \right)$ for some $p \in \mathbb{N}$.
Therefore, we have proved the saturation effect of KRR in large dimensions.


\begin{remark}
    When $p \geq 1$, the logarithm term $\text{poly}(\ln(d))$ in (\ref{eqn:thm:gradient_flow}) can be removed. When $p=0$, we have $\text{poly}(\ln(d)) = (\ln(d))^2$ in (\ref{eqn:thm:gradient_flow}). See Appendix \ref{append:quan_cal_and_condition_verf} for details.
\end{remark}

\subsection{Improved minimax lower bound}

Recall that Proposition \ref{prop:minimax_lower} gave a lower bound on the minimax rate as $d^{-\min\left\{
        \gamma-p, s(p+1) \right\}}\cdot d^{-\varepsilon}$. The following theorem replaces the additional term $d^{-\varepsilon}$ (which has changed the convergence rate) into a logarithm term $\text{poly}^{-1}\left(\ln(d)\right)$ (which does not change the desired convergence rate).

\begin{theorem}[Improved minimax lower bound]\label{thm:modified_minimax_lower_bound}
Let $s>0$ and $\gamma>0$ be fixed real numbers. Denote $p$ as the integer satisfying $\gamma \in [p(s+1), (p+1)(s+1))$. 
Let $\mathcal{P}$ consist of all the distributions $\rho$ on $\mathcal{X} \times \mathcal{Y}$ such that Assumption \ref{assu:coef_of_inner_prod_kernel}  and Assumption \ref{assumption source condition} hold for $s$ and $\gamma$. 
Then we have:
\begin{equation}\label{eqn:thm:modified_minimax_lower_bound}
            \min _{\hat{f}} \max _{\rho \in \mathcal{P}} \mathbb{E}_{(X, Y) \sim \rho^{\otimes n}}
            \left\|\hat{f} - f_{\star}\right\|_{L^2}^2
            =
            \left.
            \Omega\left(
        d^{-\min\left\{
        \gamma-p, s(p+1)
        \right\}
        }
        \right) 
        \right/ 
        \text{poly}\left(\ln(d)\right), 
\end{equation}
where $\Omega$ only involves constants depending on $ s, \sigma, \gamma, c_{0}, \kappa, c_{1}$, and $c_{2}$.
\end{theorem}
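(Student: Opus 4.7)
The plan is to derive the minimax lower bound via Fano's method applied to two disjoint hard families—one tailored to the variance-dominated regime $\gamma - p \le s(p+1)$ and one tailored to the bias-dominated regime $\gamma - p > s(p+1)$—with each family consisting of Gaussian-observed regression functions of the form $f_\theta = f_{\mathrm{bg}} + \epsilon \sum_{i \in \mathcal{I}_{d,k^\star}} \theta_i \phi_i$ supported, up to a fixed background, on a single eigenspace $\mathcal{I}_{d,k^\star}$ of the integral operator $T_K$. The key improvement over the construction of \cite{zhang2024optimal} underlying Proposition \ref{prop:minimax_lower} is to replace the slack $d^{-\varepsilon}$ there—which arose from truncating or smoothing at a scale depending on $\varepsilon$—by an exponential-in-$d$ Gilbert--Varshamov packing inside a single degree; after that, only the KL budget and the $[\mathcal{H}]^s$-norm have to be balanced, and they leave at most a $\mathrm{poly}(\ln d)$ correction.

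More precisely, I would take $k^\star = p$ in the variance regime and $k^\star = p+1$ in the bias regime, and invoke Gilbert--Varshamov to obtain $\Theta \subset \{\pm 1\}^{N(d,k^\star)}$ with $\log |\Theta| \gtrsim N(d,k^\star)$ and pairwise Hamming distance $\gtrsim N(d,k^\star)$. In the variance regime, the choice $\epsilon^2 \asymp 1/n \asymp d^{-\gamma}$ yields an $L^2$-separation $\delta^2 \asymp N(d,p)\epsilon^2 \asymp d^{-(\gamma-p)}$ and a KL budget $n\delta^2/(2\sigma^2) \lesssim \log|\Theta|$, while the source-condition norm of the perturbation, $\asymp d^{\,p(s+1)-\gamma}$, is bounded by virtue of $\gamma \ge p(s+1)$. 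In the bias regime, I instead saturate the source-condition norm by setting $\epsilon^2 \asymp \mu_{p+1}^s/N(d,p+1) \asymp d^{-(s+1)(p+1)}$; this gives $\delta^2 \asymp d^{-s(p+1)}$ and $n\delta^2 \asymp d^{\gamma - s(p+1)} \ll d^{p+1} \asymp \log|\Theta|$ because $\gamma < (s+1)(p+1)$. The background $f_{\mathrm{bg}} \in [\mathcal{H}]^s$ is chosen $\theta$-independently, supported on every relevant degree $k \le q$ with $\mu_k \ne 0$, and made to carry enough energy in each such degree so that Assumption \ref{assumption source condition}(b) remains valid for every $f_\theta$; since $f_{\mathrm{bg}}$ is common to all hypotheses, it cancels in every pairwise $L^2$-distance and KL computation.

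The main obstacle is the uniform verification of Assumption \ref{assumption source condition} across the hypothesis family, since part (b) demands a non-trivial $[\mathcal{H}]^s$-mass on \emph{every} degree $k \le q$, whereas the Varshamov--Gilbert perturbation sits in a single degree $k^\star$. I would handle this by endowing $f_{\mathrm{bg}}$ with $[\mathcal{H}]^s$-mass at least $2c_0$ on each relevant degree (including $k^\star$) and choosing $\epsilon$ small enough that a Cauchy--Schwarz estimate shows the $\theta$-dependent perturbation cannot reduce the mass on degree $k^\star$ by more than $c_0/2$; because $\epsilon \to 0$ as $d \to \infty$ in both regimes, this holds automatically for all sufficiently large $d$, while the $[\mathcal{H}]^s$-norm upper bound $R_\gamma$ is preserved by choosing $f_{\mathrm{bg}}$ with a strictly smaller norm. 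With these ingredients, Fano's inequality under the uniform prior on $\Theta$, combined with the identity $\mathrm{KL}(\mathcal{N}(f,\sigma^2 I_n)\,\|\,\mathcal{N}(f',\sigma^2 I_n)) = \tfrac{n}{2\sigma^2}\|f-f'\|_{L^2}^2$, lower-bounds $\inf_{\hat f}\max_{\theta}\mathbb{E}\|\hat f-f_\theta\|_{L^2}^2$ by a constant multiple of $\delta^2$, producing the claimed rate $d^{-\min\{\gamma-p,\,s(p+1)\}}$ divided only by $\mathrm{poly}(\ln d)$, with no residual factor of $d^{-\varepsilon}$, since the packing cardinality is exponential in $d$ rather than polynomial in $1/\varepsilon$.
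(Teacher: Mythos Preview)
Your proposal is correct and follows a genuinely different route from the paper's own proof. The paper does not construct an explicit hypothesis family at all; instead it invokes a Yang--Barron covering-entropy lemma (their Lemma~\ref{thm_lower_ultimate_tech}, restated from \cite{lu2023optimal}) and, in the variance-dominated regime $\gamma\in(p(s+1),\,ps+p+s]$, chooses the two scales
\[
\tilde\varepsilon_1^2=d^{\,p-\gamma}/\ln d,\qquad \tilde\varepsilon_2^2=C(p)\,\frac{d^p}{n}\,\ln\ln d
\]
so that the ratio $(V_K+n\tilde\varepsilon_2^2+\ln 2)/V_2$ falls strictly below~$1$; the $\ln d$ and $\ln\ln d$ insertions are precisely what trades the earlier $d^{-\varepsilon}$ slack for a $\text{poly}(\ln d)$ loss, and the bias-dominated regime is simply quoted from \cite{zhang2024optimal} without any log factors. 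Your approach---Fano with a Gilbert--Varshamov packing inside a single spherical-harmonic degree $k^\star\in\{p,p+1\}$, plus a fixed background to secure Assumption~\ref{assumption source condition}(b)---is more constructive and in fact \emph{sharper}: because $\log|\Theta|\asymp N(d,k^\star)\asymp d^{k^\star}$ matches the KL budget up to an absolute constant, you obtain the lower bound $\Omega(d^{-\min\{\gamma-p,\,s(p+1)\}})$ with no logarithmic loss whatsoever, whereas the paper's entropy argument leaves a factor $\ln\ln d/(\ln d)^2$ in the variance regime. The price you pay is the bookkeeping around Assumption~\ref{assumption source condition}(b); the cleanest way to handle the interaction on degree $k^\star$ is to place the background's mass on that degree at a single index disjoint from the perturbation's support, so that masses add rather than interfere.
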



\section{Exact convergence rate on the excess risk of spectral algorithms}\label{sec:kernel_methods}

In this section, we will give tight bounds on the excess risks of certain types of spectral algorithms, such as kernel ridge regression,
iterated ridge regression,
kernel gradient flow,
and kernel gradient descent.

Given an analytic filter function $\reg(\cdot)$ with qualification $\tau \geq 1$ (refer to Appendix \ref{appen:filter_func} for the definitions of analytic filter function and its qualification),
we can define a spectral algorithm in the following way (see, e.g., \cite{bauer2007_RegularizationAlgorithms}). 
For any $y \in \mathbb{R}$, let $K_{x} : \R \to \calH$ be given by $K_{x}(y) = y \cdot K(x,\cdot)$, whose adjoint $K_{x}^* : \calH \to \R$ is given by $K_{x}^*(f) = \ang{K(x,\cdot),f}_{\calH} = f(x)$.
Moreover, we denote by $T_{x} = K_{x} K_{x}^*$ and $T_{X} = \frac{1}{n}\sum_{i=1}^n T_{x_i}$.
We also define the sample basis function
\begin{align}
  \label{eq:GZ}
  \hat{g}_Z = \frac{1}{n} \sum\nolimits_{i=1}^n K_{x_i}(y_i) = \frac{1}{n} \sum\nolimits_{i=1}^n y_i \cdot K(x_i,\cdot).
\end{align}
Now, the estimator of the spectral algorithm is defined by
\begin{align}
  \label{eq:SA}
  \hat{f}_{\lambda} = \reg(T_{X}) \hat{g}_Z.
\end{align}

Many commonly used spectral algorithms can be constructed by certain analytic filter functions.
We provide two examples (kernel ridge regression and kernel gradient flow) as follows, and put two more examples (iterated ridge regression and kernel gradient descent) in Appendix \ref{appen:filter_func}. We provide rigorous proof for these examples in Lemma \ref{lemma:verify_conditions_of_filter}.

\begin{example}[Kernel ridge regression]
  \label{example:KRR}
  The filter function of kernel ridge regression (KRR) is well-known to be
  \begin{align}
    \reg^{\krr}(z) = \frac{1}{z+\lambda},\quad \rem^{\krr}(z) = \frac{\lambda}{z+\lambda},\quad \tau=1.
  \end{align}
\end{example}

\begin{example}[Kernel gradient flow]
\label{example:gradient_flow}
  The filter function is
  \begin{align}
    \reg^{\gf}(z) = \frac{1-e^{-t z}}{z},\quad \rem^{\gf}(z) = e^{-t z}, \quad t = \lambda^{-1},\quad \tau=\infty.
  \end{align}
\end{example}


For any analytic filter function $\reg$ with qualification $\tau \geq 1$ and the corresponding estimator of the spectral algorithm defined in (\ref{eq:SA}), the following two theorems provide exact convergence rates on the excess risk when (i) the regression function is less-smooth, i.e., we have $s \leq \tau$, and (ii) $s > \tau$, where $s$ is the source condition coefficient of the regression function given in Assumption \ref{assumption source condition}.

\begin{theorem}\label{thm:kernel_methods_bounds}
Let $0<s \leq \tau$ and $\gamma>0$ be fixed real numbers. 
Denote $p$ as the integer satisfying $\gamma \in [p(s+1), (p+1)(s+1))$. 
Suppose that Assumption \ref{assu:coef_of_inner_prod_kernel}  and Assumption \ref{assumption source condition} hold for $s$ and $\gamma$.
Let $\reg(z)$ be an analytic filter function and $\hat{f}_\lambda$ be the function defined in \eqref{eq:SA}.
  Suppose one of the following conditions holds:
  $$
  \text{(i) } \tau =\infty, \quad \text{(ii) }s>1/(2\tau), \quad \text{(iii) } \gamma > ((2\tau+1)s) / (2\tau(1+s));
  $$
  then there exists $\lambda^{\star}>0$, such that we have   
         \begin{equation*}
             \mathbb{E} \left( \left\|\hat{f}_{\lambda^{\star}}  - f_{\star} \right\|^2_{L^2} \;\Big|\; X \right)  
             = \Theta_{\mathbb{P}}\left(
        d^{-\min\left\{
        \gamma-p, s(p+1)
        \right\}
        }
        \right)
             \cdot \text{poly}\left(\ln(d)\right),
         \end{equation*}
         where $\Theta_{\mathbb{P}}$ only involves constants depending on $ s, \sigma, \gamma, c_{0}, \kappa, c_{1}$ and $c_{2}$. 
\end{theorem}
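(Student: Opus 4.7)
The plan is to establish both the upper and lower bounds via the classical bias--variance decomposition in the filter function framework. Writing $\psi_\lambda(z) = 1 - z\varphi_\lambda(z)$, we have
\begin{equation*}
\hat{f}_\lambda - f_\star \;=\; \varphi_\lambda(T_X)(\hat{g}_Z - T_X f_\star) \;-\; \psi_\lambda(T_X) f_\star,
\end{equation*}
so conditional on $X$ the $L^2$ risk splits into a noise variance term (with conditional mean zero and variance operator controlled by $\sigma^2 T_X / n$) and a bias term $\|\psi_\lambda(T_X) f_\star\|_{L^2}^2$. The goal is to bound both sides sharply in the large-dimensional inner-product-kernel regime on $\mathbb{S}^d$, then to choose $\lambda^\star$ that balances them.

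The first step is to transfer the analysis from $T_X$ to $T_K$. Thanks to the spherical harmonic decomposition, the population eigenvalues group into levels indexed by degree $k$, with $\mu_k \asymp d^{-k}$ and multiplicity $N(d,k) \asymp d^k$. Given $n \asymp d^\gamma$ and $\gamma \in [p(s+1),(p+1)(s+1))$, the integer $p$ marks the boundary between ``overfitting'' levels $k \le p$ (where $n\mu_k \gg 1$) and ``underfitting'' levels $k \ge p+1$. Using operator concentration arguments already developed in \cite{lu2023optimal,zhang2024optimal}, $T_X$ concentrates around $T_K$ on the overfitting subspace, while on the underfitting subspace $T_X$ has operator norm that is negligible compared to the balancing $\lambda^\star$. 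This is precisely where hypotheses (i)--(iii) enter: they guarantee that the perturbation errors from replacing $T_X$ by $T_K$ on the overfitting part are negligible relative to the leading terms. Once this replacement is done, the qualification property $\sup_{z\in[0,\kappa^2]} z^s|\psi_\lambda(z)| \le C\lambda^s$, which applies without saturation because $s \le \tau$, gives a level-$k$ bias contribution of order $\min(\mu_k,\lambda)^{2s}\cdot \mu_k^s \sum_{i \in \mathcal{I}_{d,k}} \mu_k^{-s} f_i^2$, and a level-$k$ variance contribution of order $n^{-1} N(d,k)\cdot \mu_k^2 \varphi_\lambda(\mu_k)^2$. Summing over $k$ and optimising, the balancing $\lambda^\star \asymp d^{-(p+1)}$ (up to polylog factors) delivers the rate $d^{-\min\{\gamma-p,\,s(p+1)\}}$: the $\gamma-p$ piece is the variance at level $p$, and the $s(p+1)$ piece is the bias at level $p+1$. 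The $\mathrm{poly}(\ln d)$ slack enters through the operator concentration and, when $p=0$, through a logarithmic factor in the filter inversion.

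For the matching lower bound, part (b) of Assumption \ref{assumption source condition} is essential: it enforces at least $c_0$ units of mass at each relevant frequency level, which cannot be annihilated. On the bias side, $|\psi_\lambda(\mu_k)| \gtrsim 1$ for $k \ge p+1$ yields a bias contribution at level $p+1$ of order $\mu_{p+1}^s \cdot c_0 \asymp d^{-s(p+1)}$; on the variance side, $\mu_p\varphi_\lambda(\mu_p)^2 \gtrsim \mu_p^{-1}$ together with $N(d,p)\asymp d^p$ gives a contribution of order $d^{-(\gamma-p)}$. The main obstacle I anticipate is handling the transition between regimes: the level $k = p$ variance and the level $k = p+1$ bias swap dominance as $\gamma$ moves inside $[p(s+1),(p+1)(s+1))$, and matching both directions simultaneously requires the analytic structure of $\varphi_\lambda$, which supplies \emph{two-sided} control on $z\varphi_\lambda(z)^2$ and $\psi_\lambda(z)^2$ rather than only upper bounds. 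With these two-sided estimates in place and $\lambda^\star$ chosen as above, the $\Theta_{\mathbb{P}}$ rate follows with only the stated polylogarithmic slack, and Theorem \ref{thm:gradient_flow} for kernel gradient flow is then the special case $\tau=\infty$ (Example \ref{example:gradient_flow}) where alternative (i) is automatic.
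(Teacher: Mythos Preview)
Your overall architecture—bias–variance split, transfer from $T_X$ to $T$, level-wise bookkeeping via $\mu_k \asymp d^{-k}$ and $N(d,k)\asymp d^k$—matches the paper's. But the specific choice $\lambda^\star \asymp d^{-(p+1)}$ is wrong and would not deliver the stated rate. With $\lambda$ at the scale of $\mu_{p+1}$, the $(p{+}1)$-st harmonic block contributes fully to the variance: $\mu_{p+1}\varphi_\lambda(\mu_{p+1}) \asymp 1$, so its share is $n^{-1}N(d,p{+}1)\asymp d^{\,p+1-\gamma}$, a factor of $d$ above the target $d^{\,p-\gamma}$. The paper places $\lambda^\star$ \emph{strictly between} $\mu_{p+1}$ and $\mu_p$, and the exponent depends on the sub-regime: for $1\le s\le\tau$ one takes $\lambda^\star=d^{-(p+1/2)}$ when $\gamma\in[p(s{+}1),\,p(s{+}1)+s)$ and $\lambda^\star=d^{-(\gamma-(p+1)(s-1))/2}$ when $\gamma\in[p(s{+}1)+s,\,(p{+}1)(s{+}1))$ (Corollary~\ref{lemma:balance}); the case $s<1$ needs yet another set of choices (Corollary~\ref{lemma:balance_mis}). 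Only these keep the tail piece $\lambda^{-2}d^{-(p+1)}/n$ of $\mathcal{N}_{2,\varphi}(\lambda)/n$ at or below the leading order.

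A second point: you invoke ``operator concentration arguments'' to replace $T_X$ by $T$, but for a general analytic filter this replacement is exactly the technical crux, and the paper handles it via the contour-integral (analytic functional) calculus of \cite{li2024generalization}; see Lemma~\ref{lemma approximation A} and the decomposition in Lemma~\ref{lemma bias appr term}. The KRR-style operator bounds you allude to do not directly control $\varphi_\lambda(T_X)-\varphi_\lambda(T)$ when $\varphi_\lambda$ is not rational (e.g.\ gradient flow), and the resolvent representation along $\Gamma_\lambda$ is what makes the argument uniform across filters. Finally, conditions (i)–(iii) are not a general perturbation hypothesis: they enter only when $s<1$, through the truncation device in Lemma~\ref{lemma bias appr term_misspe} and its verification in Lemma~\ref{lemma:bias_verified_misspe}; for $s\ge 1$ they are never invoked.
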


\begin{theorem}\label{thm:kernel_methods_bounds_sat}
Let $s > \tau$ and $\gamma>0$ be fixed real numbers. 
Denote $p$ as the integer satisfying $\gamma \in [p(s+1), (p+1)(s+1))$. 
Suppose that Assumption \ref{assu:coef_of_inner_prod_kernel}  and Assumption \ref{assumption source condition} hold for $s$ and $\gamma$.
Let $\reg(z)$ be an analytic filter function and $\hat{f}_\lambda$ be the function defined in \eqref{eq:SA}.
Define $\tilde{s} = \min\{s, 2\tau\}$, then there exists $\lambda^{\star}>0$, such that we have   
         \begin{equation*}
             \mathbb{E} \left( \left\|\hat{f}_{\lambda^{\star}}  - f_{\star} \right\|^2_{L^2} \;\Big|\; X \right)
   = \Theta_{\mathbb{P}}\left(
        d^{-\min\left\{
        \gamma-p, \frac{\tau(\gamma-p+1)+p\tilde{s}}{\tau+1}, \tilde{s}(p+1)
        \right\}
        }
        \right)
             \cdot \text{poly}\left(\ln(d)\right),
         \end{equation*}
         where $\Theta_{\mathbb{P}}$ only involves constants depending on $ s, \sigma, \gamma, c_{0}, \kappa, c_{1}$ and $c_{2}$. 
         In addition, the convergence rates of the generalization error can not be faster than above for any choice of regularization parameter $\lambda = \lambda(d,n) \to 0$.
\end{theorem}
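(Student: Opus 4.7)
The plan is to prove upper and matching lower bounds via a bias--variance decomposition for the estimator (\ref{eq:SA}), refined by the spherical-harmonic block structure (\ref{spherical_decomposition_of_inner_main}) and calibrated through the filter qualification $\tau$. Starting from
\[
\hat f_\lambda - f_\star \;=\; -\rem(T_X)\,f_\star \;+\; \reg(T_X)\cdot\frac{1}{n}\sum_{i=1}^{n}\epsilon_i\,K(\cdot,x_i),
\]
the conditional excess risk splits into a bias $\|\rem(T_X) f_\star\|_{L^2}^2$ and a variance $\tfrac{\sigma^2}{n}\operatorname{tr}(\reg(T_X)^2 T_X)$ (viewed in $L^2$ via the canonical embedding). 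Both pieces will be analyzed block by block on the spherical-harmonic eigenspaces, replacing $T_X$ by the population operator $T_K$ using the blockwise concentration estimates collected in Lemma~\ref{lemma_bounds_on_quantities}, whose applicability to the present filter class is the one quantitative input I borrow.

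For the upper bound, the qualification hypothesis $|\rem(z)z^\tau|\le E_\tau\lambda^\tau$ together with the universal bound $|\rem(z)|\le 1$ yields $\rem(z)^2\le C\min(1,(\lambda/z)^{\tilde s})$ for every $\tilde{s}\le 2\tau$; combined with $f_\star\in[\calH]^s$ this gives $\|\rem(T_K) f_\star\|_{L^2}^2\lesssim \lambda^{\tilde s}$ with $\tilde{s}=\min\{s,2\tau\}$, which is exactly where the saturation enters since any smoothness beyond $2\tau$ cannot be exploited. The variance is controlled via $z\reg(z)^2\le EF/\lambda$ together with the large-dimensional asymptotics $\mu_k\asymp d^{-k}$ and $N(d,k)\asymp d^k$, giving $V(\lambda)\asymp n^{-1}\sum_k N(d,k)\mu_k/(\mu_k+\lambda)$ which is dominated by the largest block with $\mu_k\gtrsim\lambda$. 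In addition, the tail blocks $k\ge p+1$ contribute an irreducible approximation error of order $\mu_{p+1}^{\tilde s}\asymp d^{-\tilde s(p+1)}$. Balancing these three contributions over $\lambda\asymp d^{-r}$ produces the three candidate rates
\[
\gamma-p,\qquad \frac{\tau(\gamma-p+1)+p\tilde{s}}{\tau+1},\qquad \tilde{s}(p+1),
\]
corresponding respectively to variance of the critical block $k=p$, the interior bias--variance balance at the qualification cap, and tail approximation; taking $\lambda^\star$ at the argmin achieves the announced upper rate.

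For the lower bound valid for every $\lambda=\lambda(d,n)\to 0$, I would use Assumption~\ref{assumption source condition}(b), which yields constant-order lower bounds $\sum_{i\in\calI_{d,k}}\mu_k^{-s} f_i^2 \ge c_0$ on each relevant degree block, together with a matching lower qualification $\rem(z)^2\gtrsim \min(1,(\lambda/z)^{2\tilde s})$ valid near $z=0$ (using that $\rem$ is analytic with $\rem(0)=1$). Splitting $\lambda$ into the three regimes $\lambda\lesssim d^{-(p+1)}$, $d^{-(p+1)}\lesssim\lambda\lesssim d^{-p}$, and $\lambda\gtrsim d^{-p}$, one verifies that in each regime at least one of the three candidate rates is forced as a lower bound (tail approximation in the first, saturated bias--variance trade-off in the second, variance of the block $k=p$ in the third), so the infimum over $\lambda$ is pinned from below by the same minimum. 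Assumption~\ref{assu:coef_of_inner_prod_kernel} is used throughout to guarantee $\mu_k\asymp d^{-k}$ on every relevant block.

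The main obstacle is the interaction of the finite qualification with the sharply structured block spectrum. The fixed-dimensional saturation argument of \cite{li2022saturation} is based on a single-eigenvalue mechanism and does not transfer directly, because on $\bbS^d$ the spectrum is partitioned into degree blocks of sharply varying multiplicity $N(d,k)\asymp d^k$, so the block that saturates shifts with $\lambda$. Correspondingly, the two-sided deterministic equivalents of \cite{zhang2024optimal}, developed only for the KRR filter $\reg^{\krr}$, must be extended to a general analytic $\reg$ with qualification $\tau$: the upper-bound side needs extraction of the exact $\lambda^{\tilde s}$ bias scale from the qualification alone, and the lower-bound side needs uniform-in-$\lambda$ two-sided control of both bias and variance on each block. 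Once these deterministic equivalents are in place, the three-way balancing argument is routine and, as a sanity check, also recovers Theorem~\ref{thm:kernel_methods_bounds} (and hence Theorem~\ref{thm:gradient_flow}) in the limit $\tau\to\infty$, in which the middle saturation term disappears and the rate collapses to the non-saturating form.
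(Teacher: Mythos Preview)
Your high-level architecture matches the paper's: bias--variance decomposition, replacement of $T_X$ by $T$, block-by-block evaluation of the population quantities via $\mu_k\asymp d^{-k}$ and $N(d,k)\asymp d^k$, three-way balancing over $\lambda\asymp d^{-r}$, and a regime-splitting lower bound. Two points, however, are genuine gaps rather than details to be filled in.

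First, the passage from $\reg(T_X),\rem(T_X)$ to $\reg(T),\rem(T)$ is not handled by Lemma~\ref{lemma_bounds_on_quantities}: that lemma is purely a deterministic computation of the population quantities $\mathcal{N}_{i,\varphi}$, $\mathcal{M}_{i,\varphi}$ and contains no concentration. For a general analytic filter the operator $\reg(T_X)$ is a nonlinear function of $T_X$, and standard Bernstein-type concentration for $T_X-T$ does not propagate through it. The paper's proof hinges on the analytic functional calculus of \cite{li2024generalization}: one writes $\reg(T_X)-\reg(T)=\tfrac{1}{2\pi i}\oint_{\Gamma_\lambda}R_{T_X}(z)(T-T_X)R_T(z)\reg(z)\,\dd z$ and then bounds each factor using Proposition~\ref{prop:ContourSpectralMapping} and the resolvent estimates. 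This is the mechanism that yields Lemma~\ref{lemma approximation A} and the bias remainder Lemmas~\ref{lemma bias appr term}--\ref{lemma bias appr term_misspe}, and it does not reduce to ``blockwise concentration''. You correctly flag that the KRR-specific equivalents of \cite{zhang2024optimal} must be extended, but you do not identify the tool that makes the extension possible.

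Second, your bias lower bound relies on $\rem(z)^2\gtrsim \min\bigl(1,(\lambda/z)^{2\tilde s}\bigr)$ derived ``using that $\rem$ is analytic with $\rem(0)=1$''. Analyticity at $0$ gives only $\rem(z)\to 1$ as $z\to 0$; it says nothing about $\rem(z)$ for $z\gg\lambda$, which is exactly where the saturation lower bound lives (the dominant bias contribution when $s>2\tau$ comes from $\rem(\mu_0)$ with $\mu_0=\Theta(1)$). The paper therefore \emph{assumes} the two-sided qualification, Definition~\ref{def:filter}(3): $\rem(\lambda_1)\ge \mathfrak{C}_6\lambda^\tau$ and $(z/\lambda)^{2\tau}\rem^2(z)\ge \mathfrak{C}_7$ for $z>\lambda$. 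Without this hypothesis the lower bound in Lemma~\ref{lemma_bounds_on_quantities}~I (the cases $s\le 2\tau$ and $s>2\tau$) collapses, and with it the claim that no choice of $\lambda\to 0$ can beat the stated rate. A smaller slip in the same spirit: your variance expression $n^{-1}\sum_k N(d,k)\mu_k/(\mu_k+\lambda)$ is $\mathcal{N}_1(\lambda)/n$, whereas the correct leading term is $\sigma^2 n^{-1}\mathcal{N}_{2,\varphi}(\lambda)=\sigma^2 n^{-1}\sum_k N(d,k)\bigl[\mu_k\reg(\mu_k)\bigr]^2$ (Theorem~\ref{thm:Variance}); the two differ on the tail blocks and only coincide at the level of the dominant rate after the balancing, so this must be tracked carefully when verifying the conditions in Lemmas~\ref{lemma:variance_verified}--\ref{lemma:variance_verified_saturation}.
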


\begin{remark}
These theorems substantially generalize the results on exact generalization error bounds of analytic spectral algorithms under the fixed-dimensional setting given in \citet{li2024generalization}.
Although the ``analytic functional argument'' introduced in their proof is still vital for us to deal with the general spectral algorithms, their proof has to rely on the polynomial eigendecay assumption that $\lambda_j \asymp j^{-\beta}$ (Assumption 1),
which does not hold in large dimensions since the hidden constant factors in the assumption vary with $d$11 (\cite{lu2023optimal}).
Hence, their proof is not easy to generalize to large-dimensional spectral algorithms.
\end{remark}


We provide some graphical illustrations of  Theorem \ref{thm:kernel_methods_bounds} and Theorem \ref{thm:kernel_methods_bounds_sat} in Figure \ref{figure_1} (with $\tau=2$) and in Appendix \ref{append:graphical_illustra} (with $\tau=1$, $\tau=2$, $\tau=4$, and $\tau=\infty$, corresponding to KRR, iterated ridge regression in Example \ref{example:IteratedRidge} and kernel gradient flow).

As a direct consequence of Theorem \ref{thm:modified_minimax_lower_bound}, Theorem \ref{thm:kernel_methods_bounds}, and Theorem \ref{thm:kernel_methods_bounds_sat}, we find that
for the spectral algorithm with estimator defined in \eqref{eq:SA}, it is minimax optimal if $s \leq \tau$ and the conditions in Theorem \ref{thm:kernel_methods_bounds} hold. Moreover, these results show several phenomena for large-dimensional spectral algorithms.

\paragraph{Saturation effect of large-dimensional spectral algorithms with finite qualification.} In the large-dimensional setting and for the inner product kernel on the sphere, our results show that the saturation effect of spectral algorithms occurs when $s>\tau$. As shown in Figure \ref{fig_1_3} and Figure \ref{fig_1_4}, when $s>\tau$, no matter how carefully one tunes the regularization parameter $\lambda$, the convergence rate can not be faster than $d^{-\min\{\gamma-p, \frac{\tau(\gamma-p+1)+p\tilde{s}}{\tau+1}, \tilde{s}(p+1)\}}$, thus can not achieve the minimax lower bound $d^{-\min\{\gamma-p, s(p+1)\}}$.

\paragraph{Periodic plateau behavior of spectral algorithms when $s \leq 2 \tau$.}
When $0<s \leq 2\tau$ and $\gamma \in [p(s+1)+ s+\max\{s, \tau\}/\tau-1, (p+1)(s+1))$ for an integer $p \geq 0$, from Theorem \ref{thm:kernel_methods_bounds} and Theorem \ref{thm:kernel_methods_bounds_sat}, the convergence rate on the excess risk of spectral algorithm $d^{-s(p+1)}$. The above rate does not change when $\gamma$ varies, which can also be found in Figure \ref{fig_1_2} and Figure \ref{fig_1_3}. 
BIn other words, if we fix a large dimension $d$ and increase $\gamma$ (or equivalently,  increase the sample size $n$), the optimal rate of excess risk of a spectral algorithm stays invariant in certain ranges. Therefore, in order to improve the rate of excess risk, one has to increase the sample size above a certain threshold.

\paragraph{Polynomial approximation barrier of spectral algorithms when $s \to 0$.} From Theorem \ref{thm:kernel_methods_bounds}, when $s$ is close to zero, the convergence rate $d^{-\min\left\{\gamma-p, s(p+1)\right\}}$ is unchanged in the range $\gamma \in [p(s+1) + s, (p+1)(s+1))$, and increases in the short range $\gamma \in [p(s+1), p(s+1) + s)$. In other words, the excess risk of spectral algorithms will drop when $\gamma$ exceeds $p(s+1) \approx p$ for any integer $p$ and will stay invariant for most of the other $\gamma$. We term the above phenomenon as the polynomial approximation barrier of spectral algorithms (borrowed from \cite{ghorbani2021linearized}), and it can be illustrated by Figure \ref{fig_1_1} with $s=0.01$.

\begin{remark}
    \cite{ghorbani2021linearized} discovered the polynomial approximation barrier of KRR.
    As shown by Figure 5 and Theorem 4 in \cite{ghorbani2021linearized}, if $s=0$ and the true function falls into $L^2=[H]^{0}$, then with high probability we have
\begin{equation}\label{result of linearized}
    \Big| \mathbb{E} \Big( \Big\|\hat{f}_{\lambda_{\star}}^{\krr}  - f_{\star} \Big\|^2_{L^2}\Big)-\Big\|\mathrm{P}_{>p} f_{\star}\Big\|_{L^2}^2\Big| \leq \varepsilon\Big(\Big\|f_{\star}\Big\|_{L^2}^2+\sigma^2\Big),
\end{equation}
where $p$ is the integer satisfying $\gamma \in [p, p+1)$, $\lambda_{\star}$ is defined as in Theorem 4 in \cite{ghorbani2021linearized}, $\mathrm{P}_{>\ell}$ means the projection onto polynomials with degree $>\ell$, and $\varepsilon$ is any positive real number. Notice that \eqref{result of linearized} implies that the excess risk of KRR will drop when $\gamma$ exceeds any integer and will stay invariant for other $\gamma$, and is consistent with our results for spectral algorithms.
\end{remark}

\section{Conclusion}

In this paper, we rigorously prove the saturation effect of KRR in large dimensions. 
Let $s >0$ and $\gamma>0$ be fixed real numbers, denote $p$ as the integer satisfying $\gamma \in [p(s+1), (p+1)(s+1))$. Given that the kernel is an inner product kernel defined on the sphere
and that $f_{\star}$ falls into the interpolation space $[\calH]^s$, 
we first show that the convergence rate on the excess risk of large-dimensional kernel gradient flow is $\Theta_{\mathbb{P}}\left(
        d^{-\min\left\{
        \gamma-p, s(p+1)
        \right\}
        }
        \right)
             \cdot \text{poly}\left(\ln(d)\right)$ (Theorem \ref{thm:gradient_flow}), which is faster than that of KRR given in \cite{zhang2024optimal}. 
We then determine the improved minimax lower bound as $\Omega\left(
        d^{-\min\left\{
        \gamma-p, s(p+1)
        \right\}
        }
        \right)
             / \text{poly}\left(\ln(d)\right)$ (Theorem \ref{thm:modified_minimax_lower_bound}). Combining these results, we know that kernel gradient flow is minimax optimal in large dimensions, and KRR is inferior to kernel gradient flow in large dimensions.
Our results suggest that previous results on large-dimensional KRR may not be directly extendable to large-dimensional neural networks if the regression function is over-smooth.

In Section \ref{sec:kernel_methods}, we generalize our results to certain spectral algorithms. We determine the convergence rate on the excess risk of large-dimensional spectral algorithms (Theorem \ref{thm:kernel_methods_bounds} and Theorem \ref{thm:kernel_methods_bounds_sat}). From these results, we find several new phenomena exhibited in large-dimensional spectral algorithms, including the saturation effect, the periodic plateau behavior, and the polynomial approximation barrier.

In this paper, we only consider the convergence rate on the excess risk of optimal-tuned large-dimensional spectral algorithms with uniform input distribution on a hypersphere.
We believe that several results in fixed-dimensional settings with input distribution on more general domains (e.g., \cite{haas2023mind, li2023statistical}) can indeed be extended to large-dimensional settings, although we must carefully consider the constants that depend on $d$.
Furthermore, we believe that by considering the learning curve of large-dimensional spectral algorithms (i.e., the convergence rate on the excess risk of spectral algorithms with any regularization parameter $\lambda > 0$) or the convergence rate on the excess risk of large-dimensional kernel interpolation (i.e., KRR with $\lambda = 0$), further research can find a wealth of new phenomena compared with the fixed-dimensional setting.

\begin{ack}
Lin’s research was supported in part by the National Natural Science Foundation of China (Grant 92370122, Grant 11971257).
The authors are grateful to the reviewers for their constructive comments that greatly improved the quality and presentation of this paper.
\end{ack}

\bibliographystyle{chicago}
\bibliography{ref.bib}


\appendix

\section{Graphical illustration and numerical experiments of main results}\label{append:graphical_illustra}

\subsection{Graphical illustration of Theorem \ref{thm:gradient_flow}, Theorem \ref{thm:kernel_methods_bounds}, and Theorem \ref{thm:kernel_methods_bounds_sat}}

Recall that Theorem \ref{thm:gradient_flow}, Theorem \ref{thm:kernel_methods_bounds}, and Theorem \ref{thm:kernel_methods_bounds_sat} determined the convergence rate on the excess risk of: (i) large-dimensional kernel gradient flow with $s>0$; (ii) large-dimensional spectral algorithm with $\tau \geq 1$ and $s \leq \tau$; and (iii) large-dimensional spectral algorithm with $\tau \geq 1$ and $s > \tau$.

In Figure \ref{figure_1}, we have provided a visual illustration of Theorem \ref{thm:kernel_methods_bounds} and Theorem \ref{thm:kernel_methods_bounds_sat} when $\tau=2$. 
Now, in Figure \ref{figure_2}, we provide more visual illustrations of the results of spectral algorithms with $\tau=1$, $\tau=2$, $\tau=4$, and $\tau=\infty$, which correspond to kernel ridge regression (KRR), iterated ridge regression in Example \ref{example:IteratedRidge}, and kernel gradient flow.

\begin{figure}[ht]
\vskip 0.05in
\centering
\subfigure[]{\includegraphics[width=0.24\columnwidth]{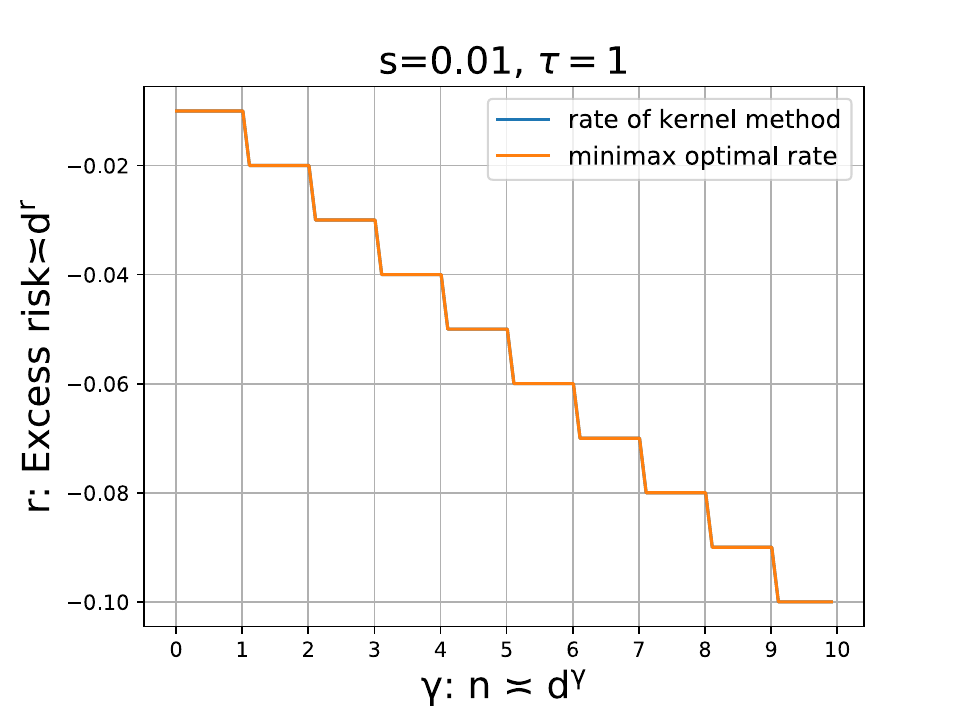}}
\subfigure[]{\includegraphics[width=0.24\columnwidth]{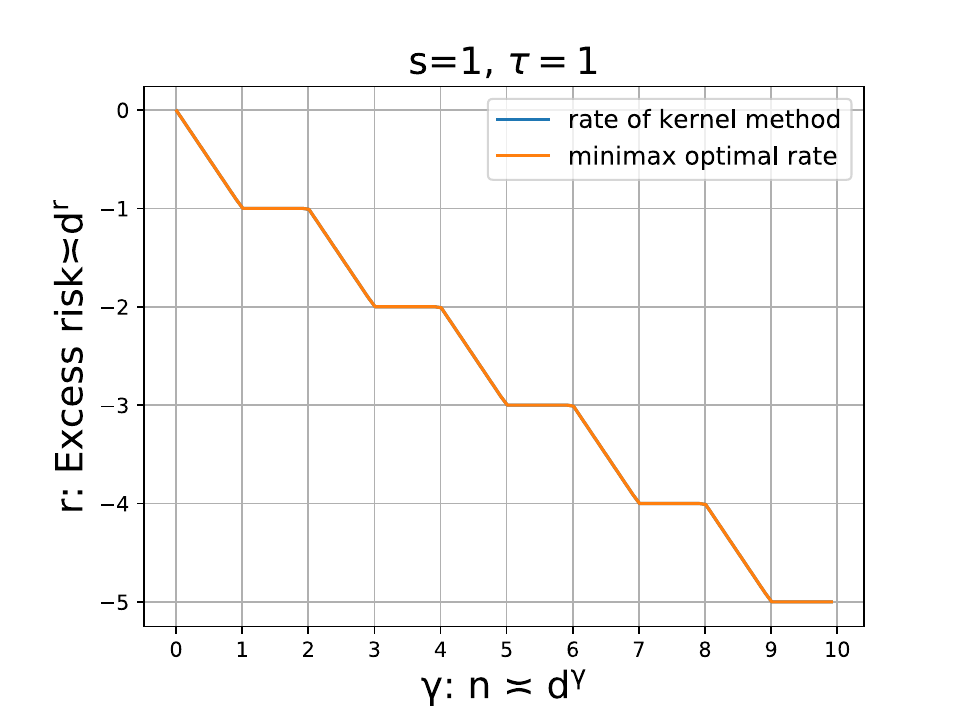}}
\subfigure[]{\includegraphics[width=0.24\columnwidth]{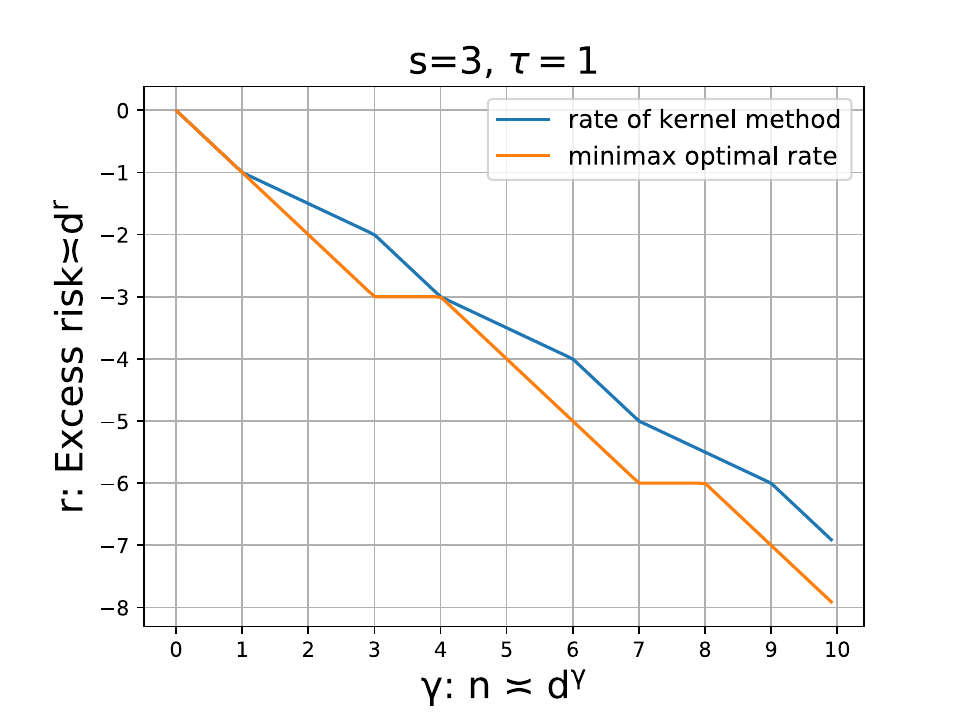}}
\subfigure[]{\includegraphics[width=0.24\columnwidth]{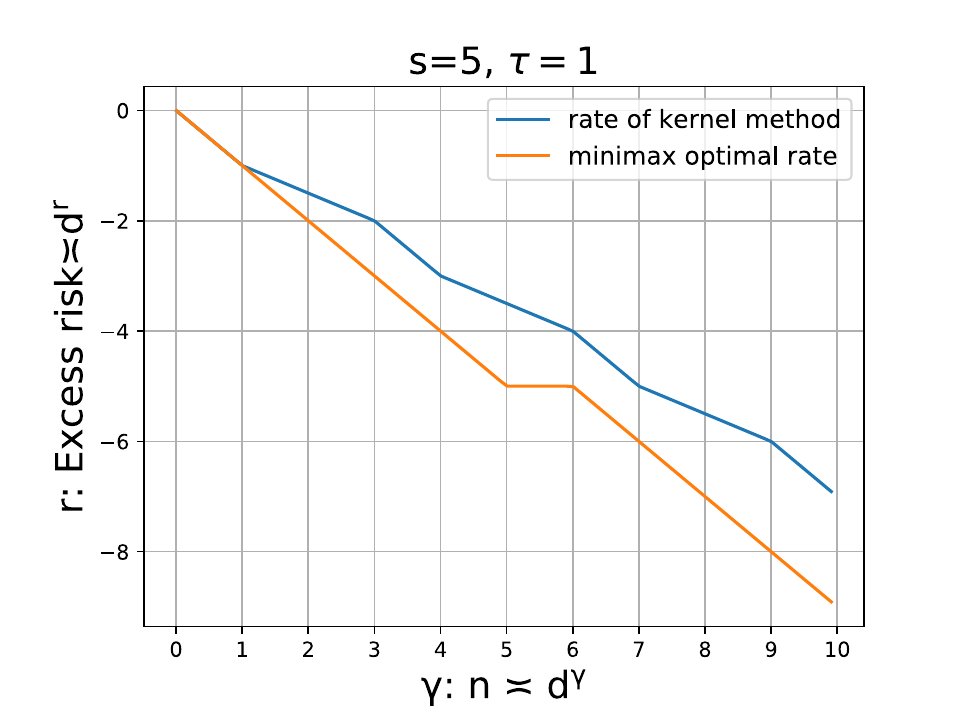}}

\subfigure[]{\includegraphics[width=0.24\columnwidth]{d0.01_tau2.pdf}}
\subfigure[]{\includegraphics[width=0.24\columnwidth]{d1_tau2.pdf}}
\subfigure[]{\includegraphics[width=0.24\columnwidth]{d3_tau2.pdf}}
\subfigure[]{\includegraphics[width=0.24\columnwidth]{d5_tau2.pdf}}

\subfigure[]{\includegraphics[width=0.24\columnwidth]{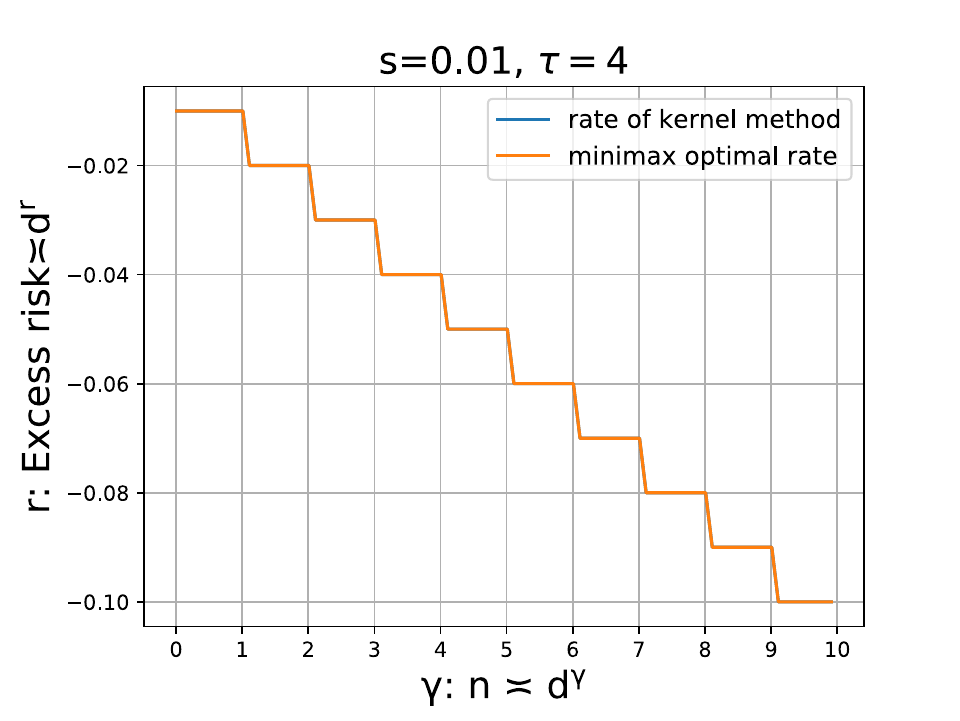}}
\subfigure[]{\includegraphics[width=0.24\columnwidth]{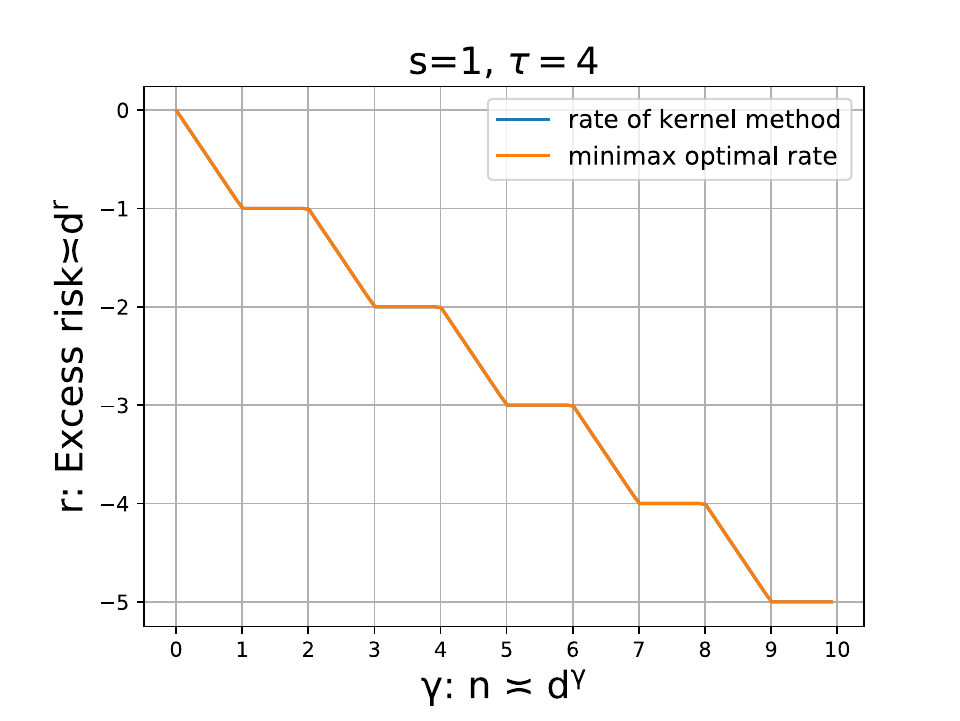}}
\subfigure[]{\includegraphics[width=0.24\columnwidth]{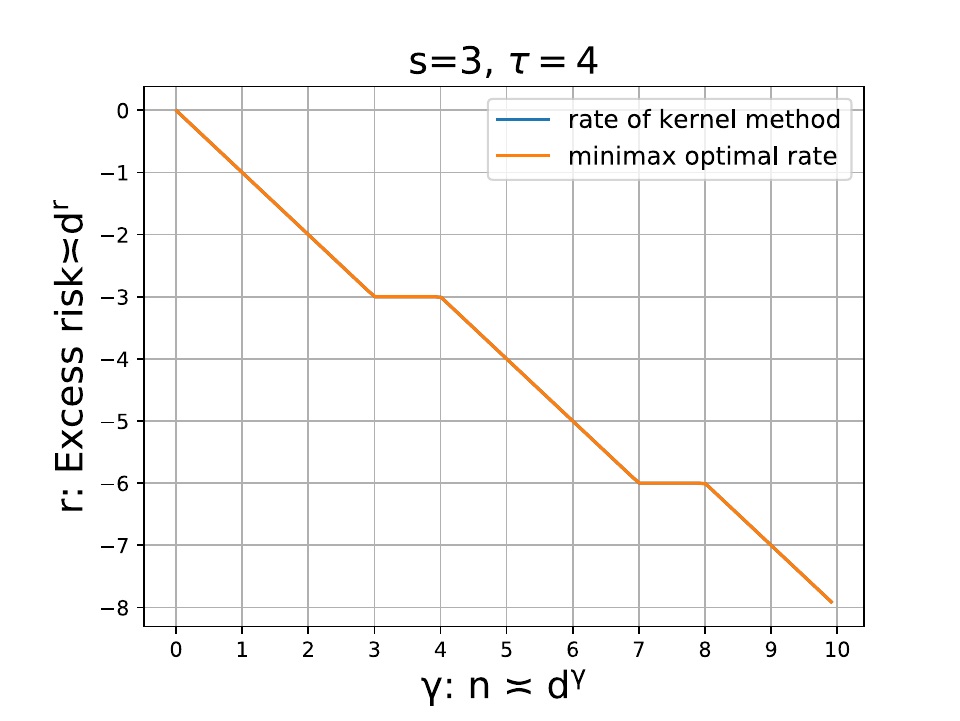}}
\subfigure[]{\includegraphics[width=0.24\columnwidth]{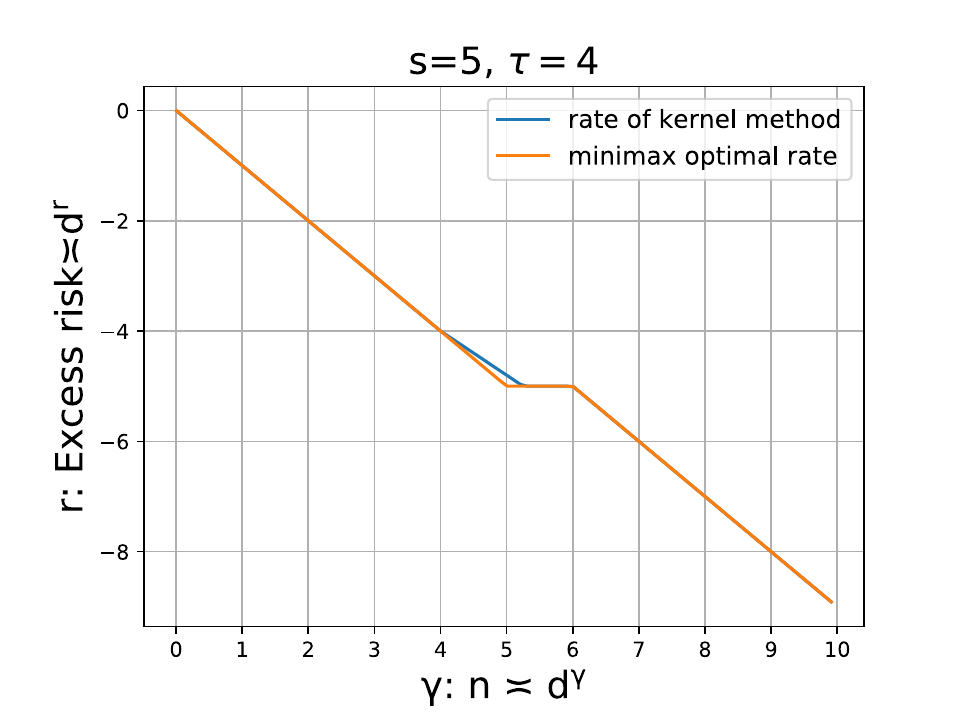}}

\subfigure[]{\includegraphics[width=0.24\columnwidth]{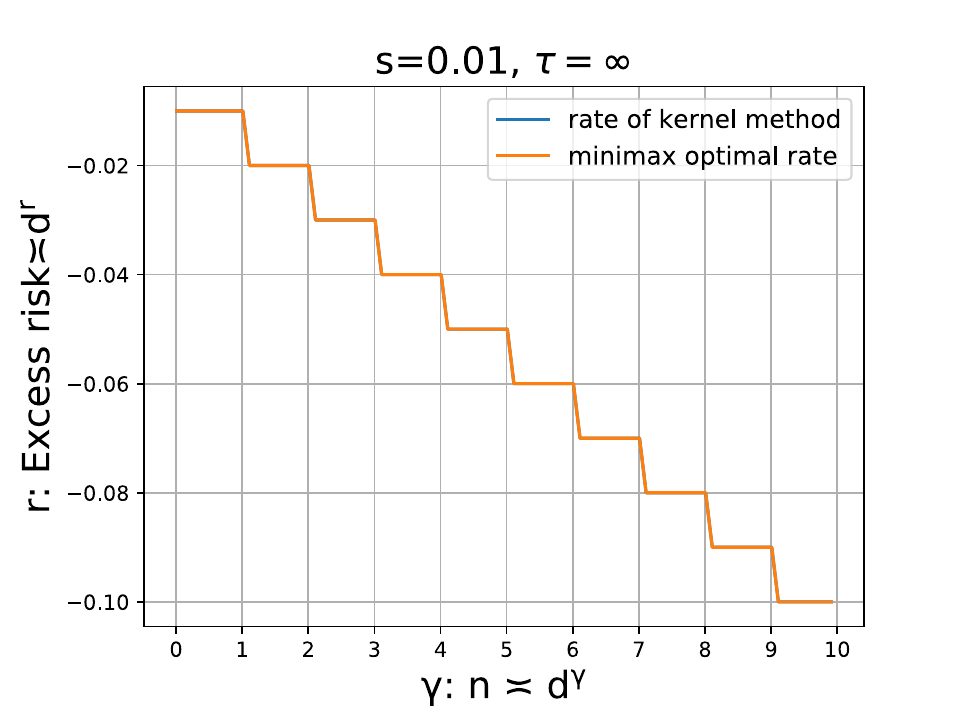}}
\subfigure[]{\includegraphics[width=0.24\columnwidth]{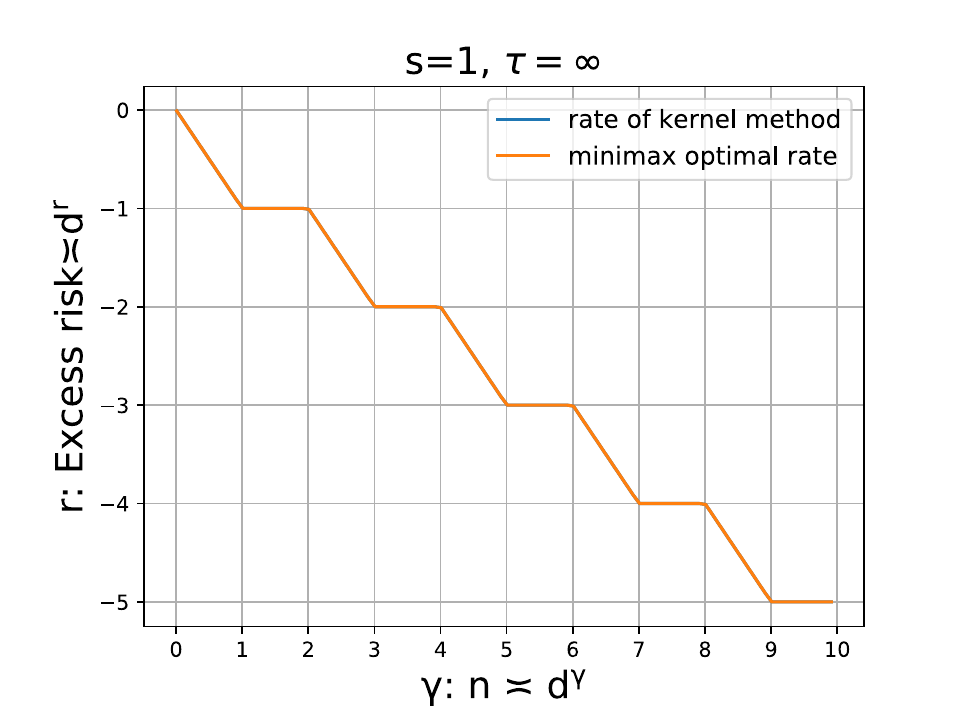}}
\subfigure[]{\includegraphics[width=0.24\columnwidth]{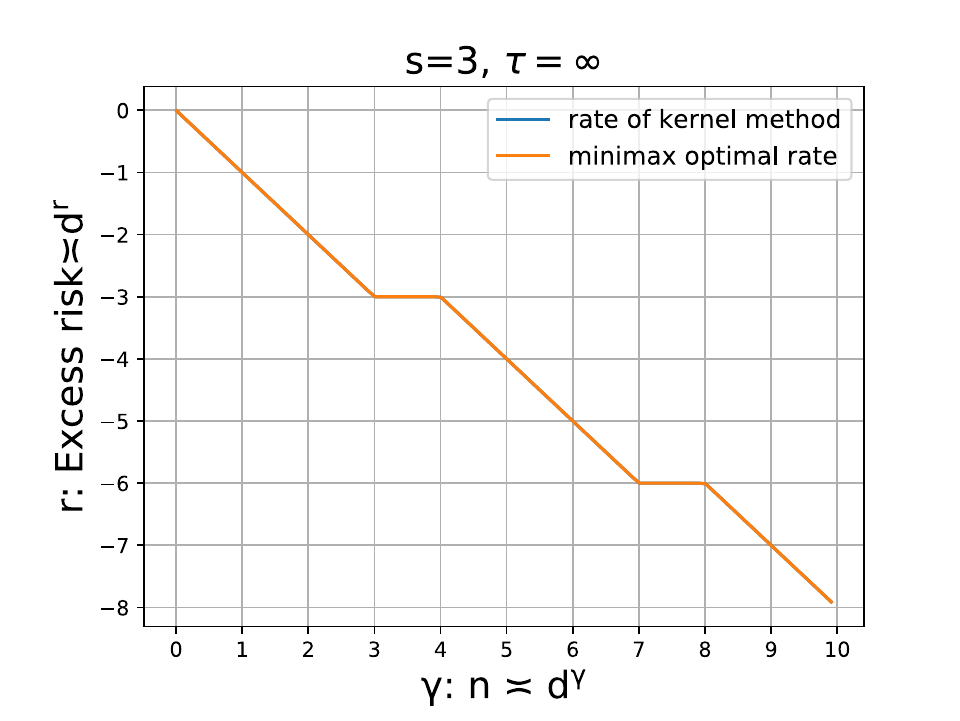}}
\subfigure[]{\includegraphics[width=0.24\columnwidth]{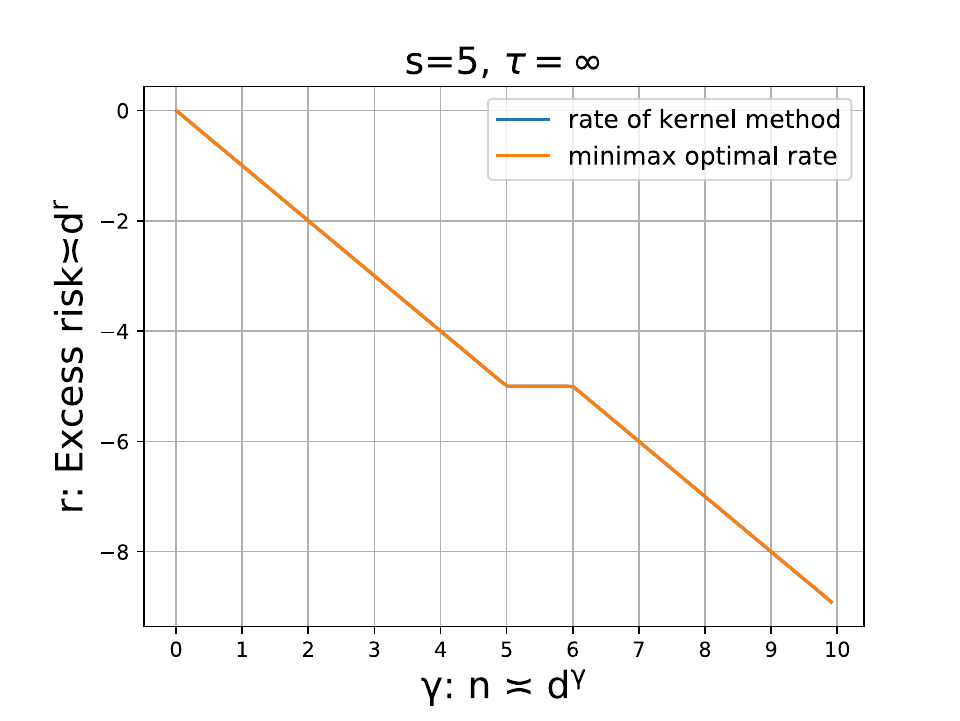}}

\caption{Convergence rates of spectral algorithms with qualification $\tau=1$ (KRR), $\tau=2$ (iterated ridge regression), $\tau=4$ (iterated ridge regression), and $\tau=\infty$ (kernel gradient flow) in Theorem \ref{thm:kernel_methods_bounds}, Theorem \ref{thm:kernel_methods_bounds_sat}, and corresponding minimax lower rates in Theorem \ref{thm:modified_minimax_lower_bound} with respect to dimension $d$. We present four graphs corresponding to four kinds of source conditions: $s = 0.01, 1, 3, 5$. The x-axis represents asymptotic scaling, $\gamma: n \asymp d^{\gamma}$; the y-axis represents the convergence rate of excess risk, $ r: \text{Excess risk} \asymp d^{r}$.
}
\label{figure_2}
\vskip 0.05in
\end{figure}

\subsection{Numerical experiments}

We conducted two experiments using two specific kernels: the RBF kernel and the NTK kernel. Experiment 1 was designed to confirm the optimal rate of kernel gradient flow and KRR when $s=1$. Experiment 2 was designed to illustrate the saturation effect of KRR when $s>1$.

\noindent {\bf Experiment 1:} We consider the following two inner product kernels:
\begin{itemize}
    \item[(i)] RBF kernel with a fixed bandwidth:
    $$
        K^{\mathrm{rbf}}(x,x^{\prime}) = \exp{\left(-\frac{\|x-x^{\prime}\|_{2}^{2}}{2}\right)}, ~~x, x^{\prime} \in \mathbb{S}^{d}.
    $$
    \item[(ii)] Neural Tangent Kernel (NTK) of a two-layer ReLU neural network:
    $$
        K^{\mathrm{ntk}}(x, x^\prime) := \Phi(\langle x, x^{\prime} \rangle), ~~x, x^{\prime} \in \mathbb{S}^{d},
    $$
    where $\Phi(t)=\left[\sin{(\arccos t)}+2(\pi-\arccos t)t\right]/ (2 \pi)$.
\end{itemize}
The RBF kernel satisfies Assumption \ref{assu:coef_of_inner_prod_kernel}. For the NTK, the coefficients of $\Phi(\cdot)$, $\{a_{j}\}_{j=0}^{\infty}$, satisfy $ a_{j} > 0, j \in \{0, 1\} \cup \{2,4,6,\ldots\}$ and $ a_{j} = 0, j \in \{3,5,7,\ldots\}$ (see, e.g., \cite{lu2023optimal}). 
As noted after Assumption \ref{assu:coef_of_inner_prod_kernel}, our results can be extended to inner product kernels with certain zero coefficients $a_j$. Specifically, for any $\gamma>0$, as long as $ a_{j} > 0$ for $j = \lfloor \gamma \rfloor, \lfloor \gamma \rfloor+1$, the proof and convergence rate remain the same. Therefore, for $\gamma<2$ in our experiments, the convergence rates for NTK will be the same as for the RBF kernel. 

We used the following data generation procedure:
\begin{displaymath}
    y_{i} = f_{*}(x_{i}) + \epsilon_{i}, ~~ i = 1, \ldots, n,
\end{displaymath}
where each $x_{i}$ is i.i.d. sampled from the uniform distribution on $\mathbb{S}^{d}$, and $ \epsilon_{i} \overset{\text{i.i.d.}}{\sim} \mathcal{N}(0,1) $.

We selected the training sample sizes $n$ with corresponding dimensions $d$ such that $n = d^{\gamma}, \gamma = 0.5, 1.0, 1.5, 1.8$. For each kernel and dimension $d$, we consider the following regression function $f_{*}$:
\begin{equation}\label{experiment true function}
    f_{*}(x) = K(u_{1},x) + K(u_{2},x) + K(u_{3},x), \quad \text{for some}\quad u_{1}, u_{2}, u_{3} \in \mathbb{S}^{d}.
\end{equation}
This function is in the RKHS $\mathcal{H}$, and it is easy to prove that, for any $u_{0} \in \mathbb{S}^{d}$, Assumption \ref{assumption source condition} (b) holds for $K(u_{0},\cdot)$ with $s=1$. Therefore, Assumption \ref{assumption source condition} holds for $s=1$. We used logarithmic least squares to fit the excess risk with respect to the sample size, resulting in the convergence rate $r$. 
As shown in Figure \ref{fig:3_1} and Figure \ref{fig:3_2},
the experimental results align well with our theoretical findings.

\noindent {\bf Experiment 2:} We use most of the settings from Experiment 1, except that the regression function is changed to $f_{*}(x) = \sqrt{\mu_2^{s}N(d, 2)} P_2(<\xi, x>)$ with $s=1.9$, $P_2(t) := (d t^2-1)/(d-1)$ the Gegenbauer polynomial, and $\xi \in \mathbb{S}^{d}$. Notice that the addition formula $P_2(<\xi, x>) = \frac{1}{N(d, 2)}\sum_{j=1}^{N(d, 2)}Y_{2, j}(\xi)Y_{2, j}(x)$ implies that 
$$
\|f_{*}\|_{[\mathcal{H}]^{s}}^2 = \frac{1}{N(d, 2)} \sum_{j=1}^{N(d, 2)}Y_{2, j}^2(\xi) = P_2(1) = 1,
$$
hence $f_{*} \in [\mathcal{H}]^{s}$ and satisfies Assumption \ref{assumption source condition}.

Our experiment settings are similar to those on page 30 of \cite{li2022saturation}. We choose the regularization parameter for KRR and kernel gradient flow as $\lambda=0.05 \cdot d^{-\theta}$. For KRR, since Corollary \ref{lemma:balance_saturation} suggests that the optimal regularization parameter is $\lambda \asymp d^{-0.7}$, we set $\theta=0.7$. Similarly, based on Corollary \ref{lemma:balance_saturation}, we set $\theta=0.5$ for kernel gradient flow. Additionally, we set $\gamma = 1.8$. The results indicate that the best convergence rate of KRR is slower than that of kernel gradient flow, implying that KRR is inferior to kernel gradient flow when the regression function is sufficiently smooth.

\begin{figure}[ht]
\subfigure[]{\includegraphics[width=0.48\columnwidth]{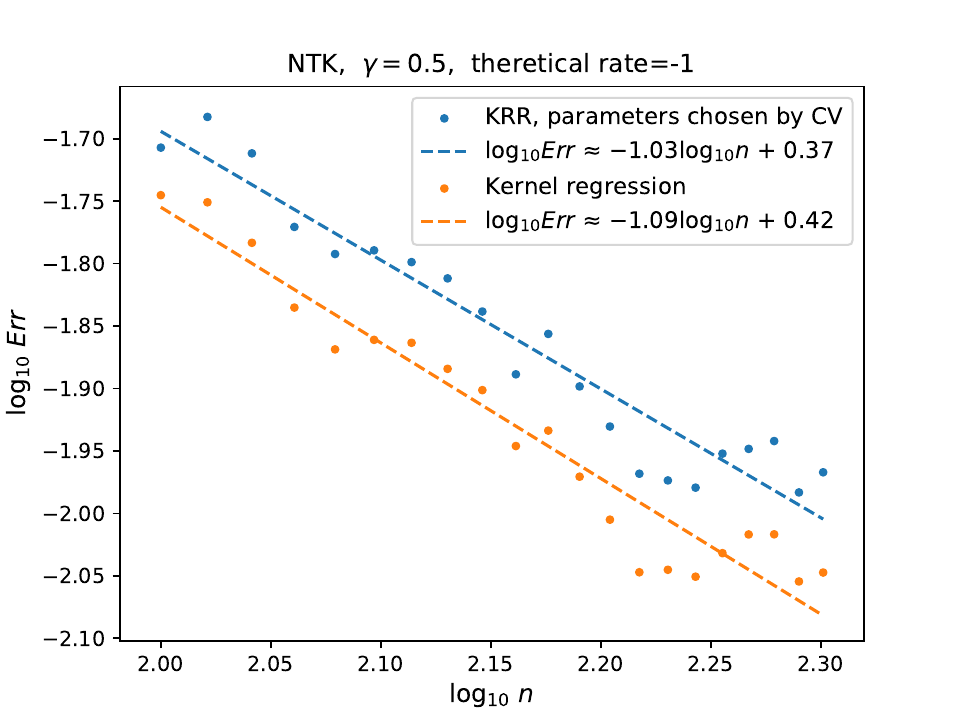}}
\subfigure[]{\includegraphics[width=0.48\columnwidth]{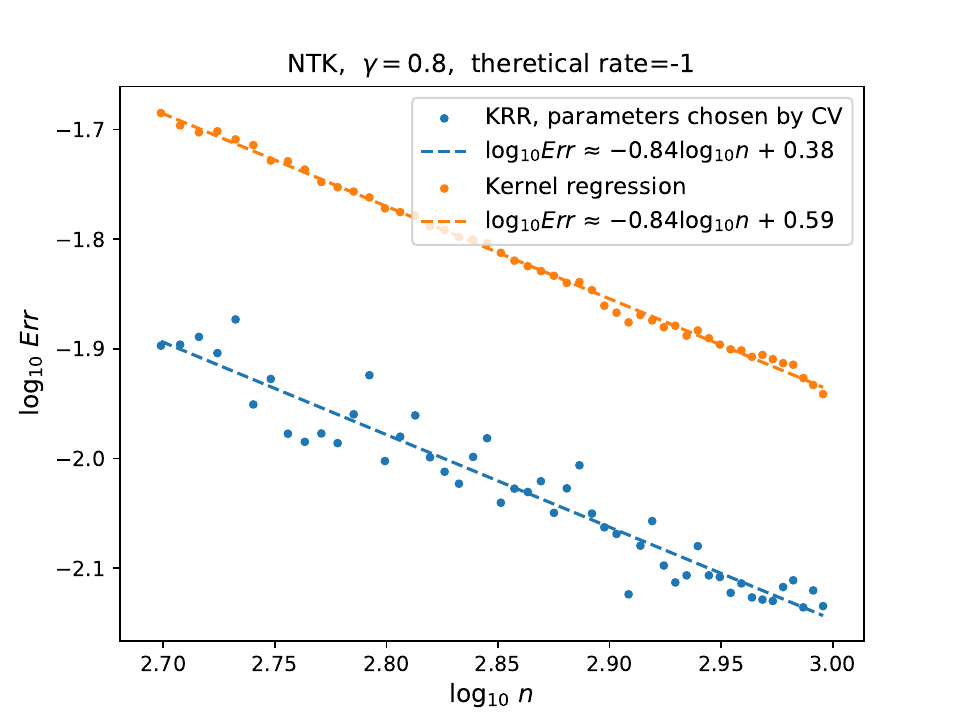}}

\subfigure[]{\includegraphics[width=0.48\columnwidth]{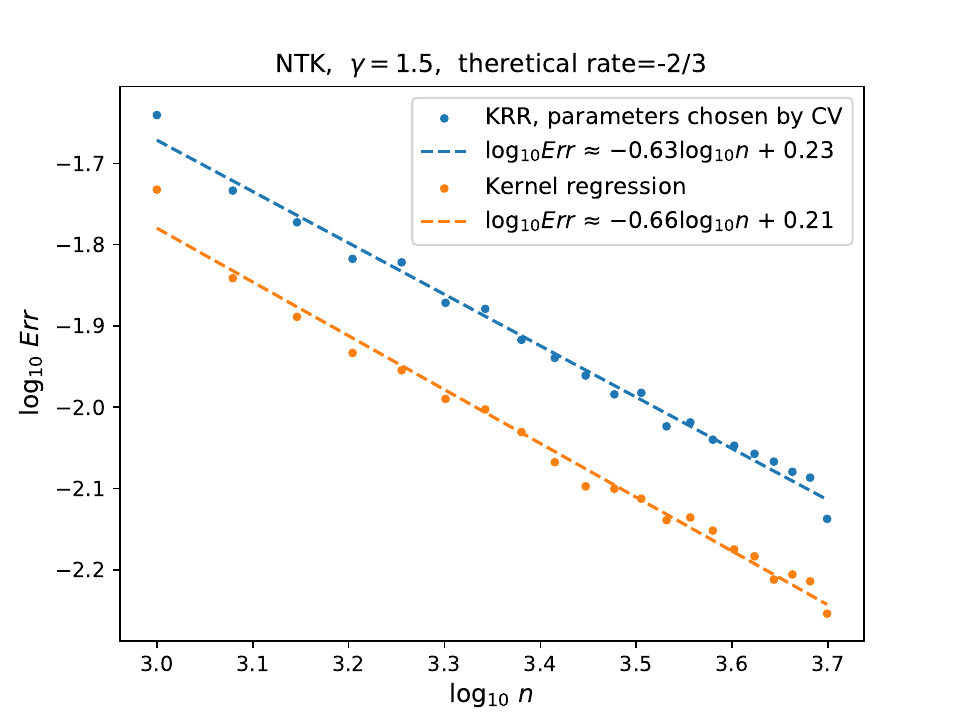}}
\subfigure[]{\includegraphics[width=0.48\columnwidth]{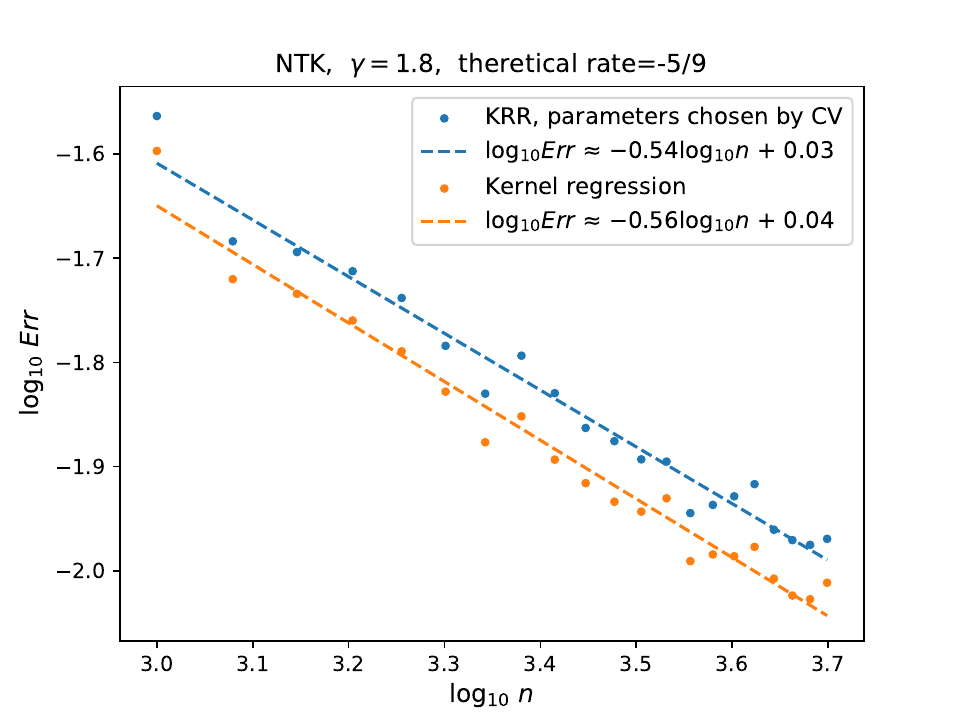}}

\caption{
    Results of Experiment 1. We repeated each experiment 50 times and reported the average excess risk for (a) kernel gradient flow (labeled as "kernel regression" in our reports) and (b) kernel ridge regression (KRR)  on 1000 test samples. We randomly selected $u_{1}, u_{2}, u_{3} $ and kept them fixed for each repeat. We choose the stopping time $t$ in kernel gradient flow as $C_{1} n^{0.5}$, where $ C_{1} \in \{0.001, 0.01, 0.1, 1, 10, 100, 1000\}$. We use 5-fold cross-validation to select the regularization parameter $\lambda$ in kernel ridge regression. The alternative values of $\lambda$ in cross-validation are $C_{2} n^{-C_{3}}$, where $ C_{2} \in \{0.001, 0.005, 0.01, 0.1, 0.5, 1, 2, 5, 10, 40, 100, 300, 1000\}, C_{3} \in \{ 0.1, 0.2, \ldots, 1.5\} $.
    }
    \label{fig:3_1}
\end{figure}

\begin{figure}[ht]
\subfigure[]{\includegraphics[width=0.48\columnwidth]{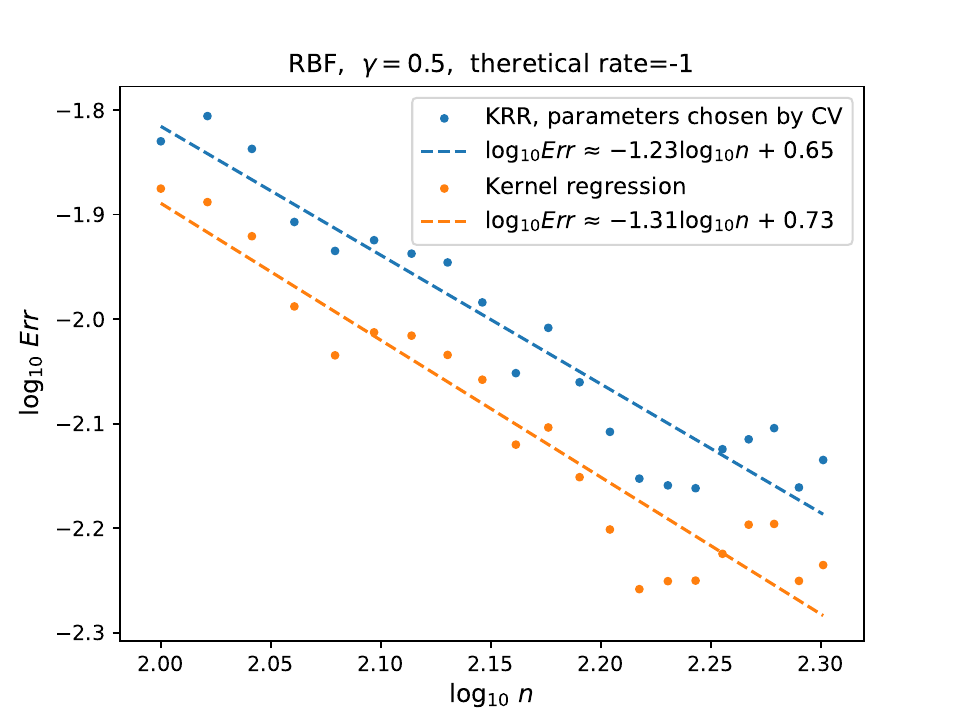}}
\subfigure[]{\includegraphics[width=0.48\columnwidth]{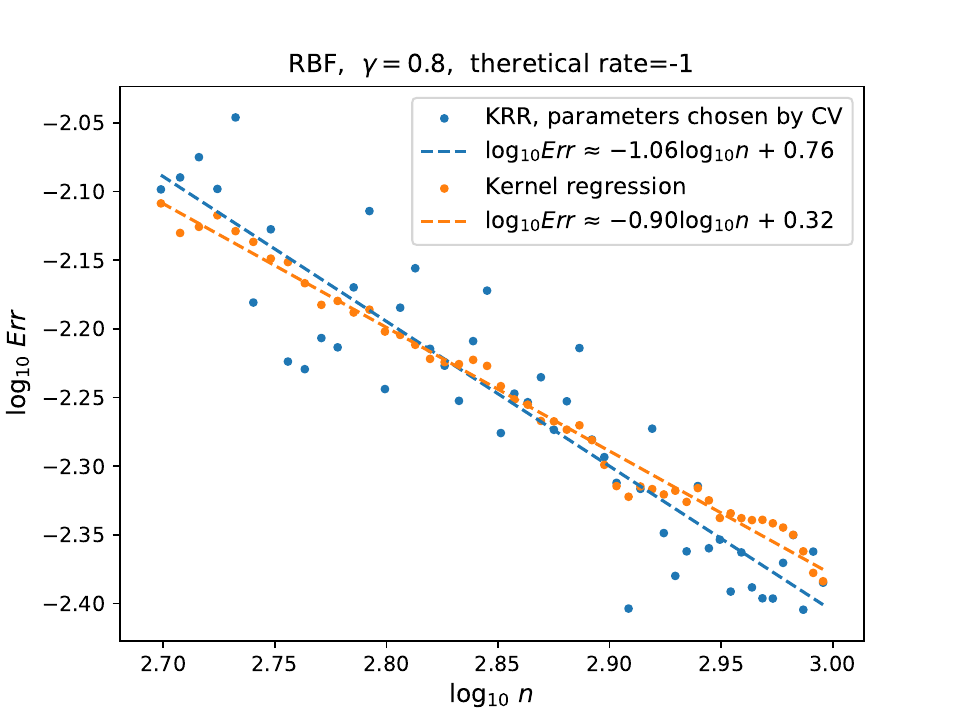}}

\subfigure[]{\includegraphics[width=0.48\columnwidth]{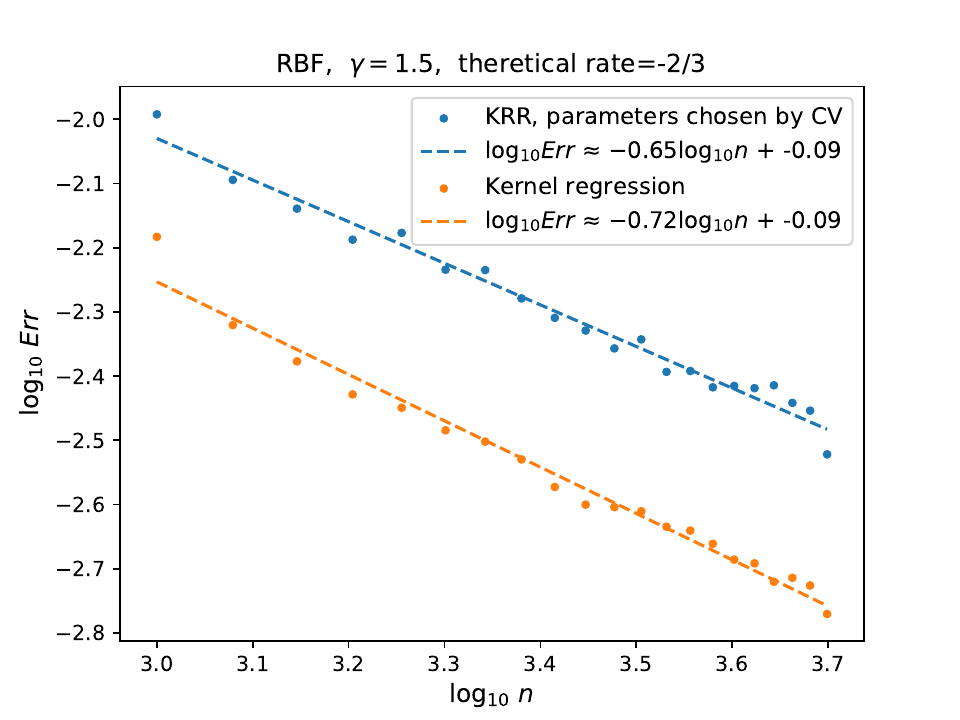}}
\subfigure[]{\includegraphics[width=0.48\columnwidth]{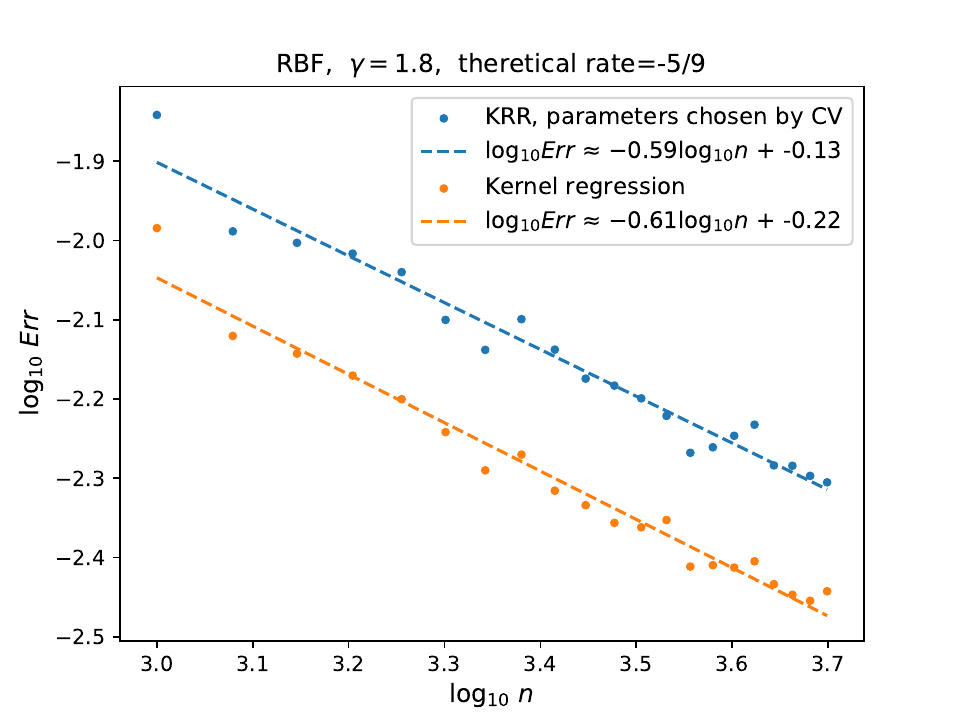}}

    \caption{
    A similar plot as Figure \ref{fig:3_1}, but with the RBF kernel.
    }
    \label{fig:3_2}
\end{figure}

\begin{figure}[ht]
\begin{center}
    \includegraphics[width=3.5in, keepaspectratio]{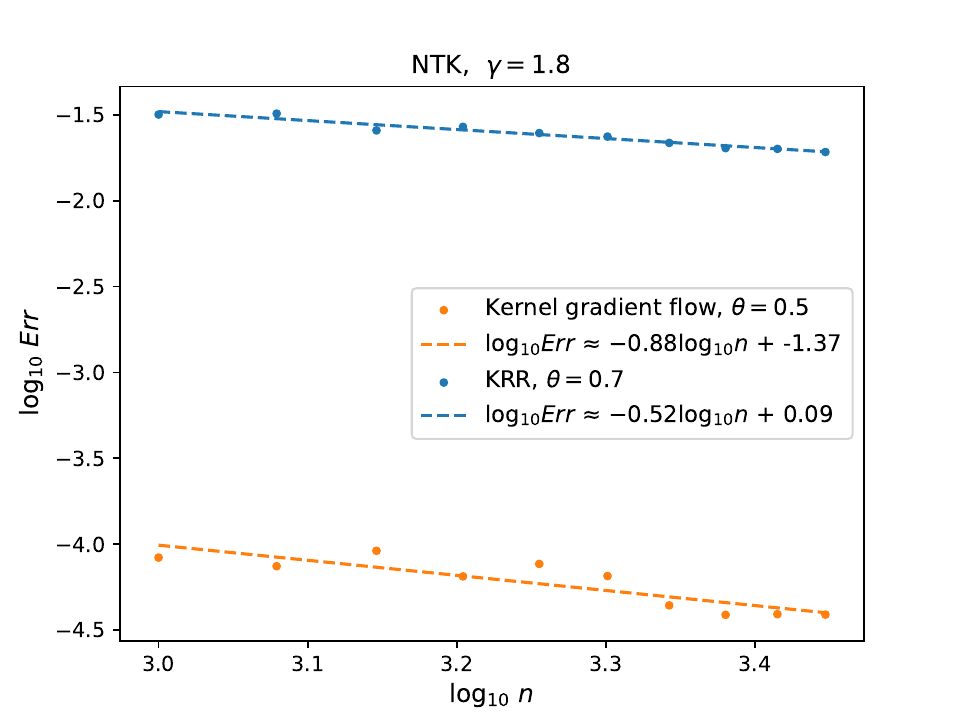}
\end{center} 
    \caption{
    Results of Experiment 2. It can be seen that the best rate of excess risk for KRR is slower than that of kernel gradient flow.
    }
    \label{fig:3_3}
\end{figure}

\section{Proof of Theorem \ref{thm:modified_minimax_lower_bound}}\label{append_modified_minimax}

We first restate Theorem \ref{thm:modified_minimax_lower_bound}.

\begin{theorem}[Restate Theorem \ref{thm:modified_minimax_lower_bound}]\label{thm:modified_minimax_lower_bound_restate}
Let $s>0$ and $\gamma>0$ be fixed real numbers. Denote $p$ as the integer satisfying $\gamma \in [p(s+1), (p+1)(s+1))$. 
Let $\mathcal{P}$ consist of all the distributions $\rho$ on $\mathcal{X} \times \mathcal{Y}$ such that Assumption \ref{assu:coef_of_inner_prod_kernel}  and Assumption \ref{assumption source condition} hold for $s$ and $\gamma$. 
Then for any $d \geq \mathfrak{C}$, a sufficiently large constant only depending on $s$, $\gamma$, $c_1$, and $c_2$, we have the following claims:
\begin{itemize}
    \item[(i)] When $\gamma \in \left(p(s+1), p+ps+s \right]$, we have
    \begin{equation*}
            \min _{\hat{f}} \max _{\rho \in \mathcal{P}} \mathbb{E}_{(X, Y) \sim \rho^{\otimes n}}
            \left\|\hat{f} - f_{\star}\right\|_{L^2}^2
            \geq 
            \frac{\ln\ln(d)}{50(\gamma-p(s+1))(\ln(d))^2}  d^{p-\gamma}.
\end{equation*}

    \item[(ii)] When $\gamma \in \left(p+ps+s, (p+1)(s+1) \right]$, we have
        \begin{equation*}
            \min _{\hat{f}} \max _{\rho \in \mathcal{P}} \mathbb{E}_{(X, Y) \sim \rho^{\otimes n}}
            \left\|\hat{f} - f_{\star}\right\|_{L^2}^2
            =
            \Omega\left(
        d^{-
        s(p+1)
        }
        \right),
\end{equation*}
where $\Omega$ only involves constants depending on $s, \sigma, \gamma, c_{0}, \kappa, c_{1}$, and $c_{2}$.
\end{itemize}
\end{theorem}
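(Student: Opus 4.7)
The plan is to prove both parts by a classical information-theoretic reduction: construct a family of candidate regression functions $\{f_\theta\} \subset \mathcal{P}$ indexed by $\theta$ in a finite set, then apply Fano's inequality (equivalently Assouad's lemma) on the Gaussian regression experiment. In each regime the perturbations will live in a single spherical harmonic subspace $\mathcal{I}_{d,k}$, exploiting that $N(d, k) \asymp d^k$ and $\mu_k \asymp d^{-k}$, and each candidate will be padded by a fixed base function $f_0 \in \mathcal{P}$ chosen once and for all to secure Assumption \ref{assumption source condition}(b) at every level $k \leq q$.

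For part (ii), the bias-dominated regime $\gamma \in (p + s(p+1), (p+1)(s+1)]$, the natural choice is
$$f_\theta = f_0 + a \sum_{j \in \mathcal{I}_{d, p+1}} \theta_j Y_{p+1, j}, \qquad \theta \in \{-1, +1\}^{N(d, p+1)},$$
with $a^2 \asymp R_\gamma^2 \mu_{p+1}^s / N(d, p+1)$, which saturates the source-norm budget $\|f_\theta-f_0\|_{[\mathcal{H}]^s}^2 = a^2 N(d, p+1)\mu_{p+1}^{-s} \lesssim R_\gamma^2$. Varshamov--Gilbert extracts a subfamily of size $M = 2^{\Omega(N(d, p+1))}$ with pairwise Hamming distance $\geq N(d, p+1)/4$, giving pairwise $L^2$-separation $\asymp a^2 N(d, p+1) \asymp d^{-s(p+1)}$. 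The pairwise KL divergence is $\lesssim (n/\sigma^2) a^2 N(d, p+1) \asymp d^{\gamma - s(p+1)}$, and since $\gamma \leq (p+1)(s+1)$ this is dominated by $\log M \asymp d^{p+1}$; Fano's inequality then yields the bound $\Omega(d^{-s(p+1)})$.

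For part (i), the variance-dominated regime $\gamma \in (p(s+1), p + s(p+1)]$, one perturbs in level $p$:
$$f_\theta = f_0 + a_d \sum_{j \in \mathcal{I}_{d, p}} \theta_j Y_{p, j}, \qquad \theta \in \{-1, +1\}^{N(d, p)},$$
where now the binding constraint is KL, not source norm. One picks $a_d^2 \asymp \sigma^2/n$ (up to $\ln d$ corrections) so that Fano's inequality over a Varshamov--Gilbert packing of size $2^{\Omega(N(d,p))}$ is admissible, and the source-norm budget $a_d^2 N(d, p) \mu_p^{-s} \asymp d^{p(s+1)-\gamma}$ is $\lesssim R_\gamma^2$ precisely because $\gamma > p(s+1)$. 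The resulting minimax separation is $a_d^2 N(d, p) \asymp d^{p - \gamma}/\mathrm{poly}(\ln d)$, matching the theorem. The explicit prefactor $\ln\ln(d)/[(\gamma - p(s+1))(\ln d)^2]$ appears through (a) choosing $a_d$ marginally inside the KL admissibility region, losing at most a factor of $\ln d$, and (b) how narrow this admissibility margin becomes as $\gamma$ approaches the boundary $p(s+1)$ where the variance and bias contributions formally balance; this accounts for the $1/(\gamma - p(s+1))$ factor.

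The main obstacle is simultaneously enforcing both halves of Assumption \ref{assumption source condition}: part (a) bounds $\|f_\theta\|_{[\mathcal{H}]^s}$, which is controlled directly by $a$ or $a_d$, but part (b) demands a uniform positive lower bound $c_0$ on the mass of $f_\theta$ in \emph{every} eigenspace up to level $q > \gamma$. Since the perturbations above contribute to a single level only, the base function $f_0$ must inject a $c_0$-sized projection at each of the $q+1$ non-trivial levels and be bounded in $[\mathcal{H}]^s$ by $R_\gamma$ minus the budget consumed by the perturbation, with constants independent of $d$; assembling such an $f_0$, and then carefully tracking the $k$-dependent prefactors arising in the Mercer decomposition of the inner-product kernel on $\mathbb{S}^d$ (where $N(d, k)$ and $\mu_k$ come with combinatorial constants), is where the remaining polylogarithmic slack enters and where the proof must be most careful.
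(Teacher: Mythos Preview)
Your approach is correct but genuinely different from the paper's. The paper does not use Fano/Varshamov--Gilbert at all: for part (ii) it simply cites Theorem~5 of \cite{zhang2024optimal}, and for part (i) it invokes a Yang--Barron covering-entropy criterion (Lemma~4.1 of \cite{lu2023optimal}), which says that if
\[
\frac{V_K(\tilde\varepsilon_2,\calD)+n\tilde\varepsilon_2^2+\ln 2}{V_2(\tilde\varepsilon_1,\calB)}\le \delta,
\]
then the minimax risk is at least $\tfrac{1-\delta}{4}\tilde\varepsilon_1^2$. Here both entropies are $\asymp N(d,p)(\gamma-p(s+1))\ln d$, while the extra numerator term $n\tilde\varepsilon_2^2\asymp N(d,p)$ is of lower order only by a factor $\ln d$; forcing the ratio strictly below $1$ is what produces the peculiar $\ln\ln(d)/[(\gamma-p(s+1))(\ln d)^2]$ prefactor.

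Your hypercube construction in part (i) is in fact cleaner and yields more: with $a_d^2=c\sigma^2/n$ for a small absolute constant $c$, the KL and $\log M$ are both $\Theta(N(d,p))$ with $d$-independent constants, and the source-norm budget $a_d^2 N(d,p)\mu_p^{-s}\asymp d^{p(s+1)-\gamma}\to 0$ is automatically satisfied for large $d$ since $\gamma>p(s+1)$. Splitting the level-$p$ index set in two (half carrying the perturbation, half carrying $f_0$'s mass for Assumption~\ref{assumption source condition}(b)) removes any interaction. This gives $\Omega(d^{p-\gamma})$ with a constant depending only on $s,\gamma,\sigma,c_0,c_1,c_2$, which is strictly stronger than the stated bound. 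So your explanation of the logarithmic prefactor---``choosing $a_d$ marginally inside the KL admissibility region'' and ``how narrow this margin becomes as $\gamma\to p(s+1)$''---is misdiagnosed: in your argument no logarithms are needed at all, and the $(\gamma-p(s+1))^{-1}$ factor is an artifact of the paper's entropy method, not of the problem.
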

\begin{proof}[Proof of Theorem \ref{thm:modified_minimax_lower_bound_restate}]
    The item (ii) is a direct corollary of Theorem 5 in \cite{zhang2024optimal}. Now we begin to proof the item (i). We need the following lemma.

    \begin{lemma}[Restate Lemma 4.1 in \cite{lu2023optimal}]\label{thm_lower_ultimate_tech}
For any $\delta \in (0,1)$ and any $0<\tilde\varepsilon_1, \tilde\varepsilon_2<\infty$ 
only depending on $n$, $d$, $\{\lambda_j\}$, $c_{1}$, $c_{2}$, and $\gamma$
and satisfying
\begin{equation}
    \frac{V_K(\tilde\varepsilon_2, \calD) + n\tilde\varepsilon_2^2 + \ln(2)}{V_2(\tilde\varepsilon_1, \calB)} \leq \delta,
\end{equation}
 we have 
\begin{equation}
\min _{\hat{f}} \max _{\rho \in \mathcal{P}} \mathbb{E}_{(X, Y) \sim \rho^{\otimes n}}
            \left\|\hat{f} - f_{\star}\right\|_{L^2}^2
\geq \frac{1 - \delta}{4} \tilde\varepsilon_1^2,
\end{equation}
where $\rho_{f_{\star}}$ is the joint-p.d.f. of $x, y$ given by (\ref{equation:true_model}) with $f=f_{\star}$, $\calB := \left\{ f \in \mathcal{H},~ \| f \|_{[\mathcal{H}]^{s}} \le R_{\gamma}\right\}$
\begin{align*}
    \calD := \left\{ \rho_{f}~\bigg\vert~ \mbox{ joint distribution of $(y,x$) where } x\sim \rho_{\calX}, y=f(x)+\epsilon, \epsilon\sim N(0,\sigma^{2}),
    f\in \calB \right\},
\end{align*}
and $V_2$, $V_K$ are the $\varepsilon$-covering entropies ( as defined in \cite{Yang_Density_1999,lu2023optimal}) of $(\calB, d^2=\|\cdot\|_{L^2}^2)$ and $(\calD, d^2=\text{ KL divergence })$. 
\end{lemma}

Suppose $\gamma \in \left(p(s+1), p+ps+s \right]$.
Let $C(p) = \mathfrak{C}_{12} / 10$ be a constant only depending on $\gamma$, where $\mathfrak{C}_{12}$ are given in Lemma \ref{lemma Ndk}. Then we introduce
\begin{align}
    \tilde\varepsilon_1^2
 \triangleq
d^{p-\gamma} / \ln(d) \mbox{~and~} \tilde\varepsilon_2^2
 \triangleq
C(p) \frac{d^p}{n} \ln\ln(d).
\end{align}
Let us further assume that  $d \geq \mathfrak{C}$, where $\mathfrak{C}$ is a sufficiently large constant only depending on $\gamma$, $s$, and $c_1$. By Lemma \ref{lemma inner eigen} and Lemma \ref{lemma Ndk} we have
\begin{equation}\label{eqn:near_lower_inner_1}
\begin{aligned}
\tilde\varepsilon_1^2&=
d^{p-\gamma} / \ln(d)
< 
\frac{\mathfrak{C}_{9}}{d^{ps}}
\leq
\mu_p^s\\
\mu_{p+1}^{s}
<
\tilde\varepsilon_2^2
& =
C(p) \frac{d^p}{n} \ln\ln(d)
\leq
\frac{C(p)}{c_1} d^{p-\gamma} \ln\ln(d)
<
\mu_p^s\\
n\tilde\varepsilon_2^2
&
\overset{\text{Definition of } \mathfrak{C}_{12}}{\leq}
  \frac{1}{10} N(d,p) \ln\ln(d).
\end{aligned}
\end{equation}

Therefore, for any $d \geq \mathfrak{C}$, where $\mathfrak{C}$ is a sufficiently large constant only depending on $s$, $\gamma$, and $c_1$, we have
\begin{equation}\label{eqn:near_lower_inner_2}
\begin{aligned}
V_{2}(\tilde\varepsilon_1, \calB)
\overset{\text{Lemma A.5 in \cite{lu2023optimal}}}{\geq} 
&~
K\left( \tilde\varepsilon_1 \right)
\geq
\frac{1}{2}N(d,p)\ln\left(\frac{\mu_p^s}{\tilde\varepsilon_1^2}\right)\\ \overset{\text{Definition of } \tilde\varepsilon_1^2}{\geq}
&~
\frac{1}{2} N(d,p) \ln\left( 
\mathfrak{C}_{9}  d^{\gamma-p(s+1)} \ln(d)
\right)\\
\geq
&~
\frac{1}{2} N(d,p) \left[(\gamma-p(s+1)) \ln(d) + \frac{1}{2}\ln\ln(d) \right].
\end{aligned}
\end{equation}

On the other hand, from Lemma \ref{lemma inner eigen}, Lemma \ref{lemma Ndk}, and Lemma \ref{lemma:monotone_of_eigenvalues_of_inner_product_kernels}, one can check the following claim:
\begin{claim}\label{claim_3_inner}
Suppose $\gamma \in \left(p(s+1), p+ps+s \right]$. For any $d \geq \mathfrak{C}$, where $\mathfrak{C}$ is a sufficiently large constant only depending on $s$, $\gamma$, $c_1$, and $c_2$, we have
\begin{equation*}
\begin{aligned}
&K\left( \sqrt{2}\sigma \tilde\varepsilon_2 / 6 \right) \leq
  \frac{1}{2}N(d,p)\ln\left(\frac{18\mu_p^s}{\sigma^2\tilde\varepsilon_2^2}\ln\ln(d)\right).
\end{aligned}
\end{equation*}
\end{claim}
Therefore, for any $d \geq \mathfrak{C}$, where $\mathfrak{C}$ is a sufficiently large constant only depending on $s$, $\gamma$, $c_1$, and $c_2$, we have
\begin{equation}\label{eqn:near_lower_inner_3}
\begin{aligned}
V_K(\tilde\varepsilon_2, \calD) =
&~
V_{2}(\sqrt{2}\sigma \tilde\varepsilon_2, \calB)
\overset{\text{Lemma A.5 in \cite{lu2023optimal}}}{\leq} 
K\left( \sqrt{2}\sigma \tilde\varepsilon_2 /6\right)\\
\overset{\text{ Claim } \ref{claim_3_inner}}{\leq}
&~
 \frac{1}{2}N(d,p)\ln\left(\frac{18\mu_p^s}{\sigma^2\tilde\varepsilon_2^2}\ln\ln(d)\right)\\ \overset{\text{Definition of } \tilde\varepsilon_2^2}{\leq}
&~
\frac{1}{2}N(d,p) \ln\left( 
18 \mathfrak{C}_{10} \sigma^{-2} 
[C(p)]^{-1} c_2
d^{\gamma-p(s+1)}
\right)\\
\leq
&~
 \frac{1}{2} N(d,p) \left[ (\gamma-p(s+1))\ln(d) + \frac{1}{5} \ln\ln(d) \right].
\end{aligned}
\end{equation}

Combining (\ref{eqn:near_lower_inner_1}), (\ref{eqn:near_lower_inner_2}), and (\ref{eqn:near_lower_inner_3}), we finally have:
\begin{equation*}
    \begin{aligned}
        \frac{V_K(\tilde\varepsilon_2, \calD) + n\tilde\varepsilon_2^2 + \ln(2)}{V_2(\tilde\varepsilon_1, \calB)}
        \leq
        \frac{\left[ 10(\gamma-p(s+1))\ln(d) + 4\ln\ln(d) \right]}{\left[ 10(\gamma-p(s+1))\ln(d) +5\ln\ln(d) \right]} 
        < 1,
    \end{aligned}
\end{equation*}
and from Lemma \ref{thm_lower_ultimate_tech}, we get
\begin{equation*}
\begin{aligned}
\min _{\hat{f}} \max _{f_{\star} \in \mathcal B} \mathbb{E}_{(\bold{X}, \bold{y}) \sim \rho_{f_{\star}}^{\otimes n}}
\left\|\hat{f} - f_{\star}\right\|_{L^2}^2
&\geq \frac{\ln\ln(d)}{4\ln(d)\left[ 10(\gamma-p(s+1))\ln(d) +5\ln\ln(d) \right]}  d^{p-\gamma}\\
&\geq
\frac{\ln\ln(d)}{50(\gamma-p(s+1))(\ln(d))^2}  d^{p-\gamma},
\end{aligned}
\end{equation*}
finishing the proof.
\end{proof}

\section{Definition of analytic filter functions}\label{appen:filter_func}

We first introduce the following definition of analytic filter functions (\cite{bauer2007_RegularizationAlgorithms,li2024generalization}).

\begin{definition}[Analytic filter functions]
  \label{def:filter}
  Let $\left\{ \reg : [0,\kappa^2] \to \R_{\geq 0} \mid \lambda \in (0,1) \right\}$ be a family of functions
  indexed with regularization parameter $\lambda$ and define the remainder function
  \begin{align}
    \label{eq:def_Psi}
    \rem(z) := 1 - z \reg(z).
  \end{align}
  We say that $\left\{ \reg \mid \lambda \in (0,1)  \right\}$ (or simply $\reg(z)$) is an analytic filter function if:
  \begin{itemize}
      \item[(1)] $z \reg(z) \in [0, 1]$ is non-decreasing with respect to $z$ and non-increasing with respect to $\lambda$.

    \item[(2)] 
    The \textit{qualification} of this filter function is $\tau \in [1,\infty]$ such that $\forall ~ 0 \leq \tau^{\prime} \leq \tau$ (and also $\tau^{\prime} <\infty$), there exist positive constants $\mathfrak{C}_{i}$ only depending on $\tau$, $i=1, 2, 3, 4, 5$, such that we have
    \begin{align}
      \label{eq:Filter_Rem_1}
      \reg(z) \geq \mathfrak{C}_{1} z^{-1}, \quad
      \rem(z) \leq \mathfrak{C}_{ 2} (z/\lambda)^{-\tau^{\prime}},
      \quad 
      &\forall \lambda \in (0,1), z > \lambda\\
      \label{eq:Filter_Rem_2}
      \mathfrak{C}_{3} \leq \lambda \reg(z) \leq \mathfrak{C}_{4}, \quad
      \rem(z) \geq \mathfrak{C}_{ 5},\quad 
      &\forall \lambda \in (0,1), z \leq \lambda.
    \end{align}

    \item[(3)] If $\tau<\infty$, then there exists a positive constant $\mathfrak{C}_{6}$ only depending on $\tau$ and $\lambda_1$, such that we have
    \begin{align}
    \label{eq:Filter_Rem_finite_case}
        \rem(\lambda_1) \geq \mathfrak{C}_{6} \lambda^{\tau},
    \end{align}
    where $\lambda_1$ is the largest eigenvalue of $K$ defined in (\ref{eqn:mercer_decomp});
    and there exist positive constants $\mathfrak{C}_{7}$ and $\mathfrak{C}_{8}$ only depending on $\tau$, such that we have
    \begin{align}
    \label{eq:Filter_Rem_finite_case2}
        (z/\lambda)^{2\tau} \rem^2(z)  \geq \mathfrak{C}_{7},
        \quad 
        &\forall \lambda \in (0,1), z > \lambda\\
        (z/\lambda)^{2\tau} \rem^2(z)  \leq \mathfrak{C}_{8} z  \reg(z),
        \quad 
        &\forall \lambda \in (0,1), z \leq \lambda.
    \end{align}

    \item[(4)] Let
  \begin{align*}
    D_{\lambda} &= \left\{ z \in \bbC : \Re z \in [-\lambda/2,\kappa^2], ~ \abs{\Im z} \leq \Re z + \lambda/2 \right\} \\
    & \quad \cup \left\{ z \in \bbC : \abs{z - \kappa^2} \leq \kappa^2 + \lambda/2,~ \Re z \geq \kappa^2  \right\};
  \end{align*}
  Then $\reg(z)$ can be extended to be an analytic function on some domain containing $D_\lambda$
  and the following conditions holds for all $\lambda \in (0,1)$:
  \begin{enumerate}
    \item[(C1)] $\abs{(z+\lambda)\reg(z)} \leq \tilde{E}$ for all $z \in D_\lambda$;
    \item[(C2)] $\abs{(z+\lambda)\rem(z)} \leq \tilde{F} \lambda$ for all  $z \in D_\lambda$;
  \end{enumerate}
  where  $\tilde{E}, \tilde{F}$ are positive constants.

  \end{itemize}
\end{definition}

\begin{remark}
    We remark that some of the above properties are not essential for the definition of filter functions in the literature~\citep{bauer2007_RegularizationAlgorithms, Gerfo_spectral_2008}, but we introduce them to avoid some unnecessary technicalities in the proof.
    The requirements of analytic filter functions are first considered 
    in \cite{li2024generalization} and used for their ``analytic functional argument'', which will also be vital in our proof.
\end{remark}

The following examples show many commonly used analytic filter functions and their proofs can be found in Lemma \ref{lemma:verify_conditions_of_filter},
see also \cite{li2024generalization}.

\begin{example}[Iterated ridge regression]
  \label{example:IteratedRidge}
  Let $q \geq 1$ be fixed.
  We define
  \begin{align}
    \reg^{\mathrm{IT}, q}(z) = \frac{1}{z}\left[ 1 - \frac{\lambda^q}{(z+\lambda)^q} \right],
    \quad
    \psi^{\mathrm{IT}, q}_\lambda(z) =\frac{\lambda^q}{(z+\lambda)^q},\quad \tau=q.
  \end{align}
\end{example}

\begin{example}[Kernel gradient descent]
  \label{example:GradientDescent}
  The gradient descent method is the discrete version of gradient flow.
  Let $\eta > 0$ be a fixed step size.
  Then, iterating gradient descent with respect to the empirical loss $t$ steps yields the filter function
  \begin{align}
    \reg^{\mr{GD}}(z) &= \eta \sum_{k=0}^{t-1} (1-\eta z)^{k} = \frac{1-(1-\eta z)^t}{z},\quad \lambda = (\eta t)^{-1}, \\
    \rem^{\mr{GD}}(z) &= (1-\eta z)^t, \quad\tau=\infty.
  \end{align}
  Moreover, when $\eta$ is small enough, say $\eta < 1/(2\kappa^2)$, we have $\Re (1 - \eta z) > 0$ for $z \in D_\lambda$,
  so we can take the single-valued branch of $(1-\eta z)^t$ even when $t$ is not an integer.
  Therefore, we can extend the definition of the filter function so that $\lambda$ can be arbitrary and $t = (\eta \lambda)^{-1}$.
\end{example}

\begin{lemma}\label{lemma:verify_conditions_of_filter}
    $\reg^{\krr}$, $\reg^{\mathrm{IT}, q}$, $\reg^{\gf}$, and $\reg^{\mr{GD}}$ are analytic filter functions.
\end{lemma}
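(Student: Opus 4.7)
The plan is to verify items (1)--(4) of Definition \ref{def:filter} for each of the four filter functions by direct, largely elementary computation. I organize the cases into two groups: the rational filters $\reg^{\krr}$ and $\reg^{\mathrm{IT},q}$ (with finite qualifications $\tau=1$ and $\tau=q$), for which the analyticity condition (4) is almost automatic because the only pole of $\reg$ is at $z=-\lambda$, well outside $D_\lambda$; and the exponential filters $\reg^{\gf}$ and $\reg^{\mr{GD}}$ (with $\tau=\infty$), which are entire (or admit an entire branch) but for which every bound on $D_\lambda$ must be made uniform in $\lambda\in(0,1)$.

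For KRR, items (1)--(3) can be read off directly from $z\reg^{\krr}=z/(z+\lambda)$ and $\rem^{\krr}=\lambda/(z+\lambda)$; for instance, item (3) reduces to $(z/\lambda)^{2}(\rem^{\krr})^{2}=z^{2}/(z+\lambda)^{2}\leq z/(z+\lambda)=z\reg^{\krr}$ for $z\leq\lambda$, yielding $\mathfrak{C}_{8}=1$. Item (4) is immediate since $(z+\lambda)\reg^{\krr}\equiv 1$ and $(z+\lambda)\rem^{\krr}\equiv\lambda$. For iterated ridge the key rewriting is the factorization $1-u^{q}=(1-u)\sum_{j=0}^{q-1}u^{j}$ with $u:=\lambda/(z+\lambda)$, which produces the clean identities
\[
z\reg^{\mathrm{IT},q}=1-u^{q},\qquad (z+\lambda)\reg^{\mathrm{IT},q}=\sum_{k=0}^{q-1}u^{k},\qquad (z+\lambda)\rem^{\mathrm{IT},q}=\lambda\, u^{q-1}.
\]
These forms make analyticity on $D_\lambda$ manifest and reduce all pointwise bounds to elementary inequalities in $u$: for $z\leq\lambda$ one has $u\in[1/2,1]$, from which $(z/\lambda)^{2q}(\rem^{\mathrm{IT},q})^{2}=(1-u)^{2q}\leq 1-u\leq 1-u^{q}$ gives $\mathfrak{C}_{8}=1$; and on $D_\lambda$ the inequality $|z+\lambda|\geq\lambda/2$ gives $|u|\leq 2$, so $|(z+\lambda)\reg^{\mathrm{IT},q}|\leq 2^{q}-1$ and $|(z+\lambda)\rem^{\mathrm{IT},q}|\leq 2^{q-1}\lambda$.

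For gradient flow, the linchpin is that $D_\lambda$ forces $\Re z\geq -\lambda/2$, whence $|\rem^{\gf}(z)|=e^{-\Re z/\lambda}\leq e^{1/2}$ uniformly in $\lambda$. Item (2) for any finite $\tau'$ uses the classical estimate $\sup_{u>0}u^{\tau'}e^{-u}=(\tau'/e)^{\tau'}<\infty$ to obtain $\rem^{\gf}(z)\leq\mathfrak{C}_{2}(z/\lambda)^{-\tau'}$ when $z>\lambda$. For (C1) I split $(z+\lambda)\reg^{\gf}=(1-e^{-z/\lambda})+\lambda(1-e^{-z/\lambda})/z$; the first summand is bounded by $1+e^{1/2}$, and the second by $|\lambda/z|\cdot|1-e^{-z/\lambda}|\leq 1+e^{1/2}$ when $|z|\geq\lambda$, and by the Taylor series $\bigl|\sum_{k\geq 0}(-z/\lambda)^{k}/(k+1)!\bigr|\leq e-1$ when $|z|<\lambda$. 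For (C2) I factor $(z+\lambda)\rem^{\gf}=\lambda(z/\lambda+1)e^{-z/\lambda}$ and exploit the boundedness of $|(v+1)e^{-v}|$ on $\{\Re v\geq -1/2\}$. Gradient descent is then treated in parallel: for $\eta<1/(2\kappa^{2})$ we have $\Re(1-\eta z)>0$ throughout $D_\lambda$, so the branch of $(1-\eta z)^{t}$ is well defined, and $|(1-\eta z)^{t}|=\exp(t\,\Re\log(1-\eta z))$ admits bounds of the same exponential form as $e^{-z/\lambda}$, so every estimate transfers via a logarithmic comparison.

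The main technical obstacle is condition (4) for the two exponential filters: unlike the rational cases I cannot simply invoke the absence of poles on $D_\lambda$, and the constants $\tilde E,\tilde F$ must not blow up as $\lambda\downarrow 0$. This is precisely where the specific geometry of $D_\lambda$---the wedge condition $|\Im z|\leq \Re z+\lambda/2$ together with the floor $\Re z\geq -\lambda/2$---is essential, since it confines $z/\lambda$ to a region on which $e^{-z/\lambda}$ is uniformly bounded by $e^{1/2}$. Once this uniform control is secured, items (C1) and (C2) for GF and GD, together with all of the other verifications, become routine bookkeeping.
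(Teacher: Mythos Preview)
Your plan is correct and proceeds by the same direct-verification route as the paper. The paper's own proof is much terser: it defers the constants $\mathfrak{C}_{1}$--$\mathfrak{C}_{4}$, $\mathfrak{C}_{6}$, and all of item (4) to \cite{li2024generalization}, and only writes out $\mathfrak{C}_{5},\mathfrak{C}_{7},\mathfrak{C}_{8}$ explicitly (and only for KRR and iterated ridge, since item (3) is vacuous for the $\tau=\infty$ filters). Your factorization $1-u^{q}=(1-u)\sum_{k<q}u^{k}$ with $u=\lambda/(z+\lambda)$ is a cleaner way of packaging the iterated-ridge computations than the paper's bare algebra.

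One slip to fix: in your (C2) argument for gradient flow you assert that $|(v+1)e^{-v}|$ is bounded on the half-plane $\{\Re v\geq -1/2\}$, but this is false (take $v=ib$, $b\to\infty$). You need the wedge constraint $|\Im v|\leq \Re v+1/2$ that you correctly emphasize in your final paragraph: on the wedge one has $|v+1|^{2}\leq(\Re v+1)^{2}+(\Re v+1/2)^{2}\leq 2(\Re v+1)^{2}$, hence $|(v+1)e^{-v}|\leq\sqrt{2}\sup_{a\geq -1/2}(a+1)e^{-a}=\sqrt{2}$. On the circular arc of $D_{\lambda}$ the bound follows separately since there $\Re v\geq \kappa^{2}/\lambda$ is large and exponential decay dominates the polynomial growth of $|v+1|$. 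The same correction applies to your gradient-descent treatment. With this adjustment the plan goes through.
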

\begin{proof}
Notice that (i) $z \leq z+\lambda \leq 2z$ when $z>\lambda$; and that (ii) $\lambda \leq z+\lambda \leq 2\lambda$ when $z \leq \lambda$. Hence, the constants $\mathfrak{C}_{1}$, 
$\mathfrak{C}_{2}$,
$\mathfrak{C}_{3}$,
$\mathfrak{C}_{4}$,
and $\mathfrak{C}_{6}$ 
are given in \cite{li2024generalization}.

For $\mathfrak{C}_{5}$, when $z \leq \lambda$, we can take
$\mathfrak{C}_{5} = \min\{1/2, 2^{-q}, e^{-1}, e^{-1}\}>0$.

For $\mathfrak{C}_{7}$, when $z > \lambda$, we have
\begin{equation*}
    \begin{aligned}
        (z/\lambda)^{2\tau} (\rem^{\krr}(z))^2  
        &=
        \left(\frac{z}{z+\lambda}\right)^2
        \geq 1/4\\
        (z/\lambda)^{2\tau} (\rem^{\mathrm{IT}, q}(z))^2 
        &=
        \left(\frac{z}{z+\lambda}\right)^{2q}
        \geq  
        2^{-2q}.
    \end{aligned}
\end{equation*}

For $\mathfrak{C}_{8}$, when $z \leq \lambda$, we have
\begin{equation*}
    \begin{aligned}
        \frac{z^{2\tau-1} (\rem^{\krr}(z))^2}{\lambda^{2\tau}  \reg^{\krr}(z)}  
        &=
        \frac{z}{z+\lambda} 
        \leq  
        \frac{1}{2}\\
        \frac{z^{2\tau-1} (\rem^{\mathrm{IT}, q}(z))^2}{\lambda^{2\tau}  \reg^{\mathrm{IT}, q}(z)}  
        &=
        \frac{z^{2q}}{(z+\lambda)^{2q}-[\lambda(z+\lambda)]^{q}} 
        \leq  
        \frac{1}{2^{2q}-2^{q}}.
    \end{aligned}
\end{equation*}  
\end{proof}

\section{Proof of Theorem \ref{thm:kernel_methods_bounds} and Theorem \ref{thm:kernel_methods_bounds_sat}}

\subsection{Bias-variance decomposition}\label{sec_Spectral_Algorithm}

We first apply a standard bias-variance decomposition on the excess risk of spectral algorithms, and readers can also refer to \cite{zhang2023optimality, zhang2024optimal} for more details.

Recall the definition of $\hat{g}_Z$ and $\hat{f}_{\lambda}$ in (\ref{eq:GZ}) and (\ref{eq:SA}).
Let's define their conditional expectations as
\begin{equation}
    \tilde{g}_{Z} := \mathbb{E}\left( \hat{g}_Z | X \right) = \frac{1}{n} \sum_{i=1}^n K_{x_i} f_{\star}(x_i) \in \mathcal{H};
\end{equation}
and
\begin{equation}\label{def tilde f lambda  }
    \tilde{f}_{\lambda}  := \mathbb{E}\left( \hat{f}_{\lambda}  | X \right) = \varphi_{\lambda} \left(T_{X}\right) \tilde{g}_{Z} \in \mathcal{H}.
\end{equation}
Let's also define their expectations as
\begin{equation}
    g = \mathbb{E} \hat{g}_Z = \int_{\mathcal{X}} K(x,\cdot) f_{\star}(x) ~\mathsf{d} \rho_{\calX}(x) \in \mathcal{H},
\end{equation}
and
\begin{equation}\label{def f lambda  }
    f_{\lambda}  = \varphi_{\lambda} \left(T\right) g.
\end{equation}

Then we have the decomposition
\begin{align}
    \hat{f}_{\lambda}  - f_{\star} &= \frac{1}{n} \varphi_{\lambda} \left(T_{X}\right) \sum_{i=1}^n K_{x_i}y_i - f_{\star} \notag \\
    &= \frac{1}{n} \varphi_{\lambda} \left(T_{X}\right) \sum_{i=1}^n K_{x_i} (f^{*}_{\rho}(x_i) + \epsilon_i) - f_{\star} \notag \\
    &= \varphi_{\lambda} \left(T_{X}\right) \tilde{g}_Z + \frac{1}{n}\sum_{i=1}^n \varphi_{\lambda} \left(T_{X}\right) K_{x_i}\epsilon_i - f_{\star} \notag \\
    &= \left( \tilde{f}_{\lambda}  - f_{\star} \right) + \frac{1}{n}\sum_{i=1}^n \varphi_{\lambda} \left(T_{X}\right) K_{x_i}\epsilon_i.
\end{align}
Taking expectation over the noise $\epsilon$ conditioned on $X$ and noticing that $\epsilon | X$ are independent noise with mean 0 and variance $\sigma^{2}$, we obtain the bias-variance decomposition:
\begin{align}\label{eq:BiasVarianceDecomp}
  \mathbb{E} \left( \left\|\hat{f}_\lambda  - f_{\star} \right\|^2_{L^2} \;\Big|\; X \right)
  =  \mathbf{Bias}^2(\lambda) + \mathbf{Var}(\lambda),
\end{align}
where
\begin{equation}\label{eq:a_BiasVarFormula}
  \begin{aligned}
    & \mathbf{Bias}^2(\lambda) :=
    \left\|\tilde{f}_{\lambda}  - f_{\star}\right\|_{L^2}^2, \quad
    \mathbf{Var}(\lambda) :=
    \frac{\sigma^2}{n^2} \sum_{i=1}^n  \left\|\varphi_{\lambda} \left(T_{X}\right) K(x_i,\cdot)\right\|^2_{L^2}.
  \end{aligned}
\end{equation}
Given the decomposition \eqref{eq:BiasVarianceDecomp}, we next derive the upper and lower bounds of $ \mathbf{Bias}^2(\lambda) $ and $ \mathbf{Var}(\lambda) $ in the following two subsections.

Before we close this subsection, let's introduce some quantities and an assumption that will be used frequently in our proof later. Denote the true function as $ f_{\star} = \sum\limits_{i=1}^{\infty} f_{i} \phi_{i}(x)$, let's define the following quantities:
\begin{equation}\label{n1 n2 m1 m2}
\begin{aligned}
\mathcal{N}_{1, \varphi}(\lambda) &= \sum_{j =1}^\infty \left[ \lambda_j \reg(\lambda_j) \right] ;~~ \mathcal{N}_{2, \varphi}(\lambda) = \sum_{j =1}^\infty \left[ \lambda_j \reg(\lambda_j) \right]^2 ; \\
   \mathcal{M}_{1, \varphi}(\lambda) &= \operatorname*{ess~sup}_{x \in \mathcal{X}} \left|\sum\limits_{j=1}^{\infty} \left( \rem(\lambda_j) f_{j} \phi_{j}(x) \right) \right| ;~~ \mathcal{M}_{2, \varphi}(\lambda) = \sum\limits_{j=1}^{\infty} \left( \rem(\lambda_j) f_{j}\right)^{2};
\end{aligned}
\end{equation}
moreover, when $\reg = \reg^{\krr}$, we denote $\mathcal{N}_{k}(\lambda) = \mathcal{N}_{k, \varphi^{\krr}}(\lambda)$ and $\mathcal{M}_{k}(\lambda) = \mathcal{M}_{k, \varphi^{\krr}}(\lambda)$ for simplicity, where $k=1,2$.

\begin{assumption}\label{assumption eigenfunction}
    Suppose that
    \begin{align}\label{assumption eigen - n2}
    \operatorname*{ess~sup}_{x \in \mathcal{X}} \sum\limits_{j=1}^{\infty} \left[ \lambda_j \reg(\lambda_j) \right]^2 \phi_{j}^{2}(x) \le \mathcal{N}_{2, \varphi}(\lambda);
  \end{align}
  and
  \begin{align}\label{assumption eigen - n1}
    \operatorname*{ess~sup}_{x \in \mathcal{X}} \sum\limits_{j=1}^{\infty} \left[ \lambda_j \reg(\lambda_j) \right] \phi_{j}^{2}(x) \le \mathcal{N}_{1, \varphi}(\lambda);
  \end{align}
  and
  \begin{align}\label{assumption eigen - n1_krr}
    \operatorname*{ess~sup}_{x \in \mathcal{X}} \sum\limits_{j=1}^{\infty} \left[ \lambda_j \reg^{\krr}(\lambda_j) \right] \phi_{j}^{2}(x) \le \mathcal{N}_{1}(\lambda).
  \end{align}
\end{assumption}

For simplicity of notations, we denote $h_{x}(\cdot) = K(x,\cdot)$, $ x \in \mathcal{X}$ in the rest of the proof. 
Moreover, we denote $T_{\lambda}:=(T+\lambda)$ and $T_{X\lambda}:= (T_{X}+\lambda)$.

\subsection{Variance term}\label{section variance term}

The following proposition rewrites the variance term using the empirical semi-norm.

\begin{proposition}[Restate Lemma 9 in \cite{zhang2024optimal}]\label{lemma var trans}
    The variance term in \eqref{eq:a_BiasVarFormula} satisfies that
    \begin{align}
      \label{eq:VarAlterForm}
      \mathbf{Var}(\lambda) = \frac{\sigma^2}{n} \int_{\mathcal{X}} \left\|\varphi_{\lambda} \left(T_{X}\right) h_{x}(\cdot)\right\|_{L^2,n}^2 \mathrm{d} \rho_{\calX}(x).
    \end{align}
\end{proposition}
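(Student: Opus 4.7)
The plan is to show that the sum appearing in $\mathbf{Var}(\lambda)$ can be rewritten as an integral in $x$ of an empirical seminorm by using the reproducing property of $\mathcal{H}$ together with the self-adjointness of $\varphi_\lambda(T_X)$. Throughout, write $A:=\varphi_\lambda(T_X):\mathcal{H}\to\mathcal{H}$; since $T_X=\frac{1}{n}\sum_i T_{x_i}=\frac{1}{n}\sum_i K_{x_i}K_{x_i}^{*}$ is self-adjoint on $\mathcal{H}$, the operator $A$ is self-adjoint as well.

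First I would unfold the $L^2$-norm pointwise and apply the reproducing property: for each $i$,
\begin{equation*}
\|A h_{x_i}\|_{L^2}^2=\int_{\mathcal{X}}(Ah_{x_i})(x)^2\,\mathrm{d}\rho_{\mathcal{X}}(x)=\int_{\mathcal{X}}\langle Ah_{x_i},h_x\rangle_{\mathcal{H}}^{2}\,\mathrm{d}\rho_{\mathcal{X}}(x).
\end{equation*}
Then I would use self-adjointness of $A$ together with the reproducing property once more to swap the roles of $x_i$ and $x$:
\begin{equation*}
\langle Ah_{x_i},h_x\rangle_{\mathcal{H}}=\langle h_{x_i},Ah_x\rangle_{\mathcal{H}}=(Ah_x)(x_i).
\end{equation*}

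Substituting this identity into the definition of $\mathbf{Var}(\lambda)$ gives
\begin{equation*}
\mathbf{Var}(\lambda)=\frac{\sigma^2}{n^2}\sum_{i=1}^{n}\int_{\mathcal{X}}(Ah_x)(x_i)^{2}\,\mathrm{d}\rho_{\mathcal{X}}(x).
\end{equation*}
Now I would invoke Fubini (the integrand is non-negative, so there is nothing to check) to interchange the finite sum and the integral, and recognize the inner expression as the empirical seminorm: $\frac{1}{n}\sum_{i=1}^{n}(Ah_x)(x_i)^{2}=\|Ah_x\|_{L^2,n}^{2}$. This yields the claimed identity.

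There is no real obstacle here; the only subtlety is making sure the operator $A$ is correctly viewed as a self-adjoint endomorphism of $\mathcal{H}$ (so that the adjoint manipulation is legitimate) and that the reproducing property is applied in $\mathcal{H}$ rather than in $L^2$. Once these two points are noted, the whole argument is essentially a two-line calculation: swap point via reproducing kernel, swap operator via self-adjointness, swap sum and integral via Fubini.
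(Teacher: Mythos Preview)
Your argument is correct: the key identity $(Ah_{x_i})(x)=(Ah_x)(x_i)$ via the reproducing property and self-adjointness of $A=\varphi_\lambda(T_X)$ is exactly what is needed, and the Fubini/swap step is immediate. The paper does not supply its own proof here but simply cites Lemma~9 of \cite{zhang2024optimal}; your write-up is the standard direct computation behind that lemma.
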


The operator form \eqref{eq:VarAlterForm} allows us to apply concentration inequalities and establish the following two-step approximation. 
\begin{align}\label{eq:4_2Step}
    \int_{\mathcal{X}} \left\|{\varphi_{\lambda} \left(T_{X}\right) h_{x}}\right\|_{L^2, n}^{2} \mathrm{d}\rho_{\calX}(x)
    \stackrel{\mathbf{A}}{\approx}
    \int_{\mathcal{X}} \left\|{\varphi_{\lambda} \left(T\right) h_{x}}\right\|_{L^2, n}^{2} \mathrm{d}\rho_{\calX}(x)
    \stackrel{\mathbf{B}}{\approx}
    \int_{\mathcal{X}} \left\|{\varphi_{\lambda} \left(T\right) h_{x}}\right\|_{L^2}^{2} \mathrm{d}\rho_{\calX}(x).
  \end{align}

\paragraph{Approximation B} 
The following lemma characterizes the magnitude of Approximation B in high probability. Recall the definitions of $\mathcal{N}_{1, \varphi}(\lambda)$ and $ \mathcal{N}_{2, \varphi}(\lambda)$ in \eqref{n1 n2 m1 m2}.
\begin{lemma}[Approximation B]\label{lemma approximation B}
    Suppose that (\ref{assumption eigen - n2}) in Assumption \ref{assumption eigenfunction} holds. 
    Then, for any fixed $ \delta \in (0,1)$, with probability at least $1-\delta$, we have
\begin{align}\label{approximation B}
   &~ \frac{1}{2}\int_{\mathcal{X}} \left\|{\varphi_{\lambda} \left(T\right) h_{x}}\right\|_{L^2}^{2} \mathrm{d}\rho_{\calX}(x) - R_{2} \\
   \leq &~ \int_{\mathcal{X}} \left\|{\varphi_{\lambda} \left(T\right) h_{x}}\right\|_{L^2, n}^{2} \mathrm{d}\rho_{\calX}(x)\\ 
   \leq &~ \frac{3}{2}\int_{\mathcal{X}} \left\|{\varphi_{\lambda} \left(T\right) h_{x}}\right\|_{L^2}^{2} \mathrm{d}\rho_{\calX}(x) + R_{2},
\end{align}
where 
\begin{equation}
    R_{2} = \frac{5\mathcal{N}_{2, \varphi}(\lambda)}{3n}\ln \frac{2}{\delta}.
\end{equation}
\end{lemma}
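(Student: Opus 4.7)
The plan is to rewrite the left-hand side as an i.i.d.\ empirical average of a bounded random variable and then apply Bernstein's inequality in a form that yields a two-sided multiplicative bound. Using Mercer's expansion \eqref{eqn:mercer_decomp}, one computes $(\varphi_\lambda(T) h_x)(y) = \sum_j \varphi_\lambda(\lambda_j)\lambda_j \phi_j(x)\phi_j(y)$; squaring, integrating in the outer variable $x$ against $\rho_\calX$, and invoking orthonormality of $\{\phi_j\}$ gives
\begin{equation*}
\int_{\mathcal{X}} \|\varphi_{\lambda}(T) h_{x}\|_{L^2,n}^{2}\, \mathrm{d}\rho_{\calX}(x) = \frac{1}{n}\sum_{i=1}^n W(x_i), \qquad W(y) := \sum_{j=1}^\infty [\varphi_\lambda(\lambda_j)\lambda_j]^2 \phi_j^2(y),
\end{equation*}
and the same orthonormality identity shows that $\mathbb{E} W(x_1) = \sum_j[\varphi_\lambda(\lambda_j)\lambda_j]^2 = \mathcal{N}_{2,\varphi}(\lambda)$, which coincides with $\int_{\mathcal{X}} \|\varphi_\lambda(T) h_x\|_{L^2}^2\, \mathrm{d}\rho_\calX(x)$. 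The lemma thus reduces to concentrating $\frac{1}{n}\sum_i W(x_i)$ around $\mathcal{N}_{2,\varphi}(\lambda)$.

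Next I would feed the inputs into Bernstein's inequality. Assumption (\ref{assumption eigen - n2}) states exactly that $W(y) \leq \mathcal{N}_{2,\varphi}(\lambda)$ $\rho_\calX$-a.e., which gives the almost-sure envelope and also the variance bound $\mathrm{Var}(W(x_1)) \leq \mathbb{E}W(x_1)^2 \leq \|W\|_\infty \cdot \mathbb{E}W(x_1) = \mathcal{N}_{2,\varphi}(\lambda)\cdot \mathbb{E}W(x_1)$. Bernstein's inequality then yields that, with probability at least $1-\delta$,
\begin{equation*}
\Bigl| \tfrac{1}{n}\textstyle\sum_{i=1}^n W(x_i) - \mathbb{E}W(x_1) \Bigr| \leq \sqrt{\tfrac{2\,\mathcal{N}_{2,\varphi}(\lambda)\,\mathbb{E}W(x_1)\,\ln(2/\delta)}{n}} + \tfrac{2\mathcal{N}_{2,\varphi}(\lambda)\ln(2/\delta)}{3n}.
\end{equation*}

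To obtain the claimed bound with prefactors $\tfrac12$ and $\tfrac32$, I would apply the elementary AM-GM inequality $\sqrt{ab}\le (a+b)/2$ to the square-root term with the choice $a = \mathbb{E}W(x_1)$ and $b = 2\mathcal{N}_{2,\varphi}(\lambda)\ln(2/\delta)/n$, which bounds it by $\tfrac12\mathbb{E}W(x_1) + \mathcal{N}_{2,\varphi}(\lambda)\ln(2/\delta)/n$. Adding the linear Bernstein term $2\mathcal{N}_{2,\varphi}(\lambda)\ln(2/\delta)/(3n)$ and collecting constants yields an additive remainder of exactly $R_2 = \frac{5\mathcal{N}_{2,\varphi}(\lambda)}{3n}\ln(2/\delta)$; transposing $\tfrac12\mathbb{E}W(x_1)$ in the two directions produces both inequalities in the statement.

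There is no substantive obstacle: the argument is a textbook one-sample Bernstein calculation, and the only cosmetic subtlety is the AM-GM step used to turn the additive concentration bound into a two-sided multiplicative one. The result is the direct analogue for general analytic filters $\varphi_\lambda$ of the corresponding ``approximation-B'' statements in the KRR literature (see, e.g., \cite{fischer2020_SobolevNorm, zhang2024optimal}), and Assumption (\ref{assumption eigen - n2}) is precisely the hypothesis needed to make the KRR proof go through verbatim.
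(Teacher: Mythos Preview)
Your proof is correct and essentially identical to the paper's: your function $W$ is exactly the paper's $f$, and the paper merely packages your Bernstein-plus-AM-GM step as an appeal to Proposition~34 in \cite{zhang2024optimal}, which is precisely that concentration lemma with the $\tfrac12,\tfrac32,\tfrac{5}{3n}\ln(2/\delta)$ constants already baked in. The two arguments differ only in whether the Bernstein/AM-GM computation is written out or cited.
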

\begin{proof}

    Define a function
\begin{align}\label{eqn:40}
     f(z) &= \int_{\mathcal{X}} \left(\varphi_{\lambda} \left(T\right)h_{x}(z)\right)^{2} \mathrm{d}\rho_{\calX}(x) \notag \\
     &= \int_{\mathcal{X}} \sum\limits_{j=1}^{\infty} \left( \lambda_j \reg(\lambda_j) \right)^{2} \phi_{j}^{2}(x) \phi_{j}^{2}(z) \mathrm{d} \rho_{\calX}(x)\notag \\
     &= \sum\limits_{j=1}^{\infty} \left( \lambda_j \reg(\lambda_j) \right)^{2}  \phi_{j}^{2}(z).
\end{align}
Since (\ref{assumption eigen - n2}) in Assumption \ref{assumption eigenfunction} holds, we have 
\begin{displaymath}
    \left\| f \right\|_{L^{\infty}}  \le \ \mathcal{N}_{2, \varphi}(\lambda);
    ~~ \left\| f \right\|_{L^{1}} = \mathcal{N}_{2, \varphi}(\lambda). 
\end{displaymath}
Applying Proposition 34 in \cite{zhang2024optimal} for $ \sqrt{f}$ and noticing that $\| \sqrt{f}\|_{L^{\infty}} = \sqrt{\| f \|_{L^{\infty}}} = \mathcal{N}_{2, \varphi}(\lambda)^{\frac{1}{2}} $, we have
\begin{equation}\label{R2 proof 1}
    \frac{1}{2}\left\| \sqrt{f} \right\|_{L^2}^2 - \frac{5\mathcal{N}_{2, \varphi}(\lambda)}{3n}\ln \frac{2}{\delta} \leq \left\|\sqrt{f}\right\|_{L^2,n}^2 \leq
      \frac{3}{2}\left\|\sqrt{f}\right\|_{L^2}^2 + \frac{5\mathcal{N}_{2, \varphi}(\lambda)}{3n}\ln \frac{2}{\delta},
\end{equation}
with probability at least $ 1 - \delta$. 

On the one hand, we have 
\begin{align}
    \left\|\sqrt{f}\right\|_{L^2,n}^2  &= \int_{\mathcal{X}} f(z) \mathrm{d}P_{n}(z) = \int_{\mathcal{X}} \left[ \int_{\mathcal{X}} \left(\varphi_{\lambda} \left(T\right)h_{x}(z)\right)^{2} \mathrm{d}\rho_{\calX}(x) \right] \mathrm{d}P_{n}(z) \notag \\
    &= \int_{\mathcal{X}} \left[ \int_{\mathcal{X}} \left(\varphi_{\lambda} \left(T\right)h_{x}(z)\right)^{2} \mathrm{d}P_{n}(z)  \right] \mathrm{d}\rho_{\calX}(x)  \notag \\
    &= \int_{\mathcal{X}} \left\|{\varphi_{\lambda} \left(T\right) h_{x}}\right\|_{L^2, n}^{2} \mathrm{d}\rho_{\calX}(x). \notag
\end{align}
On the other hand, we have
\begin{align}
    \left\|\sqrt{f}\right\|_{L^2}^2  &= \int_{\mathcal{X}} f(z) \mathrm{d}\rho_{\calX}(z) \notag \\
    &= \int_{\mathcal{X}} \left[ \int_{\mathcal{X}} \left(\varphi_{\lambda} \left(T\right)h_{x}(z)\right)^{2} \mathrm{d}\rho_{\calX}(x) \right] \mathrm{d}\rho_{\calX}(z) \notag \\
    &= \int_{\mathcal{X}} \left\|{\varphi_{\lambda} \left(T\right) h_{x}}\right\|_{L^2}^{2} \mathrm{d}\rho_{\calX}(x). \notag
\end{align}
Therefore, \eqref{R2 proof 1} implies the desired results.
\end{proof}

\paragraph{Approximation A} 


\begin{lemma}\label{lemma approximation A}
    Suppose that (\ref{assumption eigen - n2}) and (\ref{assumption eigen - n1_krr}) in Assumption \ref{assumption eigenfunction} hold. 
 Suppose that there exists a constant $\epsilon$ only depending on $s$ and $\gamma$, such that
 $ \lambda = \lambda(n, d)$ satisfies 
 $n^{\epsilon - 1}\mathcal{N}_{1}(\lambda) \to 0$.
 Then there exists an absolute constant $C_1$, such that for any fixed $\delta \in (0,1)$, when $n$ is sufficiently large, with probability at least $1-\delta$, we have 
    \begin{align}
        &~
        \left|
        \int_{\mathcal{X}} \left\|{\varphi_{\lambda} \left(T_{X}\right) h_{x}}\right\|_{L^2, n}^{2} \mathrm{d}\rho_{\calX}(x)
     -
    \int_{\mathcal{X}} \left\|{\varphi_{\lambda} \left(T\right) h_{x}}\right\|_{L^2, n}^{2} \mathrm{d}\rho_{\calX}(x)
        \right|\\
        \le 
        &~ 
        C_1 \left( \sqrt{\mathcal{N}_{2, \varphi}(\lambda)} + C_1 \sqrt{ v\calN_1(\lambda)} \ln \lambda^{-1} \right) \cdot \sqrt{ v\calN_1(\lambda)} \ln \lambda^{-1},
    \end{align}
    where $v = \frac{\mathcal{N}_{1}(\lambda)}{n} \ln{n}$.
\end{lemma}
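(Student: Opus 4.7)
The plan is to reduce the two integrals to operator traces, linearize the difference $\varphi_\lambda(T_X)-\varphi_\lambda(T)$ by a Cauchy contour integral on $\partial D_\lambda$, and then control the resulting expressions by a Bernstein-type concentration inequality for $T-T_X$ in the weighted norm $T_\lambda^{-1/2}(\cdot)T_\lambda^{-1/2}$.

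First I would exploit the reproducing property and the identity $\|f\|_{L^2}^2=\langle f,Tf\rangle_\calH$ for $f\in\calH$ to verify the ``trace formula''
\begin{equation*}
\int_\calX \|\varphi_\lambda(B)h_x\|_{L^2,n}^2\,\dx\rho_\calX(x)=\operatorname{tr}\!\bigl(T_X\,\varphi_\lambda(B)\,T\,\varphi_\lambda(B)\bigr)
\end{equation*}
valid for any bounded self-adjoint $B$ on $\calH$. Subtracting the $B=T_X$ and $B=T$ instances and setting $A:=\varphi_\lambda(T_X)-\varphi_\lambda(T)$, the difference we must bound becomes $\operatorname{tr}(T_X A T \varphi_\lambda(T_X)) + \operatorname{tr}(T_X\varphi_\lambda(T) T A)$. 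Thus the problem reduces to controlling $A$ sandwiched against explicit ``envelope'' factors.

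Second, since $\varphi_\lambda$ is analytic on a neighbourhood of $D_\lambda$ and the spectra of $T,T_X$ lie inside $[0,\kappa^2]\subset D_\lambda$, the resolvent identity together with the Cauchy integral formula yields
\begin{equation*}
A=\frac{1}{2\pi i}\oint_{\partial D_\lambda}\varphi_\lambda(z)\,(z-T_X)^{-1}(T_X-T)(z-T)^{-1}\,\dx z.
\end{equation*}
Condition (C1) gives $|\varphi_\lambda(z)|\lesssim |z+\lambda|^{-1}$ on $\partial D_\lambda$, and a direct arc-length estimate shows $\oint_{\partial D_\lambda}|z+\lambda|^{-1}\,|\dx z|\lesssim \ln\lambda^{-1}$, which is the source of the logarithmic factor. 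Next I would invoke a standard Bernstein-type inequality for Hilbert--Schmidt-valued i.i.d.\ sums applied to $T-T_X=\frac{1}{n}\sum_i(T-T_{x_i})$, using Assumption \ref{assumption eigenfunction} to bound $\operatorname{ess\,sup}_x\|T_\lambda^{-1/2}K_x\|_\calH^2\lesssim \calN_1(\lambda)$. This yields, with probability $\ge 1-\delta$ and for $n$ large enough that $n^{\epsilon-1}\calN_1(\lambda)\to 0$,
\begin{equation*}
\bigl\|T_\lambda^{-1/2}(T-T_X)T_\lambda^{-1/2}\bigr\|\lesssim \sqrt{v},\qquad \|T_\lambda^{1/2}T_{X\lambda}^{-1/2}\|\vee\|T_{X\lambda}^{1/2}T_\lambda^{-1/2}\|\lesssim 1.
\end{equation*}
Combining these with the geometric fact that $|z-t|\gtrsim |z+\lambda|$ for $z\in\partial D_\lambda$ and $t\in[0,\kappa^2]$, one derives $\|T_\lambda^{1/2}(z-T)^{-1}T_\lambda^{1/2}\|,\,\|T_\lambda^{1/2}(z-T_X)^{-1}T_\lambda^{1/2}\|\lesssim |z+\lambda|^{-1}$, so that $\|T_\lambda^{1/2} A T_\lambda^{1/2}\|\lesssim \sqrt{v\calN_1(\lambda)}\,\ln\lambda^{-1}$.

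Finally I would assemble the pieces by inserting $T_\lambda^{\pm 1/2}$ and $T_{X\lambda}^{\pm 1/2}$ factors inside each of the two traces and applying Cauchy--Schwarz in the Hilbert--Schmidt inner product. The trace $\operatorname{tr}(T_X\varphi_\lambda(T)T\varphi_\lambda(T))$ is itself $\lesssim \calN_{2,\varphi}(\lambda)$ up to $O(\sqrt{v\calN_1(\lambda)}\ln\lambda^{-1})$ relative error, so one piece contributes $\sqrt{\calN_{2,\varphi}(\lambda)}\cdot\sqrt{v\calN_1(\lambda)}\,\ln\lambda^{-1}$, while the residual piece (in which both the ``envelope'' factor $\varphi_\lambda(T_X)$ and $A$ themselves carry a perturbation) produces the quadratic correction $v\calN_1(\lambda)(\ln\lambda^{-1})^2$, matching the claimed form. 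The main obstacle I anticipate is the bookkeeping in this last step: each appearance of a resolvent of $T_X$ must be swapped for one involving $T$ via the concentration bound, and the ``envelope trace'' must be simultaneously upper bounded by $\calN_{2,\varphi}(\lambda)$, all while ensuring that the log factors stack as $(\ln\lambda^{-1})^k$ rather than larger powers. The assumption $n^{\epsilon-1}\calN_1(\lambda)\to 0$ guarantees we are in the perturbative regime where these substitutions cost only constant factors, which are then absorbed into a single absolute constant $C_1$.
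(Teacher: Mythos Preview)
Your plan uses the same core ingredients as the paper—the contour representation of $\reg(T_X)-\reg(T)$ via the resolvent identity, the concentration bound $\|T_\lambda^{-1/2}(T-T_X)T_\lambda^{-1/2}\|\le\sqrt{v}$, and the eigenfunction hypotheses of Assumption~\ref{assumption eigenfunction}—so it will succeed. The packaging, however, is genuinely different. The paper never passes through traces: it simply swaps the order of integration to obtain
\[
\int_\calX\|\reg(B)h_x\|_{L^2,n}^2\,\dx\rho_\calX(x)=\tfrac{1}{n}\sum_i\|\reg(B)h_{x_i}\|_{L^2}^2,
\]
bounds the pointwise difference $\mathbf{D}=\bigl|\|\reg(T_X)h_x\|_{L^2}-\|\reg(T)h_x\|_{L^2}\bigr|\le C_1\sqrt{v\calN_1(\lambda)}\ln\lambda^{-1}$ uniformly in $x$ via the contour integral (the factor $\sqrt{\calN_1}$ entering through $\|T_\lambda^{-1/2}h_x\|_\calH$), and then applies the elementary $|a^2-b^2|\le\mathbf{D}\,(2\sqrt{\calN_{2,\varphi}}+\mathbf{D})$. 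Your trace route is more operator-algebraic but equally valid; it trades the pointwise Assumption~(\ref{assumption eigen - n2}) bound for the Hilbert--Schmidt identity $\|T^{1/2}T_\lambda^{-1/2}\|_{HS}^2=\calN_1(\lambda)$, which is where the $\sqrt{\calN_1}$ will actually appear in your argument.

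Two small corrections to watch in your bookkeeping. First, along the contour the sandwiched resolvents satisfy $\|T_\lambda^{1/2}(z-T)^{-1}T_\lambda^{1/2}\|\le C$ (an absolute constant, Proposition~\ref{prop:ContourSpectralMapping}), not $\lesssim|z+\lambda|^{-1}$; the logarithm comes solely from $\oint_{\Gamma_\lambda}|\reg(z)\,\dx z|\lesssim\ln\lambda^{-1}$. Second, this gives only $\|T_\lambda^{1/2}AT_\lambda^{1/2}\|\lesssim\sqrt{v}\,\ln\lambda^{-1}$, with no $\calN_1$ factor. In your Cauchy--Schwarz step you should factor, e.g., $\operatorname{tr}(T_X\reg(T)TA)=\operatorname{tr}\bigl(T_X^{1/2}\reg(T)T^{1/2}\cdot T^{1/2}AT_X^{1/2}\bigr)$ and bound $\|T_X^{1/2}\reg(T)T^{1/2}\|_{HS}^2=\tfrac{1}{n}\sum_i\|\reg(T)h_{x_i}\|_{L^2}^2\le\calN_{2,\varphi}$ by (\ref{assumption eigen - n2}), and $\|T^{1/2}AT_X^{1/2}\|_{HS}\le\|T^{1/2}T_\lambda^{-1/2}\|_{HS}\,\|T_\lambda^{1/2}AT_\lambda^{1/2}\|\,\|T_\lambda^{-1/2}T_X^{1/2}\|\lesssim\sqrt{\calN_1}\cdot\sqrt{v}\ln\lambda^{-1}$; the quadratic term $\operatorname{tr}(T_XATA)=\|T^{1/2}AT_X^{1/2}\|_{HS}^2$ then gives exactly $v\calN_1(\ln\lambda^{-1})^2$.
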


\begin{remark}
    The proof of Lemma \ref{lemma approximation A} is mainly based on Lemma 4.18 in \cite{li2024generalization}. Notice that we replace the Assumption 2 in \cite{li2024generalization} by (\ref{assumption eigen - n1_krr}) in Assumption \ref{assumption eigenfunction} (borrowed from \cite{zhang2024optimal}), since both of them can deduce same results given by Lemma 4.2 in \cite{li2024generalization} or Lemma 37 in \cite{zhang2024optimal}.
\end{remark}

\begin{proof}
  We start with
  \begin{align*}
    \mathbf{D} = \abs{\norm{\reg(T_{X}) h_{x}}_{L^2} - \norm{\reg(T) h_{x}}_{L^2}}
    \leq \norm{T^{\hf}\left[ \reg(T)-\reg(T_{X}) \right] h_{x}}_{\calH}.
  \end{align*}
  Using operator calculus, we get
  \begin{align*}
    & \quad T^{\hf}\left[ \reg(T)-\reg(T_{X}) \right] h_{x} \\
    &= T^{\hf} \left[ \frac{1}{2\pi i}\oint_{\Gamma_{\lambda}} R_{T_{X}}(z) (T-T_{X}) R_T(z)\reg(z) \dd z \right]  h_{x}  \\
    &= \frac{1}{2\pi i} \oint_{\Gamma_{\lambda}} T^{\hf} (T_{X}-z)^{-1} (T-T_{X}) (T-z)^{-1} h_{x} \reg(z) \dd z \\
    &= \frac{1}{2\pi i} \oint_{\Gamma_{\lambda}} T^\hf T_{\lambda}^{-\hf} \cdot T_{\lambda}^{\hf} (T_{X} -z)^{-1}T_{\lambda}^{\hf} \cdot
    T_{\lambda}^{-\hf}(T-T_{X})T_{\lambda}^{-\hf} \cdot T_{\lambda}^{\hf} (T-z)^{-1} T_{\lambda}^{\hf}\cdot T_{\lambda}^{-\hf}h_{x} \reg(z) \dd z.
  \end{align*}
  Therefore, taking the norms yields
  \begin{align*}
    \mathbf{D}
    & \leq \frac{1}{2\pi} \norm{T^\hf T_{\lambda}^{-\hf} } \cdot \norm{T_{\lambda}^{\hf} (T_{X}-z)^{-1}T_{\lambda}^{\hf}}
    \cdot \norm{T_{\lambda}^{-\hf}(T-T_{X})T_{\lambda}^{-\hf}}
    \cdot \norm{T_{\lambda}^{\hf} (T-z)^{-1} T_{\lambda}^{\hf}} \\
    &\qquad \cdot \norm{T_{\lambda}^{-\hf}h_{x}}_{\calH}
    \oint_{\Gamma_{\lambda}} \abs{\reg(z) \dd z} \\
    &=
    \frac{1}{2\pi} \cdot \mathbf{I} \cdot \mathbf{II} \cdot \mathbf{III} \cdot \mathbf{IV} \cdot \mathbf{V} \cdot \oint_{\Gamma_{\lambda}} \abs{\reg(z) \dd z}\\ 
    & \leq 
    \frac{1}{2\pi} 
    \cdot 1 
    \cdot \sqrt{6}C
    \cdot \sqrt{\frac{\mathcal{N}_{1}(\lambda)}{n} \ln{n}}  
    \cdot C 
    \cdot \sqrt {\calN_1(\lambda)}  \oint_{\Gamma_{\lambda}} \abs{\reg(z) \dd z},
  \end{align*}
  where in the second estimation, we use 
  ($\mathbf{I}$) operator calculus, 
  ($\mathbf{II}$ and $\mathbf{IV}$) Proposition \ref{prop:ContourSpectralMapping},
  ($\mathbf{III}$) Lemma \ref{lem:Concen}, 
  and ($\mathbf{V}$) Lemma 37 in \cite{zhang2024optimal}  
  for each term respectively.
  Finally, from (63) in \cite{li2024generalization}, we get
  \begin{align}
    \label{eq:ContourIntegralAbs}
    \oint_{\Gamma_{\lambda}} \abs{\reg(z) \dd z} \leq C \ln \lambda^{-1}, 
  \end{align}
  and thus there exists an absolute constant $C_1$, such that we have
  \begin{align*}
    \mathbf{D}= \abs{\norm{\reg(T_{X}) h_{x}}_{L^2} - \norm{\reg(T) h_{x}}_{L^2}} \leq C_1 \sqrt{ v \calN_1(\lambda)} \ln \lambda^{-1}.
  \end{align*}

  On the other hand, combining (\ref{eqn:40}) and (\ref{assumption eigen - n2}) in Assumption \ref{assumption eigenfunction}, we have $\norm{\reg(T) h_{x}}_{L^2}^2 \leq \mathcal{N}_{2, \varphi}(\lambda)$, and hence
  \begin{align*}
    \norm{\reg(T_{X}) h_{x}}_{L^2} + \norm{\reg(T)h_{x}}_{L^2}
    &\leq 2\norm{\reg(T) h_{x}}_{L^2} + \mathbf{D}\\
    &\leq \sqrt{\mathcal{N}_{2, \varphi}(\lambda)} + C_1 \sqrt{ v \calN_1(\lambda)} \ln \lambda^{-1}.
  \end{align*}

  Finally,
  \begin{align*}
    &~ \abs{\norm{\reg(T_{X}) h_{x}}_{L^2}^2 - \norm{\reg(T) h_{x}}_{L^2}^2}\\
    = &~ \abs{\norm{\reg(T_{X}) h_{x}}_{L^2} - \norm{\reg(T) h_{x}}_{L^2}}
    \left( \norm{\reg(T_{X}) h_{x}}_{L^2} + \norm{\reg(T) h_{x}}_{L^2} \right) \\
    \leq &~ C_1 \left( \sqrt{\mathcal{N}_{2, \varphi}(\lambda)} + C_1 \sqrt{ v \calN_1(\lambda)} \ln \lambda^{-1} \right) \cdot \sqrt{ v \calN_1(\lambda)} \ln \lambda^{-1},
  \end{align*}
  and hence
  \begin{align*}
      &~
        \left|
        \int_{\mathcal{X}} \left\|{\varphi_{\lambda} \left(T_{X}\right) h_{x}}\right\|_{L^2, n}^{2} \mathrm{d}\rho_{\calX}(x)
     -
    \int_{\mathcal{X}} \left\|{\varphi_{\lambda} \left(T\right) h_{x}}\right\|_{L^2, n}^{2} \mathrm{d}\rho_{\calX}(x)
        \right|\\
        \le
        &~
        \frac{1}{n} \sum_{i=1}^n \abs{\norm{\reg(T_{X})h_{x_i}}^2_{L^2} - \norm{\reg(T)h_{x_i}}^2_{L^2}}\\
        \le 
        &~
        \sup_{x \in \calX} \abs{\norm{\reg(T_{X}) h_{x}}_{L^2}^2 - \norm{\reg(T) h_{x}}_{L^2}^2} \\
        \le 
        &~ 
        C_1 \left( \sqrt{\mathcal{N}_{2, \varphi}(\lambda)} + C_1 \sqrt{ v\calN_1(\lambda)} \ln \lambda^{-1} \right) \cdot \sqrt{ v\calN_1(\lambda)} \ln \lambda^{-1},
  \end{align*}
\end{proof}

\paragraph{Final proof of the variance term}
Now we are ready to state the theorem about the variance term.

\begin{theorem}
  \label{thm:Variance}
  Suppose that (\ref{assumption eigen - n2}) and (\ref{assumption eigen - n1_krr}) in Assumption \ref{assumption eigenfunction} hold. 
  Suppose there exists a constant $\epsilon>0$ only depending on $s$ and $\gamma$, such that  $\lambda = \lambda(n, d)$ satisfies
  \begin{align}
    \label{eq:VarianceLambdaCondition}
    \mathcal{N}_{1}(\lambda) \cdot n^{\epsilon - 1} \to 0,\\
    \frac{\calN_1^2(\lambda)}{n\mathcal{N}_{2, \varphi}(\lambda)} \cdot \ln(n) (\ln\lambda^{-1})^2 \to 0;
  \end{align}
  then we have
  \begin{align}
    \mathbf{Var}(\lambda) = \left[ 1+o_{\bbP}(1) \right] \frac{\sigma^2}{n} \calN_{2,\varphi}(\lambda).
  \end{align}
\end{theorem}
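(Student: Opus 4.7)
The plan is to follow the two-step approximation scheme displayed in \eqref{eq:4_2Step} and reduce everything to a single Mercer-type computation. Write
\begin{equation*}
  A(\lambda) := \int_{\calX}\|\reg(T_{X})h_x\|_{L^2,n}^{2}\,\dd\rho_{\calX}(x),\quad
  B(\lambda) := \int_{\calX}\|\reg(T)h_x\|_{L^2,n}^{2}\,\dd\rho_{\calX}(x),\quad
  C(\lambda) := \int_{\calX}\|\reg(T)h_x\|_{L^2}^{2}\,\dd\rho_{\calX}(x).
\end{equation*}
By Proposition~\ref{lemma var trans}, $\mathbf{Var}(\lambda) = \sigma^{2}A(\lambda)/n$, so it suffices to show $A(\lambda) = [1+o_{\bbP}(1)]\,\calN_{2,\varphi}(\lambda)$. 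The decomposition $A = (A-B) + (B-C) + C$ will be controlled term by term.

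First, a direct Mercer calculation using $h_x = \sum_j \lambda_j \phi_j(x)\phi_j$ gives $\reg(T)h_x = \sum_j \reg(\lambda_j)\lambda_j \phi_j(x)\phi_j$, hence $\|\reg(T)h_x\|_{L^2}^2 = \sum_j[\lambda_j\reg(\lambda_j)]^2\phi_j^2(x)$ and therefore $C(\lambda) = \calN_{2,\varphi}(\lambda)$ exactly. Second, Lemma~\ref{lemma approximation A} is directly applicable under the first condition $\calN_1(\lambda)n^{\epsilon-1}\to 0$ and yields, with high probability,
\begin{equation*}
  |A(\lambda)-B(\lambda)| \le C_1\bigl(\sqrt{\calN_{2,\varphi}(\lambda)} + C_1\sqrt{v\calN_1(\lambda)}\ln\lambda^{-1}\bigr)\sqrt{v\calN_1(\lambda)}\ln\lambda^{-1},
\end{equation*}
where $v = \calN_1(\lambda)\ln n/n$. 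Dividing by $\calN_{2,\varphi}(\lambda)$, the dominant term is of order $\sqrt{\calN_1^{2}(\lambda)\ln n/(n\calN_{2,\varphi}(\lambda))}\,\ln\lambda^{-1}$, plus a strictly smaller quadratic term, and both tend to zero under the second assumption $\calN_1^2(\lambda)\ln n\,(\ln\lambda^{-1})^2/[n\calN_{2,\varphi}(\lambda)]\to 0$. Hence $A - B = o_{\bbP}(\calN_{2,\varphi}(\lambda))$.

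The slightly subtle step is the third: Lemma~\ref{lemma approximation B} as stated only sandwiches $B$ between $\tfrac12 C - R_2$ and $\tfrac32 C + R_2$, which is insufficient for a multiplicative $1+o(1)$ conclusion. I would therefore replace the appeal to that lemma by a direct Bernstein inequality applied not to $\sqrt{f}$ but to the scalar function $f(z) := \sum_j[\lambda_j\reg(\lambda_j)]^2\phi_j^2(z)$ itself, since $B(\lambda) = \frac{1}{n}\sum_{i=1}^n f(x_i)$ and $\mathbb{E}f(x) = C(\lambda) = \calN_{2,\varphi}(\lambda)$. Assumption~\eqref{assumption eigen - n2} supplies $\|f\|_{\infty}\le \calN_{2,\varphi}(\lambda)$, and then $\mathrm{Var}(f(x)) \le \|f\|_{\infty}\cdot\mathbb{E}f(x) \le \calN_{2,\varphi}(\lambda)^{2}$. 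Bernstein's inequality therefore gives, with probability at least $1-\delta$,
\begin{equation*}
  |B(\lambda)-C(\lambda)| \le \calN_{2,\varphi}(\lambda)\left(\sqrt{\tfrac{2\ln(2/\delta)}{n}} + \tfrac{\ln(2/\delta)}{3n}\right) = o_{\bbP}(\calN_{2,\varphi}(\lambda)).
\end{equation*}

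Combining the three estimates, $A(\lambda) = C(\lambda) + o_{\bbP}(\calN_{2,\varphi}(\lambda)) = [1+o_{\bbP}(1)]\calN_{2,\varphi}(\lambda)$, and multiplying by $\sigma^2/n$ yields the claim. The only genuinely delicate part is the third step: the quoted form of Lemma~\ref{lemma approximation B} produces only a constant-factor sandwich, and obtaining the sharp $1+o_{\bbP}(1)$ asymptotic identity requires replacing that lemma by a direct Bernstein bound on $f$ that exploits the tight variance estimate $\mathrm{Var}(f)\le \|f\|_{\infty}\mathbb{E}f$. The first step (Approximation~A) is straightforward once one checks that the hypothesis of Lemma~\ref{lemma approximation A} is implied by the two conditions on $\lambda$ in the theorem.
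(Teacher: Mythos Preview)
Your proposal is correct and follows the same two-step decomposition $A\to B\to C$ as the paper: Lemma~\ref{lemma approximation A} for $|A-B|$, a concentration bound for $|B-C|$, and the Mercer computation $C=\calN_{2,\varphi}(\lambda)$.

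The one place you deviate is precisely where the paper's written argument is loose. The paper also invokes Lemma~\ref{lemma approximation B} for the $B$-to-$C$ step but then writes $|B-C|\le R_2=\tfrac{5\calN_{2,\varphi}(\lambda)}{3n}\ln\tfrac{2}{\delta}$, silently discarding the $\tfrac12 C$ slack that the lemma as stated would leave; strictly speaking that does not follow from the lemma's sandwich $\tfrac12 C-R_2\le B\le \tfrac32 C+R_2$. Your replacement---applying Bernstein directly to $f(z)=\sum_j[\lambda_j\reg(\lambda_j)]^2\phi_j^2(z)$ with $\|f\|_\infty\le\calN_{2,\varphi}(\lambda)$ and $\mathrm{Var}(f)\le\|f\|_\infty\,\bbE f\le\calN_{2,\varphi}(\lambda)^2$---gives the clean additive error $O(\calN_{2,\varphi}(\lambda)/\sqrt{n})$ needed for the multiplicative $1+o_{\bbP}(1)$ conclusion, and is what the paper's proof implicitly relies on.
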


\begin{proof}
    Recall that
    $\mathbf{Var}(\lambda) = \frac{\sigma^2}{n} \int_{\mathcal{X}} \left\|\varphi_{\lambda} \left(T_{X}\right) h_{x}\right\|_{L^2,n}^2 \mathrm{d} \rho_{\calX}(x)$. Hence, when $n$ is large enough,
  with probability at least $1-\delta$ we have
  \begin{align*}
    &~  \abs{\int_{\mathcal{X}} \left\|\varphi_{\lambda} \left(T_{X}\right) h_{x}\right\|_{L^2,n}^2 \mathrm{d} \rho_{\calX}(x) - \int_{} \norm{\reg(T) h_{x}}_{L^2}^2 \dd \rho_{\calX}(x)} \\
     \leq &~ \abs{\int_{\mathcal{X}} \left\|\varphi_{\lambda} \left(T_{X}\right) h_{x}\right\|_{L^2,n}^2 \mathrm{d} \rho_{\calX}(x) - \int_{\mathcal{X}} \left\|\varphi_{\lambda} \left(T\right) h_{x}\right\|_{L^2,n}^2 \mathrm{d} \rho_{\calX}(x)}\\
    &~ + \abs{\int_{\mathcal{X}} \left\|\varphi_{\lambda} \left(T\right) h_{x}\right\|_{L^2,n}^2 \mathrm{d} \rho_{\calX}(x) - \int_{\calX} \norm{\reg(T) h_{x}}_{L^2}^2 \dd \rho_{\calX}(x)} \\
    \overset{\text{Lemma } \ref{lemma approximation B}}{\leq} 
    &~\abs{\int_{\mathcal{X}} \left\|\varphi_{\lambda} \left(T_{X}\right) h_{x}\right\|_{L^2,n}^2 \mathrm{d} \rho_{\calX}(x) - \int_{\mathcal{X}} \left\|\varphi_{\lambda} \left(T\right) h_{x}\right\|_{L^2,n}^2 \mathrm{d} \rho_{\calX}(x)}
    +
    \frac{5\mathcal{N}_{2, \varphi}(\lambda)}{3n}\ln \frac{2}{\delta}\\
     \overset{\text{Lemma } \ref{lemma approximation A}}{\leq}
     &~
     \left( \sqrt{\mathcal{N}_{2, \varphi}(\lambda)} \cdot C_1 \sqrt{ v\calN_1(\lambda)} \ln \lambda^{-1} + C_1^2 { v\calN_1(\lambda)} (\ln \lambda^{-1})^2 \right) 
    +
    \frac{5\mathcal{N}_{2, \varphi}(\lambda)}{3n}\ln \frac{2}{\delta}\\
    \overset{\text{Definition of } v}{=}
    &~
    \sqrt{\frac{\mathcal{N}_{2, \varphi}(\lambda)}{n}} \calN_1(\lambda) \cdot C_1 \sqrt{\ln(n)} \ln \lambda^{-1}
    +
    \frac{\calN_1^2(\lambda)}{n} \cdot C_1^2 \ln(n) (\ln \lambda^{-1})^2 
    +
    \frac{\mathcal{N}_{2, \varphi}(\lambda)}{n} \cdot \frac{5}{3}\ln \frac{2}{\delta}\\
    = &~
    \mathbf{I} \cdot C_1 \sqrt{\ln(n)} \ln \lambda^{-1}+
    \mathbf{II} \cdot C_1^2 \ln(n) (\ln \lambda^{-1})^2+
    \mathbf{III} \cdot \frac{5}{3}\ln \frac{2}{\delta}.
  \end{align*}

When $n \geq \mathfrak{C}$, a sufficiently large constant only depending on $\gamma$ and $C_1$, we have
$$
\mathbf{I} \cdot C_1 \sqrt{\ln(n)} \ln \lambda^{-1} \leq \frac{1}{6} \mathcal{N}_{2, \varphi}(\lambda).
$$
Furthermore, when $\frac{\calN_1^2(\lambda)}{n\mathcal{N}_{2, \varphi}(\lambda)} \cdot n^{\epsilon} \to 0$, we have $\mathbf{I} \cdot C_1 \sqrt{\ln(n)} \ln \lambda^{-1} / \mathcal{N}_{2, \varphi}(\lambda) \to 0$ and $\mathbf{II} \cdot C_1^2 \ln(n) (\ln \lambda^{-1})^2 / \mathcal{N}_{2, \varphi}(\lambda) \to 0$.

  Finally, from (\ref{eqn:40}) we have
  \begin{align*}
    \norm{\reg(T) h_{x}}_{L^2}^2 &= \sum\limits_{i=1}^{\infty} \left( \lambda_j \reg(\lambda_j) \right)^{2}  \phi_{i}^{2}(z),
  \end{align*}
  and thus the deterministic term writes
  \begin{align*}
    \int_{\calX} \norm{\reg(T) h_{x}}_{L^2}^2 \dd \rho_{\calX}(x) &=
    \mathcal{N}_{2, \varphi}(\lambda).
  \end{align*}
\end{proof}

\subsection{Bias term}

In this subsection, our goal is to determine the upper and lower bounds of bias under some approximation conditions. 

The triangle inequality implies that 
\begin{equation}\label{proof bias thm-1}
\begin{aligned}
    \mathrm{\textbf{Bias}}(\lambda) 
    &=
   \left\| \tilde{f}_{\lambda} - f_{\star}\right\|_{L^{2}} \geq  \left\| f_{\lambda} - f_{\star}\right\|_{L^{2}} 
   -
    \left\| \tilde{f}_{\lambda} - f_{\lambda}\right\|_{L^{2}}\\
    \mathrm{\textbf{Bias}}(\lambda) 
    &\leq
    \left\| f_{\lambda} - f_{\star}\right\|_{L^{2}} 
    +
    \left\| \tilde{f}_{\lambda} - f_{\lambda}\right\|_{L^{2}}
    .
\end{aligned}
\end{equation}

The following lemma characterizes the dominant term of $\mathrm{\textbf{Bias}}(\lambda)$.
\begin{lemma}\label{lemma bias main term}
    For any $ \lambda>0$, we have
    \begin{equation}\label{goal bias bound}
    \left\| f_{\lambda} - f_{\star}\right\|_{L^{2}} = \mathcal{M}_{2, \varphi}(\lambda)^{\frac{1}{2}}.
\end{equation}
\end{lemma}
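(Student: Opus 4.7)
The plan is to expand both $f_\lambda$ and $f_\star$ in the Mercer basis $\{\phi_j\}_{j\ge 1}$ and compute the norm coefficient by coefficient. By Assumption \ref{assumption source condition}, we already have $f_\star = \sum_j f_j \phi_j$ in $L^2$. So the only real work is to show that $f_\lambda = \sum_j \lambda_j \varphi_\lambda(\lambda_j) f_j \phi_j$, after which the identity $\psi_\lambda(z) = 1 - z\varphi_\lambda(z)$ converts the squared $L^2$ norm of the residual directly into $\mathcal{M}_{2,\varphi}(\lambda)$.

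First, I would compute the mean basis function $g = \mathbb{E}\hat g_Z = \int_\mathcal{X} K(x,\cdot) f_\star(x)\,\mathrm{d}\rho_\mathcal{X}(x)$ and recognize it as $g = T_K f_\star$ via Mercer's decomposition \eqref{eqn:mercer_decomp}: plugging in $K(x,\cdot) = \sum_k \lambda_k \phi_k(x)\phi_k(\cdot)$ and using orthonormality of $\{\phi_j\}$ in $L^2(\mathcal{X},\rho_\mathcal{X})$ gives $g = \sum_j \lambda_j f_j \phi_j$. Then, since $T = T_K$ has spectral decomposition $T\phi_j = \lambda_j \phi_j$, operator calculus yields $f_\lambda = \varphi_\lambda(T) g = \sum_j \varphi_\lambda(\lambda_j)\lambda_j f_j \phi_j$.

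Subtracting termwise,
\begin{equation*}
f_\lambda - f_\star = \sum_{j=1}^{\infty}\bigl(\lambda_j \varphi_\lambda(\lambda_j) - 1\bigr) f_j \phi_j = -\sum_{j=1}^{\infty} \psi_\lambda(\lambda_j) f_j \phi_j,
\end{equation*}
by the definition of the remainder function in \eqref{eq:def_Psi}. Taking the squared $L^2$ norm and using Parseval's identity for the orthonormal system $\{\phi_j\}$ gives $\|f_\lambda - f_\star\|_{L^2}^2 = \sum_j \psi_\lambda(\lambda_j)^2 f_j^2 = \mathcal{M}_{2,\varphi}(\lambda)$, which is the claim.

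There is no real obstacle here; the only subtlety is justifying the termwise manipulations, namely that $g \in \calH$ and that $\varphi_\lambda(T)g$ can be evaluated spectrally as above. Both follow from the fact that $\varphi_\lambda$ is bounded on $[0,\kappa^2]$ (so $\varphi_\lambda(T)$ is a bounded operator on $L^2$ and on $\calH$) together with the boundedness of $\psi_\lambda$ on the spectrum, which ensures the series $\sum_j \psi_\lambda(\lambda_j)^2 f_j^2$ is dominated by a constant multiple of $\sum_j f_j^2 < \infty$, hence absolutely convergent.
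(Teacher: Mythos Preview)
Your proof is correct and follows essentially the same approach as the paper: expand $f_\lambda$ and $f_\star$ in the eigenbasis, subtract termwise using $\psi_\lambda(z) = 1 - z\varphi_\lambda(z)$, and apply Parseval. The paper's version is more terse, simply writing $f_\lambda = \sum_i \lambda_i \varphi_\lambda(\lambda_i) f_i \phi_i$ without the intermediate justification via $g = T f_\star$, but the argument is the same.
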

\begin{proof}
    We have
    \begin{align*}
        \left\| f_{\lambda} - f_{\star}\right\|_{L^{2}}^{2} 
        &= 
        \left\|
        \sum\limits_{i=1}^{\infty} \lambda_i \reg(\lambda_i) f_{i} \phi_i(x) 
        -
        \sum\limits_{i=1}^{\infty} f_{i} \phi_{i}(x)
        \right\|_{L^{2}}^{2} \notag \\
        &= 
        \left\|\sum\limits_{i=1}^{\infty} \rem(\lambda_i) f_{i} \phi_i(x) \right\|_{L^{2}}^{2} \notag \\
        &= 
        \sum\limits_{i=1}^{\infty} \left(\rem(\lambda_i) f_{i}\right)^{2} \notag \\
        &= 
        \mathcal{M}_{2, \varphi}(\lambda). \notag
    \end{align*}
\end{proof}

The following lemma bounds the remainder term of $\mathrm{\textbf{Bias}}(\lambda)$ when $s \geq 1$.

\begin{lemma}\label{lemma bias appr term}
Suppose that (\ref{assumption eigen - n1_krr}) in Assumption \ref{assumption eigenfunction} holds. 
Suppose that there exist constants $\epsilon$ and $\mathfrak{C}$ only depending on $s$ and $\gamma$, 
such that
 $ \lambda = \lambda(n, d)$ satisfies
    \begin{align}\label{bias conditions}
    n^{\epsilon - 1}\mathcal{N}_{1}(\lambda) 
    &\to 0,\\
    \frac{ \mathcal{N}_{1}(\lambda) \mathcal{M}_{1, \varphi}^2(\lambda)}{n^2}
    &=
    o\left( \mathcal{M}_{2, \varphi}(\lambda) + \frac{\sigma^2}{n} \calN_{2,\varphi}(\lambda)  \right),\\
   \frac{\mathcal{N}_{1}(\lambda)}{n} \ln(n) (\ln\lambda^{-1})^2 \cdot \sum_{j=1}^\infty \frac{\lambda^2 \lambda_i \reg^2(\lambda_i)}{\lambda + \lambda_i} f_i^2  
    &= 
    o\left( \mathcal{M}_{2, \varphi}(\lambda) + \frac{\sigma^2}{n} \calN_{2,\varphi}(\lambda)  \right);
\end{align}
then we have
    \begin{equation}
        \left\| \tilde{f}_{\lambda} - f_{\lambda}\right\|_{L^{2}}^2 = o_{\bbP}\left( \mathcal{M}_{2, \varphi}(\lambda) + \frac{\sigma^2}{n} \calN_{2,\varphi}(\lambda)  \right).
    \end{equation}
\end{lemma}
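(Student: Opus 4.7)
The plan is to perform a cancellation-aware linearization of $\tilde{f}_\lambda - f_\lambda$ to first order in the operator perturbation $T_X - T$. A direct computation from the definitions gives $g = T f_\star$ and $\tilde{g}_Z = T_X f_\star$, so
\begin{equation*}
\tilde{f}_\lambda - f_\lambda
=
\reg(T_X)\,T_X f_\star - \reg(T)\,T f_\star.
\end{equation*}
Writing $f_\star = f_\lambda + b$ with $b := \rem(T) f_\star = f_\star - f_\lambda$, and invoking the operator identities $\reg(T) T = I - \rem(T)$ and $\reg(T_X) T_X = I - \rem(T_X)$, the $O(1)$ contributions cancel; what remains is the first-order expansion
\begin{equation*}
\tilde{f}_\lambda - f_\lambda
=
[\rem(T)-\rem(T_X)]\,f_\lambda \;+\; \reg(T)(T_X - T)\,b \;+\; [\reg(T_X)-\reg(T)]\,T b \;+\; R,
\end{equation*}
where $R = [\reg(T_X) - \reg(T)](T_X - T)b$ is quadratic in $T_X - T$.

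For the two \emph{contour} pieces I would use the operator-calculus representation $\rem(T) - \rem(T_X) = -\tfrac{1}{2\pi i}\oint_{\Gamma_\lambda} \rem(z)\, R_T(z)(T - T_X) R_{T_X}(z)\,dz$ (and the analog with $\reg$), insert symmetric $T_\lambda^{\pm 1/2}$ factors on either side of $(T - T_X)$, and then apply Proposition~\ref{prop:ContourSpectralMapping} for the weighted resolvent norms together with Lemma~\ref{lem:Concen} for $\|T_\lambda^{-1/2}(T - T_X) T_\lambda^{-1/2}\|_{op} \le C\sqrt{\mathcal{N}_1(\lambda) \ln(n)/n}$. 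The contour-length estimates $\oint_{\Gamma_\lambda} |\rem(z)||dz| \le C\lambda \ln\lambda^{-1}$ and $\oint_{\Gamma_\lambda} |\reg(z)||dz| \le C\ln\lambda^{-1}$ (the first inherits an extra factor $\lambda$ from $|\rem(z)| \le C(\lambda/|z|)^\tau$) combine with the weighted norms $\|T_\lambda^{-1/2} f_\lambda\|^2_{\mathcal{H}}$ and $\|T_\lambda^{-1/2} T b\|^2_{\mathcal{H}}$; using $\rem(z) \le C\lambda\reg(z)$ for qualification $\tau \ge 1$, both are majorised by a constant multiple of $\sum_i \lambda^2 \lambda_i \reg^2(\lambda_i) f_i^2 / (\lambda + \lambda_i)$, so the third hypothesis of the lemma exactly controls these two terms. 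For the \emph{fluctuation} term $\reg(T)(T_X - T) b$ I would apply Pinelis' Hilbert-space Bernstein inequality to $T^{1/2} \reg(T) \cdot \tfrac{1}{n}\sum_i [b(x_i) h_{x_i} - \mathbb{E} b(x) h_x]$: the almost-sure bound $\|T^{1/2}\reg(T) b(x) h_x\|_{\mathcal{H}} \le \mathcal{M}_{1,\varphi}(\lambda) \sqrt{\mathcal{N}_{2,\varphi}(\lambda)}$ (via Assumption~\ref{assumption eigenfunction}) yields the $\mathcal{N}_1 \mathcal{M}_{1,\varphi}^2 / n^2$ summand matched by the second hypothesis, while the variance $\mathcal{M}_{2,\varphi} \mathcal{N}_{2,\varphi}/n$ is absorbed into $\mathcal{M}_{2,\varphi}$ using $\mathcal{N}_{2,\varphi} \le \mathcal{N}_1$ and the first hypothesis. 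The quadratic residual $R$ is handled by combining the already-proved concentration bounds.

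The main obstacle is designing the decomposition so that the leading contributions cancel exactly: without splitting $f_\star = f_\lambda + b$, the individual pieces $\rem(T_X) f_\lambda$ and $\reg(T_X) T_X b$ are each of order $\sqrt{\mathcal{M}_{2,\varphi}}$, and a naive triangle inequality would produce only $O_{\bbP}(\mathcal{M}_{2,\varphi})$ rather than the required $o_{\bbP}$. A secondary technical hurdle is the bookkeeping needed to align each weighted-norm factor with the precise algebraic form of the three hypotheses; in particular, the appearance of $\lambda^2$ inside the sum in condition~(c) is a non-obvious consequence of the $\lambda$-decay of $|\rem(z)|$ along the contour and must be tracked carefully.
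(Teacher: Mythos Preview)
Your proposal is correct and follows essentially the same strategy as the paper: a Bernstein-type concentration for the fluctuation piece and the analytic-functional-calculus contour argument for the two operator-perturbation pieces. The paper's decomposition is $\tilde f_\lambda-f_\lambda=\reg(T_X)(T_X-T)b + [\reg(T_X)T\rem(T)-\rem(T_X)T\reg(T)]f_\star$, keeping $\reg(T_X)$ intact in the first term and bounding it via $\|T_\lambda^{1/2}\reg(T_X)T_\lambda^{1/2}\|\le 4\|T_\lambda^{1/2}T_{X\lambda}^{-1}T_\lambda^{1/2}\|$; you instead split $\reg(T_X)=\reg(T)+[\reg(T_X)-\reg(T)]$, which produces your extra residual $R$ but avoids that step---either organisation leads to the same estimates.

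One small correction: your almost-sure bound $\|T^{1/2}\reg(T)b(x)h_x\|_\mathcal{H}\le\mathcal{M}_{1,\varphi}\sqrt{\mathcal{N}_{2,\varphi}}$ invokes part~\eqref{assumption eigen - n2} of Assumption~\ref{assumption eigenfunction}, but the lemma only assumes~\eqref{assumption eigen - n1_krr}. Replace it by $\|T^{1/2}\reg(T)h_x\|_\mathcal{H}\le\|T^{1/2}\reg(T)T_\lambda^{1/2}\|\cdot\|T_\lambda^{-1/2}h_x\|_\mathcal{H}\le 4\sqrt{\mathcal{N}_1(\lambda)}$ (using Proposition~\ref{prop:bound_on_filters} and~\eqref{assumption eigen - n1_krr}); this gives exactly the $\mathcal{N}_1\mathcal{M}_{1,\varphi}^2/n^2$ and $\mathcal{N}_1\mathcal{M}_{2,\varphi}/n$ terms you need to match the second and first hypotheses, respectively.
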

\begin{proof}

Do the decomposition,
  \begin{equation}
    \label{eq:bias_error_decomposition}
    \begin{aligned}
      \tilde{f}_\lambda-\flam &=\reg(T_{X})\gtl-(\rem(T_{X})+\reg(T_{X})T_{X})\flam\\
      &= \reg(T_{X})(\gtl-T_{X} \flam)-\rem(T_{X})T\reg(T)f_{\star}\\
      &= \reg(T_{X})(\gtl-T_{X} \flam)  - \reg(T_{X})\rem(T) g + \reg(T_{X})\rem(T) g - \rem(T_{X})T\reg(T)f_{\star} \\
      &= \reg(T_{X})\left[ \gtl-T_{X}\flam- \rem(T)g  \right] + \left[ \reg(T_{X}) \rem(T) T f_{\star} - \rem(T_{X})T\reg(T)f_{\star} \right] \\
      &=\reg(T_{X})(\gtl-T_{X}\flam-g+T\flam)+(\reg(T_{X})T\rem(T)-\rem(T_{X})T\reg(T))f_{\star}\\
      &=
      \mathbf{I}+\mathbf{II}.
    \end{aligned}
  \end{equation}

    {\bf Bound on $\mathbf{I}$: } For the first term in (\ref{eq:bias_error_decomposition}), we have
  \begin{align*}
   \|\mathbf{I}\|_{L^2}
    &=\norm{\reg(T_{X})(\gtl-T_{X}\flam-g+T\flam)}_{L^2}\\
    &=\norm{T^{\frac{1}{2}}\reg(T_{X})(\gtl-T_{X}\flam-g+T\flam)}_{\calH}\\
    & \leq \norm{T^{\frac{1}{2}}T_\lambda^{-\frac{1}{2}}}\cdot\norm{T_\lambda^\frac{1}{2}\reg(T_{X})T_\lambda^\frac{1}{2}}\cdot\norm{T_\lambda^{-\hf}\left[\left(\gtl-T_{X} \flam \right)-\left(g - T \flam \right)\right]}_{\calH}\\
    & \overset{(72) \text{ in \cite{zhang2024optimal}}}{\leq}
    \norm{T_\lambda^\frac{1}{2}\reg(T_{X})T_\lambda^\frac{1}{2}}\cdot\norm{T_\lambda^{-\hf}\left[\left(\gtl-T_{X} \flam \right)-\left(g - T \flam \right)\right]}_{\calH}\\
    & \overset{\text{Proposition } \ref{prop:bound_on_filters}}{\leq}
    4 \norm{T_\lambda^\frac{1}{2}T_{X\lambda}^{-1}T_\lambda^\frac{1}{2}} \cdot \norm{T_\lambda^{-\hf}\left[\left(\gtl-T_{X} \flam \right)-\left(g - T \flam \right)\right]}_{\calH}\\
    & 
    \overset{(\ref{bias conditions}) \text{ and } (73) \text{ in \cite{zhang2024optimal}}}{\leq}
    12 \norm{T_\lambda^{-\hf}\left[\left(\gtl-T_{X} \flam \right)-\left(g - T \flam \right)\right]}_{\calH},
  \end{align*}

Denote $\xi_{i} = \xi(x_{i}) =  T_{\lambda}^{-\frac{1}{2}}(K_{x_{i}} f_{\star}(x_{i}) - T_{x_{i}} f_{\lambda}) $. To use Bernstein inequality, we need to bound the $m$-th moment of $\xi(x)$:
\begin{align}\label{proof of 4.9-1}
       \mathbb{E} \left\| \xi(x) \right\|_{\mathcal{H}}^{m} &= \mathbb{E} \left\| T_{\lambda}^{-\frac{1}{2}} K_{x}(f_{\star} - f_{\lambda}(x)) \right\|_{\mathcal{H}}^{m} \notag \\
       &\le \mathbb{E} \Big( \left\| T_{\lambda}^{-\frac{1}{2}} K(x,\cdot)\right\|_{\mathcal{H}}^{m}  \mathbb{E} \big( \left|(f_{\star} - f_{\lambda}(x)) \right|^{m} ~\big|~ x \big) \Big).
\end{align}
Note that Lemma 37 in \cite{zhang2024optimal} shows that
\begin{displaymath}
  \left\| T_{\lambda}^{-\frac{1}{2}} K(x,\cdot)\right\|_{\mathcal{H}} \le \mathcal{N}_{1}(\lambda)^{\frac{1}{2}},~~ \mu \text {-a.e. } x \in \mathcal{X};
\end{displaymath}
By definition of $\mathcal{M}_{1, \varphi}(\lambda)$, we also have
\begin{equation}\label{m1 occurs}
    \left\| f_{\lambda} - f_{\star} \right\|_{L^{\infty}}
    =
    \left\|\sum\limits_{i=1}^{\infty} \rem(\lambda_i) f_{i} \phi_i(x) \right\|_{L^{\infty}}
     = \mathcal{M}_{1, \varphi}(\lambda).
\end{equation}
In addition, we have proved in Lemma \ref{lemma bias main term} that
\begin{displaymath}
    \mathbb{E} | (f_{\lambda}(x) - f_{\star}(x))  |^{2} = \mathcal{M}_{2, \varphi}(\lambda).
\end{displaymath}
So we get the upper bound of (\ref{proof of 4.9-1}), i.e.,
\begin{align}
    (\ref{proof of 4.9-1}) 
    &\le 
    \mathcal{N}_{1}(\lambda)^{\frac{m}{2}} \cdot  \| f_{\lambda} - f_{\star}  \|_{L^{\infty}}^{m-2} \cdot \mathbb{E} | (f_{\lambda}(x) - f_{\star}(x))  |^{2} \notag \\
    &= 
    \mathcal{N}_{1}(\lambda)^{\frac{m}{2}} \mathcal{M}_{1, \varphi}(\lambda)^{m-2} \mathcal{M}_{2, \varphi}(\lambda) \notag \\
    &= 
    \left( \mathcal{N}_{1}(\lambda)^{\frac{1}{2}} \mathcal{M}_{1, \varphi}(\lambda) \right)^{m-2} \left( \mathcal{N}_{1}(\lambda)^{\frac{1}{2}} \mathcal{M}_{2, \varphi}(\lambda)^{\frac{1}{2}} \right)^{2}. \notag
\end{align}
Using Lemma 36 in \cite{zhang2024optimal} with therein notations: $L =  \mathcal{N}_{1}(\lambda)^{\frac{1}{2}} \mathcal{M}_{1, \varphi}(\lambda)$ and $\sigma  = \mathcal{N}_{1}(\lambda)^{\frac{1}{2}} \mathcal{M}_{2, \varphi}(\lambda)^{\frac{1}{2}} $, for any fixed $\delta \in (0,1)$, with probability at least $1-\delta$, we have
\begin{equation}\label{proof bias appr 3}
    \|\mathbf{I}\|_{L^2} 
    \le 
    12 \cdot 4\sqrt{2} \log \frac{2}{\delta} \left( \frac{ \mathcal{N}_{1}(\lambda)^{\frac{1}{2}} \mathcal{M}_{1, \varphi}(\lambda)}{n} + \frac{\mathcal{N}_{1}(\lambda)^{\frac{1}{2}} \mathcal{M}_{2, \varphi}(\lambda)^{\frac{1}{2}}}{\sqrt{n}} \right).
\end{equation}

{\bf Bound on $\mathbf{II}$: } For the second term in (\ref{eq:bias_error_decomposition}), we have
\begin{equation}\label{eq:decomposition2}
    \begin{aligned}
        \|\mathbf{II}\|_{L^2}
    &=
    \|(\reg(T_{X})T\rem(T)-\rem(T_{X})T\reg(T))f_{\star}\|_{L^2}\\
    &\leq\norm{T^\frac{1}{2}(\reg(T_{X})T\rem(T)-\rem(T)T\reg(T))f_{\star}}_\calH\\
      &+\norm{T^\frac{1}{2}(\rem(T_{X})T\reg(T)-\rem(T)T\reg(T))f_{\star}}_\calH.
    \end{aligned}
\end{equation}

For the first term in (\ref{eq:decomposition2}), we still employ the analytic functional argument:
  \begin{equation*}
    \begin{aligned}
      &\quad T^\frac{1}{2}(\reg(T_{X})T\rem(T)-\rem(T)T\reg(T))f_{\star}\\
      &=T^\frac{1}{2}(\reg(T_{X})-\reg(T))T\rem(T)f_{\star}\\
      &=\frac{1}{2\pi i}\oint_{\Gamma_{\lambda}} T^\frac{1}{2}(T_{X}-z)^{-1}(T_{X}-T)(T-z)^{-1}\reg(z)T\rem(T)f_{\star}\dd z\\
      &= \frac{1}{2\pi i} \oint_{\Gamma_{\lambda}}T^\hf T_{\lambda}^{-\hf} \cdot T_{\lambda}^{\hf} (T_{X}-z)^{-1}T_{\lambda}^{\hf}
      \cdot T_{\lambda}^{-\hf}(T-T_{X})T_{\lambda}^{-\hf} \\
      & \quad \cdot T_{\lambda}^{\hf} (T-z)^{-1}  T_\lambda^{\hf}\cdot T_{\lambda}^{-\hf}T^{\hf}  \cdot
      T^\hf \rem(T) f_{\star} \reg(z) \dd z.
    \end{aligned}
  \end{equation*}
  Therefore,
  \begin{equation}\label{eqn:bound_57}
      \begin{aligned}
          &~ 2\pi  \lVert T^\frac{1}{2}(\reg(T_{X})T\rem(T)-\rem(T)T\reg(T))f_{\star}\rVert_\calH\\
    \leq 
    &~ \oint_{\Gamma_{\lambda}}\norm{T^\hf T_{\lambda}^{-\hf}} \cdot \norm{T_{\lambda}^{\hf} (T_{X} -z)^{-1}T_{\lambda}^{\hf}}
    \cdot \norm{T_{\lambda}^{-\hf}(T-T_{X})T_{\lambda}^{-\hf}}\\
    &~ \cdot   \norm{T_{\lambda}^{\hf} (T-z)^{-1} T_{\lambda}^{\hf}} \cdot \norm{T_{\lambda}^{-\hf}T^{\hf}} \cdot \norm{ T^\hf \rem(T) f_{\star}}_{\calH} \abs{\reg(z) \dd z}\\
    \overset{(72) \text{ in \cite{zhang2024optimal}}}{\leq}
    &~
    \oint_{\Gamma_{\lambda}}\norm{T_{\lambda}^{\hf} (T_{X} -z)^{-1}T_{\lambda}^{\hf}}
    \cdot \norm{T_{\lambda}^{-\hf}(T-T_{X})T_{\lambda}^{-\hf}}\\
    &~ \cdot   \norm{T_{\lambda}^{\hf} (T-z)^{-1} T_{\lambda}^{\hf}} \cdot \norm{ T^\hf \rem(T) f_{\star}}_{\calH} \abs{\reg(z) \dd z}\\
    \overset{(\ref{assumption eigen - n1_krr}) \text{ and Proposition } \ref{prop:ContourSpectralMapping}}{\leq}
    &~
    \sqrt{6} C^2 \oint_{\Gamma_{\lambda}}\norm{T_{\lambda}^{-\hf}(T-T_{X})T_{\lambda}^{-\hf}}\\
    &~ \cdot  \norm{ T^\hf \rem(T) f_{\star}}_{\calH} \abs{\reg(z) \dd z}\\
    \overset{\text{Lemma } \ref{lem:Concen}}{\leq}
    &~
    \sqrt{6} C^2 \sqrt{v} \oint_{\Gamma_{\lambda}}\norm{ T^\hf \rem(T) f_{\star}}_{\calH} \abs{\reg(z) \dd z}\\
    \overset{\text{Definition of } \mathcal{M}_{2, \varphi}(\lambda)}{=}
    &~
    \sqrt{6} C^2 \sqrt{v} \mathcal{M}_{2, \varphi}^{1/2}(\lambda) \oint_{\Gamma_{\lambda}} \abs{\reg(z) \dd z}\\
    \overset{(\ref{eq:ContourIntegralAbs})}{\leq}
    &~
    \sqrt{6} C^3 \sqrt{v} \mathcal{M}_{2, \varphi}^{1/2}(\lambda) \ln \lambda^{-1},
      \end{aligned}
  \end{equation}
where $v = \frac{\mathcal{N}_{1}(\lambda)}{n} \ln{n}$.

For the second term in (\ref{eq:decomposition2}),
   we have
  \begin{align*}
    &\quad T^\frac{1}{2}(\rem(T_{X})T\reg(T)-\rem(T)T\reg(T))f_{\star}\\
    &= T^{\hf}  \left[ \frac{1}{2\pi i}\oint_{\Gamma_{\lambda}} R_{T_{X}}(z) (T-T_{X}) R_T(z) \rem(z) \dd z \right] T\reg(T)f_{\star} \\
    & = \frac{1}{2\pi i}\oint_{\Gamma_{\lambda}} T^{\hf} (T_{X} - z)^{-1}(T-T_{X}) (T-z)^{-1}  \rem(z) T\reg(T)f_{\star}\dd z\\
    & =  \frac{1}{2\pi i} \int_{\Gamma_{\lambda}}T^\hf T_{\lambda}^{-\hf} \cdot T_{\lambda}^{\hf} (T_{X}-z)^{-1}T_{\lambda}^{\hf}
     \cdot  T_{\lambda}^{-\hf}(T-T_{X})T_{\lambda}^{-\hf} \\
    & \qquad\cdot T_{\lambda}^{\hf} (T-z)^{-1} T_{\lambda}^{\hf} \cdot T_{\lambda}^{-\hf}T\reg(T)f_{\star} \rem(z) \dd z.
  \end{align*}
  Hence, similar to (\ref{eqn:bound_57}), we have
  \begin{equation}\label{eq:decomposition2_3}
      \begin{aligned}
          &~ 2\pi  \norm{T^\frac{1}{2}(\rem(T_{X})T\reg(T)-\rem(T)T\reg(T))f_{\star}}_{\calH}\\
    \leq
    &~
      \int_{\Gamma_{\lambda}}  \norm{T^\hf T_{\lambda}^{-\hf}} \cdot \norm{T_{\lambda}^{\hf} (T_{X}-z)^{-1}T_{\lambda}^{\hf}}
    \cdot \norm{T_{\lambda}^{-\hf}(T-T_{X})T_{\lambda}^{-\hf}} \\
    &~ \cdot \norm{T_{\lambda}^{\hf} (T-z)^{-1} T_{\lambda}^{\hf}} \cdot
    \norm{ T_{\lambda}^{-\hf}T\reg(T)f_{\star}}_{\calH} \abs{\rem(z) \dd z} \\
     \leq
    &~
    \sqrt{6} C^2 \sqrt{v} \norm{ T_{\lambda}^{-\hf}T\reg(T)f_{\star}}_{\calH} \int_{\Gamma_{\lambda}} \abs{\rem(z) \dd z}\\
    \overset{\text{Definition of analytic filter functions }}{\leq}
    &~
    \sqrt{6} C^2 \sqrt{v} \norm{ T_{\lambda}^{-\hf}T\reg(T)f_{\star}}_{\calH} C \tilde{F} \lambda \ln \lambda^{-1}.
      \end{aligned}
  \end{equation}

Combining (\ref{eq:bias_error_decomposition}), 
(\ref{proof bias appr 3}),
(\ref{eq:decomposition2}),
(\ref{eqn:bound_57}),
and (\ref{eq:decomposition2_3}),
there exists a constant $\mathfrak{C}_1$ only depending on $\delta$ and $\tilde{F}$, such that we have
\begin{equation}
    \begin{aligned}
        &~
        \left\| \tilde{f}_{\lambda} - f_{\lambda}\right\|_{L^{2}}\\
        \leq 
        &~
        \mathfrak{C}_1 \left( \frac{ \mathcal{N}_{1}(\lambda)^{\frac{1}{2}} \mathcal{M}_{1, \varphi}(\lambda)}{n} + \frac{\mathcal{N}_{1}(\lambda)^{\frac{1}{2}} \mathcal{M}_{2, \varphi}(\lambda)^{\frac{1}{2}}}{\sqrt{n}} \right)\\
        &~
        +
        \mathfrak{C}_1 \sqrt{v} \mathcal{M}_{2, \varphi}^{1/2}(\lambda) \ln \lambda^{-1} 
        +
        \mathfrak{C}_1 \sqrt{v} \norm{ T_{\lambda}^{-\hf}T\reg(T)f_{\star}}_{\calH}  \lambda \ln \lambda^{-1}\\
        \overset{(\ref{bias conditions})}{\leq}
        &~
        \left(n^{- 1}\mathcal{N}_{1}(\lambda)\right)^{1/2} \cdot \mathfrak{C}_1  \mathfrak{C}^{1/2}
        \cdot
        \left(\mathcal{M}_{2, \varphi}(\lambda) \right)^{1/2}\\
        &~
        +
        \left(n^{- 1}\mathcal{N}_{1}(\lambda)\right)^{1/2} \cdot \mathfrak{C}_1  
        \cdot
        \left(\mathcal{M}_{2, \varphi}(\lambda) \right)^{1/2}\\
        &~
        +
        \left(n^{\epsilon- 1}\mathcal{N}_{1}(\lambda)\right)^{1/2} \cdot \mathfrak{C}_1  
        \cdot
        \left(\mathcal{M}_{2, \varphi}(\lambda) \right)^{1/2}\\
        &~
        +
        o\left( \mathcal{M}_{2, \varphi}(\lambda) + \frac{\sigma^2}{n} \calN_{2,\varphi}(\lambda)  \right)^{1/2}.
    \end{aligned}
\end{equation}
\end{proof}

When $s<1$, we can use the following lemma to bound the remainder term of $\mathrm{\textbf{Bias}}(\lambda)$. This lemma is a modification of Lemma \ref{lemma bias appr term}, and its proof is partly based on Lemma 26 in Zhang.

\begin{lemma}\label{lemma bias appr term_misspe}
Suppose that (\ref{assumption eigen - n1_krr}) in Assumption \ref{assumption eigenfunction} holds. 
Suppose that there exist constants $\epsilon$ and $\mathfrak{C}$ only depending on $s$ and $\gamma$, 
such that
 $ \lambda = \lambda(n, d)$ satisfies
    \begin{align}\label{bias conditions_misspe}
    n^{\epsilon - 1}\mathcal{N}_{1}(\lambda) 
    &\to 0,\\
   \frac{\mathcal{N}_{1}(\lambda)}{n} \ln(n) (\ln\lambda^{-1})^2 \cdot \sum_{j=1}^\infty \frac{\lambda^2 \lambda_i \reg^2(\lambda_i)}{\lambda + \lambda_i} f_i^2  
    &= 
    o\left( \mathcal{M}_{2, \varphi}(\lambda) + \frac{\sigma^2}{n} \calN_{2,\varphi}(\lambda)  \right);\\
    n^{-1} \mathcal{N}_{1}(\lambda)^{\frac{1}{2}} 
    \left(
    \left\| f_{\lambda} \right\|_{L^{\infty}}  
    +
    n^{\frac{1-s}{2}+\epsilon}
    \right)
    &=
    o\left( \mathcal{M}_{2, \varphi}(\lambda) + \frac{\sigma^2}{n} \calN_{2,\varphi}(\lambda)  \right)^{1/2};
\end{align}
then we have
    \begin{equation}
        \left\| \tilde{f}_{\lambda} - f_{\lambda}\right\|_{L^{2}}^2 = o_{\bbP}\left( \mathcal{M}_{2, \varphi}(\lambda) + \frac{\sigma^2}{n} \calN_{2,\varphi}(\lambda)  \right).
    \end{equation}
\end{lemma}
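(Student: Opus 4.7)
The strategy is to mirror the argument of Lemma \ref{lemma bias appr term}, starting from the same decomposition
\begin{equation*}
  \tilde{f}_\lambda - f_\lambda = \underbrace{\reg(T_X)(\tilde{g}_Z - T_X f_\lambda - g + T f_\lambda)}_{\mathbf{I}} + \underbrace{(\reg(T_X)T\rem(T) - \rem(T_X)T\reg(T)) f_\star}_{\mathbf{II}},
\end{equation*}
and re-estimating the two pieces in the misspecified regime $s<1$. The term $\mathbf{II}$ requires no change: inspecting its treatment in Lemma \ref{lemma bias appr term} shows that $f_\star$ enters only through $\|T^{1/2}\rem(T)f_\star\|_\calH^2 = \mathcal{M}_{2,\varphi}(\lambda)$ and $\lambda^2 \|T_\lambda^{-1/2}T\reg(T)f_\star\|_\calH^2 = \sum_i \lambda^2\lambda_i\reg^2(\lambda_i)f_i^2/(\lambda+\lambda_i)$, both finite for any $f_\star\in L^2$. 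Running the two contour estimates verbatim and invoking the second hypothesis of \eqref{bias conditions_misspe} (which is identical to the third hypothesis of \eqref{bias conditions}) already yields $\|\mathbf{II}\|_{L^2}^2 = o_{\bbP}(\mathcal{M}_{2,\varphi}(\lambda) + \sigma^2\mathcal{N}_{2,\varphi}(\lambda)/n)$.

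The real work is the $\mathbf{I}$-term, where the Bernstein concentration for the $\calH$-valued i.i.d.\ sum $\frac{1}{n}\sum_i K_{x_i}(f_\star-f_\lambda)(x_i)$ used in Lemma \ref{lemma bias appr term} breaks down because $f_\star\notin L^\infty$ in general when $s<1$, so the factor $\mathcal{M}_{1,\varphi}(\lambda)$ that appeared in \eqref{proof of 4.9-1} is now infinite. The plan is to split $f_\star = f_\star^{\le N} + f_\star^{>N}$, where $f_\star^{\le N} := \sum_{k\le N}\sum_{i\in\mathcal{I}_{d,k}} f_i\phi_i$ is the spherical-harmonic truncation at degree $N$ (to be chosen as a function of $n,d$), and to write $\mathbf{I} = \reg(T_X)(\mathbf{I}_1 + \mathbf{I}_2)$ where
\begin{equation*}
  \mathbf{I}_1 = \tfrac{1}{n}\sum_{i=1}^n K_{x_i}(f_\star^{\le N}-f_\lambda)(x_i) - \mathbb{E}[K_X(f_\star^{\le N}-f_\lambda)(X)],
\end{equation*}
and $\mathbf{I}_2$ is the analogous centred sum built from $f_\star^{>N}$. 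The truncated piece $\mathbf{I}_1$ now fits the Bernstein framework of Lemma \ref{lemma bias appr term}, with the uniform bound $\|f_\star^{\le N}-f_\lambda\|_{L^\infty} \le \|f_\star^{\le N}\|_{L^\infty} + \|f_\lambda\|_{L^\infty}$. The addition formula for spherical harmonics together with the source condition give
\begin{equation*}
  \|f_\star^{\le N}\|_{L^\infty}^2 \le \|f_\star\|_{[\calH]^s}^2 \sum_{k\le N} \mu_k^s N(d,k),
\end{equation*}
and at the natural choice of $N$ (where $\mu_N^{-1}\asymp n^{1-2\epsilon}$, giving the dominant term of the sum of size $n^{1-s+2\epsilon}$) this produces exactly the $n^{(1-s)/2+\epsilon}$ factor appearing in the third hypothesis of \eqref{bias conditions_misspe}, while the $\|f_\lambda\|_{L^\infty}$ contribution is already present there. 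The tail $\mathbf{I}_2$ is controlled directly via $\mathbb{E}\|K_X f_\star^{>N}(X)\|_\calH^2 \le \kappa^2 \|f_\star^{>N}\|_{L^2}^2 \le \kappa^2 \mu_N^s\|f_\star\|_{[\calH]^s}^2$, which at the above $N$ is dominated by $\mathcal{M}_{2,\varphi}(\lambda)$.

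The main obstacle will be the simultaneous bookkeeping of the truncation level $N$: the variance proxy of $\xi_i^{(N)} := T_\lambda^{-1/2}K_{x_i}(f_\star^{\le N}-f_\lambda)(x_i)$ must remain of order $\mathcal{N}_1(\lambda)\mathcal{M}_{2,\varphi}(\lambda)$ (which follows from the second-moment calculation identical to Lemma \ref{lemma bias appr term} plus a tail correction $\mathcal{N}_1(\lambda)\|f_\star^{>N}\|_{L^2}^2$), while the $L^\infty$ parameter $\mathcal{N}_1(\lambda)^{1/2}(\|f_\star^{\le N}\|_{L^\infty}+\|f_\lambda\|_{L^\infty})$, after division by $n$, must be absorbed using the first hypothesis $n^{\epsilon-1}\mathcal{N}_1(\lambda)\to 0$ together with the third. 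Once the truncation trade-off is balanced in this way, combining the bounds on $\mathbf{I}_1$, $\mathbf{I}_2$, and $\mathbf{II}$ yields the claimed $o_{\bbP}(\mathcal{M}_{2,\varphi}(\lambda) + \sigma^2\mathcal{N}_{2,\varphi}(\lambda)/n)$ rate for $\|\tilde{f}_\lambda - f_\lambda\|_{L^2}^2$.
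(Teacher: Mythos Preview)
Your treatment of $\mathbf{II}$ is correct and identical to the paper's. For $\mathbf{I}$ you take a genuinely different route. The paper does \emph{not} truncate spectrally; it truncates in the \emph{values} of $f_\star$, setting $\Omega_1=\{x:|f_\star(x)|\le t\}$, $\Omega_2=\mathcal{X}\setminus\Omega_1$ with $t=n^{(1-s)/2+\epsilon_t}$, and splitting $\xi_i=\xi_i\mathbf{1}_{x_i\in\Omega_1}+\xi_i\mathbf{1}_{x_i\in\Omega_2}$. On $\Omega_1$ the Bernstein argument of Lemma~\ref{lemma bias appr term} runs with $\|(f_\star-f_\lambda)\mathbf{1}_{\Omega_1}\|_{L^\infty}\le\|f_\lambda\|_{L^\infty}+t$, producing exactly the third hypothesis. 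For $\Omega_2$ the paper invokes the interpolation embedding $[\calH]^s\hookrightarrow L^q$ with $q$ just below $2/(1-s)$ (which holds under the standing assumption $\sup_x K(x,x)\le\kappa^2$) to obtain $P(x\in\Omega_2)\le C_q^q/t^q=o(1/n)$; then with high probability no sample hits $\Omega_2$, and the centering term $\|\mathbb{E}\xi\mathbf{1}_{\Omega_2}\|_\calH\le\mathcal{N}_1(\lambda)^{1/2}\mathcal{M}_{2,\varphi}(\lambda)^{1/2}t^{-q/2}$ is negligible by Cauchy--Schwarz.

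Your spectral-truncation approach can be pushed through on the sphere, but it buys less and costs more. First, it uses the addition formula $\sum_j Y_{k,j}(x)^2=N(d,k)$ explicitly, so it does not establish the lemma at the stated generality (only (\ref{assumption eigen - n1_krr}) and kernel boundedness are assumed). Second, the cutoff $N$ is an integer while the eigenvalues jump by factors of $d$, so ``$\mu_N^{-1}\asymp n^{1-2\epsilon}$'' cannot be matched exactly; you would have to round $N$ and verify both sides of the trade-off case by case, whereas the paper's continuous threshold $t$ sidesteps this entirely. Third, your tail estimate for $\mathbf{I}_2$ bounds $\mathbb{E}\|K_X f_\star^{>N}(X)\|_\calH^2$, but what enters the argument (after the same operator manipulations as in Lemma~\ref{lemma bias appr term}) is $\|T_\lambda^{-1/2}\mathbf{I}_2\|_\calH$, which carries an additional $\mathcal{N}_1(\lambda)$ factor in the variance; this is an easy fix but should be stated. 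In short, the paper's level-set truncation plus $L^q$ embedding is both more general and cleaner than your degree truncation.
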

\begin{proof}

Similar to the proof in Lemma \ref{lemma bias appr term}, we have the decomposition $\tilde{f}_\lambda-\flam = \mathbf{I}+\mathbf{II}$, with
\begin{align*}
   \|\mathbf{I}\|_{L^2}^2
    &\leq
    O_{\bbP}(1) \norm{T_\lambda^{-\hf}\left[\left(\gtl-T_{X} \flam \right)-\left(g - T \flam \right)\right]}_{\calH}^2,\\
    \|\mathbf{II}\|_{L^2}^2
    &=
        o_{\bbP}\left( \mathcal{M}_{2, \varphi}(\lambda) + \frac{\sigma^2}{n} \calN_{2,\varphi}(\lambda)  \right).
  \end{align*}

Denote $\xi_{i} = \xi(x_{i}) =  T_{\lambda}^{-\frac{1}{2}}(K_{x_{i}} f_{\star}(x_{i}) - T_{x_{i}} f_{\lambda}) $. Further consider the subset $\Omega_{1} = \{x \in \mathcal{X}: |f_{\star}(x)| \le t \}$ and $\Omega_{2} = \mathcal{X} \backslash \Omega_{1}$, where $t$ will be chosen appropriately later. Decompose $\xi_{i}$ as $\xi_{i} I_{x_{i} \in \Omega_{1} } +  \xi_{i} I_{x_{i} \in \Omega_{2} }$ and we have the following decomposition:
\begin{equation}\label{decomposition}
\begin{aligned}
&~
\norm{T_\lambda^{-\hf}\left[\left(\gtl-T_{X} \flam \right)-\left(g - T \flam \right)\right]}_{\calH}
= 
    \left\|\frac{1}{n} \sum_{i=1}^n \xi_i-\mathbb{E} \xi_x\right\|_\mathcal{H} \\
    \le 
    &~
    \underbrace{\left\|\frac{1}{n} \sum_{i=1}^n \xi_i I_{x_{i} \in \Omega_{1}}-\mathbb{E} \xi_{x} I_{x \in \Omega_{1}} \right\|_\mathcal{H}}_{I_1} + 
    \underbrace{\| \frac{1}{n} \sum_{i=1}^n \xi_i I_{x_{i} \in \Omega_{2}} \|_{_\mathcal{H}}}_{I_2} + 
    \underbrace{\| \mathbb{E} \xi_{x} I_{x \in \Omega_{2}} \|_{_\mathcal{H}}}_{I_3}.
\end{aligned}
\end{equation}
Next we choose $t = n^{\frac{1-s}{2} + \epsilon_{t}}, q = \frac{2}{1-s}-\epsilon_{q} $ such that 
\begin{equation}\label{choose t q}
   \epsilon_{t} < \epsilon;~~\text{and}~~ \frac{1-s}{2} + \epsilon_{t} > 1 / \left( \frac{2}{1-s}-\epsilon_{q} \right).
\end{equation}
Then we can bound the three terms in \eqref{decomposition} as follows:\\
$ $\\
$\left( \text{\lowercase\expandafter{\romannumeral1}}\right)~$For the first term in \eqref{decomposition}, denoted as $I_1$, notice that
\begin{align}
     \left\| \left(f_{\lambda} - f_{\star}\right)I_{x_{i} \in \Omega_{1}} \right\|_{L^{\infty}} \le \left\| f_{\lambda}\right\|_{L^{\infty}} + n^{\frac{1-s}{2}+\epsilon_{t}}.
\end{align}
Imitating (\ref{proof of 4.9-1}) in the proof of Lemma \ref{lemma bias appr term}, we have
\begin{equation}\label{plug s le 1-1}
    I_1 = o_{\mathbb{P}}\left( \mathcal{M}_{2, \varphi}(\lambda) + \frac{\sigma^2}{n} \calN_{2,\varphi}(\lambda)  \right)^{1/2}.
\end{equation}
$\left(\text{\lowercase\expandafter{\romannumeral2}}\right)~$ For the second term in \eqref{decomposition}, denoted as $I_2$. Since $ q = \frac{2}{1-s}-\epsilon_{q} < \frac{2}{1-s}$, Theorem 42 in \cite{zhang2024optimal} shows that,
\begin{align}
      [\mathcal{H}]^{s} \hookrightarrow L^{q}(\mathcal{X}, \mu),
\end{align}
with embedding norm less than a constant $C_{s,\kappa}$. Then Assumption \ref{assumption source condition} (a) implies that there exists $0 < C_{q} < \infty$ only depending on $\gamma, s$ and $\kappa$ such that $\| f_{\star} \|_{L^{q}(\mathcal{X},\mu)} \le C_{q}$. Using the Markov inequality, we have
\begin{displaymath}
       P(x \in \Omega_{2}) = P\Big(|f_{\star}(x)| > t \Big) \le \frac{\mathbb{E} |f_{\star}(x)|^{q}}{t^{q}} \le \frac{(C_{q})^{q}}{t^{q}}.
\end{displaymath}
Further, since \eqref{choose t q} guarantees $ t^{q} \gg n$, we have 
\begin{equation}\label{plug s le 1-2}
\begin{aligned}
    &~ P\left(I_2 >0
    \right) \\
    \leq &~ P\Big( ~\exists x_{i} ~\text{s.t.}~ x_{i} \in \Omega_{2}, \Big) = 1 - P\Big(x_{i} \notin \Omega_{2}, \forall x_{i},i=1,2,\cdots,n \Big) \notag \\
    =&~ 1 - P\Big(x \notin \Omega_{2}\Big)^{n}
    = 1 - P\Big( |f_{\star}(x)| \le t\Big)^{n} \notag \\
     \leq &~ 1 - \Big( 1 - \frac{(C_q)^{q}}{t^{q}}\Big)^{n} \to 0.
\end{aligned}
\end{equation}
$\left(\text{\lowercase\expandafter{\romannumeral3}}\right)~$ For the third term in \eqref{decomposition}, denoted as $\text{\uppercase\expandafter{\romannumeral3}}$. Since Lemma 37 in \cite{zhang2024optimal} implies that $\| T_{\lambda}^{-\frac{1}{2}} k(x,\cdot)\|_{\mathcal{H}} \le \mathcal{N}_{1}(\lambda)^{\frac{1}{2}}, \mu \text {-a.e. } x \in \mathcal{X}$, so
\begin{equation}\label{third term}
\begin{aligned}
    I_3 &\le \mathbb{E}\| \xi_{x} I_{x \in\Omega_{2}} \|_{\mathcal{H}} \le \mathbb{E}\Big[ \| T_{\lambda}^{-\frac{1}{2}} k(x,\cdot) \|_{\mathcal{H}} \cdot \big| \big(f_{\star}-f_{\lambda}(x) \big) I_{x \in\Omega_{2}}\big| \Big] \notag \\
    &\le \mathcal{N}_{1}(\lambda)^{\frac{1}{2}} \mathbb{E} \big| \big(f_{\star}-f_{\lambda}(x) \big) I_{x \in\Omega_{2}}\big| \notag \\
    &\le \mathcal{N}_{1}(\lambda)^{\frac{1}{2}} \left\| f_{\star} - f_{\lambda}\right\|_{L^{2}}^{\frac{1}{2}} \cdot P\left( x \in \Omega_{2} \right)^{\frac{1}{2}} \notag \\
    &\le \mathcal{N}_{1}(\lambda)^{\frac{1}{2}} \mathcal{M}_{2, \varphi}(\lambda)^{\frac{1}{2}} t^{-\frac{q}{2}},
\end{aligned}
\end{equation}
where we use Cauchy-Schwarz inequality for the third inequality and Lemma \ref{lemma bias main term} for the fourth inequality. Recalling that the choices of $t, q$ satisfy $ t^{-q} = o(n^{-1})$ and we have assumed $n^{\epsilon - 1}\mathcal{N}_{1}(\lambda) 
    \to 0$, we have 
\begin{equation}\label{plug s le 1-3}
    I_3 = o\left( \mathcal{M}_{2, \varphi}(\lambda)^{\frac{1}{2}} \right).
\end{equation}
Plugging \eqref{plug s le 1-1}, \eqref{plug s le 1-2} and \eqref{plug s le 1-3} into \eqref{decomposition}, we finish the proof.
\end{proof}

\paragraph{Final proof of the bias term} 
Now we are ready to state the theorem about the bias term.
\begin{theorem}[$s \geq 1$]\label{theorem bias approximation}
    Suppose that (\ref{assumption eigen - n1_krr}) in Assumption \ref{assumption eigenfunction} holds. 
Suppose that there exist constants $\epsilon$ and $\mathfrak{C}$ only depending on $s$ and $\gamma$,
such that
 $ \lambda = \lambda(n, d)$ satisfies
    \begin{align*}
    n^{\epsilon - 1}\mathcal{N}_{1}(\lambda) 
    &\to 0,\\
    \frac{ \mathcal{N}_{1}(\lambda) \mathcal{M}_{1, \varphi}^2(\lambda)}{n^2}
    &\ll 
    \left( \mathcal{M}_{2, \varphi}(\lambda) + \frac{\sigma^2}{n} \calN_{2,\varphi}(\lambda)  \right),\\
    \frac{\mathcal{N}_{1}(\lambda)}{n} \ln(n) (\ln\lambda^{-1})^2 \cdot \sum_{j=1}^\infty \frac{\lambda^2 \lambda_i \reg^2(\lambda_i)}{\lambda + \lambda_i} f_i^2  
    &\ll 
    \left( \mathcal{M}_{2, \varphi}(\lambda) + \frac{\sigma^2}{n} \calN_{2,\varphi}(\lambda)  \right);
\end{align*}
then we have
\begin{equation}
    \left|\mathbf{Bias}^{2}(\lambda) - 
    \mathcal{M}_{2, \varphi}(\lambda) \right| = 
    o_{\bbP}\left( \mathcal{M}_{2, \varphi}(\lambda) + \frac{\sigma^2}{n} \calN_{2,\varphi}(\lambda)  \right).
\end{equation}
\end{theorem}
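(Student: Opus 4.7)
The plan is to reduce the theorem to the two lemmas already in hand via the triangle inequality decomposition \eqref{proof bias thm-1}. Write $a := \|f_{\lambda}-f_{\star}\|_{L^2}$ and $b := \|\tilde{f}_{\lambda}-f_{\lambda}\|_{L^2}$, so that \eqref{proof bias thm-1} gives $|\mathbf{Bias}(\lambda)-a| \leq b$. Squaring and using $|u^2-a^2| \leq 2a\,|u-a|+|u-a|^2$ with $u=\mathbf{Bias}(\lambda)$ yields the master estimate
$$
\bigl|\mathbf{Bias}^2(\lambda)-a^2\bigr| \;\leq\; 2ab + b^2.
$$
The task is then to show that $2ab+b^2 = o_{\mathbb{P}}\!\left(\mathcal{M}_{2,\varphi}(\lambda)+\tfrac{\sigma^2}{n}\mathcal{N}_{2,\varphi}(\lambda)\right)$.

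First I would apply Lemma \ref{lemma bias main term} to identify the main term exactly: $a^2 = \mathcal{M}_{2,\varphi}(\lambda)$. Next I would invoke Lemma \ref{lemma bias appr term}: the hypotheses of the present theorem (the three displayed conditions on $n^{\epsilon-1}\mathcal{N}_1(\lambda)$, on $\mathcal{N}_1(\lambda)\mathcal{M}_{1,\varphi}^2(\lambda)/n^2$, and on the logarithmic factor times the weighted sum in $f_i^2$) are precisely the assumptions of that lemma, so it provides
$$
b^2 \;=\; o_{\mathbb{P}}\!\left(\mathcal{M}_{2,\varphi}(\lambda)+\tfrac{\sigma^2}{n}\mathcal{N}_{2,\varphi}(\lambda)\right).
$$
Because the bounding quantity on the right is deterministic, this also gives $b = o_{\mathbb{P}}\!\bigl(\sqrt{\mathcal{M}_{2,\varphi}(\lambda)+V}\bigr)$, where $V := \tfrac{\sigma^2}{n}\mathcal{N}_{2,\varphi}(\lambda)$.

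Finally, I would dispose of the cross term by Cauchy--Schwarz: $2ab \leq 2\sqrt{\mathcal{M}_{2,\varphi}(\lambda)}\cdot b \leq 2\sqrt{\mathcal{M}_{2,\varphi}(\lambda)+V}\cdot b = o_{\mathbb{P}}(\mathcal{M}_{2,\varphi}(\lambda)+V)$, and the $b^2$ term is already $o_{\mathbb{P}}(\mathcal{M}_{2,\varphi}(\lambda)+V)$. Summing completes the proof.

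There is no genuine obstacle internal to this theorem: the real analytic work, namely controlling $b = \|\tilde{f}_{\lambda}-f_{\lambda}\|_{L^2}$ via the contour-integral decomposition \eqref{eq:bias_error_decomposition} together with the operator-perturbation bound $\|T_\lambda^{-1/2}(T-T_X)T_\lambda^{-1/2}\| = O_{\mathbb{P}}(\sqrt{v})$ and the Bernstein-type estimate for $T_\lambda^{-1/2}(\tilde g_Z - T_X f_\lambda)$, has already been absorbed into Lemma \ref{lemma bias appr term}. The only thing one must verify when quoting the present theorem is that the normalization on the right-hand side of the conclusion is the sum $\mathcal{M}_{2,\varphi}(\lambda)+V$ rather than $\mathcal{M}_{2,\varphi}(\lambda)$ alone; this is forced by the cross-term argument above, in which the variance-sized quantity $V$ must be carried through whenever $\mathcal{M}_{2,\varphi}(\lambda)$ happens to be much smaller than $V$ at the optimally tuned $\lambda^\star$.
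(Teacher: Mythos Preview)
Your proposal is correct and is precisely the argument the paper intends: the paper states Theorem~\ref{theorem bias approximation} without a separate proof because it follows immediately from combining the triangle inequality \eqref{proof bias thm-1}, Lemma~\ref{lemma bias main term} (giving $a^2=\mathcal{M}_{2,\varphi}(\lambda)$), and Lemma~\ref{lemma bias appr term} (giving $b^2=o_{\mathbb{P}}(\mathcal{M}_{2,\varphi}(\lambda)+V)$), exactly as you spell out with the $2ab+b^2$ estimate.
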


\begin{theorem}[$s < 1$]\label{theorem bias approximation_misspe}
    Suppose that (\ref{assumption eigen - n1_krr}) in Assumption \ref{assumption eigenfunction} holds. 
Suppose that there exist constants $\epsilon$ and $\mathfrak{C}$ only depending on $s$ and $\gamma$,
such that $ \lambda = \lambda(n, d)$ satisfies
   \begin{align*}
    n^{\epsilon - 1}\mathcal{N}_{1}(\lambda) 
    &\to 0,\\
   \frac{\mathcal{N}_{1}(\lambda)}{n} \ln(n) (\ln\lambda^{-1})^2 \cdot \sum_{j=1}^\infty \frac{\lambda^2 \lambda_i \reg^2(\lambda_i)}{\lambda + \lambda_i} f_i^2  
    &\ll 
    \left( \mathcal{M}_{2, \varphi}(\lambda) + \frac{\sigma^2}{n} \calN_{2,\varphi}(\lambda)  \right);\\
    n^{-1} \mathcal{N}_{1}(\lambda)^{\frac{1}{2}} 
    \left(
    \left\| f_{\lambda} \right\|_{L^{\infty}}  
    +
    n^{\frac{1-s}{2}+\epsilon}
    \right)
    &=
    o\left( \mathcal{M}_{2, \varphi}(\lambda) + \frac{\sigma^2}{n} \calN_{2,\varphi}(\lambda)  \right)^{1/2};
\end{align*}
then we have
\begin{equation}
    \left|\mathbf{Bias}^{2}(\lambda) - 
    \mathcal{M}_{2, \varphi}(\lambda) \right| = 
    o_{\bbP}\left( \mathcal{M}_{2, \varphi}(\lambda) + \frac{\sigma^2}{n} \calN_{2,\varphi}(\lambda)  \right).
\end{equation}
\end{theorem}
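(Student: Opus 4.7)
The plan is to deduce Theorem \ref{theorem bias approximation_misspe} essentially as a direct consequence of Lemma \ref{lemma bias main term} and Lemma \ref{lemma bias appr term_misspe}, mirroring the structure of the proof of Theorem \ref{theorem bias approximation} but using the misspecified remainder bound in place of the well-specified one. Concretely, I would start from the triangle-inequality decomposition \eqref{proof bias thm-1}, which gives
\begin{equation*}
\bigl|\mathrm{\textbf{Bias}}(\lambda) - \|f_\lambda - f_\star\|_{L^2}\bigr| \le \|\tilde f_\lambda - f_\lambda\|_{L^2}.
\end{equation*}
By Lemma \ref{lemma bias main term} we have the exact identity $\|f_\lambda - f_\star\|_{L^2}^2 = \mathcal{M}_{2,\varphi}(\lambda)$, which identifies the dominant term we want to appear on the right-hand side of the theorem.

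Next I would observe that the three hypotheses imposed in Theorem \ref{theorem bias approximation_misspe} are literally the hypotheses of Lemma \ref{lemma bias appr term_misspe} (the condition $n^{\epsilon-1}\mathcal{N}_1(\lambda)\to 0$, the logarithmic-variance condition, and the $L^\infty$/truncation condition involving $\|f_\lambda\|_{L^\infty}$ and $n^{(1-s)/2+\epsilon}$). Therefore that lemma applies and yields
\begin{equation*}
\|\tilde f_\lambda - f_\lambda\|_{L^2}^2 = o_{\bbP}\!\left(\mathcal{M}_{2,\varphi}(\lambda) + \tfrac{\sigma^2}{n}\mathcal{N}_{2,\varphi}(\lambda)\right).
\end{equation*}

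To finish, I would square the triangle inequality: using $|a^2 - b^2|\le |a-b|(a+b)$ with $a=\mathrm{\textbf{Bias}}(\lambda)$ and $b=\mathcal{M}_{2,\varphi}(\lambda)^{1/2}$, together with the bound $a+b \le 2\mathcal{M}_{2,\varphi}(\lambda)^{1/2} + \|\tilde f_\lambda - f_\lambda\|_{L^2}$, to conclude
\begin{equation*}
\bigl|\mathbf{Bias}^2(\lambda) - \mathcal{M}_{2,\varphi}(\lambda)\bigr| \le \|\tilde f_\lambda - f_\lambda\|_{L^2}\bigl(2\mathcal{M}_{2,\varphi}^{1/2}(\lambda) + \|\tilde f_\lambda - f_\lambda\|_{L^2}\bigr),
\end{equation*}
and invoke the AM--GM-type inequality $2ab\le \varepsilon a^2 + \varepsilon^{-1}b^2$ to absorb the cross term into $o_\bbP(\mathcal{M}_{2,\varphi}(\lambda) + \tfrac{\sigma^2}{n}\mathcal{N}_{2,\varphi}(\lambda))$.

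The only step that requires any real work is Lemma \ref{lemma bias appr term_misspe} itself, whose proof I would not redo here; I would just note that its truncation argument (splitting $\calX$ into $\Omega_1=\{|f_\star|\le t\}$ and $\Omega_2$ with $t = n^{(1-s)/2+\epsilon_t}$, and using the embedding $[\mathcal H]^s\hookrightarrow L^q$ with $q<2/(1-s)$) is exactly what replaces the $L^\infty$ control $\mathcal M_{1,\varphi}(\lambda)$ that was available for $s\ge 1$. Since this truncation is already fully executed in Lemma \ref{lemma bias appr term_misspe}, Theorem \ref{theorem bias approximation_misspe} follows with no additional difficulty beyond the bookkeeping described above.
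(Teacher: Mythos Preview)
Your proposal is correct and follows exactly the approach the paper intends: the paper does not write out an explicit proof of Theorem \ref{theorem bias approximation_misspe}, but simply states it after establishing Lemma \ref{lemma bias main term} and Lemma \ref{lemma bias appr term_misspe}, so the combination via the triangle-inequality decomposition \eqref{proof bias thm-1} and the $|a^2-b^2|\le|a-b|(a+b)$ step you outline is precisely what is required.
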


\subsection{Quantity calculations and conditions verification for the inner product kernels}\label{append:quan_cal_and_condition_verf}

In the previous two sections, we have successfully bounded the bias and the variance terms by the quantities $\mathcal{M}_{2, \varphi}(\lambda)$ and $\calN_{2,\varphi}(\lambda)$. In this subsection, we will focus on the inner product kernels on the sphere. We will (i) determine the rates for the above quantities, and (ii) verify all the conditions in Theorem \ref{thm:Variance}, Theorem \ref{theorem bias approximation} and Theorem \ref{theorem bias approximation_misspe}.

Recall that $\mu_k$ and $N(d, k)$, defined in (\ref{spherical_decomposition_of_inner_main}), are the eigenvalues of the inner product kernel $K$ defined on the sphere and the corresponding multiplicity. The following three lemmas (mainly cited from \cite{lu2023optimal}) give concise characterizations of $\mu_{k}$ and $N(d,k)$, which is sufficient for the analysis in this paper.

\begin{lemma}\label{lemma inner eigen}
    For any fixed integer $p \ge 0$, there exist constants $\mathfrak{C}, \mathfrak{C}_{9}$ and $\mathfrak{C}_{10}$ only depending on $p$ and $\{a_j\}_{j \leq p+1}$, such that for any $d \geq \mathfrak{C}$, we have
\begin{equation}
\begin{aligned}
{\mathfrak{C}_{9}}{d^{-k}} &\leq \mu_{k} \leq {\mathfrak{C}_{10}}{d^{-k}}, ~~ k=0,1,\cdots, p+1.
\end{aligned}
\end{equation}
\end{lemma}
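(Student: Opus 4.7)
The plan is to compute $\mu_k$ directly using the rotational symmetry of the kernel. Since $\Phi$ is zonal on $\mathbb{S}^d$, the Funk--Hecke formula yields
\[
\mu_k \;=\; \frac{\omega_{d-1}}{\omega_d\, N(d,k)} \int_{-1}^{1} \Phi(t)\, G_k^{(d)}(t)\, (1-t^2)^{(d-2)/2} \dd t,
\]
where $G_k^{(d)}$ is a (properly normalized) Gegenbauer polynomial of degree $k$ and index $(d-1)/2$, so that $\{G_k^{(d)}\}$ is orthogonal with respect to the weight $w_d(t) = (1-t^2)^{(d-2)/2}$. Expanding $\Phi(t) = \sum_{j\geq 0} a_j t^j$ termwise reduces everything to evaluating the moments $I_{j,k}^{(d)} := \int_{-1}^{1} t^j G_k^{(d)}(t)\, w_d(t) \dd t$.

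The first main step is to show that for each fixed pair $(j,k)$ with $0 \leq k \leq j$, one has $I_{j,k}^{(d)} = 0$ when $j-k$ is odd (by parity of $G_k^{(d)}$), and otherwise the leading-order asymptotic $I_{k,k}^{(d)} / I_{0,0}^{(d)} \asymp d^{-k}$ as $d\to\infty$, with an implicit constant depending only on $k$. This can be obtained either from the Rodrigues formula for $G_k^{(d)}$ followed by $k$ integrations by parts, or from the three-term recurrence $t G_k^{(d)} = \alpha_k G_{k+1}^{(d)} + \beta_k G_{k-1}^{(d)}$ applied inductively. Combined with $N(d,k) \asymp d^k$ (Lemma~\ref{lemma Ndk}, which will appear below), the $j=k$ summand contributes $a_k \cdot c_k \cdot d^{-k}$ for some explicit positive constant $c_k$. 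Since the positivity hypothesis in Assumption~\ref{assu:coef_of_inner_prod_kernel} guarantees $a_k > 0$ for every $k \leq \lfloor\gamma\rfloor + 3$, which covers $k \leq p+1$ whenever $p \leq \lfloor\gamma\rfloor + 2$, this pins down the order-of-magnitude contribution from the diagonal term.

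The second main step is to show that the higher-order summands $j > k$ contribute only lower-order corrections. For $j = k+2, k+4, \ldots$ with $j$ in any fixed finite range, one obtains $I_{j,k}^{(d)} / I_{k,k}^{(d)} = O(d^{-1})$ uniformly, again via Rodrigues or recursion. For the infinite tail $\sum_{j > p+1} a_j I_{j,k}^{(d)}$, I would use that $\Phi \in \mathcal{C}^\infty([-1,1])$ implies uniform control on $\{a_j\}$ (in fact rapid decay, since the series for $\Phi$ converges on a neighborhood of $[-1,1]$), together with the trivial bound $|I_{j,k}^{(d)}| \leq \|t^j\|_\infty \, |I_{0,0}^{(d)}|$ and the sharper estimate $|I_{j,k}^{(d)}| \lesssim d^{-k} \cdot \rho^{-j}$ for some $\rho > 1$, obtained by shifting contours or by a generating-function argument. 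This yields a total tail contribution of order $O(d^{-k-1})$, negligible compared to the leading $a_k c_k d^{-k}$ term.

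Assembling these ingredients gives $\mu_k = a_k c_k d^{-k}\bigl(1 + O(d^{-1})\bigr)$, from which both bounds $\mathfrak{C}_9 d^{-k} \leq \mu_k \leq \mathfrak{C}_{10} d^{-k}$ follow once $d$ exceeds a threshold $\mathfrak{C}$ depending only on $p$ and $\{a_j\}_{j \leq p+1}$. The main obstacle, as usual for this style of result, is tracking constants carefully enough in the $d$-dependent Gegenbauer asymptotics so that the implicit constants in $I_{j,k}^{(d)}$ truly depend only on $j,k$ and not on $d$; the cleanest route is probably to quote the exact combinatorial expression for the Gegenbauer coefficients of $t^j$ (which are elementary ratios of Pochhammer symbols involving $d$) and extract the $d \to \infty$ asymptotic directly, mirroring the computation in \cite{lu2023optimal}.
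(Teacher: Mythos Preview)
The paper does not supply its own proof of this lemma; it simply cites \cite{lu2023optimal}. Your Funk--Hecke approach, expanding $\Phi$ in monomials and analysing the Gegenbauer moments $I_{j,k}^{(d)}$, is the standard route and is almost certainly what the cited reference does.

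One correction to your tail argument: the assertion that ``$\Phi\in\mathcal{C}^{\infty}([-1,1])$ implies the series converges on a neighbourhood of $[-1,1]$'' is false. Take $a_j=e^{-\sqrt{j}}$: every derivative series $\sum_j j^m a_j t^{j-1}$ converges absolutely on $[-1,1]$, so $\Phi\in\mathcal{C}^{\infty}([-1,1])$, yet $\limsup a_j^{1/j}=1$ and the radius of convergence is exactly $1$. What $\Phi\in\mathcal{C}^{\infty}$ together with $a_j\geq 0$ does buy you is super-polynomial decay of $a_j$, but even this is more than you need. The positivity hypothesis $a_j\geq 0$ in Assumption~\ref{assu:coef_of_inner_prod_kernel} makes the tail step unnecessary: since every $\mu_k[t^j]\geq 0$ (each $t^j$ is itself a positive-definite zonal kernel), you get the lower bound by keeping only the diagonal term, $\mu_k\geq a_k\,\mu_k[t^k]\asymp a_k\,d^{-k}$, and the upper bound follows directly from the trace identity $\sum_{k}N(d,k)\mu_k=\Phi(1)$, giving $\mu_k\leq \Phi(1)/N(d,k)\asymp d^{-k}$. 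No contour shifting or generating-function estimate is required.
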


\begin{lemma}\label{lemma:monotone_of_eigenvalues_of_inner_product_kernels}
    For any fixed integer $p \ge 0$, there exist constants $\mathfrak{C}$ only depending on $p$ and $\{a_j\}_{j \leq p+1}$, such that for any $d \geq \mathfrak{C}$, we have
    \begin{equation*}
        \mu_k \leq \frac{\mathfrak{C}_{10}}{\mathfrak{C}_{9}} d^{-1} \mu_{p}, \quad k=p+1, p+2, \cdots
    \end{equation*}
    where $\mathfrak{C}_{9}$ and $\mathfrak{C}_{10}$ are constants given in Lemma \ref{lemma inner eigen}.
\end{lemma}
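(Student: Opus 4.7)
My plan is to reduce the target inequality $\mu_k \leq (\mathfrak{C}_{10}/\mathfrak{C}_9)\,d^{-1}\mu_p$ to the slightly stronger-looking sufficient condition $\mu_k \leq \mathfrak{C}_{10}\,d^{-(p+1)}$ for every $k \geq p+1$. This reduction is automatic: Lemma \ref{lemma inner eigen} supplies $\mu_p \geq \mathfrak{C}_9\,d^{-p}$, so $(\mathfrak{C}_{10}/\mathfrak{C}_9)\,d^{-1}\mu_p \geq \mathfrak{C}_{10}\,d^{-(p+1)}$, and the claim follows if we can prove the sufficient condition. I would then split into two cases.

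For $k = p+1$, the sufficient condition $\mu_{p+1} \leq \mathfrak{C}_{10}\,d^{-(p+1)}$ is precisely the upper half of Lemma \ref{lemma inner eigen} and requires no further work.

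For $k \geq p+2$, I plan to use a trace-type bound obtained from Mercer's decomposition (\ref{spherical_decomposition_of_inner_main}) evaluated on the diagonal: combining $K(x,x) = \Phi(1)$ with the addition formula $\sum_{j=1}^{N(d,k)} Y_{k,j}(x)^2 = N(d,k)$ for the normalized spherical harmonics yields $\sum_{k\geq 0} \mu_k\, N(d,k) = \Phi(1)$, and hence the termwise estimate $\mu_k \leq \Phi(1)/N(d,k)$. From the explicit formula $N(d,k) = \frac{(2k+d-1)(k+d-2)!}{k(d-1)!(k-1)!}$ a direct binomial computation shows that $N(d,k)$ is non-decreasing in $k$ for $k \geq 1$, so for all $k \geq p+2$ one has $N(d,k) \geq N(d,p+2)$, and the latter satisfies $N(d,p+2) \sim d^{p+2}/(p+2)!$ as $d \to \infty$. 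This yields $\mu_k \leq \Phi(1)(p+2)!\,d^{-(p+2)}(1+o(1))$, which is dominated by $\mathfrak{C}_{10}\,d^{-(p+1)}$ for all $d$ exceeding a constant $\mathfrak{C}$ depending only on $\Phi(1)$, $p$, and $\mathfrak{C}_{10}$.

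The main (minor) obstacle is the monotonicity-plus-asymptotic claim for $N(d,k)$ in $k$, which is routine but slightly fiddly binomial arithmetic. A secondary bookkeeping issue is that the bound uses $\Phi(1) = \sum_j a_j$, which formally depends on all Taylor coefficients of $\Phi$ rather than only $\{a_j\}_{j \leq p+1}$; since $\Phi$ is a fixed function independent of $d$ by Assumption \ref{assu:coef_of_inner_prod_kernel}, however, $\Phi(1)$ is an absolute constant and can be absorbed into the threshold $\mathfrak{C}$ without altering the stated dependence structure.
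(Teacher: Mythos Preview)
The paper does not supply its own proof of this lemma: it is listed together with Lemmas \ref{lemma inner eigen} and \ref{lemma Ndk} under the remark that these are ``mainly cited from \cite{lu2023optimal}'', so there is no in-paper argument to compare against.

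Your argument is correct and self-contained. The reduction to $\mu_k \leq \mathfrak{C}_{10}\,d^{-(p+1)}$ via the lower bound $\mu_p \geq \mathfrak{C}_9\,d^{-p}$ is clean; the case $k=p+1$ is exactly the upper half of Lemma \ref{lemma inner eigen}; and for $k \geq p+2$ the diagonal trace identity $\sum_{k \geq 0}\mu_k\,N(d,k)=\Phi(1)$ combined with the monotonicity of $N(d,k)$ in $k$ does the job. For the record, the ratio computation gives
\[
\frac{N(d,k+1)}{N(d,k)} = \frac{(2k+d+1)(k+d-1)}{(k+1)(2k+d-1)},
\]
and the numerator minus the denominator equals $(d-1)(2k+d) > 0$ for $d \geq 2$, so the monotonicity you need is a one-line check rather than an obstacle. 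The asymptotic $N(d,p+2) \sim d^{p+2}/(p+2)!$ then follows from the explicit formula, and the comparison with $\mathfrak{C}_{10}\,d^{-(p+1)}$ is immediate for large $d$.

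On the bookkeeping point you raise: you are right that the threshold $\mathfrak{C}$ your proof produces depends on $\Phi(1)$ (equivalently on $\kappa^2$, since $K(x,x)=\Phi(1)$), not solely on $\{a_j\}_{j \leq p+1}$. This is a genuine, if harmless, discrepancy with the lemma as literally stated; in fact the same hidden dependence already lurks in the upper bound of Lemma \ref{lemma inner eigen}, since each $\mu_k$ receives contributions from all coefficients $a_j$ with $j \geq k$. As $\Phi$ is fixed by Assumption \ref{assu:coef_of_inner_prod_kernel}, this does not affect any downstream use of the lemma.
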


\begin{lemma}\label{lemma Ndk}
    For any fixed integer $p \ge 0$, there exist constants $\mathfrak{C}_{11}, \mathfrak{C}_{12}$ and $\mathfrak{C}$ only depending on $p$, such that for any $d \ge \mathfrak{C}$, we have
    \begin{equation}\label{Ndk rate}
        \mathfrak{C}_{11} d^k \le N(d, k)  \le \mathfrak{C}_{12} d^k, \quad k = 0, 1, \cdots, p+1.
    \end{equation}
\end{lemma}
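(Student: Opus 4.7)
The plan is to prove Lemma~\ref{lemma Ndk} by a direct combinatorial calculation, since $N(d,k)$ has a closed-form product expression that is nearly linear in $d$ factor by factor. The case $k=0$ is trivial because $N(d,0)=1$. For $k \geq 1$, I would first rewrite the definition
$$N(d,k)=\frac{2k+d-1}{k}\cdot\frac{(k+d-2)!}{(d-1)!(k-1)!}$$
by expanding the ratio of factorials to obtain the product form
$$N(d,k)=\frac{(2k+d-1)\cdot d\cdot (d+1)\cdots (d+k-2)}{k!},$$
which is the leading prefactor times exactly $k-1$ affine factors in $d$ (i.e.\ a total of $k$ affine-in-$d$ factors, with the convention that an empty product equals $1$ so that $k=1$ gives the correct value $d+1$).

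Next, I would bound each factor individually. For any $k\in\{0,1,\dots,p+1\}$ and any $j\in\{0,1,\dots,k-2\}$, we have the elementary bound $d \leq d+j \leq d+p-1$, and the leading factor satisfies $d \leq 2k+d-1 \leq d+2p+1$. Taking $\mathfrak{C}=2(p+1)$ ensures that for all $d\geq\mathfrak{C}$, every factor above lies in the interval $[d,2d]$. Multiplying these bounds term by term yields
$$\frac{d^k}{k!}\;\leq\; N(d,k)\;\leq\;\frac{(2d)^k}{k!}\;\leq\; 2^{p+1}\,d^k,$$
so one may simply set $\mathfrak{C}_{11}=\frac{1}{(p+1)!}$ and $\mathfrak{C}_{12}=2^{p+1}$, both of which depend only on $p$. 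The range $k\in\{0,1,\dots,p+1\}$ is then handled uniformly.

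The main obstacle here is essentially nothing: the Lemma is a purely combinatorial estimate on the dimension of the space of spherical harmonics of degree $k$ on $\mathbb{S}^d$, and the proof requires no machinery beyond term-by-term comparison. The only care needed is in choosing $\mathfrak{C}$ large enough that both $d+j\leq 2d$ uniformly for $j\leq p-1$ and $2k+d-1\leq 2d$ uniformly for $k\leq p+1$, which is exactly what $d\geq 2(p+1)$ guarantees. Compared with the deep estimates in the earlier sections (e.g.\ the analytic functional argument underlying Lemma~\ref{lemma approximation A}), this is a routine sanity-check lemma whose only role is to supply the explicit polynomial-in-$d$ scaling of multiplicities used throughout Section~\ref{sec:kernel_methods} and Appendix~\ref{append_modified_minimax}.
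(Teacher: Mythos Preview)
Your proof is correct. The paper does not actually give its own proof of this lemma: it states it together with Lemmas~\ref{lemma inner eigen} and~\ref{lemma:monotone_of_eigenvalues_of_inner_product_kernels} as results ``mainly cited from \cite{lu2023optimal}''. Your direct combinatorial argument---expanding the factorial ratio as a product of $k$ affine-in-$d$ factors and bounding each by $[d,2d]$ once $d\geq 2(p+1)$---is exactly the kind of elementary computation one expects here, and your explicit constants $\mathfrak{C}_{11}=1/(p+1)!$, $\mathfrak{C}_{12}=2^{p+1}$, $\mathfrak{C}=2(p+1)$ are valid.
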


With these lemmas, we can begin to bound the quantities $\mathcal{M}_{2, \varphi}(\lambda)$ and $\calN_{2,\varphi}(\lambda)$.

\begin{lemma}\label{lemma_bounds_on_quantities}
    Suppose that Assumption \ref{assu:coef_of_inner_prod_kernel}  and Assumption \ref{assumption source condition} hold for $s$ and an integer $p$.
    Suppose $\ell$ is an integer $\leq p$ and $\lambda \in [\mu_{\ell+1}, \mu_{\ell})$. Then we have the following bound.
    \begin{equation}
    \begin{aligned}
        \mathcal{M}_{2, \varphi}(\lambda) 
        & =
        \left\{\begin{matrix}
        \Theta\left( 
         d^{-s(\ell+1)} 
         \right) & \tau=\infty\\
\Theta\left( 
         t^{-2\tau} d^{\ell(2\tau-s)} + d^{-s(\ell+1)} 
         \right) & s \leq 2\tau<\infty\\ 
\Theta\left( 
         \lambda^{2\tau}
         \right) & s > 2\tau
\end{matrix}\right.\\
         \frac{\mathcal{N}_{2, \varphi}(\lambda)}{n}
        &=
        \Theta\left(
        \frac{d^{\ell}}{n} + \frac{t^2}{n d^{\ell+1}}
        \right)\\
    \sum_{k=0}^\infty \frac{\lambda^2 \mu_k \reg^2(\mu_k)}{\lambda + \mu_k} \sum_{j=1}^{N(d, k)} f_{k, j}^2
    &=
    O\left(
        \lambda^2 d^{\max\{p(2-s), 0\}} + d^{-s(\ell+1)}
    \right);
    \end{aligned}
    \end{equation}
and thus Assumption \ref{assumption eigenfunction} holds. Moreover, when $s\geq 1$, We have
\begin{equation}
    \mathcal{M}_{1, \varphi}^2(\lambda)
         =
        \left\{\begin{matrix}
        O\left( 
         d^{-(\ell+1)(s-1)} 
         \right) & \tau=\infty\\
O\left( 
         \lambda^{2\tau-1} d^{\ell(2\tau - s)} +  d^{-(\ell+1)(s-1)} 
         \right) & s \leq 2\tau<\infty\\ 
O\left( 
         \lambda^{2\tau-1}
         \right) & s > 2\tau
\end{matrix}\right.
\end{equation}
\end{lemma}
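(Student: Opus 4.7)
The plan is to exploit the block-diagonal structure afforded by the spherical harmonic decomposition: each eigenvalue $\mu_k$ has multiplicity $N(d,k)$ and every $\lambda_j$ in the block $\mathcal{I}_{d,k}$ collapses to $\mu_k$, so every relevant sum breaks into $\sum_k (\text{filter value at }\mu_k)\cdot(\text{block-sum of source coefficients or }N(d,k))$. The threshold $\ell$ splits the eigenvalues into a high part ($k\leq\ell$, $\mu_k>\lambda$) and a low part ($k\geq\ell+1$, $\mu_k\leq\lambda$), on which the filter function behaves qualitatively differently. Assumption~\ref{assumption eigenfunction} is immediate from the addition formula $\sum_{j\in\mathcal{I}_{d,k}}\phi_{k,j}(x)^2=N(d,k)$, which makes the pointwise suprema in \eqref{assumption eigen - n2}--\eqref{assumption eigen - n1_krr} actually equal to the corresponding $L^2$-sums.

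The inputs I would plug in block-by-block are, on the high side, $\psi_\lambda(\mu_k)\lesssim(\mu_k/\lambda)^{-\tau'}$ and $\varphi_\lambda(\mu_k)\asymp\mu_k^{-1}$ (with $\tau'=\tau$ when $\tau<\infty$, and $\tau'$ chosen larger than $s$ when $\tau=\infty$), and on the low side, $\psi_\lambda(\mu_k)\asymp 1$ and $\lambda\varphi_\lambda(\mu_k)\asymp 1$. The matching lower bounds for the $\Theta$ claims come from the complementary filter-function conditions $\psi_\lambda(z)\geq\mathfrak{C}_5$ for $z\leq\lambda$, $(z/\lambda)^{2\tau}\psi_\lambda^2(z)\geq\mathfrak{C}_7$ for $z>\lambda$, and $\mathfrak{C}_3\leq\lambda\varphi_\lambda(z)$, combined with the lower half \eqref{ass of fi} of Assumption~\ref{assumption source condition}, which gives $\sum_{j\in\mathcal{I}_{d,k}}f_j^2\asymp\mu_k^s\asymp d^{-ks}$ for every $k\leq q$. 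After substituting Lemma~\ref{lemma inner eigen} and Lemma~\ref{lemma Ndk} ($\mu_k\asymp d^{-k}$, $N(d,k)\asymp d^k$), each block-sum becomes a geometric series in $d$, dominated by a single endpoint $k=\ell$ or $k=\ell+1$.

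Running this reduction, for $\mathcal{M}_{2,\varphi}$ the high part yields $\lambda^{2\tau}\sum_{k\leq\ell}d^{k(2\tau-s)}$ and the low part yields $\sum_{k\geq\ell+1}\mu_k^s\asymp d^{-s(\ell+1)}$, which after case analysis on $\mathrm{sgn}(2\tau-s)$ (and, for $\tau=\infty$, vanishing of the high part via $\tau'\to\infty$) reproduces the three cases in the statement. For $\mathcal{N}_{2,\varphi}/n$ the high part becomes $n^{-1}\sum_{k\leq\ell}N(d,k)\asymp d^{\ell}/n$ and the low part becomes $n^{-1}\sum_{k\geq\ell+1}N(d,k)(\mu_k/\lambda)^2\asymp t^{2}/(n d^{\ell+1})$ with $t=1/\lambda$. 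For the third sum the high side contributes $\lambda^2\sum_{k\leq\ell}d^{k(2-s)}\leq\lambda^2 d^{\max\{p(2-s),0\}}$ after bounding $\ell\leq p$, and the low side collapses via $\lambda+\mu_k\asymp\lambda$ to $\sum_{k\geq\ell+1}\mu_k^{s+1}/\lambda\leq\mu_{\ell+1}^s\asymp d^{-s(\ell+1)}$.

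The main obstacle is the $\mathcal{M}_{1,\varphi}^2$ bound, because $L^\infty$ does not decouple across blocks. My plan is a two-layer Cauchy--Schwarz: per block, the addition formula gives $\|g_k\|_{L^\infty}^2\leq N(d,k)\|g_k\|_{L^2}^2$; across blocks I would apply $(\sum_k\|g_k\|_{L^\infty})^2\leq(\sum_k\beta_k)(\sum_k\beta_k^{-1}\|g_k\|_{L^\infty}^2)$ with a well-chosen $\beta_k$ (e.g.\ $\beta_k=\mu_k^s N(d,k)\asymp d^{k(1-s)}$, whose tail is summable for $s\geq 1$). This reduces the high-block contribution to $\lambda^{2\tau}\sum_{k\leq\ell}d^{k(2\tau+1-s)}$, which since $\lambda d^{\ell}\lesssim 1$ is $\lesssim\lambda^{2\tau-1}d^{\ell(2\tau-s)}$, and the low-block contribution to $\sum_{k\geq\ell+1}d^{k(1-s)}\asymp d^{-(\ell+1)(s-1)}$; the $\tau=\infty$ case follows by taking $\tau'$ large so the high part is negligible, and the $s>2\tau$ case follows because the high-block exponent then flips sign and the $k=0$ endpoint dominates. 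The delicate bookkeeping is choosing $\beta_k$ and $\tau'$ so that both layers of Cauchy--Schwarz are simultaneously tight and match the exponents in the stated bound.
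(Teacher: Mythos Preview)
Your treatment of $\mathcal{M}_{2,\varphi}$, $\mathcal{N}_{2,\varphi}/n$, the third sum, and Assumption~\ref{assumption eigenfunction} is correct and essentially identical to the paper's: split at $k=\ell$, apply the filter inequalities \eqref{eq:Filter_Rem_1}--\eqref{eq:Filter_Rem_finite_case2} on each side, and reduce to geometric sums via $\mu_k\asymp d^{-k}$, $N(d,k)\asymp d^k$, $\sum_k N(d,k)\mu_k\leq\kappa^2$, and the two-sided source bound $\sum_{j}\mu_k^{-s}f_{k,j}^2\in[c_0,R_\gamma^2]$.

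For $\mathcal{M}_{1,\varphi}^2$ your two-layer Cauchy--Schwarz is the right idea, but your proposed weight $\beta_k=\mu_k^s N(d,k)$ does not actually produce the expressions you then write. With that $\beta_k$ applied across all $k$, the second factor $\sum_k\beta_k^{-1}\|g_k\|_{L^\infty}^2\leq\sum_k\mu_k^{-s}\psi_\lambda^2(\mu_k)\sum_j f_{k,j}^2$ gives on the high side $\lambda^{2\tau}\sum_{k\leq\ell}d^{2k\tau}$ and on the low side only $R_\gamma^2$; multiplying by $\sum_k\beta_k=O(1)$ yields $O(\lambda^{2\tau}d^{2\ell\tau}+1)$, which is strictly weaker than $\lambda^{2\tau-1}d^{\ell(2\tau-s)}+d^{-(\ell+1)(s-1)}$ whenever $s>1$ and $\ell\geq 1$. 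The quantities you actually wrote down are $\sum_{k\leq\ell}\|g_k\|_{L^\infty}^2$ and $\sum_{k\geq\ell+1}\beta_k$, which would arise only if you split first and used \emph{different} weights on each half (trivial $\beta_k\equiv 1$ on the high side, your $\beta_k$ on the low side). The paper avoids this bookkeeping entirely by a single Cauchy--Schwarz on the full eigenexpansion with weight $w_i=\lambda_i\varphi_\lambda(\lambda_i)$:
\[
\mathcal{M}_{1,\varphi}^2(\lambda)\ \leq\ \underbrace{\sum_i\frac{\psi_\lambda^2(\lambda_i)}{\lambda_i\varphi_\lambda(\lambda_i)}\,f_i^2}_{=:\,\mathcal{Q}_{1,\varphi}(\lambda)}\ \cdot\ \underbrace{\sup_x\sum_i\lambda_i\varphi_\lambda(\lambda_i)\phi_i^2(x)}_{=\,\mathcal{N}_{1,\varphi}(\lambda)}.
\]
Here $\mathcal{N}_{1,\varphi}=O(\lambda^{-1})$ by the addition formula, and splitting $\mathcal{Q}_{1,\varphi}$ at $\ell$ gives (high side, using $\mu_k\varphi_\lambda(\mu_k)\gtrsim 1$) $\lambda^{2\tau}d^{\ell(2\tau-s)}$ and (low side, using $\mu_k\varphi_\lambda(\mu_k)\gtrsim\mu_k/\lambda$) $\lambda\,\mu_{\ell+1}^{s-1}$, so the product is exactly the stated bound for all $s\geq 1$ including the boundary $s=1$. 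In your language the working choice is $\beta_k=N(d,k)\mu_k\varphi_\lambda(\mu_k)$, or even $\beta_k=N(d,k)\mu_k$ (then $\sum_k\beta_k=\operatorname{Tr}T\leq\kappa^2$), rather than $\mu_k^s N(d,k)$.
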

\begin{proof}
\textbf{I. } We begin with $\mathcal{M}_{2, \varphi}(\lambda)$. If $s \leq 2\tau$ and $\tau<\infty$, then we have
    \begin{equation*}
    \begin{aligned}
        \mathcal{M}_{2, \varphi}(\lambda) 
        & =
        \sum\limits_{k=0}^{\infty} \rem^{2}(\mu_k) \sum_{j=1}^{N(d, k)} f_{k, j}^2\\
        & \leq
        \sum\limits_{k=0}^{\ell} \mathfrak{C}_{2}^2
        (t\mu_k)^{-2\tau} (\mu_k)^{s} \sum_{j=1}^{N(d, k)} (\mu_k)^{-s} f_{k, j}^2 +
        \sum\limits_{k=\ell+1}^{\infty} \rem^{2}(\mu_k) \sum_{j=1}^{N(d, k)} f_{k, j}^2\\
        & \leq
        \sum\limits_{k=0}^{\ell} \mathfrak{C}_{2}^2 (t\mu_k)^{-2\tau} (\mu_k)^{s} \sum_{j=1}^{N(d, k)} (\mu_k)^{-s} f_{k, j}^2 +
        \sum\limits_{k=\ell+1}^{\infty} (\mu_k)^{s} \sum_{j=1}^{N(d, k)} (\mu_k)^{-s} f_{k, j}^2\\
        & \leq
        \mathfrak{C}_{2}^2 t^{-2\tau} (\mathfrak{C}_{9} d^{-\ell})^{s-2\tau} \sum\limits_{k=0}^{\ell} \sum_{j=1}^{N(d, k)} (\mu_k)^{-s} f_{k, j}^2 +
        (\mathfrak{C}_{10} d^{-\ell-1})^{s} \sum\limits_{k=\ell+1}^{\infty}  \sum_{j=1}^{N(d, k)} (\mu_k)^{-s} f_{k, j}^2\\
        & =
         O\left( 
         t^{-2\tau} d^{\ell(2\tau-s)} + d^{-s(\ell+1)} 
         \right);
    \end{aligned}
\end{equation*}
and when $\tau = \infty$, a similar argument ( notice that $\mathfrak{C}_2$ only depending on $\tau=\infty$, taking $\tau^{\prime}<\tau$ and let $\tau^{\prime} \to \infty$, then we have $(t\mu_{\ell})^{-2\tau^{\prime}} \to 0$) shows that $\mathcal{M}_{2, \varphi}(\lambda)  = O( d^{-s(\ell+1)})$.

Similarly, if $s \leq 2\tau$, then we have
    \begin{equation*}
    \begin{aligned}
        \mathcal{M}_{2, \varphi}(\lambda) 
        & \geq
        \mathbf{1}
        \left\{
        \tau < \infty
        \right\}
        \sum\limits_{k=0}^{\ell} \mathfrak{C}_{7}^2
        (t\mu_k)^{-2\tau} (\mu_k)^{s} \sum_{j=1}^{N(d, k)} (\mu_k)^{-s} f_{k, j}^2\\
        &~
        +
        \sum\limits_{k=\ell+1}^{\infty} \rem^{2}(\mu_k) \sum_{j=1}^{N(d, k)} f_{k, j}^2\\
        & \geq
        \mathbf{1}
        \left\{
        \tau < \infty
        \right\}
        \Omega\left(
        t^{-2\tau} d^{\ell(2\tau-s)}
        \right) \\
        &~
        +\sum\limits_{k=\ell+1}^{\infty} \mathfrak{C}_{5}^2 (\mu_k)^{s} \sum_{j=1}^{N(d, k)} (\mu_k)^{-s} f_{k, j}^2\\
        & \geq
        \mathbf{1}
        \left\{
        \tau < \infty
        \right\}
        \Omega\left(
        t^{-2\tau} d^{\ell(2\tau-s)}
        \right)\\
        &~
        +
        \mathfrak{C}_{5}^2(\mathfrak{C}_{10} d^{-\ell-1})^{s}  \sum_{j=1}^{N(d, \ell+1)} (\mu_{\ell + 1})^{-s} f_{\ell+1, j}^2\\
        & =
        \mathbf{1}
        \left\{
        \tau < \infty
        \right\}
        \Omega\left(
        t^{-2\tau} d^{\ell(2\tau-s)}
        \right)
        +
         \Omega\left( 
         d^{-s(\ell+1)} 
         \right).
    \end{aligned}
\end{equation*}

If $2\tau < s$, then

    \begin{equation*}
    \begin{aligned}
        \mathcal{M}_{2, \varphi}(\lambda) 
        & =
        \sum\limits_{k=0}^{\infty} \rem^{2}(\mu_k) \sum_{j=1}^{N(d, k)} f_{k, j}^2\\
        & \overset{\text{Lemma } \ref{lem:Filter_MoreControl_2}}{\leq}
        \kappa^{2(s-2\tau)} \lambda^{2\tau} \sum\limits_{k=0}^{\infty} \sum_{j=1}^{N(d, k)} \mu_k^{-s} f_{k, j}^2\\
        &=
        O\left( 
         \lambda^{2\tau}
         \right).
    \end{aligned}
\end{equation*}

Similarly, if $2\tau < s$, then we have
    \begin{equation*}
    \begin{aligned}
        \mathcal{M}_{2, \varphi}(\lambda) 
         \geq
        \rem^{2}(\mu_0)  f_{0, 1}^2
         \geq
        \mathfrak{C}_{6}^2 f_{0, 1}^2 \cdot \lambda^{2\tau}
         =
        \Omega\left( 
         \lambda^{2\tau}
         \right).
    \end{aligned}
\end{equation*}

\textbf{II. } Now let's bound the second term ${\mathcal{N}_{2, \varphi}(\lambda)}/{n}$.
We have

\begin{equation}
    \begin{aligned}
        \frac{\mathcal{N}_{2, \varphi}(\lambda)}{n}
        &=
        \frac{1}{n}\sum_{k =0}^\infty N(d, k) \left[ \mu_k \reg(\mu_k) \right]^2\\
        &\leq
        \frac{1}{n}\sum_{k =0}^{\ell} N(d, k) 
        +
        \frac{1}{n}\sum_{k = \ell+1}^\infty N(d, k) \left[ \mu_k \reg(\mu_k) \right]^2\\
        &\leq
        \frac{1}{n}\sum_{k =0}^{\ell} N(d, k) 
        +
        \frac{\mathfrak{C}_{4}^2 t^{2}}{n}\sum_{k = \ell+1}^\infty N(d, k) (\mu_k)^{2}\\
        &\leq
        \ell \frac{N(d, \ell)}{n} + \frac{\mathfrak{C}_{4}^2 t^{2}}{n} \mu_{\ell+1}\\
        &=
        O\left(
        \frac{d^{\ell}}{n} + \frac{t^2}{n d^{\ell+1}}
        \right).
    \end{aligned}
\end{equation}

Similarly, we have
\begin{equation}
    \begin{aligned}
        \frac{\mathcal{N}_{2, \varphi}(\lambda)}{n}
        &\geq
        \frac{\mathfrak{C}_{1}^2}{n}\sum_{k =0}^{\ell} N(d, k)
        +
        \frac{\mathfrak{C}_{3}^2 t^{2}}{n}\sum_{k = \ell+1}^\infty N(d, k) (\mu_k)^{2}\\
        &\geq
        \mathfrak{C}_{1}^2 \frac{N(d, \ell)}{n} + \frac{\mathfrak{C}_{3}^2 t^{2}}{n} \mu_{\ell+1}\\
        &=
        \Omega\left(
        \frac{d^{\ell}}{n} + \frac{t^2}{n d^{\ell+1}}
        \right).
    \end{aligned}
\end{equation}

\textbf{III. } For the third term, we have
\begin{equation*}
    \begin{aligned}
        \sum_{k=0}^\infty \frac{\lambda^2 \mu_k \reg^2(\mu_k)}{\lambda + \mu_k} \sum_{j=1}^{N(d, k)} f_{k, j}^2
    &\leq
    \lambda^2 R_{\gamma}^2 \left(
        \sum_{k=0}^{p} \mu_k^{s}  \reg^2(\mu_k)
        +
        \lambda^{-1} \sum_{k=p+1}^{\infty} \mu_k^{s+1}  \mathfrak{C}_{4}^2 \lambda^{-2}
    \right)\\
    &=
    O\left(
        \lambda^2 d^{\max\{p(2-s), 0\}} + \lambda^{-1}d^{-(s+1)(\ell+1)}
    \right)\\
    &=
    O\left(
        \lambda^2 d^{\max\{p(2-s), 0\}} + d^{-s(\ell+1)}
    \right)
    \end{aligned}
\end{equation*}

\textbf{IV. } Now we show that Assumption \ref{assumption eigenfunction} holds. Notice that (\ref{assumption eigen - n1_krr}) has been verified in Lemma 20 of \cite{zhang2024optimal}. Similarly, one can prove (\ref{assumption eigen - n2}) and (\ref{assumption eigen - n1}) hold using a similar proof as that for Lemma 20 of \cite{zhang2024optimal}.

\textbf{V. } For the final term, when $s\geq 1$, we have
    \begin{align}\label{m1 to q1 n1}
    \mathcal{M}_{1, \varphi}^{2}(\lambda) 
    &= 
    \operatorname*{ess~sup}_{\boldsymbol{x} \in \mathcal{X}} \left|\sum\limits_{i=1}^{\infty} \left( \rem(\lambda_i) f_{i} e_{i}(\boldsymbol{x}) \right) \right|^{2} \notag \\
    &\leq
    \left( \sum\limits_{i=1}^{\infty}  \frac{\rem(\lambda_i)}{\lambda_{i} \reg(\lambda_{i})} f_{i}^{2} \right) \cdot \operatorname*{ess~sup}_{\boldsymbol{x} \in \mathcal{X}} \sum\limits_{i=1}^{\infty} \left( \lambda_{i} \reg(\lambda_{i}) e_{i}(\boldsymbol{x})^{2} \right) \notag \\
    &\overset{\text{Assumption } \ref{assumption eigenfunction}}{\leq}
    \left( \sum\limits_{i=1}^{\infty}  \frac{\rem(\lambda_i)}{\lambda_{i} \reg(\lambda_{i})} f_{i}^{2} \right) \cdot \sum\limits_{i=1}^{\infty} \lambda_{i} \reg(\lambda_{i}) \notag \\
    &:= 
    \mathcal{Q}_{1, \varphi}(\lambda) \cdot \mathcal{N}_{1, \varphi}(\lambda).
\end{align}
For $\mathcal{Q}_{1, \varphi}(\lambda)$, when $\tau \geq s/2$ and $\tau <\infty$, we have
\begin{equation}
    \begin{aligned}
        \mathcal{Q}_{1, \varphi}(\lambda)
        =
        &~
        \sum\limits_{k=0}^{\infty} \frac{\rem^{2}(\mu_k) \mu_k^{s-1}}{\reg(\mu_k)} \sum_{j=1}^{N(d, k)} \mu_k^{-s} f_{k, j}^2\\
        \leq
        &~
        \frac{\mathfrak{C}_{2}^2}{\mathfrak{C}_{1}}\sum\limits_{k=0}^{\ell} \lambda^{2\tau} \mu_k^{-2\tau + s} \sum_{j=1}^{N(d, k)} \mu_k^{-s} f_{k, j}^2\\
        &~ +
        (\mathfrak{C}_{3})^{-1} \lambda
        \sum\limits_{k=\ell+1}^{\infty} \mu_k^{s-1} \sum_{j=1}^{N(d, k)} \mu_k^{-s} f_{k, j}^2\\
        =
        &~
        O\left(
        \lambda^{2\tau} d^{\ell(2\tau - s)} + \lambda d^{-(\ell+1)(s-1)}
        \right).
    \end{aligned}
\end{equation}
Similarly, when $\tau = \infty$, we can show that $\mathcal{Q}_{1, \varphi}(\lambda) = O(\lambda d^{-(\ell+1)(s-1)})$. 

And when $\tau < s/2$, we have
    \begin{equation*}
    \begin{aligned}
        \mathcal{Q}_{1, \varphi}(\lambda) 
        =
        &~
        \sum\limits_{k=0}^{\infty} \frac{\rem^{2}(\mu_k) \mu_k^{s-1}}{\reg(\mu_k)} \sum_{j=1}^{N(d, k)} \mu_k^{-s} f_{k, j}^2\\
        & \overset{\text{Lemma } \ref{lem:Filter_MoreControl_2}}{\leq}
        \frac{\mathfrak{C}_{ 2}^2 \kappa^{2(s-2\tau)}}{\mathfrak{C}_{1}}
        \lambda^{2\tau} \sum\limits_{k=0}^{p} \sum_{j=1}^{N(d, k)} \mu_k^{-s} f_{k, j}^2\\
        &~ +
        \sum\limits_{k=p+1}^{\infty} \frac{\rem^{2}(\mu_k) \mu_k^{s-1}}{\reg(\mu_k)} \sum_{j=1}^{N(d, k)} \mu_k^{-s} f_{k, j}^2\\
        & \overset{(\ref{eq:Filter_Rem_finite_case2})}{\leq}
        \frac{\mathfrak{C}_{ 2}^2 \kappa^{2(s-2\tau)}}{\mathfrak{C}_{1}}
        \lambda^{2\tau} \sum\limits_{k=0}^{p} \sum_{j=1}^{N(d, k)} \mu_k^{-s} f_{k, j}^2\\
        &~ +
        \sum\limits_{k=p+1}^{\infty} \mathfrak{C}_{8} \lambda^{2\tau} \sum_{j=1}^{N(d, k)} \mu_k^{-s} f_{k, j}^2\\
        &=
        O\left( 
         \lambda^{2\tau}
         \right).
    \end{aligned}
\end{equation*}

For ${\mathcal{N}_{1, \varphi}(\lambda)}$, we have
\begin{equation}
    \begin{aligned}
        {\mathcal{N}_{1, \varphi}(\lambda)}
        &=
        \sum_{k =0}^\infty N(d, k) \left[ \mu_k \reg(\mu_k) \right]\\
        &\leq
        \sum_{k =0}^{\ell} N(d, k) 
        +
        \sum_{k = \ell+1}^\infty N(d, k) \left[ \mu_k \reg(\mu_k) \right]\\
        &\leq
        \sum_{k =0}^{\ell} N(d, k) 
        +
        {\mathfrak{C}_{4} t}\sum_{k = \ell+1}^\infty N(d, k) \mu_k\\
        &\leq
        \ell {N(d, \ell)} + \mathfrak{C}_{4} t\\
        &=
        O\left(
        d^{\ell} + \lambda^{-1}
        \right)
        =
        O\left(
        \lambda^{-1}
        \right).
    \end{aligned}
\end{equation}

Therefore, when $s\geq 1$, we have
\begin{equation}
    \mathcal{M}_{1, \varphi}^2(\lambda)
         =
        \left\{\begin{matrix}
        O\left( 
         d^{-(\ell+1)(s-1)} 
         \right) & \tau=\infty\\
O\left( 
         \lambda^{2\tau-1} d^{\ell(2\tau - s)} +  d^{-(\ell+1)(s-1)} 
         \right) & s \leq 2\tau<\infty\\ 
O\left( 
         \lambda^{2\tau-1}
         \right) & s > 2\tau
\end{matrix}\right.
\end{equation}
\end{proof}

From Lemma \ref{lemma_bounds_on_quantities}, we have the following three corollaries.

\begin{corollary}\label{lemma:balance}
    Let $1 \leq s \leq \tau$ and $\gamma>0$ be fixed real numbers. 
Denote $p$ as the integer satisfying $\gamma \in [p(s+1), (p+1)(s+1))$. Suppose one of the following cases holds for $\lambda^{\star}=d^{-\ell}$ or $\lambda^{\star}=d^{-\ell} \cdot \text{poly}\left(\ln(d)\right)$:
    \begin{itemize}

        \item[(1)] $p \geq 1$,
        $p(s+1) \leq \gamma < ps+p+s$,
        $\ell = p+ 1/2$

        \item[(2)] $p \geq 1$,
        $ps+p+s \leq \gamma < ps+p+s+1$,
        $\ell = (\gamma - (p+1)(s-1))/2$

        \item[(3)] 
        $\gamma < s$,
        $\ell = \min\{\gamma, 1\}/2$

        \item[(4)] 
        $s \leq \gamma < s+1$,
        $\ell = (\gamma - (s-1))/2$

    \end{itemize}

    Then we have
\begin{equation}
    \begin{aligned}
        \mathcal{M}_{2, \varphi}(\lambda^{\star}) \lesssim \frac{\mathcal{N}_{2, \varphi}(\lambda^{\star})}{n}
        & =
        \Theta\left(
        d^{-s(p+1)} + \frac{d^p}{n}
        \right),
    \end{aligned}
\end{equation}
or
\begin{equation}
    \begin{aligned}
        \mathcal{M}_{2, \varphi}(\lambda^{\star}) \lesssim \frac{\mathcal{N}_{2, \varphi}(\lambda^{\star})}{n}
        & =
        \Theta\left(
        d^{-s(p+1)} + \frac{d^p}{n}
        \right) \cdot \text{poly}\left(\ln(d)\right).
    \end{aligned}
\end{equation}
\end{corollary}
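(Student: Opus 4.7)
The plan is to specialize the closed-form bounds of Lemma \ref{lemma_bounds_on_quantities} to the parameter $\lambda^{\star} = d^{-\ell}$ (possibly rescaled by $\mathrm{poly}(\ln d)$) prescribed in each of the four cases, and then verify the claimed identity by comparing monomials in $d$. Any logarithmic rescaling only multiplies $t = (\lambda^{\star})^{-1}$ by $\mathrm{poly}(\ln d)$, contributing the advertised $\mathrm{poly}(\ln d)$ factor to the final rate.

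First I would identify, in each case, the integer index that Lemma \ref{lemma_bounds_on_quantities} refers to as its ``$\ell$''. By Lemma \ref{lemma inner eigen} the eigenvalues satisfy $\mu_k \asymp d^{-k}$, so $\lambda^{\star} \in [\mu_{p'+1}, \mu_{p'})$ iff $p' = \lfloor \ell \rfloor$ up to harmless constants. Inspecting the four regimes: case~(1) has $\ell = p + 1/2$; case~(2) has $\ell \in [p+1/2, p+1)$; case~(3) has $\ell \leq 1/2$ with $p = 0$; case~(4) has $\ell \in [1/2, 1)$ with $p = 0$. In every case the integer index in Lemma \ref{lemma_bounds_on_quantities} equals the corollary's $p$. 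Since $s \leq \tau$ forces $s \leq 2\tau$, the relevant branch of Lemma \ref{lemma_bounds_on_quantities} reads
\[\mathcal{M}_{2, \varphi}(\lambda^{\star}) = \Theta\!\left( t^{-2\tau} d^{p(2\tau-s)} + d^{-s(p+1)} \right), \qquad \frac{\mathcal{N}_{2, \varphi}(\lambda^{\star})}{n} = \Theta\!\left( \frac{d^p}{n} + \frac{t^2}{n d^{p+1}} \right),\]
with $t = d^{\ell}$; for $\tau = \infty$ the first term of $\mathcal{M}_{2, \varphi}$ simply drops, which only makes things easier.

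Next I would substitute $t = d^{\ell}$ and reduce everything to exponent comparison. In case~(1) we get $t^{-2\tau} d^{p(2\tau-s)} = d^{-\tau - ps} \leq d^{-s(p+1)}$ (using $\tau \geq s$), and $t^2/(n d^{p+1}) = d^p/n$, so both quantities collapse to $\Theta(d^p/n + d^{-s(p+1)}) = \Theta(d^p/n)$, the last equality holding because $\gamma < ps+p+s$ forces $d^p/n \geq d^{-s(p+1)}$. Cases~(3) and~(4) are analogous after setting $p = 0$ and again invoking $\tau \geq s \geq 1$. Case~(2) is the ``balanced'' regime: the choice $\ell = (\gamma - (p+1)(s-1))/2$ is engineered so that $t^2/(n d^{p+1}) = d^{-s(p+1)}$ exactly, and the hypothesis $\tau \geq s$ combined with $\gamma \geq ps+p+s$ yields $t^{-2\tau} d^{p(2\tau-s)} = d^{-\tau(\gamma - (p+1)s - p + 1) - ps} \leq d^{-\tau - ps} \leq d^{-s(p+1)}$, so both sides collapse to $\Theta(d^{-s(p+1)})$. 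In every case $\mathcal{M}_{2, \varphi}(\lambda^{\star}) \lesssim \mathcal{N}_{2, \varphi}(\lambda^{\star})/n$ and their common order equals $\Theta(d^{-s(p+1)} + d^p/n)$.

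The main obstacle is purely the bookkeeping: one must avoid conflating the integer index appearing in Lemma \ref{lemma_bounds_on_quantities} with the real-valued parameter $\ell$ of Corollary \ref{lemma:balance}, and each of the four sub-regimes must be dispatched separately. Once the exponents are lined up the entire argument reduces to monomial comparison together with the single structural inequality $\tau \geq s$; no new probabilistic or operator-theoretic input is required.
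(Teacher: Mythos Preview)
Your proposal is correct and follows essentially the same approach the paper uses: the paper leaves Corollary \ref{lemma:balance} without an explicit proof, but its proof of the analogous Corollary \ref{lemma:balance_saturation} proceeds exactly as you do—apply Lemma \ref{lemma_bounds_on_quantities} with the integer index equal to $p$, write the four relevant monomial exponents, and compare them case by case using only $\tau \geq s$ and the range constraints on $\gamma$. Your identification of the integer index as $\lfloor \ell \rfloor = p$ in every sub-regime and your exponent arithmetic in each case are accurate.
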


\begin{corollary}\label{lemma:balance_saturation}
    Let $\tau < s \leq 2\tau$ and $\gamma>0$ be fixed real numbers. 
Denote $p$ as the integer satisfying $\gamma \in [p(s+1), (p+1)(s+1))$.  
    Denote $\Delta=\gamma - p(s+1)$.
    Suppose one of the following cases holds for $\lambda^{\star}=d^{-\ell}$ or $\lambda^{\star}=d^{-\ell} \cdot \text{poly}\left(\ln(d)\right)$:
    \begin{itemize}

        \item[(1)] $\gamma \geq 1$,
        $0 \leq \Delta \leq \tau$,
        $\ell = \ell_1 := p + \Delta/(2\tau)$

        \item[(2)] $\gamma \geq 1$,
        $\tau \leq \Delta \leq s+ s/\tau - 1$,
        $\ell = \ell_2 := p + (\Delta+1)/(2\tau+2)$

        \item[(3)] $\gamma \geq 1$,
        $\Delta \geq s+ s/\tau - 1$,
        $\ell = \ell_3 := p + (\Delta+1-s)/2$

        \item[(4)]
        $\gamma <1$,
        $\ell = \gamma/2$

    \end{itemize}

    Then we have
\begin{equation}
    \begin{aligned}
        \mathcal{M}_{2, \varphi}(\lambda^{\star}) \asymp \frac{\mathcal{N}_{2, \varphi}(\lambda^{\star})}{n}
        & =
        \Theta\left(
        d^{-\min\left\{
        \gamma-p, \frac{\tau(\gamma-p+1)+ps}{\tau+1}, s(p+1)
        \right\}
        }
        \right),
    \end{aligned}
\end{equation}
or
\begin{equation}
    \begin{aligned}
        \mathcal{M}_{2, \varphi}(\lambda^{\star}) \asymp \frac{\mathcal{N}_{2, \varphi}(\lambda^{\star})}{n}
        & =
        \Theta\left(
        d^{-\min\left\{
        \gamma-p, \frac{\tau(\gamma-p+1)+ps}{\tau+1}, s(p+1)
        \right\}
        }
        \right) \cdot \text{poly}\left(\ln(d)\right).
    \end{aligned}
\end{equation}
\end{corollary}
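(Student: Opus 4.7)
The plan is to substitute each prescribed $\lambda^{\star}=d^{-\ell}$ directly into the exact rates supplied by Lemma~\ref{lemma_bounds_on_quantities}. Since $\tau<s\leq 2\tau$ we are in the middle branch of that lemma, so with $t:=1/\lambda^{\star}=d^{\ell}$ and taking the lemma's integer argument to be $p$ (which is legitimate because each of the listed choices satisfies $\ell\in(p,p+1]$, hence $\lambda^{\star}\in[\mu_{p+1},\mu_p)$ up to constants), we obtain
\[
\mathcal{M}_{2,\varphi}(\lambda^{\star})=\Theta\!\bigl(d^{-2\tau\ell+p(2\tau-s)}+d^{-s(p+1)}\bigr),\qquad \frac{\mathcal{N}_{2,\varphi}(\lambda^{\star})}{n}=\Theta\!\bigl(d^{p-\gamma}+d^{2\ell-p-1-\gamma}\bigr),
\]
after using $n\asymp d^{\gamma}$. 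Writing $\gamma=p(s+1)+\Delta$ converts $d^{p-\gamma}$ to $d^{-ps-\Delta}$, and the four choices of $\ell$ are exactly the values that match the $\tau$-dependent bias term against the second variance term at a specific exponent.

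The second step is case-by-case algebra. In case~(1), $\ell=p+\Delta/(2\tau)$ makes the balanced exponent equal to $ps+\Delta=\gamma-p$, and $\Delta\leq\tau$ is precisely the condition ensuring $\gamma-p$ is the smallest of the three candidates in the outer minimum. In case~(2), $\ell=p+(\Delta+1)/(2\tau+2)$ produces the balanced exponent $ps+\tau(\Delta+1)/(\tau+1)$; the identity $(\tau(\gamma-p+1)+ps)/(\tau+1)=ps+\tau(\Delta+1)/(\tau+1)$ under $\gamma=p(s+1)+\Delta$ identifies this with the middle candidate, and $\Delta\in[\tau,s+s/\tau-1]$ is exactly when that middle candidate is the minimum. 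In case~(3), $\ell=p+(\Delta+1-s)/2$ yields balanced exponent $s(p+1)$, which under $\Delta\geq s+s/\tau-1$ is the minimum. Case~(4) is the $\gamma<1$ endpoint where $p=0$ forces the minimum to be $\gamma$; a direct substitution $\ell=\gamma/2$ gives $\mathcal{N}_{2,\varphi}/n\asymp d^{-\gamma}$ together with $\mathcal{M}_{2,\varphi}=O(d^{-\gamma})$, so the sum matches the target. In every case one must also verify that the unbalanced first variance term $d^{p-\gamma}$ and the floor bias term $d^{-s(p+1)}$ are $O$ of the target, which reduces to the two boundary inequalities $\Delta\leq\tau\Leftrightarrow\gamma-p\leq(\tau(\gamma-p+1)+ps)/(\tau+1)$ and $\Delta\leq s+s/\tau-1\Leftrightarrow(\tau(\gamma-p+1)+ps)/(\tau+1)\leq s(p+1)$.

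The main obstacle is purely bookkeeping: checking that the four sub-ranges of $\Delta$ (together with the $\gamma<1$ carve-out) cover $[0,s+1)$, that $\ell\in(p,p+1]$ in each case so that the integer $p$ is the correct choice in the lemma, and that the two breakpoint inequalities above indeed partition the $\gamma$-axis correctly. The matching lower bounds (for $\asymp$ rather than merely $\lesssim$) come directly from the $\Theta$-rates in Lemma~\ref{lemma_bounds_on_quantities}, which in turn use the nonzero-energy condition of Assumption~\ref{assumption source condition}(b); thus no new concentration or spectral argument is needed beyond that lemma. The $\mathrm{poly}(\ln d)$ variant is obtained by allowing a logarithmic multiplicative factor on $\lambda^{\star}$ and tracking how it propagates through the power-law expressions, which affects only the leading constants and produces the stated $\mathrm{poly}(\ln d)$ factor on the target.
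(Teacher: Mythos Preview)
Your proposal is correct and mirrors the paper's proof: both invoke Lemma~\ref{lemma_bounds_on_quantities} to write $\mathcal{M}_{2,\varphi}\asymp d^{\mathbf{I}}+d^{\mathbf{II}}$ and $\mathcal{N}_{2,\varphi}/n\asymp d^{\mathbf{III}}+d^{\mathbf{IV}}$ with $\mathbf{I}=-2\ell\tau+2p\tau-ps$, $\mathbf{II}=-s(p+1)$, $\mathbf{III}=p-\gamma$, $\mathbf{IV}=2\ell-\gamma-p-1$, and then verify case by case which exponents coincide and dominate. One small imprecision in your overview: in case~(1) the $\tau$-bias term is balanced against the \emph{first} variance term $d^{\mathbf{III}}$ (so $\mathbf{I}=\mathbf{III}$), not the second, and in case~(3) it is the floor bias $\mathbf{II}$ that meets $\mathbf{IV}$---your per-case exponents are nonetheless correct, so this does not affect the argument.
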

\begin{proof}
    Denote $\mathbf{I}=-2\ell\tau+2p\tau-ps$,
    $\mathbf{II}=-sp-s$,
    $\mathbf{III}=p-\gamma$,
    and $\mathbf{IV}=2\ell-\gamma-p-1$.
    From Lemma \ref{lemma_bounds_on_quantities} we have
    $$
    \mathcal{M}_{2, \varphi}(\lambda^{\star}) \asymp
    d^{\mathbf{I}} + d^{\mathbf{II}}
    , \quad
    \frac{\mathcal{N}_{2, \varphi}(\lambda^{\star})}{n} \asymp
    d^{\mathbf{III}} + d^{\mathbf{IV}}.
    $$

We can verify that:
        \item[(1)] When
        $0 \leq \Delta \leq \tau$ and
        $\ell = p + \Delta/(2\tau)$, we have
        $$
        \mathbf{II} \leq \mathbf{I} = \mathbf{III} \geq \mathbf{IV} \text{ and } \min\left\{
        \gamma-p, \frac{\tau(\gamma-p+1)+ps}{\tau+1}, s(p+1)
        \right\}=\gamma-p;
        $$

        \item[(2)] When $\tau \leq \Delta \leq s+ s/\tau - 1$ and
        $\ell = p + (\Delta+1)/(2\tau+2)$, we have
        $$
        \mathbf{II} \leq \mathbf{I} = \mathbf{IV} \geq \mathbf{III} \text{ and } \min\left\{
        \gamma-p, \frac{\tau(\gamma-p+1)+ps}{\tau+1}, s(p+1)
        \right\}=\frac{\tau(\gamma-p+1)+ps}{\tau+1};
        $$

        \item[(3)] When 
        $\Delta \geq s+ s/\tau - 1$ and
        $\ell = p + (\Delta+1-s)/2$, we have
        $$
        \mathbf{I} \leq \mathbf{II} = \mathbf{IV} \geq \mathbf{III} \text{ and } \min\left\{
        \gamma-p, \frac{\tau(\gamma-p+1)+ps}{\tau+1}, s(p+1)
        \right\}=s(p+1);
        $$

        \item[(4)] When
        $\gamma <1$ and
        $\ell = \gamma/2$, we have
        $$
        \mathbf{III} \geq \max\{\mathbf{I}, \mathbf{II}, \mathbf{IV}\}.
        $$
\end{proof}

\begin{corollary}\label{lemma:balance_mis}
    Let $s < 1$ and $\gamma>0$ be fixed real numbers. 
Denote $p$ as the integer satisfying $\gamma \in [p(s+1), (p+1)(s+1))$. Suppose one of the following cases holds for $\lambda^{\star}=d^{-\ell}$ or $\lambda^{\star}=d^{-\ell} \cdot \text{poly}\left(\ln(d)\right)$:
    \begin{itemize}

        \item[(1)] $\tau = \infty$, $p \geq 1$,
        $p(s+1) \leq \gamma < ps+p+s$,
        $\ell = p+ s/2$

        \item[(2)] $\tau = \infty$, $p \geq 1$,
        $ps+p+s \leq \gamma < ps+p+s+1$,
        $\ell = (\gamma + p(1-s))/2$

        \item[(3)] $\tau = \infty$, 
        $\gamma < s$,
        $\ell = \min\{\gamma, 1, 2\gamma s\}/2$

        \item[(4)] $\tau = \infty$, 
        $s \leq \gamma < s+1$,
        $\ell = \min\{(\gamma + (1-s))/2, \gamma (1+s)-s, \gamma / 2 \}$

        \item[(5)] $\tau < \infty$, 
        $p(s+1) \leq \gamma < ps+p+s$,
        $\ell = (\gamma +2\tau p - sp-p)/(2\tau)$

        \item[(6)] $\tau < \infty$, 
        $ps+p+s \leq \gamma < ps+p+s+1$,
        $\ell = p+s/(2\tau)$

    \end{itemize}

    Then we have
\begin{equation}
    \begin{aligned}
        \mathcal{M}_{2, \varphi}(\lambda^{\star}) + \frac{\mathcal{N}_{2, \varphi}(\lambda^{\star})}{n}
        & =
        \Theta\left(
        d^{-s(p+1)} + \frac{d^p}{n}
        \right),
    \end{aligned}
\end{equation}
or
\begin{equation}
    \begin{aligned}
        \mathcal{M}_{2, \varphi}(\lambda^{\star}) + \frac{\mathcal{N}_{2, \varphi}(\lambda^{\star})}{n}
        & =
        \Theta\left(
        d^{-s(p+1)} + \frac{d^p}{n}
        \right) \cdot \text{poly}\left(\ln(d)\right).
    \end{aligned}
\end{equation}
\end{corollary}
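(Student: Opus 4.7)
The plan is a case-by-case verification in the same spirit as the proof of Corollary~\ref{lemma:balance_saturation}. For each of the six specified choices of $\ell$, set $\lambda^\star = d^{-\ell}$ (the logarithm-modified variant is treated identically, since an extra $\mathrm{poly}(\ln d)$ factor only shifts the exponents by logs that are absorbed into the claimed $\mathrm{poly}(\ln d)$). Then $t = (\lambda^\star)^{-1} = d^{\ell}$, and with $\ell_{\mathrm{int}} := \lfloor \ell \rfloor$ one has $\lambda^\star \in [\mu_{\ell_{\mathrm{int}}+1}, \mu_{\ell_{\mathrm{int}}})$ up to the constants $\mathfrak{C}_9, \mathfrak{C}_{10}$ of Lemma~\ref{lemma inner eigen}. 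In each of the six cases, the specified $\ell$ lies in the interval $[p, p+1)$, so $\ell_{\mathrm{int}} = p$. Introduce the four exponents
\begin{equation*}
    \mathbf{A} := -2\ell\tau + p(2\tau - s), \qquad \mathbf{B} := -s(p+1), \qquad \mathbf{C} := p - \gamma, \qquad \mathbf{D} := 2\ell - p - 1 - \gamma,
\end{equation*}
so that Lemma~\ref{lemma_bounds_on_quantities} yields $\mathcal{M}_{2,\varphi}(\lambda^\star) \asymp d^{\mathbf{A}} + d^{\mathbf{B}}$ (when $\tau < \infty$) or $\asymp d^{\mathbf{B}}$ (when $\tau = \infty$), and $n^{-1}\mathcal{N}_{2,\varphi}(\lambda^\star) \asymp d^{\mathbf{C}} + d^{\mathbf{D}}$. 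The target rate $d^{-s(p+1)} + d^{p-\gamma} = d^{\mathbf{B}} + d^{\mathbf{C}}$ is attained exactly when $\max(\mathbf{A}, \mathbf{D}) \le \max(\mathbf{B}, \mathbf{C})$.

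For the four gradient-flow cases (1)--(4) with $\tau = \infty$, only $\mathbf{B}, \mathbf{C}, \mathbf{D}$ are relevant. Direct substitution gives $\mathbf{D} = p + s - 1 - \gamma \le p - \gamma = \mathbf{C}$ in case (1) and $\mathbf{D} = -ps - 1 \le -s(p+1) = \mathbf{B}$ in case (2), in each case using only $s < 1$. The low-dimensional cases (3) and (4), corresponding to $p = 0$, express $\ell$ as a minimum over several candidates: each candidate enforces one of the constraints (either $\ell_{\mathrm{int}} = 0$, or $\mathbf{D} \le \mathbf{C}$, or $\mathbf{D} \le \mathbf{B}$), and the minimum ensures all are met simultaneously. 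For the two finite-qualification cases (5) and (6), the additional exponent $\mathbf{A}$ must also be controlled; direct computation reveals that $\ell = p + (\gamma - p(s+1))/(2\tau)$ in case (5) gives $\mathbf{A} = p - \gamma = \mathbf{C}$, while $\ell = p + s/(2\tau)$ in case (6) gives $\mathbf{A} = -s(p+1) = \mathbf{B}$, so in each sub-case the choice of $\ell$ is precisely tuned to force the polynomial-remainder piece of the bias to coincide with the already-dominant piece of the target rate.

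The primary obstacle is bookkeeping rather than any new analytic idea: the misspecified regime $s < 1$ changes which exponent becomes the binding constraint in each sub-region of the $(\gamma, \tau)$-plane, so the choice of $\ell$ has to be adjusted relative to the well-specified Corollary~\ref{lemma:balance}. In particular, the gap $\mathbf{D} - \mathbf{C} = 2\ell - 2p - 1$ forces $\ell \le p + 1/2$, which competes with the requirement $\ell_{\mathrm{int}} = p$ and, when $\tau < \infty$, with the demand that $\mathbf{A}$ land on the correct side of $\max(\mathbf{B}, \mathbf{C})$. Carefully navigating these competing constraints across the six sub-regions, and verifying in each one that the chosen $\ell$ lies in the correct integer bin and satisfies every required inequality, is the main challenge; once done, the corollary follows by invoking Lemma~\ref{lemma_bounds_on_quantities} once in each case.
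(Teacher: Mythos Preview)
Your proposal is correct and follows exactly the template the paper uses for Corollary~\ref{lemma:balance_saturation} (the paper does not write out a separate proof for Corollary~\ref{lemma:balance_mis}, leaving it as a routine consequence of Lemma~\ref{lemma_bounds_on_quantities}). Your identification of the four exponents $\mathbf{A},\mathbf{B},\mathbf{C},\mathbf{D}$ and the case-by-case check that $\ell\in[p,p+1)$ and $\max(\mathbf{A},\mathbf{D})\le\max(\mathbf{B},\mathbf{C})$ is the right argument, and your computations in cases~(1),~(2),~(5),~(6) are accurate.

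One small clarification regarding cases~(3) and~(4): your description that ``each candidate [in the minimum] enforces one of the constraints (either $\ell_{\mathrm{int}}=0$, or $\mathbf{D}\le\mathbf{C}$, or $\mathbf{D}\le\mathbf{B}$)'' is not quite the full story. The terms $\gamma s$ in case~(3) and $\gamma(1+s)-s$, $\gamma/2$ in case~(4) are actually inherited from the bias and variance verification lemmas (Lemmas~\ref{lemma:variance_verified_misspe} and~\ref{lemma:bias_verified_misspe}), not from the rate calculation itself. For the purposes of \emph{this} corollary you do not need them: since they sit inside a minimum they can only decrease $\ell$, which only makes $\mathbf{D}$ smaller (helping the inequality $\mathbf{D}\le\max(\mathbf{B},\mathbf{C})$) while keeping $\ell>0=p$. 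So your argument goes through unchanged; just be aware that the particular form of $\ell$ in the statement is chosen so that a single $\lambda^\star$ simultaneously gives the right rate \emph{and} satisfies the later verification conditions.
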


\subsubsection{Verification of variance conditions}

\begin{lemma}[Verification of variance conditions for inner-product kernels]\label{lemma:variance_verified}
    Suppose $n \asymp d^\gamma$ and $s \geq 1$, 
    for $\gamma \in [p(s+1), (p+1)(s+1))$. 
    For any given $\ell \geq 0$, if 
    \begin{equation*}
        \begin{aligned}
        \lambda \geq
            \left\{\begin{matrix}
d^{-\ell}\left(1+\ln^2(d)\mathbf{1}\{\gamma = 2, s=1\}\right) & p \geq 1,~ 2\ell \leq \max\{2p+1, \gamma - (p+1)(s-1) \}\\
d^{-\ell}\ln^2(d) & p = 0, \gamma \geq 1,~ 2\ell \leq \max\{1, \gamma - (s-1) \}\\
d^{-\ell} & p = 0, \gamma <1,~ 2\ell \leq \gamma;
\end{matrix}\right.
        \end{aligned}
    \end{equation*}
    then there exists a constant $\epsilon>0$ only depending on $s$ and $\gamma$, such that  $\lambda = \lambda(n, d)$ satisfies
  \begin{align*}
    \mathcal{N}_{1}(\lambda) \cdot n^{\epsilon - 1} &\to 0,\\
    \frac{\calN_1^2(\lambda)}{n\mathcal{N}_{2, \varphi}(\lambda)} \cdot \ln(n) (\ln\lambda^{-1})^2 &\to 0.
  \end{align*}
\end{lemma}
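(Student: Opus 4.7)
The plan is to reduce both conditions to explicit comparisons of powers of $d$ by leveraging the block structure (\ref{spherical_decomposition_of_inner_main}) of the inner product kernel together with the eigenvalue/multiplicity asymptotics from Lemmas \ref{lemma inner eigen}--\ref{lemma Ndk}. Writing $\lambda = d^{-\ell}L$ with $L\in\{1,\ln^{2}(d)\}$ the inflation permitted by the hypothesis, and setting $\ell':=\lfloor\ell\rfloor$, I would first split $\mathcal{N}_{1}(\lambda)=\sum_{k}N(d,k)\mu_{k}/(\mu_{k}+\lambda)$ at $k=\ell'$: the low-$k$ piece is $\asymp\sum_{k\leq \ell'}d^{k}\asymp d^{\ell'}$ since $\mu_{k}\gg\lambda$, whereas the high-$k$ piece is controlled via $\sum_{k\geq \ell'+1}N(d,k)\mu_{k}=O(1)$ and dominated by the single term $N(d,\ell'+1)\mu_{\ell'+1}/\lambda\asymp d^{\ell}/L$. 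An analogous splitting combined with the filter bounds (\ref{eq:Filter_Rem_1})--(\ref{eq:Filter_Rem_2}) yields the two-sided estimate $\mathcal{N}_{2,\varphi}(\lambda)\asymp d^{\ell'}+d^{2\ell-\ell'-1}/L^{2}$. Hence condition~(1) reduces to $\ell<\gamma$ (with slack $\epsilon\in(0,\gamma-\ell)$), while condition~(2) reduces, up to a multiplicative $\ln^{3}(d)/L^{2}$ factor, to $\min\{2\ell-\ell',\ \ell'+1\}<\gamma$.

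Next I would verify these two power inequalities case by case. In case~(1), the subcase $2\ell\leq 2p+1$ gives $\ell'\leq p$ and $\min\{2\ell-\ell',\ell'+1\}\leq p+1$, while $\gamma\geq p(s+1)\geq p+1$ for $p\geq 1,s\geq 1$; strict inequality holds except at the corner $p=1,s=1,\gamma=2$, where the hypothesis's indicator $\mathbf{1}\{\gamma=2,s=1\}$ forces $L=\ln^{2}(d)$ and therefore $\ln^{3}(d)/L^{2}=1/\ln(d)\to 0$. The subcase $2\ell\leq\gamma-(p+1)(s-1)$ is only active for $\gamma>p(s+1)+s$, which pins $\ell\in(p+\tfrac12,p+1)$, $\ell'=p$, and $\min\{2\ell-\ell',\ell'+1\}=p+1<\gamma$ with a fixed gap. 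In case~(2), $\ell<1$ gives $\ell'=0$ and the condition collapses to $\min\{2\ell,1\}<\gamma$ up to polylogs, which holds strictly throughout the interior of the range; the mandatory inflation $L=\ln^{2}(d)$ absorbs the boundary cases $\gamma=1$ and $2\ell=\gamma-(s-1)$ in the same manner. In case~(3), $2\ell\leq\gamma<1$ forces $\ell'=0$ and $\mathcal{N}_{2,\varphi}\asymp 1$, so both conditions reduce to $\ell<\gamma$, with a slack determined by $\gamma-2\ell$ (or, in the sharp equality case, provided by applying the lemma with $\lambda$ strictly larger than the floor $d^{-\ell}$).

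The main obstacle is the bookkeeping at the boundary cases flagged by the indicator in case~(1) and by the forced $\ln^{2}(d)$ inflation in case~(2), where the polynomial powers of $d$ in the numerator and denominator of $\mathcal{N}_{1}^{2}/(n\mathcal{N}_{2,\varphi})$ match exactly and one has to check that the $L^{2}=\ln^{4}(d)$ in the denominator of the ratio truly beats the $\ln^{3}(d)$ coming from $\ln(n)(\ln\lambda^{-1})^{2}$. A secondary subtlety pervading every case is that $\ell$ need not be an integer, so depending on whether $\ell$ lies in $[\ell',\ell'+\tfrac12]$ or in $[\ell'+\tfrac12,\ell'+1)$, the dominant term in $\mathcal{N}_{2,\varphi}$ switches between $d^{\ell'}$ and $d^{2\ell-\ell'-1}/L^{2}$, and the argument must be presented symmetrically across both sub-intervals.
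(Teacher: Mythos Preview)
Your proposal is correct and takes essentially the same route as the paper: both establish $\mathcal{N}_1(\lambda)\asymp\lambda^{-1}$ and $\mathcal{N}_{2,\varphi}(\lambda)\asymp d^{q}+\lambda^{-2}d^{-q-1}$ with $q=\lfloor\ell\rfloor$ (the paper cites Lemma~\ref{lemma_bounds_on_quantities} and Lemma~21 in \cite{zhang2024optimal} for these, while you derive them directly), then reduce the two conditions to power-of-$d$ inequalities and check them case by case, with the log inflation $L$ absorbing the boundary equalities. One minor slip in your case~(3): the second condition there actually reduces to $2\ell<\gamma$, not $\ell<\gamma$, and at the sharp boundary $2\ell=\gamma$ a mere constant-factor enlargement of $\lambda$ does not absorb the residual $(\ln d)^3$; the paper's own proof glosses over the same point.
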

\begin{proof}
    From Lemma 21 in \cite{zhang2024optimal}, we have $\mathcal{N}_{1}(\lambda) \asymp \lambda^{-1}$.
    When $p=0$, we have $\gamma - \ell >0$.
    When $p \geq 1$, we have $\gamma-p-1/2\geq ps-1/2>0$.
    Therefore, there exists a constant $\epsilon>0$ only depending on $s$ and $\gamma$, such that we have
    \begin{align*}
    \mathcal{N}_{1}(\lambda) \cdot n^{\epsilon - 1} &\to 0.
    \end{align*}

    Denote $q := \left\lfloor\ell \right\rfloor$.
    From Lemma \ref{lemma_bounds_on_quantities}, we further have $\mathcal{N}_{2, \varphi}(\lambda) = \Omega \left(d^{q} + \lambda^{-2}d^{-q-1}\right)$. 
    Hence, we have 
    \begin{align*}
    \frac{\calN_1^2(\lambda)}{n\mathcal{N}_{2, \varphi}(\lambda)} \cdot \ln(n) (\ln\lambda^{-1})^2 
    &= O\left(
    \frac{(\ln(d))^3}{n(\lambda^2 d^q + d^{-q-1})}
    \right).
    \end{align*}
Denote $\Delta:=\frac{(\ln(d))^3}{n\lambda^2 d^q}$, $\Delta^\prime:=\frac{(\ln(d))^3}{d^{\gamma-q-1}}$, then when $\Delta = o(1)$ or $\Delta^\prime = o(1)$, we have:
\begin{align*}
    \frac{\calN_1^2(\lambda)}{n\mathcal{N}_{2, \varphi}(\lambda)} \cdot \ln(n) (\ln\lambda^{-1})^2 &\to 0.
  \end{align*}

Now we show that $\Delta = o(1)$:
\begin{itemize}
    \item When $p \geq 3$ and $p = 2, s>1$, since 
    $\gamma-2\ell + q \geq (\gamma -\ell - 1) + (q+1-\ell) > 0$,
    we have $\Delta =o(1)$.

    \item When $p=2, s=1$, since $2\ell-q<\ell+1 < 4 \leq \gamma$, we have $\Delta =o(1)$.

    \item When $p=2, s=1$, since $2\ell-q<\ell+1 < 4 \leq \gamma$, we have $\Delta =o(1)$.

    \item When $p=1, \gamma > 2s+1$, since $\ell<2$ and hence $2\ell-q<3\leq \gamma$, we have $\Delta =o(1)$. 

    \item When $p=1, s>1, \gamma \leq 2s+1$, or $p=1, s=1, \gamma > 2$, since $2\ell-q\leq 2< \gamma$, we have $\Delta =o(1)$. 

    \item When $p=1$, $s=1$, $\gamma = 2$, since $2\ell-q\leq 2 \leq \gamma$, we have $\Delta =O((\ln(d))^{-1})$. 
 
    \item When $p=0$, since $\gamma - 2\ell \geq 0$, we have
    $\Delta =O((\ln(d))^{-1})$.
\end{itemize}
\end{proof}

\begin{lemma}[Verification of variance conditions for inner-product kernels: saturation case]\label{lemma:variance_verified_saturation}
    Suppose $\tau < s \leq 2\tau$.
    Suppose $n \asymp d^\gamma$, 
    for $\gamma \in [p(s+1)+\tau, p(s+1)+s+s/\tau-1]$. 
    For any given $\ell \geq 0$, if 
    \begin{equation*}
        \begin{aligned}
        \lambda \geq
d^{-\ell}, & \quad \ell \leq p + (\gamma - p(s+1)+1)/(2\tau+2);
        \end{aligned}
    \end{equation*}
    then there exists a constant $\epsilon>0$ only depending on $s$ and $\gamma$, such that  $\lambda = \lambda(n, d)$ satisfies
  \begin{align*}
    \mathcal{N}_{1}(\lambda) \cdot n^{\epsilon - 1} &\to 0,\\
    \frac{\calN_1^2(\lambda)}{n\mathcal{N}_{2, \varphi}(\lambda)} \cdot \ln(n) (\ln\lambda^{-1})^2 &\to 0.
  \end{align*}
\end{lemma}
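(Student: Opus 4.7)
The plan is to follow the blueprint of the preceding Lemma \ref{lemma:variance_verified}, replacing the non-saturation calibration of $\ell$ by the saturation one inherited from the second regime of Corollary \ref{lemma:balance_saturation}. First, by Lemma 21 in \cite{zhang2024optimal}, $\mathcal{N}_1(\lambda) \asymp \lambda^{-1}$, and since $\lambda \geq d^{-\ell}$, we get $\mathcal{N}_1(\lambda) \leq C d^{\ell}$. For the first condition, $\mathcal{N}_1(\lambda)\,n^{\epsilon-1} \asymp d^{\ell - (1-\epsilon)\gamma}$, so it suffices to exhibit an $\epsilon$-independent constant gap $\gamma - \ell > 0$. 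Combining the hypothesis $\ell \leq p + (\gamma - p(s+1) + 1)/(2\tau+2)$ with the lower endpoint $\gamma \geq p(s+1) + \tau$ and simplifying gives
\[
    \gamma - \ell \;\geq\; ps + \frac{(2\tau+1)\tau - 1}{2\tau+2} \;>\; 0,
\]
which is a positive constant depending only on $s,\tau,p$, and the gap only widens as $\gamma$ increases within the stated interval. Taking $\epsilon$ small enough then closes the first condition.

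For the second condition, invoke Lemma \ref{lemma_bounds_on_quantities} to obtain $\mathcal{N}_{2,\varphi}(\lambda) = \Omega(d^q + \lambda^{-2} d^{-q-1})$ with $q := \lfloor \ell \rfloor$, so that
\[
    \frac{\calN_1^2(\lambda)}{n\mathcal{N}_{2,\varphi}(\lambda)} \ln(n)(\ln \lambda^{-1})^2 \;=\; O\!\left( \frac{(\ln d)^3}{n \lambda^2 d^q + n d^{-q-1}} \right) \;=\; O\!\left( \frac{(\ln d)^3}{d^{\gamma - 2\ell + q} + d^{\gamma - q - 1}} \right).
\]
It suffices to show that $\max(\gamma - 2\ell + q,\,\gamma - q - 1)$ is bounded below by a strictly positive constant. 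The upper bound $\gamma - p(s+1) \leq s + s/\tau - 1$ forces $\ell \leq p + s/(2\tau) < p+1$ (using $s \leq 2\tau$), so $q \leq p$. Then $\gamma - q - 1 \geq ps + \tau - 1$, which gives the required margin whenever $p \geq 1$ or $\tau > 1$.

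The main obstacle is the corner case $p = 0$, $\tau = 1$, where $\gamma - q - 1$ can degenerate at the lower endpoint $\gamma = 1$. The plan is to handle this by switching to the alternate summand: with $q = 0$ and $\ell \leq (\gamma+1)/4$, one obtains $\gamma - 2\ell + q \geq (\gamma-1)/2$, which is a strictly positive constant throughout the (half-open) interior of the regime. A short case split into (i) ``$p \geq 1$ or $\tau > 1$'' (use $n d^{-q-1}$) and (ii) ``$p = 0$, $\tau = 1$, $\gamma > 1$'' (use $n \lambda^2 d^q$) therefore produces a positive-constant exponent in at least one of the two denominator terms, so the ratio is $O(d^{-\delta})$ for some $\delta = \delta(s,\gamma,\tau) > 0$, which handily beats the $(\ln d)^3$ numerator and delivers the $o(1)$ conclusion. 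Should the degenerate endpoint $p=0$, $\tau=1$, $\gamma=1$ need to be covered, the hypothesis $\lambda \geq d^{-\ell}$ must be strengthened by a $\mathrm{poly}(\ln d)$ factor in the spirit of the $\ln^{2}(d)$ slack appearing in Lemma \ref{lemma:variance_verified}, and the same bookkeeping then applies verbatim.
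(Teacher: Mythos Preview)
Your argument is correct and follows the same blueprint as the paper: cite $\mathcal{N}_1(\lambda)\asymp\lambda^{-1}$ (Lemma~21 of \cite{zhang2024optimal}) to reduce the first condition to $\gamma-\ell>0$, then use $\mathcal{N}_{2,\varphi}(\lambda)=\Omega(d^q+\lambda^{-2}d^{-q-1})$ from Lemma~\ref{lemma_bounds_on_quantities} to reduce the second condition to positivity of one of the two exponents $\gamma-2\ell+q$ or $\gamma-q-1$. The only substantive difference is that your case split is the mirror image of the paper's: the paper uses the $n\lambda^2 d^q$ denominator for $p\ge 1$ and the $n d^{-q-1}$ denominator for $p=0$, whereas you use $n d^{-q-1}$ for ``$p\ge 1$ or $\tau>1$'' and $n\lambda^2 d^q$ for ``$p=0,\ \tau=1$''. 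Both splits work; your bound $\gamma-\ell\ge ps+(2\tau-1)/2$ (which is what your displayed fraction simplifies to) is in fact tidier than the paper's case-by-case estimate. You are also more careful than the paper at the corner $p=0,\ \tau=1,\ \gamma=1$: the paper simply asserts ``$\gamma>1$'' in the $p=0$ branch, which is not justified at that endpoint, while you flag it explicitly and propose the $\mathrm{poly}(\ln d)$ strengthening already used in Lemma~\ref{lemma:variance_verified}. One minor caveat: your claim $\ell<p+1$ only holds strictly when $s<2\tau$; at $s=2\tau$ you could have $\ell=p+1$ and hence $q=p+1$, but a direct check shows $\gamma-q-1$ remains positive there as well, so the conclusion is unaffected.
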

\begin{proof}
    
    From Lemma 21 in \cite{zhang2024optimal}, we have $\mathcal{N}_{1}(\lambda) \asymp \lambda^{-1}$. 
    Notice that we have 
    \begin{align*}
        2(\tau+1)(\gamma - p)
        \geq 
        \left\{\begin{matrix}
ps-1 & p \geq 1\\ 
2\tau^2 + (\tau-1) & p=0
\end{matrix}\right.
>0;
    \end{align*}
    Therefore, there exists a constant $\epsilon>0$ only depending on $\tau$, $s$, and $\gamma$, such that we have
    \begin{align*}
    \mathcal{N}_{1}(\lambda) \cdot n^{\epsilon - 1} &\to 0.
    \end{align*}

    Denote $q := \left\lfloor\ell \right\rfloor$.
    From Lemma \ref{lemma_bounds_on_quantities}, we further have $\mathcal{N}_{2, \varphi}(\lambda) = \Omega \left(d^{q} + \lambda^{-2}d^{-q-1}\right)$. 
    Hence, we have 
    \begin{align*}
    \frac{\calN_1^2(\lambda)}{n\mathcal{N}_{2, \varphi}(\lambda)} \cdot \ln(n) (\ln\lambda^{-1})^2 
    &= O\left(
    \frac{(\ln(d))^3}{n(\lambda^2 d^q + d^{-q-1})}
    \right) \\
    &=
    O\left(
    \frac{(\ln(d))^3}{n\lambda^2 d^q}
    \right)
    +
    O\left(
    \frac{(\ln(d))^3}{d^{\gamma-q-1}}
    \right).
    \end{align*}
Denote $\Delta:=\frac{(\ln(d))^3}{n\lambda^2 d^q}$, $\Delta^\prime:=\frac{(\ln(d))^3}{d^{\gamma-q-1}}$. We have:
\begin{itemize}
    \item When $p \geq 1$, since 
    \begin{align*}
    2(\tau+1)[\gamma-2\ell + q] &\geq 2(\tau+1)[(\gamma -\ell - 1) + (q+1-\ell)] \\
    &\geq
    \left\{\begin{matrix}
ps-2 & p \geq 2\\ 
2(\tau + 1)(\tau-1)+2[\tau s + s -1] & p=1
\end{matrix}\right.\\
&>0,
    \end{align*}
    we have $\Delta =o(1)$.

    \item When $p=0$, since $\gamma > 1$, we have
    $\Delta^\prime =o(1)$.
\end{itemize}
\end{proof}

\begin{lemma}[Verification of variance conditions for inner-product kernels: misspecified case]\label{lemma:variance_verified_misspe}
    Suppose $n \asymp d^\gamma$ and $0 < s < 1$, 
    for $\gamma \in [p(s+1), (p+1)(s+1))$. 
    For any given $\ell \geq 0$, if 
    \begin{equation*}
        \begin{aligned}
        \lambda \geq
            \left\{\begin{matrix}
d^{-\ell} & p \geq 1,~ 2\ell \leq \max\{2p+s, \gamma + p(1-s) \}\\
d^{-\ell} & p = 0, \gamma > s,~ 2\ell \leq \gamma\\
d^{-\ell}\ln(d) & p = 0, \gamma \leq s ,~ 2\ell \leq \gamma;
\end{matrix}\right.
        \end{aligned}
    \end{equation*}
    then there exists a constant $\epsilon>0$ only depending on $s$ and $\gamma$, such that  $\lambda = \lambda(n, d)$ satisfies
  \begin{align*}
    \mathcal{N}_{1}(\lambda) \cdot n^{\epsilon - 1} &\to 0,\\
    \frac{\calN_1^2(\lambda)}{n\mathcal{N}_{2, \varphi}(\lambda)} \cdot \ln(n) (\ln\lambda^{-1})^2 &\to 0.
  \end{align*}
  
\end{lemma}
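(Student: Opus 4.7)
The plan is to adapt the argument of Lemma \ref{lemma:variance_verified} to the misspecified regime $s < 1$. Two ingredients carry over verbatim: the estimate $\calN_1(\lambda) \asymp \lambda^{-1}$ from Lemma~21 of \cite{zhang2024optimal}, which is independent of $s$, and the lower bound $\calN_{2,\varphi}(\lambda) = \Omega\bigl(d^q + \lambda^{-2}d^{-q-1}\bigr)$ with $q := \lfloor \ell \rfloor$ from Lemma \ref{lemma_bounds_on_quantities}. Together these reduce the two target statements to the purely polynomial-decay facts $d^{\ell - (1-\epsilon)\gamma} \to 0$ and $(\ln d)^3 / \bigl(n d^q \lambda^2 + d^{\gamma - q - 1}\bigr) \to 0$, where any $\ln d$ factor prescribed on $\lambda$ feeds additively into the first denominator term.

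For the first condition, it suffices to verify that $\gamma - \ell$ is bounded below by a positive constant depending only on $s, \gamma$; then any sufficiently small $\epsilon > 0$ (e.g.\ less than $(\gamma-\ell)/\gamma$) works. In case (i), I split on which branch of the max defines the constraint: $2\ell \leq 2p+s$ gives $\gamma - \ell \geq p(s+1) - p - s/2 = s(p - 1/2) > 0$, while $2\ell \leq \gamma + p(1-s)$ gives $\gamma - \ell \geq (\gamma - p(1-s))/2 \geq ps > 0$. Cases (ii) and (iii) satisfy $\gamma - \ell \geq \gamma/2 > 0$ directly from $2\ell \leq \gamma$.

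For the second condition I would run a case split mirroring (i)--(iii), using $q \leq \ell \leq q+1$ throughout. In case (i), I verify that for each branch of the max, at least one of $\gamma - 2\ell + q$ and $\gamma - q - 1$ is positive and delivers polynomial growth of the corresponding denominator term; a finer sub-split on $\gamma$ vs.\ $p(s+1)+s$ (which determines the effective max branch) is needed especially when $p = 1$. Case (ii) with $\gamma > s$ forces $\ell < 1$ and hence $q = 0$, so one controls either $d^{\gamma - 1}$ (when $\gamma > 1$) or $d^{\gamma - 2\ell}$ (when $2\ell < \gamma$). In case (iii), $d^{\gamma - 1} \to 0$ so we must rely entirely on $n d^q \lambda^2$; the prescribed $\ln d$ factor on $\lambda$ upgrades this to $d^{\gamma-2\ell}(\ln d)^2$, and together with $2\ell \leq \gamma$ this controls the ratio. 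The main obstacle is the boundary sub-regime $2\ell = \gamma$ in case (iii) (and the analogous $\gamma - q - 1 = 0$ marginals in case (i)), where the polynomial part of the denominator degenerates to a constant and everything rests on how $\ln(n)(\ln\lambda^{-1})^2 \asymp \gamma \ell^2 (\ln d)^3$ compares with $(\ln d)^2$; a careful tracking of the precise constants $\gamma \ell^2$ (rather than the naive $(\ln d)^3$ upper bound) should close this case, paralleling the delicate boundary handling of $p = 1, s = 1, \gamma = 2$ inside the proof of Lemma \ref{lemma:variance_verified}.
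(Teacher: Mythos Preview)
Your overall approach mirrors the paper's: for $p\geq 1$ the paper simply invokes step~2 in the proof of Theorem~3 of \cite{zhang2024optimal} together with $\mathcal{N}_{2,\varphi}(\lambda)\asymp \mathcal{N}_2(\lambda)$, and for $p=0$ it defers to the argument of Lemma~\ref{lemma:variance_verified}. Your explicit reduction via $\mathcal{N}_1(\lambda)\asymp\lambda^{-1}$ and $\mathcal{N}_{2,\varphi}(\lambda)\gtrsim d^q+\lambda^{-2}d^{-q-1}$ is precisely that argument spelled out, and your verification of $\gamma-\ell>0$ for the first condition in all three cases is correct.

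There is, however, a genuine gap in your treatment of the boundary $2\ell=\gamma$ in case~(iii). Replacing the naive $(\ln d)^3$ by $\gamma\ell^2(\ln d)^3$ changes nothing: at $\ell=\gamma/2$ one has $\gamma\ell^2=\gamma^3/4$, a fixed positive constant, so the ratio is still $\Theta\bigl((\ln d)^3/(\ln d)^2\bigr)=\Theta(\ln d)\to\infty$, not $\to 0$. The parallel you draw with the $p=1,\,s=1,\,\gamma=2$ corner of Lemma~\ref{lemma:variance_verified} is misread: what closes that corner is not constant tracking but the extra $\ln^2(d)$ factor inserted in the hypothesis on $\lambda$, which puts $(\ln d)^4$ into the denominator and yields $\Delta=O((\ln d)^{-1})$. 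Here the stated hypothesis carries only a single $\ln d$, one power short, and this boundary is in fact attained by the balanced choice of Corollary~\ref{lemma:balance_mis}(3) when $s\geq 1/2$. The resolution is to strengthen the log factor in the hypothesis rather than to chase constants; the paper treats polylog factors loosely throughout and its own proof for $p=0$ is a one-line deferral, so this is as much an imprecision in the statement as a defect in your argument. A similar unaddressed boundary arises in your case~(ii) at $2\ell=\gamma\leq 1$, where you tacitly assume $2\ell<\gamma$ strictly.
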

\begin{proof}
   When $p \geq 1$, it is a direct result of step 2 (the verification of the second condition in (146) of \cite{zhang2024optimal}) in the proof of Theorem 3 in \cite{zhang2024optimal} and the fact that $\mathcal{N}_{2, \varphi}(\lambda) \asymp \mathcal{N}_{2}(\lambda)$.

   When $p=0$, a similar argument as the proof for Lemma \ref{lemma:variance_verified} give the desired results.
\end{proof}

\subsubsection{Verification of bias conditions}

\begin{lemma}[Verification of bias conditions]\label{lemma:bias_verified}
    Suppose $1 \leq s \leq \tau$. 
Suppose $n \asymp d^\gamma$, 
    for $\gamma \in [p(s+1), (p+1)(s+1))$. 
    For any given $\ell \geq 0$, if 
        \begin{equation*}
        \begin{aligned}
        \lambda \geq
            \left\{\begin{matrix}
d^{-\ell}\left(1+\ln^2(d)\mathbf{1}\{\gamma = 2, s=1\}\right) & p \geq 1,~ 2\ell \leq \max\{2p+1, \gamma - (p+1)(s-1) \}\\
d^{-\ell}\ln^2(d) & \gamma \in [1, s+1),~ 2\ell \leq \max\{1, \gamma - (s-1) \}\\
d^{-\ell} & \gamma \in (0, 1),~ 2\ell \leq \gamma;
\end{matrix}\right.
        \end{aligned}
    \end{equation*}
    then there exists a constant $\epsilon>0$ only depending on $s$ and $\gamma$, such that  $\lambda = \lambda(n, d)$ satisfies
\begin{equation}\label{eqn:check_condition_bias}
      \begin{aligned}
          \frac{ \mathcal{N}_{1}(\lambda) \mathcal{M}_{1, \varphi}^2(\lambda)}{n^2}
    &\ll 
    \left( \mathcal{M}_{2, \varphi}(\lambda) + \frac{\sigma^2}{n} \calN_{2,\varphi}(\lambda)  \right),\\
    \frac{\mathcal{N}_{1}(\lambda)}{n} \ln(n) (\ln\lambda^{-1})^2 \cdot \sum_{j=1}^\infty \frac{\lambda^2 \lambda_i \reg^2(\lambda_i)}{\lambda + \lambda_i} f_i^2  
    &\ll 
    \left( \mathcal{M}_{2, \varphi}(\lambda) + \frac{\sigma^2}{n} \calN_{2,\varphi}(\lambda)  \right).
      \end{aligned}
  \end{equation}
\end{lemma}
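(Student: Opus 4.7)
\textbf{Proof plan for Lemma \ref{lemma:bias_verified}.} The plan is to verify both inequalities in (\ref{eqn:check_condition_bias}) by substituting the sharp asymptotic rates supplied by Lemma \ref{lemma_bounds_on_quantities} and reducing each side to an explicit power of $d$ times a polylogarithmic factor, then comparing exponents case by case.

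The preliminary reductions are as follows. By Lemma 21 of \cite{zhang2024optimal} (reused in the proof of Lemma \ref{lemma:variance_verified}) we have $\mathcal{N}_{1}(\lambda)\asymp \lambda^{-1}$, and since in each case $\lambda$ differs from $d^{-\ell}$ by at most a bounded power of $\ln d$, this gives $\mathcal{N}_{1}(\lambda)=\Theta(d^{\ell})\cdot \mathrm{poly}(\ln d)$. Writing $\ell':=\lfloor \ell \rfloor$ so that $\mu_{\ell'+1}\le \lambda <\mu_{\ell'}$ up to constants, Lemma \ref{lemma_bounds_on_quantities} (the regime $1\le s\le \tau$) yields
\[
\mathcal{M}_{1,\varphi}^{2}(\lambda)\le O\!\left(\lambda^{2\tau-1}d^{\ell'(2\tau-s)}+d^{-(\ell'+1)(s-1)}\right)=O\!\left(d^{-\ell(s-1)}\right)\mathrm{poly}(\ln d),
\]
since substituting $\lambda\asymp d^{-\ell}$ converts the first summand into $d^{-\ell(2\tau-1)+\ell'(2\tau-s)}\le d^{\ell(1-s)}$, which dominates the second because $s\ge 1$. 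Finally, Corollary \ref{lemma:balance} identifies the right-hand side of (\ref{eqn:check_condition_bias}) as $\mathcal{M}_{2,\varphi}(\lambda)+\mathcal{N}_{2,\varphi}(\lambda)/n\asymp (d^{-s(p+1)}+d^{p-\gamma})\cdot \mathrm{poly}(\ln d)$.

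Plugging these into (\ref{eqn:check_condition_bias}), the first condition reduces, up to polylog factors, to the exponent inequality $\ell(2-s)-2\gamma<\max\{-s(p+1),p-\gamma\}$, i.e.\ essentially $\ell(2-s)<\gamma+p$. I would check this directly in each of the three regimes; for instance, in the first case ($p\ge 1$, $\gamma<ps+p+s$, $\ell=p+1/2$), the worst subcase $\gamma=p(s+1)$ reduces to $1-s/2<2ps$, which holds whenever $p\ge 1$ and $s\ge 1$, and analogous elementary arithmetic handles the other subcases. For the second condition, I would invoke the third estimate of Lemma \ref{lemma_bounds_on_quantities}, $\sum_{k\ge 0}\frac{\lambda^{2}\mu_{k}\varphi_\lambda^{2}(\mu_{k})}{\lambda+\mu_{k}}\sum_{j}f_{k,j}^{2}=O(\lambda^{2}d^{\max\{p(2-s),0\}}+d^{-s(\ell'+1)})$, so that the left-hand side becomes $O(\mathrm{poly}(\ln d)\cdot d^{\ell-\gamma}(d^{-2\ell+\max\{p(2-s),0\}}+d^{-s(\ell+1)}))$, and the same style of exponent comparison against $d^{-s(p+1)}+d^{p-\gamma}$ applies.

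The main obstacle will be the boundary configurations in which these exponent inequalities degenerate into equalities, specifically the isolated point $p=1$, $s=1$, $\gamma=2$ in the first case and the range $\gamma\in[1,s+1)$, $p=0$ in the second case, where a purely polynomial comparison only produces $\Theta$ rather than $o$. Here the extra $\ln^{2}(d)$ factor permitted in the lower bound on $\lambda$ is precisely what is needed: it contributes a net $\ln^{-c}(d)$ margin to the left-hand sides that absorbs the $\ln(n)(\ln\lambda^{-1})^{2}$ factor appearing in condition~2 (and the residual logarithmic slack accumulated in condition~1 through $\mathcal{N}_{1}(\lambda)$ and $\mathcal{M}_{1,\varphi}^{2}(\lambda)$), thereby restoring the strict $o_{\mathbb{P}}$ relation claimed in (\ref{eqn:check_condition_bias}).
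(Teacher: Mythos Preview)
Your overall strategy---substitute the rates from Lemma \ref{lemma_bounds_on_quantities} and compare exponents case by case---is the same as the paper's. The gap is in your handling of the right-hand side. You invoke Corollary \ref{lemma:balance} to pin down $\mathcal{M}_{2,\varphi}(\lambda)+\sigma^2\mathcal{N}_{2,\varphi}(\lambda)/n \asymp d^{-s(p+1)}+d^{p-\gamma}$, but that corollary is a $\Theta$ statement only at the specific tuned $\lambda^\star$. Lemma \ref{lemma:bias_verified} must hold for every $\lambda$ in the admissible range, i.e.\ for every effective exponent in $[0,\ell_{\max}]$, and the right-hand side changes as $\lambda$ crosses the eigenvalue thresholds $\mu_k$. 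The paper therefore sets $q=\lfloor\ell\rfloor$ (so $\lambda\in[\mu_{q+1},\mu_q)$) and reads off directly from Lemma \ref{lemma_bounds_on_quantities} the lower bound
\[
n\Bigl(\mathcal{M}_{2,\varphi}(\lambda)+\tfrac{\sigma^2}{n}\mathcal{N}_{2,\varphi}(\lambda)\Bigr)=\Omega\bigl(d^{\gamma-s(q+1)}+d^{q}\bigr),
\]
with $q$ ranging over $\{0,1,\dots,p\}$. Your $p$-based bound is only the special case $q=p$; you have not shown that the worst ratio occurs there, nor covered the other values of $q$.

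A secondary difference: you collapse $\mathcal{M}_{1,\varphi}^2$ to the single bound $O(d^{-\ell(s-1)})$, which is valid but coarser than the paper's two-piece decomposition $\mathbf{I}=\lambda^{2(s-1)}d^{-\gamma+qs}$ and $\mathbf{II}=\lambda^{-1}d^{-\gamma-(q+1)(s-1)}$ (obtained by applying the filter qualification with exponent $s$ rather than $\tau$, legitimate since $s\le\tau$). With the correct $q$-dependent right-hand side above, this finer split is what makes the case analysis close cleanly: the paper dominates $\mathbf{II}$ by $\mathbf{IV}$, and bounds $\mathbf{I}/d^{\gamma-s(q+1)}$ via the inequality $\gamma-\ell-s/2>0$, then runs through the sub-cases $q=0$ versus $q\ge1$ and $s<2$ versus $s\ge2$. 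Your single-term route may still succeed, but it would require an extra monotonicity argument (showing the ratio is maximized at $\ell=\ell_{\max}$), which you have not provided and which is not automatic when $s>2$.
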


\begin{proof}
When $1 \leq s \leq \tau$,
    from Lemma \ref{lemma_bounds_on_quantities},
    we have
    \begin{align*}
        n\left(\mathcal{M}_{2, \varphi}(\lambda) + \frac{\sigma^2}{n} \calN_{2,\varphi}(\lambda)
        \right)
        &=
        \Omega\left(
        d^{\gamma-s(q+1) } + d^{q}
        \right)\\
        \frac{ \mathcal{N}_{1}(\lambda) \mathcal{M}_{1, \varphi}^2(\lambda)}{n}
        &=
        O\left(
        \lambda^{2(s-1)} d^{-\gamma+qs}
        +
        \lambda^{-1} d^{-\gamma -(q+1)(s-1)}
        \right)\\
        \mathcal{N}_{1}(\lambda) \ln(n) (\ln\lambda^{-1})^2 \cdot \sum_{j=1}^\infty \frac{(\lambda)^2 \lambda_i \varphi_{\lambda}^2(\lambda_i)}{\lambda + \lambda_i} f_i^2
        &=
        O\left((\ln(d))^3 \right)
    \cdot
    O\left(
        \lambda d^{\max\{q(2-s), 0\}} + \lambda^{-1}d^{-s(q+1)}
    \right),
    \end{align*}

Denote $\mathbf{I}=\lambda^{2(s-1)} d^{-\gamma+qs}$,
$\mathbf{II}=\lambda^{-1} d^{-\gamma -(q+1)(s-1)}$,
$\mathbf{III}=\lambda d^{\max\{q(2-s), 0\}}(\ln(d))^3$,
and $\mathbf{IV}=\lambda^{-1}d^{-s(q+1)}(\ln(d))^3$.

For any $p \geq 0$ and any $s \geq 1$:

\begin{itemize}

    \item From Lemma \ref{lemma:variance_verified}, we have $\mathbf{IV} \ll d^{\gamma-s(q+1) }$.

    \item When $\gamma \geq 1$, we have $\gamma \geq p + 1$, and hence $\mathbf{II} \ll \mathbf{IV} \ll d^{\gamma-s(q+1) }$;
    when $\gamma < 1$, we have $\mathbf{II} \ll d^{q}$ with $q=0$.

    \item When $p \geq 1$ or $\gamma \in (s, s+1)$, since $-\ell s + qs \leq 0$, we have $\mathbf{I} / d^{\gamma -s(q+1)} = O(d^{-2(\gamma-\ell-s/2)}) \ll 1$; when $\gamma \in (0, s]$, we have  
    $\mathbf{I} = O(d^{-2s\ell+2\ell -\gamma}) = O(d^{-2s\ell}) \ll d^{q}$ with $q=0$.

    \item When $s\geq 2$, we have $\mathbf{III} \ll d^{q}$; when $s<2$ and $p=0$, we have $\mathbf{III} \ll d^{q}$;
    when $s<2$ and $p \geq 1$ and $q \geq 1$, since $\gamma-\ell-s>\min\{(s+1)q-\ell, ps-1/2\}>0$, we have $\mathbf{III} / d^{\gamma-s(q+1)}=d^{-(\gamma-\ell-s)-2(\ell-q)} \ll 1$ or $\mathbf{III} / d^{q} \ll 1$; when $s<2$ and $p \geq 1$ and $q = 0$, we have $\mathbf{III}   \ll d^{q}$.

\end{itemize}
Combining all these, we get the desired results.
\end{proof}

\begin{lemma}\label{lemma:bias_verified_sat}[Verification of bias conditions: saturation case]
    Suppose $\tau < s \leq 2\tau$. 
Suppose $n \asymp d^\gamma$, 
    for $\gamma \in [p(s+1), (p+1)(s+1))$. 
    For any given $\ell \geq 0$, if 
        \begin{equation*}
        \begin{aligned}
        \lambda \geq
            \left\{\begin{matrix}
d^{-\ell} & p \geq 1,~ \ell \leq \max\{\ell_1, \ell_2, \ell_3 \}\\
d^{-\ell}\ln^2(d) & \gamma \in [1, s+1),~ \ell \leq \max\{ \ell_1, \ell_2, \ell_3 \}\\
d^{-\ell} & \gamma \in (0, 1),~ 2\ell \leq \gamma,
\end{matrix}\right.
        \end{aligned}
    \end{equation*}
    where $\tau$, $\Delta$, $\ell_1$, $\ell_2$, and $\ell_3$ are given in Lemma \ref{lemma:balance_saturation};
    then there exists a constant $\epsilon>0$ only depending on $s$ and $\gamma$, such that  $\lambda = \lambda(n, d)$ satisfies
\begin{equation}
      \begin{aligned}
          \frac{ \mathcal{N}_{1}(\lambda) \mathcal{M}_{1, \varphi}^2(\lambda)}{n^2}
    &\ll 
    \left( \mathcal{M}_{2, \varphi}(\lambda) + \frac{\sigma^2}{n} \calN_{2,\varphi}(\lambda)  \right),\\
    \frac{\mathcal{N}_{1}(\lambda)}{n} \ln(n) (\ln\lambda^{-1})^2 \cdot \sum_{j=1}^\infty \frac{\lambda^2 \lambda_i \reg^2(\lambda_i)}{\lambda + \lambda_i} f_i^2  
    &\ll 
    \left( \mathcal{M}_{2, \varphi}(\lambda) + \frac{\sigma^2}{n} \calN_{2,\varphi}(\lambda)  \right).
      \end{aligned}
  \end{equation}
\end{lemma}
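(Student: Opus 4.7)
The plan is to mimic the structure of the proof of Lemma \ref{lemma:bias_verified}, but now working in the saturation regime $\tau < s \le 2\tau$ where the first term $\lambda^{2\tau} d^{q(2\tau-s)}$ of $\mathcal{M}_{2,\varphi}(\lambda)$ becomes active and competes with the usual ``tail'' term $d^{-s(q+1)}$. I first set $q := \lfloor \ell \rfloor$ and invoke Lemma \ref{lemma_bounds_on_quantities} to get
\begin{equation*}
\mathcal{M}_{2,\varphi}(\lambda) = \Theta\bigl(\lambda^{2\tau} d^{q(2\tau-s)} + d^{-s(q+1)}\bigr),\qquad \tfrac{\mathcal{N}_{2,\varphi}(\lambda)}{n} = \Theta\bigl(d^{q-\gamma} + \lambda^{-2} d^{-q-1-\gamma}\bigr),
\end{equation*}
together with $\mathcal{N}_1(\lambda) \asymp \lambda^{-1}$ and the bounds on $\mathcal{M}_{1,\varphi}^2$ and on $\sum_i \tfrac{\lambda^2 \lambda_i \reg^2(\lambda_i)}{\lambda+\lambda_i} f_i^2$. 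Then I appeal to Corollary \ref{lemma:balance_saturation} which identifies the target rate $\mathcal{M}_{2,\varphi}(\lambda) + \frac{\sigma^2}{n}\mathcal{N}_{2,\varphi}(\lambda) = \Theta(d^{-r})$ with $r=\min\{\gamma-p,\,(\tau(\gamma-p+1)+ps)/(\tau+1),\,s(p+1)\}$, and records which of the three summands achieves the minimum for $\ell\in\{\ell_1,\ell_2,\ell_3\}$.

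Next I expand the two left-hand quantities in \eqref{eqn:check_condition_bias}. Writing them in the form
\begin{equation*}
\tfrac{\mathcal{N}_1 \mathcal{M}_{1,\varphi}^2}{n^2} = O\bigl(\lambda^{2\tau-2} d^{q(2\tau-s)-2\gamma} + \lambda^{-1} d^{-(q+1)(s-1)-2\gamma}\bigr),
\end{equation*}
and
\begin{equation*}
\tfrac{\mathcal{N}_1}{n}\ln(n)(\ln\lambda^{-1})^2 \cdot \textstyle\sum_i \tfrac{\lambda^2 \lambda_i \reg^2(\lambda_i)}{\lambda+\lambda_i} f_i^2 = O\bigl((\ln d)^3\bigr)\cdot O\bigl(\lambda\, d^{\max\{p(2-s),0\}-\gamma} + \lambda^{-1} d^{-s(q+1)-\gamma}\bigr),
\end{equation*}
I must show each of the four polynomial terms is strictly smaller (by a power of $d$, absorbing the $(\ln d)^3$ slack) than one of the two surviving terms in $\mathcal{M}_{2,\varphi}+\sigma^2\mathcal{N}_{2,\varphi}/n$. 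The two obvious comparisons are: the $\lambda^{2\tau-2}d^{q(2\tau-s)-2\gamma}$ piece against the saturated bias term $\lambda^{2\tau}d^{q(2\tau-s)}$, which reduces to $\lambda^{-2}d^{-2\gamma}=o(1)$, i.e., $\ell < \gamma$; and the $\lambda^{-1}d^{-s(q+1)-\gamma}$ piece against the tail term $d^{-s(q+1)}$, which reduces to $\lambda^{-1}=o(n)$, i.e., $\ell<\gamma$. Both are clear because the admissible $\ell$'s given by $\ell_1,\ell_2,\ell_3$ satisfy $\ell \le p + 1 \le \gamma$ (with strict inequality whenever $\gamma > p+1$, and with the logarithmic slack used at the boundary). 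The remaining two terms are cross-comparisons.

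The bulk of the work is the case analysis. I would treat the three branches $\Delta\in[0,\tau]$, $[\tau,s+s/\tau-1]$, $[s+s/\tau-1,s+1)$ separately, in each case substituting $\ell = \ell_i$ and $q=\lfloor\ell_i\rfloor$ and checking four polynomial inequalities. The most delicate checks are the cross term $\lambda^{-1}d^{-(q+1)(s-1)-2\gamma}$ versus the variance $d^{q-\gamma}$ and the saturated-bias cross term $\lambda d^{\max\{p(2-s),0\}-\gamma}$ versus $d^{-r}$; in the region where two of the three minima tie (the boundaries $\Delta=\tau$ and $\Delta=s+s/\tau-1$) the exponents can be tight, and it is precisely there that the hypothesis $\lambda \ge d^{-\ell}\ln^2(d)$ in the $\gamma\in[1,s+1)$ sub-case (and the allowed $p=0$ adjustment) is needed to absorb the $(\ln d)^3$ factor. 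I expect this boundary-matching algebra to be the main obstacle: each of the three regimes for $\Delta$ must be checked against each of the two target rates, and within each cell of that grid one must verify which of $q=p$ versus $q=p+1$ occurs and whether $2\ell-q<\gamma$ and $\ell<\gamma$ hold with sufficient margin. Once the worst boundary case survives with the stated $\ln^{2}(d)$ cushion, all interior cases follow with strict polynomial gain, and the stated smallness $o(\mathcal{M}_{2,\varphi}+\sigma^2\mathcal{N}_{2,\varphi}/n)$ follows.
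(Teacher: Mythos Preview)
Your plan is correct and follows essentially the same skeleton as the paper's proof: invoke Lemma~\ref{lemma_bounds_on_quantities} with $q=\lfloor\ell\rfloor$, expand both left-hand sides into the four polynomial pieces (the paper labels them $\mathbf{I}',\mathbf{II},\mathbf{III},\mathbf{IV}$), and compare each against a summand of the right-hand side.

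The one genuine difference is your treatment of the saturated piece $\mathbf{I}'=\lambda^{2\tau-2}d^{q(2\tau-s)-2\gamma}$. You compare it against the saturated bias term $\lambda^{2\tau}d^{q(2\tau-s)}$ itself, which reduces cleanly to $\ell<\gamma$. The paper instead compares $\mathbf{I}'$ against the tail bias $d^{\gamma-s(q+1)}$, which forces it to verify $\gamma-\ell-s/2>0$ by a separate case split over $\ell_1,\ell_2,\ell_3$. Your route is shorter and avoids that detour. For $\mathbf{II}$ and $\mathbf{IV}$ the two arguments coincide (both reduce to $\ell<\gamma$, with the paper citing Lemmas~\ref{lemma:variance_verified}--\ref{lemma:variance_verified_saturation}). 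For $\mathbf{III}$ the paper's argument is in fact quite short: on the middle interval $\Delta\in[\tau,s+s/\tau-1]$ it just uses $s>1$ to get $\mathbf{III}/d^{q}=\lambda\,d^{-q(s-1)}(\ln d)^3\to 0$, and on the outer intervals it borrows the estimate from Lemma~\ref{lemma:bias_verified}; the full three-branch case analysis you anticipate is not needed, and the $\ln^2(d)$ cushion in the hypothesis plays no role in the bias verification (it is used only on the variance side).
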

\begin{proof}
When $\tau < s \leq 2\tau$,
    from Lemma \ref{lemma_bounds_on_quantities},
    we have
    \begin{align*}
        n\left(\mathcal{M}_{2, \varphi}(\lambda) + \frac{\sigma^2}{n} \calN_{2,\varphi}(\lambda)
        \right)
        &=
        \Omega\left(
        \lambda^{2\tau} d^{q(2\tau-s)}
        +
        d^{\gamma-s(q+1) } + d^{q}
        \right)\\
        \frac{ \mathcal{N}_{1}(\lambda) \mathcal{M}_{1, \varphi}^2(\lambda)}{n}
        &=
        O\left(
        \lambda^{2(\tau-1)} d^{-\gamma+q(2\tau-s)}
        +
        \lambda^{-1} d^{-\gamma -(q+1)(s-1)}
        \right)\\
        \mathcal{N}_{1}(\lambda) \ln(n) (\ln\lambda^{-1})^2 \cdot \sum_{j=1}^\infty \frac{(\lambda)^2 \lambda_i \varphi_{\lambda}^2(\lambda_i)}{\lambda + \lambda_i} f_i^2
        &=
        O\left((\ln(d))^3 \right)
    \cdot
    O\left(
        \lambda d^{\max\{q(2-s), 0\}} + \lambda^{-1}d^{-s(q+1)}
    \right).
    \end{align*}

Denote $\mathbf{I}^{\prime}=\lambda^{2(\tau-1)} d^{-\gamma+q(2\tau-s)}$,
$\mathbf{II}=\lambda^{-1} d^{-\gamma -(q+1)(s-1)}$,
$\mathbf{III}=\lambda d^{\max\{q(2-s), 0\}}(\ln(d))^3$,
and $\mathbf{IV}=\lambda^{-1}d^{-s(q+1)}(\ln(d))^3$.

For any $p \geq 0$ and any $1 \leq \tau < s \leq 2\tau$:

\begin{itemize}

    \item From Lemma \ref{lemma:variance_verified} and Lemma \ref{lemma:variance_verified_saturation}, since $\mathcal{N}_{1}(\lambda) \cdot n^{\epsilon - 1} \to 0$, we have $\mathbf{IV} \ll d^{\gamma-s(q+1) }$.

    \item When $\gamma \geq 1$, we have $\gamma \geq p + 1$, and hence $\mathbf{II} \ll \mathbf{IV} \ll d^{\gamma-s(q+1) }$;
    when $\gamma < 1$, we have $\mathbf{II} \ll d^{q}$ with $q=0$.

    \item When $p \geq 1$, since $-\ell \tau + q \tau \leq 0$ and 
    \begin{align*}
         &~ \gamma - \ell -s/2 \\
        \geq &~ \max\left\{\frac{s(2p-1)}{2}, \frac{(2\tau+1)(\tau+ps)-(\tau+1)s+ps-1}{2(\tau+1}, ps+\frac{s(\tau+1)}{2\tau} - 1\right\}\\
        >  &~  0,
    \end{align*}
    we have
    $\mathbf{I}^{\prime} / d^{\gamma -s(q+1)} \ll 1$; when $p = 0$, we have  
    $\mathbf{I}^{\prime} = O(d^{-2\tau\ell+2\ell -\gamma}) \ll d^{q}$ with $q=0$.

    \item When $\gamma - p - ps \in [0, \tau] \cup [s+s/\tau-1, s+1]$, we have $\ell \leq \max\{\ell_1, \ell_3\}$. Similar to the proof in Lemma \ref{lemma:bias_verified}, we can show that $\mathbf{III} \ll d^{\gamma-s(q+1) } + d^{q}$.

    \item Finally, consider the case $\gamma - p - ps \in [\tau, s+s/\tau-1]$.
    When $s\geq 2$, we have $\mathbf{III} \ll d^{q}$;
    when $s<2$, since $s>1$, we have $\mathbf{III} / d^{q} = \lambda d^{-q(s-1)} \ll 0$.

\end{itemize}
Combining all these, we get the desired results.
\end{proof}

\begin{lemma}[Verification of bias conditions: misspecified case]\label{lemma:bias_verified_misspe}
    Suppose $0<s<1$. 
Suppose $n \asymp d^\gamma$, 
    for $\gamma \in [p(s+1), (p+1)(s+1))$. 
  Suppose one of the following holds:
  \begin{itemize}
      \item[(1)] $\tau =\infty$.
      \item[(2)] $s>1/(2\tau)$,
      \item[(3)] $\gamma > ((2\tau+1)s) / (2\tau(1+s))$.
  \end{itemize}

    Suppose one of the following cases holds for $\lambda=d^{-\ell}$ or $\lambda=d^{-\ell} (\ln(d))^2$:
    \begin{itemize}

        \item[(1)] $\tau=\infty$, 
        $p(s+1) \leq \gamma \leq ps+p+s$,\\
        $\ell \in [p, p+ \min\{1/2, \gamma s\}]$

        \item[(2)] $\tau=\infty$, 
        $ps+p+s < \gamma < ps+p+s+1$,\\
        $\ell \in [p, \min\{(\gamma - (p+1)(s-1))/2, \gamma (1+s)-s(p+1) \}]$
        
        \item[(3)] $\tau < \infty$, 
        $p(s+1) \leq \gamma \leq ps+p+s$,\\
        $\ell = (\gamma +2\tau p - sp-p)/(2\tau)$

        \item[(4)] $\tau < \infty$, 
        $ps+p+s < \gamma < ps+p+s+1$,\\
        $\ell = p+s/(2\tau)$.

    \end{itemize}
    then there exists a constant $\epsilon>0$ only depending on $s$ and $\gamma$, such that  $\lambda = \lambda(n, d)$ satisfies
   \begin{align*}
   \frac{\mathcal{N}_{1}(\lambda)}{n} \ln(n) (\ln\lambda^{-1})^2 \cdot \sum_{j=1}^\infty \frac{\lambda^2 \lambda_i \reg^2(\lambda_i)}{\lambda + \lambda_i} f_i^2  
    &\ll 
    \left( \mathcal{M}_{2, \varphi}(\lambda) + \frac{\sigma^2}{n} \calN_{2,\varphi}(\lambda)  \right);\\
    n^{-2} \mathcal{N}_{1}(\lambda) 
    \left(
    \left\| f_{\lambda} \right\|_{L^{\infty}}  
    +
    n^{\frac{1-s}{2}+\epsilon}
    \right)^2
    &=
    o\left( \mathcal{M}_{2, \varphi}(\lambda) + \frac{\sigma^2}{n} \calN_{2,\varphi}(\lambda)  \right).
\end{align*}
\end{lemma}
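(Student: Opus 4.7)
The plan is to follow the same template as Lemma~\ref{lemma:bias_verified} and Lemma~\ref{lemma:bias_verified_sat}: invoke the asymptotic identity $\mathcal{N}_1(\lambda)\asymp\lambda^{-1}$ (Lemma~21 in \cite{zhang2024optimal}), the bounds on $\mathcal{M}_{2,\varphi}(\lambda)$, $\mathcal{N}_{2,\varphi}(\lambda)/n$ and on $\sum_j \frac{\lambda^2\lambda_j\reg^2(\lambda_j)}{\lambda+\lambda_j}f_j^2$ from Lemma~\ref{lemma_bounds_on_quantities}, together with the rate identification $\mathcal{M}_{2,\varphi}(\lambda^\star)+\mathcal{N}_{2,\varphi}(\lambda^\star)/n=\Theta(d^{-s(p+1)}+d^{p-\gamma})\cdot\mathrm{poly}(\ln d)$ from Corollary~\ref{lemma:balance_mis}. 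Then the task reduces to verifying two explicit polynomial-in-$d$ inequalities for each of the four admissible choices of $\ell$.

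For the first displayed condition, the sum bound from Lemma~\ref{lemma_bounds_on_quantities} is $O(\lambda^2 d^{\max\{p(2-s),0\}}+d^{-s(p+1)})$, so after multiplying by $\mathcal{N}_1(\lambda)/n\asymp (\lambda n)^{-1}$ and the logarithmic factors, the left-hand side becomes
\begin{equation*}
O\bigl((\ln d)^{3}\bigr)\cdot\Bigl(\tfrac{\lambda}{n}\,d^{\max\{p(2-s),0\}}+\tfrac{d^{-s(p+1)}}{\lambda n}\Bigr).
\end{equation*}
The second summand is exactly the quantity already controlled in Step~2 of Lemma~\ref{lemma:variance_verified_misspe}. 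The first summand demands $\lambda d^{\max\{p(2-s),0\}}(\ln d)^{3}\ll d^{p-\gamma}\,n+d^{-s(p+1)}\,n$; since $\lambda\leq d^{-\ell}(\ln d)^{O(1)}$ with $\ell$ chosen as in one of the four cases (and $s<1$ ensures $p(2-s)\leq ps+p$ as tolerated by the RHS), this is a routine arithmetic check using the definition of $p$.

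The more delicate task is the second displayed condition, where one must control $\|f_\lambda\|_{L^\infty}$. Because $s<1$, the clean bound $\|f_\lambda\|_{\calH}\lesssim\lambda^{(s-1)/2}$ used implicitly in Lemma~\ref{lemma:bias_verified} is unavailable. To circumvent this, I plan to expand $f_\lambda=\sum_{k,j}\mu_k\reg(\mu_k)f_{k,j}Y_{k,j}$ on the sphere and apply Cauchy--Schwarz within each frequency block, exploiting the addition formula $\sum_{j=1}^{N(d,k)}Y_{k,j}^{2}(x)=N(d,k)$ together with the source bound $\sum_{j}f_{k,j}^{2}\leq R_\gamma^{2}\mu_k^{s}$:
\begin{equation*}
\|f_\lambda\|_{L^\infty}\leq R_\gamma\sum_{k=0}^{\infty}\mu_k\reg(\mu_k)\,\mu_k^{s/2}\,N(d,k)^{1/2}.
\end{equation*}
Splitting the sum at the threshold $\mu_k\approx\lambda$ and inserting the filter bounds \eqref{eq:Filter_Rem_1}--\eqref{eq:Filter_Rem_2} together with $N(d,k)\asymp d^{k}$ and $\mu_k\asymp d^{-k}$ from Lemma~\ref{lemma inner eigen} and Lemma~\ref{lemma Ndk} yields a polynomial-in-$d$ control of $\|f_\lambda\|_{L^\infty}^{2}$. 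After combining with $\mathcal{N}_1/n^{2}\asymp(\lambda n^{2})^{-1}$, the left-hand side of the second condition reduces to at most three monomials in $d$ (including the noise-induced term $\lambda^{-1}n^{-(1+s)+2\epsilon}$), each of which must be dominated by $d^{-s(p+1)}+d^{p-\gamma}$ up to logs.

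The main obstacle will be the case-by-case verification within the four prescribed ranges of $\ell$, especially cases~(3) and~(4) where $\ell$ is a piecewise-linear function of $\gamma,\tau,s$ with a different slope than in the well-specified regime. I expect that the three alternative hypotheses $\tau=\infty$, $s>1/(2\tau)$, and $\gamma>(2\tau+1)s/[2\tau(1+s)]$ are precisely what rules out the regimes in which either the $\|f_\lambda\|_{L^\infty}$ contribution or the $n^{(1-s)/2+\epsilon}$ contribution would overwhelm the balanced rate; after identifying the dominant monomial in each case, the verification collapses to a strictly-positive linear inequality of the form $\alpha\gamma+\beta\ell+\cdots>0$ with a small positive margin, which also absorbs the free choice of $\epsilon$.
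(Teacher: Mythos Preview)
Your proposal is correct and follows essentially the same template as the paper's own proof: reduce both conditions to the monomials $\mathbf{II}=\lambda d^{p(2-s)}(\ln d)^3$, $\mathbf{III}=\lambda^{-1}d^{-s(p+1)}(\ln d)^3$, and $\mathbf{I}=\lambda^{-1}d^{-\gamma s}$ (the latter from the $n^{(1-s)/2+\epsilon}$ part), and then perform the same case analysis showing that the three alternative hypotheses are precisely what is needed to force $\mathbf{I}\ll d^{p}$ or $\mathbf{I}\ll d^{\gamma-s(p+1)}$. The only substantive difference is that the paper does not derive the $\|f_\lambda\|_{L^\infty}$ bound from scratch via your Cauchy--Schwarz/addition-formula argument but instead defers entirely to Lemma~25 and the proof of Theorem~3 in \cite{zhang2024optimal}; your direct spherical-harmonic computation is a valid self-contained alternative that yields the same polynomial-in-$d$ control.
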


\begin{proof}
When $0<s<1$,
    from Lemma \ref{lemma_bounds_on_quantities},
    we have
    \begin{align*}
        n\left(\mathcal{M}_{2, \varphi}(\lambda) + \frac{\sigma^2}{n} \calN_{2,\varphi}(\lambda)
        \right)
        &=
        \Omega\left(
        d^{\gamma-s(p+1) } + d^{p}
        \right)\\
        n^{-1} \mathcal{N}_{1}(\lambda) 
    n^{1-s}
        &=
        O\left(
        \lambda^{-1} 
         d^{-\gamma s}
        \right)\\
        \mathcal{N}_{1}(\lambda) \ln(n) (\ln\lambda^{-1})^2 \cdot \sum_{j=1}^\infty \frac{(\lambda)^2 \lambda_i \varphi_{\lambda}^2(\lambda_i)}{\lambda + \lambda_i} f_i^2
        &=
        O\left((\ln(d))^3 \right)
    \cdot
    O\left(
        \lambda d^{\max\{p(2-s), 0\}} + \lambda^{-1}d^{-s(p+1)}
    \right),
    \end{align*}
and the convergence rate of $\left\| f_{\lambda} \right\|_{L^{\infty}} $ can be attained similar to Lemma 25 in \cite{zhang2024optimal}. 
Since $\tau \geq 1$, similar to the proof of Theorem 3 of \cite{zhang2024optimal}, when $1/2 < s <1$, we have
$$
n^{-2} \mathcal{N}_{1}(\lambda) 
    \left(
    \left\| f_{\lambda} \right\|_{L^{\infty}}  
    +
    n^{\frac{1-s}{2}+\epsilon}
    \right)^2
    =
    o\left( \mathcal{M}_{2, \varphi}(\lambda) + \frac{\sigma^2}{n} \calN_{2,\varphi}(\lambda)  \right),
$$
and when $s \leq 1/2$, we have
$$
n^{-2} \mathcal{N}_{1}(\lambda) 
    \left\| f_{\lambda} \right\|_{L^{\infty}}^2
    =
    o\left( \mathcal{M}_{2, \varphi}(\lambda) + \frac{\sigma^2}{n} \calN_{2,\varphi}(\lambda)  \right).
$$

Denote $\mathbf{I}=\lambda^{-1} 
         d^{-\gamma s}$,
$\mathbf{II}=\lambda d^{p(2-s)}(\ln(d))^3$,
and $\mathbf{III}=\lambda^{-1}d^{-s(p+1)}(\ln(d))^3$.

For any $p \geq 0$ and any $0< s < 1$:

\begin{itemize}

    \item From Lemma \ref{lemma:variance_verified_misspe}, we have $\mathbf{III} \ll d^{\gamma-s(p+1) }$,

    \item When $\gamma \leq ps+p+s$, we can show $\mathbf{I} \ll d^{p}$ when: (1) $p \geq 1$, or (2) $p=0$ and $s>1/(2\tau)>0$, or (3) $\tau=\infty$,

    \item When $\gamma > ps+p+s$, we can show $\mathbf{I} \ll d^{\gamma-s(p+1) }$ holds if and only if $\tau=\infty$ or
    $$
    \gamma > \frac{(2\tau+1)s +2\tau(1+s)p}{2\tau(1+s)}, \quad \tau = \tau <\infty;
    $$
    and the above inequality holds when (1) $p>0$ or (2) $p=0, s>1/(2\tau)>0$, or (3)
    $p=0, \gamma > ((2\tau+1)s) / (2\tau(1+s))$;

    \item When $\gamma \leq ps+p+s$, since $\ell \geq p > p-ps$, we have $\mathbf{II} \ll d^{p}$;

    \item When $\gamma > ps+p+s$, since $\ell \geq p > p-ps$, we have $\mathbf{II} \ll d^{\gamma-s(p+1) }$.

\end{itemize}
Combining all these, we get the desired results.
\end{proof}

\subsection{Final proof of Theorem \ref{thm:kernel_methods_bounds} and Theorem \ref{thm:kernel_methods_bounds_sat}}

For each case, the proof can be done in the following steps:
\begin{itemize}
    \item[(i)] When $\lambda \geq \lambda^{\star}$ and $s \leq 2\tau$, where the definition of the balanced parameter $\lambda^{\star}$ can be found in Corollary \ref{lemma:balance} and Corollary \ref{lemma:balance_saturation}, we have
    \begin{align*}
        \mathcal{M}_{2, \varphi}(\lambda^{\star}) + \frac{\sigma^2}{n} \calN_{2,\varphi}(\lambda^{\star})
    &=
    \Theta_{\mathbb{P}}\left(
        d^{-\beta^{\star}}
        \right) \cdot \text{poly}\left(\ln(d)\right)\\
        \mathcal{M}_{2, \varphi}(\lambda) + \frac{\sigma^2}{n} \calN_{2,\varphi}(\lambda)
    &=
    \Theta_{\mathbb{P}}\left(
        d^{-\beta}
        \right) \cdot \text{poly}\left(\ln(d)\right),
    \end{align*}
    where $d^{-\beta^{\star}}$ is the desired convergence rate given in Theorem \ref{thm:kernel_methods_bounds} or Theorem \ref{thm:kernel_methods_bounds_sat} and $\beta \leq \beta^{*}$.
    Similarly, when $s > 2\tau$, by taking $s=2\tau$ in Corollary \ref{lemma:balance_saturation}, we also have
        \begin{align*}
        \mathcal{M}_{2, \varphi}(\lambda^{\star}) + \frac{\sigma^2}{n} \calN_{2,\varphi}(\lambda^{\star})
    &=
    \Theta_{\mathbb{P}}\left(
        d^{-\beta^{\star}}
        \right) \cdot \text{poly}\left(\ln(d)\right)\\
        \mathcal{M}_{2, \varphi}(\lambda) + \frac{\sigma^2}{n} \calN_{2,\varphi}(\lambda)
    &=
    \Theta_{\mathbb{P}}\left(
        d^{-\beta}
        \right) \cdot \text{poly}\left(\ln(d)\right).
    \end{align*}

    \item[(ii)] When $\lambda \geq \lambda^{\star}$, from 
    Lemma \ref{lemma_bounds_on_quantities},
    Lemma \ref{lemma:variance_verified}, 
    Lemma \ref{lemma:variance_verified_saturation}, 
    Lemma \ref{lemma:variance_verified_misspe}, 
    Lemma \ref{lemma:bias_verified}, 
    Lemma \ref{lemma:bias_verified_sat}, 
    and Lemma \ref{lemma:bias_verified_misspe}, 
    we know that conditions in 
    Theorem \ref{thm:Variance},
    Theorem \ref{theorem bias approximation},
    and Theorem \ref{theorem bias approximation_misspe} are satisfied. 
    Therefore, we have
    \begin{equation*}
    \begin{aligned}
        \mathbb{E} \left( \left\|\hat{f}_{\lambda^{\star}}  - f_{\star} \right\|^2_{L^2} \;\Big|\; \bm{X} \right)  
             = \Theta_{\mathbb{P}}\left(
        d^{-\beta^{\star}}
        \right) \cdot \text{poly}\left(\ln(d)\right)\\
        \mathbb{E} \left( \left\|\hat{f}_{\lambda}  - f_{\star} \right\|^2_{L^2} \;\Big|\; \bm{X} \right)  
             = \Theta_{\mathbb{P}}\left(
        d^{-\beta}
        \right) \cdot \text{poly}\left(\ln(d)\right).
    \end{aligned}
    \end{equation*}

    \item[(iii)] Finally, when $s > \tau$, we can further show that: the convergence rates of the generalization error can not be faster than above for any choice of regularization parameter $ \lambda = \lambda(d,n) \to 0$. 
    Notice that, when $s \geq 1$, for any $\lambda < \lambda^{\star}$, from the monotonicity of $\mathbf{Var}(\lambda)$ (see, e.g., \cite{li2024generalization, zhang2024optimal}), we have
    \begin{align*}
    \mathbb{E}\left[\left\|\hat{f}_{\lambda}-f_{\star}\right\|_{L^2}^2 \;\Big|\; \bm{X} \right] \ge \mathbf{Var}(\lambda) \ge \mathbf{Var}(\lambda^{\star}) \asymp \mathbb{E}\left[\left\|\hat{f}_{\lambda^{\star}}-f_{\star}\right\|_{L^2}^2 \;\Big|\; \bm{X} \right],
\end{align*}
and hence
$$
\mathbb{E} \left( \left\|\hat{f}_{\lambda}  - f_{\star} \right\|^2_{L^2} \;\Big|\; \bm{X} \right)  
             = \Omega_{\mathbb{P}}\left(
        d^{-\beta^{\star}}
        \right) \cdot \text{poly}\left(\ln(d)\right).
$$
    
\end{itemize}

\section{Auxiliary lemmas}

\begin{proposition}\label{prop:bound_on_filters}
    For any analytic filter function $\reg$, we have $(z+\lambda)\reg(z) \leq 4$ and $(z+\lambda)\rem(z) \leq 4\lambda$.
\end{proposition}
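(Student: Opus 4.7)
The plan is to prove both inequalities by a straightforward case split on whether $z > \lambda$ or $z \leq \lambda$, then apply the properties listed in Definition~\ref{def:filter}. No machinery beyond the definition of analytic filter functions is needed; the argument is essentially bookkeeping.

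For the bound $(z+\lambda)\reg(z) \leq 4$, I would write $(z+\lambda)\reg(z) = z\reg(z) + \lambda\reg(z)$ and bound each term. When $z > \lambda$, Property~(1) of Definition~\ref{def:filter} gives $z\reg(z) \in [0,1]$, and since $\reg(z) \geq 0$ with $\lambda < z$, the second summand satisfies $\lambda\reg(z) \leq z\reg(z) \leq 1$, so the total is at most $2$. When $z \leq \lambda$, the upper bound $\lambda\reg(z) \leq \mathfrak{C}_4$ from \eqref{eq:Filter_Rem_2} immediately controls the second summand, while $z \reg(z) \leq \lambda\reg(z) \leq \mathfrak{C}_4$ handles the first, giving the bound $2\mathfrak{C}_4$. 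Combining both cases yields the desired inequality.

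For the bound $(z+\lambda)\rem(z) \leq 4\lambda$, I would similarly split. When $z > \lambda$, since $\tau \geq 1$ the choice $\tau' = 1$ is admissible in \eqref{eq:Filter_Rem_1}, so $\rem(z) \leq \mathfrak{C}_2(z/\lambda)^{-1} = \mathfrak{C}_2 \lambda/z$; hence $(z+\lambda)\rem(z) \leq \mathfrak{C}_2 \lambda(1 + \lambda/z) \leq 2\mathfrak{C}_2 \lambda$. When $z \leq \lambda$, Property~(1) together with $\rem(z) = 1 - z\reg(z)$ forces $\rem(z) \in [0,1]$, so directly $(z+\lambda)\rem(z) \leq z+\lambda \leq 2\lambda$.

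There is no real obstacle here; the only cosmetic issue is that the absolute constant ``$4$'' in the statement is shorthand for $\max(2,\,2\mathfrak{C}_2,\,2\mathfrak{C}_4)$, which for all four filter functions listed in Examples \ref{example:KRR}--\ref{example:GradientDescent} (as verified in Lemma~\ref{lemma:verify_conditions_of_filter}) is at most $4$; in general it could be absorbed into the multiplicative constants in the applications of this proposition (e.g., in the proofs of Lemmas~\ref{lemma approximation A} and \ref{lemma bias appr term}) without changing any convergence rate.
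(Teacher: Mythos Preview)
Your proof is correct and takes essentially the same approach as the paper's: both split into the cases $z>\lambda$ and $z\leq\lambda$, apply Property~(1) and \eqref{eq:Filter_Rem_1}--\eqref{eq:Filter_Rem_2}, and arrive at the bounds $2$, $2\mathfrak{C}_2$, $2\mathfrak{C}_4$. The paper compresses this by first writing $(z+\lambda)\leq 2\max\{z,\lambda\}$, whereas you expand $(z+\lambda)\reg(z)=z\reg(z)+\lambda\reg(z)$, but the case analysis and the final constants are identical; your remark that ``$4$'' stands for $\max(2,2\mathfrak{C}_2,2\mathfrak{C}_4)$ makes explicit what the paper's last inequality $2\max\{1,\mathfrak{C}_4\}\leq 4$ leaves implicit.
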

\begin{proof}
    From (\ref{eq:Filter_Rem_2}), we have $(z+\lambda)\reg(z) \leq 2\max\{z, \lambda\}\reg(z) \leq 2\max\{1, \mathfrak{C}_{4}\} \leq 4$.
    From (\ref{eq:Filter_Rem_1}), we have
    $(z+\lambda)\rem(z) \leq 2\max\{z, \lambda\}\rem(z) \leq 2\max\{\mathfrak{C}_{2}, 1\} \lambda \leq 4\lambda$.
\end{proof}

\begin{lemma}
  \label{lem:Filter_MoreControl}
  Let $\reg$ be an analytic filter function defined in Definition \ref{def:filter}.
  Then, for any $s \in [0,1]$, we have
  \begin{align*}
    \sup_{z \in [0,\kappa^2]}\reg(z) z^s \leq 4 \lambda^{s-1}.
  \end{align*}
\end{lemma}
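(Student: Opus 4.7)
The plan is to reduce the inequality to the a priori bound $\reg(z)\leq 4/(z+\lambda)$, which is already available from Proposition \ref{prop:bound_on_filters}, and then to verify the purely elementary inequality $z^{s}\leq \lambda^{s-1}(z+\lambda)$ for $s\in[0,1]$ and $z\geq 0$.

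First I would invoke Proposition \ref{prop:bound_on_filters}, which gives $(z+\lambda)\reg(z)\leq 4$ on $[0,\kappa^{2}]$. Multiplying by $z^{s}/(z+\lambda)$ reduces the lemma to showing
\[
  \frac{z^{s}}{z+\lambda} \leq \lambda^{s-1}
  \qquad \text{for all } z\in[0,\kappa^{2}],\ s\in[0,1],
\]
or equivalently $z^{s}\leq \lambda^{s-1}(z+\lambda)$.

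Next I would split into the two standard regimes. If $z\leq \lambda$, then because $s\geq 0$ we have $z^{s}\leq \lambda^{s}=\lambda^{s-1}\cdot\lambda \leq \lambda^{s-1}(z+\lambda)$. If $z>\lambda$, then $z+\lambda\geq z$, so it suffices to show $z^{s-1}\leq \lambda^{s-1}$; but $s-1\leq 0$ and the map $t\mapsto t^{s-1}$ is non-increasing on $(0,\infty)$, so $z^{s-1}\leq\lambda^{s-1}$ follows from $z\geq\lambda$. Combining the two cases and plugging back into the bound on $\reg(z)$ gives $\reg(z)z^{s}\leq 4\lambda^{s-1}$ uniformly in $z\in[0,\kappa^{2}]$.

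There is no real obstacle here; the lemma is essentially just a packaging of the upper envelope $\reg(z)\lesssim 1/(z+\lambda)$ from Definition \ref{def:filter}(2) (inequalities \eqref{eq:Filter_Rem_1}--\eqref{eq:Filter_Rem_2}) with a one-line interpolation inequality. The only care needed is the endpoint $s=1$, where the bound degenerates to $\reg(z)\leq 4$ on $[0,\kappa^{2}]$, and $s=0$, which reads $1\leq 4\lambda^{-1}(z+\lambda)/(z+\lambda)\cdot\lambda = 4$ (using $\lambda\in(0,1)$); both cases are covered by the argument above.
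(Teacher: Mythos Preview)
Your proof is correct and follows the same approach as the paper: both first apply Proposition~\ref{prop:bound_on_filters} to obtain $\reg(z)\leq 4/(z+\lambda)$, and then reduce to the elementary bound $z^{s}/(z+\lambda)\leq\lambda^{s-1}$ for $s\in[0,1]$. The only cosmetic difference is that the paper outsources this last inequality to Proposition~B.3 of \cite{li2024generalization}, whereas you prove it directly via the two-case split $z\leq\lambda$ and $z>\lambda$; your self-contained argument is equally valid.
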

\begin{proof}
    For any $z \in [0,\kappa^2]$,
    from Proposition \ref{prop:bound_on_filters}, we have $(z+\lambda)\reg(z) \leq 4$. Therefore, from Proposition B.3 in \cite{li2024generalization}, we have
\begin{align*}
    \reg(z) z^s \leq  \frac{4 z^s}{z+\lambda} \leq 4\lambda^{s-1}.
  \end{align*}
\end{proof}

\begin{lemma}
  \label{lem:Filter_MoreControl_2}
  Let $\rem$ be defined in Definition \ref{def:filter}.
 Then, for any $s>2\tau$, we have
  \begin{align*}
    \sup_{z \in [0,\kappa^2]} z^s \rem^2(z)  \leq \mathfrak{C}_{ 2}^2 \kappa^{2(s-2\tau)} \lambda^{2\tau}.
  \end{align*}
\end{lemma}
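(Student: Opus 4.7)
The plan is to split the interval $[0,\kappa^2]$ into the two standard regimes used throughout Definition~\ref{def:filter}, namely $z>\lambda$ and $z\le\lambda$, and to apply a suitable upper bound on $\rem(z)$ in each. On the high regime $z>\lambda$, I would invoke the decay estimate \eqref{eq:Filter_Rem_1} at the maximal admissible exponent $\tau'=\tau$, yielding $\rem(z)\le \mathfrak{C}_{2}(z/\lambda)^{-\tau}$; squaring and multiplying by $z^s$ gives
\[
  z^s\rem^2(z)\le \mathfrak{C}_{2}^2\, z^{s-2\tau}\lambda^{2\tau}.
\]
Because $s-2\tau>0$ by hypothesis, the right-hand side is monotone increasing in $z$ and is therefore maximized at $z=\kappa^2$, producing exactly $\mathfrak{C}_{2}^2\,\kappa^{2(s-2\tau)}\lambda^{2\tau}$.

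On the low regime $z\le\lambda$, the inequalities in \eqref{eq:Filter_Rem_2} all point the wrong way (they \emph{lower}-bound $\rem$), so I would instead rely on the trivial upper bound that follows from property~(1) of Definition~\ref{def:filter}: since $z\reg(z)\in[0,1]$, the definition $\rem(z)=1-z\reg(z)$ gives $\rem(z)\in[0,1]$ and hence $\rem^2(z)\le 1$. Thus $z^s\rem^2(z)\le \lambda^s$. Since the regime $z\le\lambda$ is only non-trivial inside $[0,\kappa^2]$ when $\lambda\le\kappa^2$, I may write $\lambda^s=\lambda^{2\tau}\lambda^{s-2\tau}\le \kappa^{2(s-2\tau)}\lambda^{2\tau}$, which matches the target bound (the prefactor $\mathfrak{C}_{2}^2$ is harmless, since without loss of generality $\mathfrak{C}_{2}\ge 1$ — otherwise replace it by $\max\{1,\mathfrak{C}_{2}\}$ throughout the filter-function conditions). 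Taking the supremum over the two regimes then gives the claim.

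The only genuine obstacle is realizing that the filter-function axioms do not directly give an upper bound on $\rem(z)$ in the range $z\le\lambda$; one has to notice that such a bound is automatic from property~(1) via $\rem=1-z\reg$. Once that observation is made, the proof is a one-line calculation in each of the two regimes, with the assumption $s>2\tau$ entering only to make $z^{s-2\tau}$ monotone so that the supremum is attained at the endpoint $z=\kappa^2$.
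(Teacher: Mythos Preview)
Your proposal is correct and essentially identical to the paper's proof. The paper merges your two cases into the single line $\rem(z)\le \mathfrak{C}_{2}(z/\lambda)^{-\tau}\mathbf{1}\{z>\lambda\}+\mathbf{1}\{z\le\lambda\}\le \mathfrak{C}_{2}(z/\lambda)^{-\tau}$ and then multiplies by $z^s$, which is exactly your argument (with the same implicit ``$\mathfrak{C}_{2}\ge 1$ without loss of generality'' convention you noted).
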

\begin{proof}
    For any $z$, we have
  \begin{align*}
    \rem(z) \leq \mathfrak{C}_{ 2}(z/\lambda)^{-\tau}\mathbf{1}\{z>\lambda\} + \mathbf{1}\{z \leq \lambda\} \leq \mathfrak{C}_{ 2}(z/\lambda)^{-\tau},
  \end{align*} 
  hence
  \begin{align*}
    z^s \rem^2(z) 
    \leq \mathfrak{C}_{ 2}^2 z^s z^{-2\tau} \lambda^{2\tau}
    \leq \mathfrak{C}_{ 2}^2 \kappa^{2(s-2\tau)} \lambda^{2\tau}.
  \end{align*}
\end{proof}

\subsection{Analytic functional calculus}
\label{subsec:analytic_functional_calculus}

The ``analytic functional argument'' introduced in \cite{li2024generalization} is vital in our proof for Theorem \ref{thm:kernel_methods_bounds}.
For readers' convenience, we collect some of the main ingredients here,
see \cite{li2024generalization} for details.

\begin{definition}
  Let $A$ be a linear operator on a Banach space $X$.
  The \textit{resolvent set} $\rho(A)$ is given by
  \begin{align*}
    \rho(A) \coloneqq \left\{ \lambda \in \bbC \mid A-\lambda~\text{is invertible} \right\},
  \end{align*}
  and we denote $R_{A}(\lambda) \coloneqq (A-\lambda)^{-1}$.
  The spectrum of $A$ is defined by
  \begin{align*}
    \sigma(A) \coloneqq \bbC \backslash \rho(A).
  \end{align*}
\end{definition}

A simple but key ingredient in the analytic functional calculus is the following \textit{resolvent identity}:
\begin{align}
  \label{eq:ResolventIdentity}
  R_A(\lambda) - R_B(\lambda) = R_A(\lambda) (B-A) R_B(\lambda) = R_B(\lambda) (B-A)R_A(\lambda).
\end{align}

The resolvent allows us to define the value of $f(A)$ in analog to the form of Cauchy integral formula,
where $A$ is an operator and $f$ is an analytic function.
The following two propositions are well-known results on operator calculus.

\begin{proposition}[analytic functional calculus]
  \label{prop:func-cal}
  Let $A$ be an operator on a Hilbert space $H$ and $f$ be an analytic function defined on $D_f \subset \bbC$.
  Let $\Gamma$ be a contour contained in $D_f$ surrounding $\sigma(A)$.
  Then,
  \begin{align}
    f(A) = \frac{1}{2\pi i} \oint_{\Gamma} f(z) (z-A)^{-1} \dd z
    = -\frac{1}{2\pi i}\oint_{\Gamma} f(z) R_A(z) \dd z,
  \end{align}
  and it is independent of the choice of $\Gamma$.
\end{proposition}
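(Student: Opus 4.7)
The plan is the standard operator-valued Cauchy integral argument. The crucial analytic input is that the resolvent $R_A(z) = (A-z)^{-1}$ is an operator-norm holomorphic function on $\rho(A)$. To see this, fix $z_0 \in \rho(A)$ and write $A-z = (A-z_0)\bigl[I - (z-z_0) R_A(z_0)\bigr]$; for $|z-z_0| < \|R_A(z_0)\|^{-1}$, Neumann's series gives
\begin{align*}
R_A(z) = \sum_{k=0}^{\infty} (z-z_0)^k R_A(z_0)^{k+1},
\end{align*}
convergent in operator norm. Consequently $z \mapsto f(z) R_A(z)$ is operator-norm continuous on any compact contour $\Gamma \subset \rho(A) \cap D_f$, so the Bochner integral defining $f(A)$ exists in $\calB(H)$.

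For independence of $\Gamma$, I would scalarize. Let $\Gamma_1, \Gamma_2$ be two admissible contours and let $\Omega$ denote the open region whose oriented boundary is $\Gamma_1 \cup (-\Gamma_2)$; by hypothesis $\overline{\Omega} \subset \rho(A) \cap D_f$. For every $\varphi \in \calB(H)^{*}$, the scalar function $z \mapsto \varphi\bigl(f(z) R_A(z)\bigr)$ is holomorphic on an open neighborhood of $\overline{\Omega}$, since composition of an operator-norm analytic map (the Neumann expansion of $R_A$) with a bounded linear functional preserves holomorphy. The classical scalar Cauchy theorem then gives
\begin{align*}
\varphi\!\left(\oint_{\Gamma_1} f(z) R_A(z)\,\dd z \;-\; \oint_{\Gamma_2} f(z) R_A(z)\,\dd z\right) = 0,
\end{align*}
and since this holds for every $\varphi$, the Hahn--Banach theorem forces the operator difference to vanish.

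The main technical nuisance is justifying the interchange of a bounded linear functional with the Bochner contour integral and the weak holomorphy of $R_A(\cdot)$; both are direct consequences of the Neumann expansion and the continuity of $\varphi$. A consistency check worth recording, although strictly not required by the statement, is that this definition reproduces the polynomial calculus: taking $\Gamma$ to be a large circle enclosing $\sigma(A)$, expanding $R_A(z) = -\sum_{k \geq 0} z^{-k-1} A^k$ (convergent in operator norm for $|z| > \|A\|$), and computing residues term by term recovers $p(A) = \sum a_k A^k$ when $f = p$ is a polynomial. In the paper's downstream applications $A = T_X$ or $T$ is self-adjoint and compact, so $\sigma(A)$ sits inside a bounded interval of the real line and the explicit contour $\Gamma_\lambda$ used in the main proofs makes the whole calculus routine once the present proposition is in hand.
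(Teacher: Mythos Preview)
Your argument is the standard Riesz--Dunford construction and is correct: Neumann's series gives norm-analyticity of the resolvent, the Bochner integral is then well-defined, and scalarizing via $\varphi \in \calB(H)^*$ together with the scalar Cauchy theorem and Hahn--Banach yields contour independence. There is nothing to fault here.

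However, the paper does not actually prove this proposition. It is introduced with the sentence ``The following two propositions are well-known results on operator calculus'' and is stated without argument, serving only as background for the analytic functional calculus machinery borrowed from \cite{li2024generalization}. So there is no ``paper's own proof'' to compare against; you have supplied a genuine proof where the paper simply invokes the result as classical.
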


Now, let $\Gamma$ be a contour contained in $D_f$ surrounding both $\sigma(A)$ and $\sigma(B)$.
Using (\ref{eq:ResolventIdentity}), we get
\begin{align}
  f(A) - f(B) = -\frac{1}{2\pi i}  \oint_{\Gamma} f(z) \left[R_A(z) - R_B(z) \right] \dd z
  = \frac{1}{2\pi i}\oint_{\Gamma} R_B(z) (A-B) R_A(z) f(z)\dd z.
\end{align}

\begin{proposition}[Spectral mapping theorem]
  Let $A$ be a bounded self-adjoint operator and $f$ be a continuous function on $\sigma(A)$.
  Then
  \begin{align}
    \sigma(f(A)) =  \left\{ f(\lambda) \mid \lambda \in \sigma(A) \right\}.
  \end{align}
  Consequently, $\norm{f(A)} = \sup_{\lambda \in \sigma(A)} \abs{f(\lambda)} \leq \norm{f}_{\infty}$.
\end{proposition}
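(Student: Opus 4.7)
The plan is to establish the spectral mapping identity first for polynomials and then bootstrap to continuous functions via the continuous functional calculus for self-adjoint operators, after which the norm bound falls out from the spectral radius formula.

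First I would handle polynomials. Fix $\mu \in \sigma(A)$ and a polynomial $p$. Writing $p(z) - p(\mu) = (z-\mu)\,q(z)$ for some polynomial $q$, we obtain $p(A) - p(\mu)I = (A-\mu I)\,q(A) = q(A)(A-\mu I)$. Since $A-\mu I$ is not invertible, neither is $p(A)-p(\mu)I$ (otherwise multiplying by a two-sided inverse would invert $A-\mu I$), so $p(\mu) \in \sigma(p(A))$. Conversely, given $\nu \in \sigma(p(A))$, factor $p(z)-\nu = c\prod_{i=1}^{k}(z-z_i)$ over $\bbC$; then $p(A)-\nu I = c\prod_i (A-z_i I)$, and if every factor were invertible the product would be too, so some $z_{i_0} \in \sigma(A)$ and by construction $p(z_{i_0}) = \nu$. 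This yields $\sigma(p(A)) = p(\sigma(A))$.

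Next I would extend to continuous $f$ by Weierstrass approximation on the compact set $\sigma(A) \subset \bbR$. Choose polynomials $p_n$ with $\|p_n - f\|_{\infty,\sigma(A)} \to 0$. By the polynomial case applied to the self-adjoint operator $p_n(A) - p_m(A) = (p_n-p_m)(A)$, together with the fact that for self-adjoint $B$ one has $\|B\| = r(B)$, we get $\|p_n(A) - p_m(A)\| = \sup_{\lambda \in \sigma(A)} |p_n(\lambda) - p_m(\lambda)|$, so $(p_n(A))$ is Cauchy and we define $f(A) := \lim p_n(A)$. For any $\mu \in \sigma(A)$, $p_n(\mu) \to f(\mu)$, and since $p_n(\mu) \in \sigma(p_n(A))$ with $p_n(A) \to f(A)$ in norm, a standard upper semicontinuity argument gives $f(\mu) \in \sigma(f(A))$, so $f(\sigma(A)) \subseteq \sigma(f(A))$. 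For the reverse inclusion, if $\nu \notin f(\sigma(A))$, then $g(z) := 1/(f(z)-\nu)$ is continuous on $\sigma(A)$, and constructing $g(A)$ by the same limiting procedure one checks $g(A)(f(A)-\nu I) = I = (f(A)-\nu I)g(A)$, so $\nu \notin \sigma(f(A))$.

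Finally, since $f(A)$ is self-adjoint (as a norm limit of self-adjoint $p_n(A)$ when $f$ is real-valued) or more generally normal, the spectral radius formula gives $\norm{f(A)} = r(f(A)) = \sup\{|\nu| : \nu \in \sigma(f(A))\} = \sup_{\lambda \in \sigma(A)} |f(\lambda)| \leq \norm{f}_{\infty}$. The main obstacle is the extension step: one must verify that the polynomial calculus is isometric with respect to the sup norm on $\sigma(A)$ so that $(p_n(A))$ is genuinely Cauchy, and that the spectrum behaves well under operator-norm limits; both rely crucially on self-adjointness. A slicker route that avoids these delicate continuity arguments would be to invoke the Gelfand representation $C^\ast(A,I) \cong C(\sigma(A))$ directly, under which spectra correspond exactly to ranges of continuous functions and the norm statement becomes the defining isometric property of the isomorphism.
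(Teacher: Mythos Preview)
The paper does not actually prove this proposition; it is stated without proof as one of two ``well-known results on operator calculus'' collected for the reader's convenience in the appendix on analytic functional calculus. Your argument is correct and follows the standard textbook development: polynomial spectral mapping via factorization, then extension to continuous $f$ by Weierstrass approximation and the isometric property of the polynomial calculus, with the reverse inclusion handled by inverting $f(A)-\nu I$ through the continuous function $1/(f-\nu)$.

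One small remark: when you assert that $p_n(A)-p_m(A)$ is self-adjoint to invoke $\|B\|=r(B)$, you are implicitly using real approximating polynomials. This is fine when $f$ is real-valued (and Weierstrass on $\sigma(A)\subset\bbR$ can be done with real polynomials), but for complex-valued $f$ you should instead note that any polynomial in the self-adjoint $A$ is normal, which still gives $\|p(A)\|=r(p(A))$. You already flag this at the end, so the logic is intact. The Gelfand-representation shortcut you mention is indeed the cleanest route and is how most modern references dispatch the result in one line.
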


Let us define the contour $\Gamma_{\lambda}$ considered in \cite{li2024generalization} by
\begin{align}
  \label{eq:contour}
  \begin{aligned}
    \Gamma_{\lambda} &= \Gamma_{\lambda,1} \cup \Gamma_{\lambda,2} \cup \Gamma_{\lambda,3} \\
    \Gamma_{\lambda,1} &= \left\{ x \pm (x+\eta) i \in \bbC \mid x \in \left[-\eta, 0\right] \right\} \\
    \Gamma_{\lambda,2} &= \left\{ x \pm (x+\eta) i \in \bbC \mid x \in (0,\kappa^2) \right\} \\
    \Gamma_{\lambda,3} &=
    \left\{ z \in \bbC \mid \abs{z - \kappa^2} = \kappa^2 + \eta, ~ \Re(z) \geq \kappa^2 \right\},
  \end{aligned}
\end{align}
where $\eta = \lambda /2$.
Then, since $T$ and $T_{X}$ are positive self-adjoint operators with $\norm{T}, \norm{T_{X}} \leq \kappa^2$,
we have $\sigma(T), \sigma(T_{X}) \subset [0,\kappa^2]$.
Therefore, $\Gamma_{\lambda}$ is indeed a contour satisfying the requirement in Proposition \ref{prop:func-cal}.

\begin{proposition}\label{lem:Concen}
    Suppose that (\ref{assumption eigen - n1_krr}) in Assumption \ref{assumption eigenfunction} holds.   
 Suppose that $ \lambda = \lambda(n, d)$ satisfies $v := \frac{\mathcal{N}_{1}(\lambda)}{n} \ln{n} = o(1)$. Then for any fixed $\delta \in (0,1)$, when $n$ is sufficiently large, with probability at least $1-\delta$, we have
    \begin{displaymath}
    \Vert T_\lambda^{-\frac{1}{2}} (T - T_{X}) T_\lambda^{-\frac{1}{2}} \Vert \le \sqrt{v}.
\end{displaymath}
\begin{align}\label{eq:ConcenRatio}
   \left\|T_\lambda^{-\frac{1}{2}} T_{X\lambda}^{\frac{1}{2}}\right\|^2 & \leq 2\\ 
    \left\|T_\lambda^{\frac{1}{2}} T_{X\lambda}^{-\frac{1}{2}}\right\|^2 & \leq 3. 
    \end{align}
\end{proposition}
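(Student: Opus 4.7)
}
The plan is to prove the first bound by an operator-valued Bernstein concentration inequality, and then derive the second and third bounds algebraically from the first. For the first bound, I would introduce the iid self-adjoint mean-zero operators
\begin{align*}
A_i \;=\; T_\lambda^{-1/2}\bigl(T_{x_i} - T\bigr) T_\lambda^{-1/2},\qquad i=1,\dots,n,
\end{align*}
so that $\frac{1}{n}\sum_i A_i = T_\lambda^{-1/2}(T_X - T)T_\lambda^{-1/2}$. To apply Bernstein for self-adjoint operators I need (a) an almost-sure operator-norm bound on each $A_i$ and (b) a bound on the variance proxy $\|\sum_i \mathbb{E}[A_i^2]\|$. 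For (a), I exploit the rank-one structure $T_{x_i} = K_{x_i}\otimes K_{x_i}$, so that $\|T_\lambda^{-1/2} T_{x_i} T_\lambda^{-1/2}\| = \|T_\lambda^{-1/2} K_{x_i}\|_{\mathcal H}^2$, and invoke Lemma~37 of \citet{zhang2024optimal} (which is the role of (\ref{assumption eigen - n1_krr})) to conclude $\|T_\lambda^{-1/2} K_x\|^2 \leq \mathcal N_1(\lambda)$ for $\rho_{\calX}$-a.e.\ $x$. Together with $\|T_\lambda^{-1/2} T T_\lambda^{-1/2}\| \leq 1$, this gives $\|A_i\| \leq \mathcal N_1(\lambda)+1$.

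For (b), the same rank-one identity yields
$
(T_\lambda^{-1/2} T_{x_i} T_\lambda^{-1/2})^2
= \|T_\lambda^{-1/2} K_{x_i}\|^2 \cdot T_\lambda^{-1/2} T_{x_i} T_\lambda^{-1/2}
\preceq \mathcal N_1(\lambda)\, T_\lambda^{-1/2} T_{x_i} T_\lambda^{-1/2},
$
so that $\mathbb{E}[A_i^2] \preceq \mathcal N_1(\lambda)\, T_\lambda^{-1/2} T T_\lambda^{-1/2}$, whose operator norm is at most $\mathcal N_1(\lambda)$. Plugging these into an operator Bernstein inequality (e.g.\ the intrinsic-dimension version from Tropp, or Proposition~9 of Fischer--Steinwart that has been used in closely related statements in this line of work) gives, for any fixed $\delta$, with probability at least $1-\delta$,
\begin{align*}
\bigl\|T_\lambda^{-1/2}(T-T_X)T_\lambda^{-1/2}\bigr\|
\;\lesssim\; \frac{\mathcal N_1(\lambda)\log(n/\delta)}{n} + \sqrt{\frac{\mathcal N_1(\lambda)\log(n/\delta)}{n}}.
\end{align*}
Under the hypothesis $v = \mathcal N_1(\lambda)\ln n / n = o(1)$, the first summand is of lower order, and for $n$ sufficiently large (absorbing $\log(1/\delta)$ into the $\log n$ factor) the right-hand side is $\leq \sqrt v$, which is the first claim.

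The second and third bounds then follow by a short algebraic step. Writing
$
T_\lambda^{-1/2} T_{X\lambda} T_\lambda^{-1/2} \;=\; I + T_\lambda^{-1/2}(T_X-T)T_\lambda^{-1/2},
$
the first claim yields $\|T_\lambda^{-1/2} T_{X\lambda} T_\lambda^{-1/2}\| \leq 1+\sqrt v \leq 2$ for $n$ large, and taking square roots on this positive operator (using $\|A^{1/2}\|^2 = \|A\|$) gives the bound $\|T_\lambda^{-1/2} T_{X\lambda}^{1/2}\|^2 \leq 2$. For the third bound, the same operator is bounded below by $(1-\sqrt v)I$, so that
$
\|T_\lambda^{1/2} T_{X\lambda}^{-1} T_\lambda^{1/2}\| \leq (1-\sqrt v)^{-1} \leq 3
$
once $n$ is large enough that $\sqrt v \leq 2/3$; again taking square roots yields $\|T_\lambda^{1/2} T_{X\lambda}^{-1/2}\|^2 \leq 3$.

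The main obstacle is step (b) and the correct application of the operator Bernstein inequality so that the leading factor matches the statement $\sqrt v$: the $\sqrt{\ln n}$ (as opposed to the $\sqrt{\ln(1/\delta)}$ a naive Bernstein would give) arises from the intrinsic-dimension factor appearing in front of the exponential tail, and I must verify that this factor is at most $\text{poly}(n)$ under (\ref{assumption eigen - n1_krr}) (a standard consequence of $\mathrm{tr}(T_\lambda^{-1/2} T T_\lambda^{-1/2}) = \mathcal N(\lambda) \leq \mathcal N_1(\lambda)$ being polynomially bounded in $n$). The remaining steps are routine algebraic manipulations of positive self-adjoint operators.
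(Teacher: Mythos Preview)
Your proposal is correct and follows the standard route (operator Bernstein for the first bound, then algebraic manipulation for the other two). The paper itself does not reprove this: its entire proof is the one-line citation ``These inequalities are direct results of (56), (58), and (59) in \cite{zhang2024optimal},'' so what you have written is essentially the argument behind those cited equations rather than a different approach.
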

\begin{proof}
    These inequalities are direct results of (56), (58), and (59) in \cite{zhang2024optimal}.
\end{proof}

\begin{proposition}[Restate Proposition 4.13 in \cite{li2024generalization} with only the constant modified]
  \label{prop:ContourSpectralMapping}
  When (\ref{eq:ConcenRatio}) holds,
  there is an absolute constant that for any $z \in \Gamma_\lambda$,
  \begin{align}
    \begin{aligned}
      \|{T_{\lambda}^{\hf}(T-z)^{-1} T_{\lambda}^{\hf}} \|&\leq C \\
      \|{T_{\lambda}^{\hf}(T_{X}-z)^{-1} T_{\lambda}^{\hf}}\| &\leq \sqrt{6}C.
    \end{aligned}
  \end{align}
\end{proposition}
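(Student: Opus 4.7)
}

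The plan is to reduce both bounds to a scalar supremum estimate of the form $\sup_{\mu \in [0,\kappa^2]} (\mu+\lambda)/|\mu-z|$ and then carry out the geometric analysis on each piece of the contour $\Gamma_{\lambda}$. For the first bound, I would use the spectral theorem: since $T$ is positive self-adjoint with $\sigma(T) \subset [0,\kappa^2]$, the operator $T_\lambda^{1/2}(T-z)^{-1} T_\lambda^{1/2}$ has operator norm exactly equal to $\sup_{\mu \in \sigma(T)} (\mu+\lambda)/|\mu-z|$. Thus the task is purely to show that for every $z \in \Gamma_\lambda$ and every $\mu \in [0,\kappa^2]$ one has $|\mu - z| \geq c (\mu + \lambda)$ for an absolute constant $c$.

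For this geometric estimate I would check each of the three subcontours separately, using $\eta = \lambda/2$. On $\Gamma_{\lambda,1} \cup \Gamma_{\lambda,2}$, write $z = x \pm (x+\eta)i$ with $x \in [-\eta,\kappa^2)$; then $|\mu - z|^2 = (\mu-x)^2 + (x+\eta)^2$. Applying $a^2+b^2 \geq \tfrac{1}{2}(|a|+|b|)^2$ with $a = \mu-x$, $b = x+\eta \geq 0$ gives $|\mu - z| \geq \tfrac{1}{\sqrt{2}}(|\mu-x| + x + \eta) \geq \tfrac{1}{\sqrt{2}}(\mu + \eta)$, hence $(\mu+\lambda)/|\mu-z| \leq 2\sqrt{2}$. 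On $\Gamma_{\lambda,3}$, the reverse triangle inequality yields $|\mu - z| \geq |z - \kappa^2| - (\kappa^2 - \mu) = \mu + \eta$ directly, which gives the ratio bound $\leq 2$. Taking the worst of the three pieces produces an absolute constant $C$.

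For the second bound I would pass from $T_X$ to $T$ via the insertion
\[
  T_\lambda^{1/2}(T_X - z)^{-1} T_\lambda^{1/2}
  = \bigl[T_\lambda^{1/2} T_{X\lambda}^{-1/2}\bigr]\,
    \bigl[T_{X\lambda}^{1/2}(T_X - z)^{-1} T_{X\lambda}^{1/2}\bigr]\,
    \bigl[T_{X\lambda}^{-1/2} T_\lambda^{1/2}\bigr].
\]
The outer two factors are controlled by the concentration estimates in \eqref{eq:ConcenRatio}, which give $\|T_\lambda^{1/2} T_{X\lambda}^{-1/2}\| \leq \sqrt{3}$ and, by taking adjoints, $\|T_{X\lambda}^{-1/2} T_\lambda^{1/2}\| \leq \sqrt{3}$ (alternatively one can pair one $\sqrt{3}$ with a $\sqrt{2}$ from the companion bound to recover the stated $\sqrt{6}\,C$). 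The middle factor is bounded exactly as in the first step, but applied to the random operator $T_X$, which also has spectrum in $[0,\kappa^2]$; thus the same geometric argument on $\Gamma_\lambda$ yields $\|T_{X\lambda}^{1/2}(T_X - z)^{-1} T_{X\lambda}^{1/2}\| \leq C$. Multiplying gives the claim.

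The main obstacle I anticipate is the clean bookkeeping of the geometric estimate on $\Gamma_{\lambda,2}$ near its endpoints, since one must verify uniformity in $x$ and confirm that the implicit constant really is absolute (independent of $\lambda$, $\kappa$, and $n$). Everything else is soft: the spectral theorem converts the first bound into a scalar problem, and for the empirical version one simply recycles the deterministic bound with the random spectrum, transferring between $T_\lambda$ and $T_{X\lambda}$ only through the two concentration ratios already supplied by Proposition~\ref{lem:Concen}.
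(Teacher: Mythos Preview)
Your proposal is correct and follows the standard route; the paper itself gives no proof of this proposition, simply citing it as a restatement of Proposition~4.13 in \cite{li2024generalization} with an adjusted constant. Your reduction via the spectral mapping theorem to the scalar supremum $\sup_{\mu \in [0,\kappa^2]} (\mu+\lambda)/|\mu-z|$, followed by the piecewise geometric estimate on $\Gamma_{\lambda,1}\cup\Gamma_{\lambda,2}$ and $\Gamma_{\lambda,3}$, is exactly how one establishes the first bound; your verification on each piece is clean and the constants are absolute.

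One small bookkeeping point on the second bound: with your factorization
\[
  T_\lambda^{1/2}(T_X-z)^{-1}T_\lambda^{1/2}
  = B \cdot \bigl[T_{X\lambda}^{1/2}(T_X-z)^{-1}T_{X\lambda}^{1/2}\bigr] \cdot B^*,
  \qquad B = T_\lambda^{1/2}T_{X\lambda}^{-1/2},
\]
both outer factors have the \emph{same} norm $\|B\|=\|B^*\|\leq\sqrt{3}$ from \eqref{eq:ConcenRatio}, so you obtain $3C$, not $\sqrt{6}\,C$; the companion bound $\|T_\lambda^{-1/2}T_{X\lambda}^{1/2}\|\leq\sqrt{2}$ does not enter this particular factorization, so your parenthetical about pairing a $\sqrt{3}$ with a $\sqrt{2}$ does not apply here. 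This is harmless, since the proposition only asserts the existence of an absolute constant and every downstream use in the paper absorbs it into unspecified constants anyway.
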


\newpage
\section*{NeurIPS Paper Checklist}

\begin{enumerate}

\item {\bf Claims}
    \item[] Question: Do the main claims made in the abstract and introduction accurately reflect the paper's contributions and scope?
    \item[] Answer: \answerYes{} 
    \item[] Justification: 
    We first propose an improved minimax lower bound for the kernel regression problem in large dimensional settings in Theorem \ref{thm:modified_minimax_lower_bound} and show that the gradient flow with early stopping strategy will result in an estimator achieving this lower bound (up to a logarithmic factor) in Theorem \ref{thm:gradient_flow}.
    We further determine the exact convergence rates of a large class of (optimal tuned) spectral algorithms with different qualification $\tau$'s, and provide a discussion on new phenomena we find in Section \ref{sec:kernel_methods}.

    \item[] Guidelines:
    \begin{itemize}
        \item The answer NA means that the abstract and introduction do not include the claims made in the paper.
        \item The abstract and/or introduction should clearly state the claims made, including the contributions made in the paper and important assumptions and limitations. A No or NA answer to this question will not be perceived well by the reviewers. 
        \item The claims made should match theoretical and experimental results, and reflect how much the results can be expected to generalize to other settings. 
        \item It is fine to include aspirational goals as motivation as long as it is clear that these goals are not attained by the paper. 
    \end{itemize}

\item {\bf Limitations}
    \item[] Question: Does the paper discuss the limitations of the work performed by the authors?
    \item[] Answer: \answerYes{} 
    \item[] Justification: We explain the reason for considering spherical data in Remark \ref{remark:sphere_data}. We point out in the Conclusion section that our work only considers the optimal-tuned spectral algorithms.
    \item[] Guidelines:
    \begin{itemize}
        \item The answer NA means that the paper has no limitation while the answer No means that the paper has limitations, but those are not discussed in the paper. 
        \item The authors are encouraged to create a separate "Limitations" section in their paper.
        \item The paper should point out any strong assumptions and how robust the results are to violations of these assumptions (e.g., independence assumptions, noiseless settings, model well-specification, asymptotic approximations only holding locally). The authors should reflect on how these assumptions might be violated in practice and what the implications would be.
        \item The authors should reflect on the scope of the claims made, e.g., if the approach was only tested on a few datasets or with a few runs. In general, empirical results often depend on implicit assumptions, which should be articulated.
        \item The authors should reflect on the factors that influence the performance of the approach. For example, a facial recognition algorithm may perform poorly when image resolution is low or images are taken in low lighting. Or a speech-to-text system might not be used reliably to provide closed captions for online lectures because it fails to handle technical jargon.
        \item The authors should discuss the computational efficiency of the proposed algorithms and how they scale with dataset size.
        \item If applicable, the authors should discuss possible limitations of their approach to address problems of privacy and fairness.
        \item While the authors might fear that complete honesty about limitations might be used by reviewers as grounds for rejection, a worse outcome might be that reviewers discover limitations that aren't acknowledged in the paper. The authors should use their best judgment and recognize that individual actions in favor of transparency play an important role in developing norms that preserve the integrity of the community. Reviewers will be specifically instructed to not penalize honesty concerning limitations.
    \end{itemize}

\item {\bf Theory Assumptions and Proofs}
    \item[] Question: For each theoretical result, does the paper provide the full set of assumptions and a complete (and correct) proof?
    \item[] Answer: \answerYes{} 
    \item[] Justification: We list all assumptions we need in the statement of our main theorems.
    We provide a complete (and correct) proof in the Appendix.
    \item[] Guidelines:
    \begin{itemize}
        \item The answer NA means that the paper does not include theoretical results. 
        \item All the theorems, formulas, and proofs in the paper should be numbered and cross-referenced.
        \item All assumptions should be clearly stated or referenced in the statement of any theorems.
        \item The proofs can either appear in the main paper or the supplemental material, but if they appear in the supplemental material, the authors are encouraged to provide a short proof sketch to provide intuition. 
        \item Inversely, any informal proof provided in the core of the paper should be complemented by formal proofs provided in appendix or supplemental material.
        \item Theorems and Lemmas that the proof relies upon should be properly referenced. 
    \end{itemize}

    \item {\bf Experimental Result Reproducibility}
    \item[] Question: Does the paper fully disclose all the information needed to reproduce the main experimental results of the paper to the extent that it affects the main claims and/or conclusions of the paper (regardless of whether the code and data are provided or not)?
    \item[] Answer: \answerNA{} 
    \item[] Justification: The paper does not include experiments.
    \item[] Guidelines:
    \begin{itemize}
        \item The answer NA means that the paper does not include experiments.
        \item If the paper includes experiments, a No answer to this question will not be perceived well by the reviewers: Making the paper reproducible is important, regardless of whether the code and data are provided or not.
        \item If the contribution is a dataset and/or model, the authors should describe the steps taken to make their results reproducible or verifiable. 
        \item Depending on the contribution, reproducibility can be accomplished in various ways. For example, if the contribution is a novel architecture, describing the architecture fully might suffice, or if the contribution is a specific model and empirical evaluation, it may be necessary to either make it possible for others to replicate the model with the same dataset, or provide access to the model. In general. releasing code and data is often one good way to accomplish this, but reproducibility can also be provided via detailed instructions for how to replicate the results, access to a hosted model (e.g., in the case of a large language model), releasing of a model checkpoint, or other means that are appropriate to the research performed.
        \item While NeurIPS does not require releasing code, the conference does require all submissions to provide some reasonable avenue for reproducibility, which may depend on the nature of the contribution. For example
        \begin{enumerate}
            \item If the contribution is primarily a new algorithm, the paper should make it clear how to reproduce that algorithm.
            \item If the contribution is primarily a new model architecture, the paper should describe the architecture clearly and fully.
            \item If the contribution is a new model (e.g., a large language model), then there should either be a way to access this model for reproducing the results or a way to reproduce the model (e.g., with an open-source dataset or instructions for how to construct the dataset).
            \item We recognize that reproducibility may be tricky in some cases, in which case authors are welcome to describe the particular way they provide for reproducibility. In the case of closed-source models, it may be that access to the model is limited in some way (e.g., to registered users), but it should be possible for other researchers to have some path to reproducing or verifying the results.
        \end{enumerate}
    \end{itemize}

\item {\bf Open access to data and code}
    \item[] Question: Does the paper provide open access to the data and code, with sufficient instructions to faithfully reproduce the main experimental results, as described in supplemental material?
    \item[] Answer: \answerNA{} 
    \item[] Justification: The paper does not include experiments requiring code.
    \item[] Guidelines:
    \begin{itemize}
        \item The answer NA means that paper does not include experiments requiring code.
        \item Please see the NeurIPS code and data submission guidelines (\url{https://nips.cc/public/guides/CodeSubmissionPolicy}) for more details.
        \item While we encourage the release of code and data, we understand that this might not be possible, so “No” is an acceptable answer. Papers cannot be rejected simply for not including code, unless this is central to the contribution (e.g., for a new open-source benchmark).
        \item The instructions should contain the exact command and environment needed to run to reproduce the results. See the NeurIPS code and data submission guidelines (\url{https://nips.cc/public/guides/CodeSubmissionPolicy}) for more details.
        \item The authors should provide instructions on data access and preparation, including how to access the raw data, preprocessed data, intermediate data, and generated data, etc.
        \item The authors should provide scripts to reproduce all experimental results for the new proposed method and baselines. If only a subset of experiments are reproducible, they should state which ones are omitted from the script and why.
        \item At submission time, to preserve anonymity, the authors should release anonymized versions (if applicable).
        \item Providing as much information as possible in supplemental material (appended to the paper) is recommended, but including URLs to data and code is permitted.
    \end{itemize}

\item {\bf Experimental Setting/Details}
    \item[] Question: Does the paper specify all the training and test details (e.g., data splits, hyperparameters, how they were chosen, type of optimizer, etc.) necessary to understand the results?
    \item[] Answer: \answerNA{} 
    \item[] Justification: The paper does not include experiments.
    \item[] Guidelines:
    \begin{itemize}
        \item The answer NA means that the paper does not include experiments.
        \item The experimental setting should be presented in the core of the paper to a level of detail that is necessary to appreciate the results and make sense of them.
        \item The full details can be provided either with the code, in appendix, or as supplemental material.
    \end{itemize}

\item {\bf Experiment Statistical Significance}
    \item[] Question: Does the paper report error bars suitably and correctly defined or other appropriate information about the statistical significance of the experiments?
    \item[] Answer: \answerNA{} 
    \item[] Justification: The paper does not include experiments.
    \item[] Guidelines:
    \begin{itemize}
        \item The answer NA means that the paper does not include experiments.
        \item The authors should answer "Yes" if the results are accompanied by error bars, confidence intervals, or statistical significance tests, at least for the experiments that support the main claims of the paper.
        \item The factors of variability that the error bars are capturing should be clearly stated (for example, train/test split, initialization, random drawing of some parameter, or overall run with given experimental conditions).
        \item The method for calculating the error bars should be explained (closed form formula, call to a library function, bootstrap, etc.)
        \item The assumptions made should be given (e.g., Normally distributed errors).
        \item It should be clear whether the error bar is the standard deviation or the standard error of the mean.
        \item It is OK to report 1-sigma error bars, but one should state it. The authors should preferably report a 2-sigma error bar than state that they have a 96\% CI, if the hypothesis of Normality of errors is not verified.
        \item For asymmetric distributions, the authors should be careful not to show in tables or figures symmetric error bars that would yield results that are out of range (e.g. negative error rates).
        \item If error bars are reported in tables or plots, The authors should explain in the text how they were calculated and reference the corresponding figures or tables in the text.
    \end{itemize}

\item {\bf Experiments Compute Resources}
    \item[] Question: For each experiment, does the paper provide sufficient information on the computer resources (type of compute workers, memory, time of execution) needed to reproduce the experiments?
    \item[] Answer: \answerNA{} 
    \item[] Justification: The paper does not include experiments.
    \item[] Guidelines:
    \begin{itemize}
        \item The answer NA means that the paper does not include experiments.
        \item The paper should indicate the type of compute workers CPU or GPU, internal cluster, or cloud provider, including relevant memory and storage.
        \item The paper should provide the amount of compute required for each of the individual experimental runs as well as estimate the total compute. 
        \item The paper should disclose whether the full research project required more compute than the experiments reported in the paper (e.g., preliminary or failed experiments that didn't make it into the paper). 
    \end{itemize}
    
\item {\bf Code Of Ethics}
    \item[] Question: Does the research conducted in the paper conform, in every respect, with the NeurIPS Code of Ethics \url{https://neurips.cc/public/EthicsGuidelines}?
    \item[] Answer: \answerYes{} 
    \item[] Justification: The paper does not include experiments.
    \item[] Guidelines:
    \begin{itemize}
        \item The answer NA means that the authors have not reviewed the NeurIPS Code of Ethics.
        \item If the authors answer No, they should explain the special circumstances that require a deviation from the Code of Ethics.
        \item The authors should make sure to preserve anonymity (e.g., if there is a special consideration due to laws or regulations in their jurisdiction).
    \end{itemize}

\item {\bf Broader Impacts}
    \item[] Question: Does the paper discuss both potential positive societal impacts and negative societal impacts of the work performed?
    \item[] Answer: \answerNA{} 
    \item[] Justification: There is no societal impact of the work performed.
    \item[] Guidelines:
    \begin{itemize}
        \item The answer NA means that there is no societal impact of the work performed.
        \item If the authors answer NA or No, they should explain why their work has no societal impact or why the paper does not address societal impact.
        \item Examples of negative societal impacts include potential malicious or unintended uses (e.g., disinformation, generating fake profiles, surveillance), fairness considerations (e.g., deployment of technologies that could make decisions that unfairly impact specific groups), privacy considerations, and security considerations.
        \item The conference expects that many papers will be foundational research and not tied to particular applications, let alone deployments. However, if there is a direct path to any negative applications, the authors should point it out. For example, it is legitimate to point out that an improvement in the quality of generative models could be used to generate deepfakes for disinformation. On the other hand, it is not needed to point out that a generic algorithm for optimizing neural networks could enable people to train models that generate Deepfakes faster.
        \item The authors should consider possible harms that could arise when the technology is being used as intended and functioning correctly, harms that could arise when the technology is being used as intended but gives incorrect results, and harms following from (intentional or unintentional) misuse of the technology.
        \item If there are negative societal impacts, the authors could also discuss possible mitigation strategies (e.g., gated release of models, providing defenses in addition to attacks, mechanisms for monitoring misuse, mechanisms to monitor how a system learns from feedback over time, improving the efficiency and accessibility of ML).
    \end{itemize}
    
\item {\bf Safeguards}
    \item[] Question: Does the paper describe safeguards that have been put in place for responsible release of data or models that have a high risk for misuse (e.g., pretrained language models, image generators, or scraped datasets)?
    \item[] Answer: \answerNA{} 
    \item[] Justification: The paper poses no such risks.
    \item[] Guidelines:
    \begin{itemize}
        \item The answer NA means that the paper poses no such risks.
        \item Released models that have a high risk for misuse or dual-use should be released with necessary safeguards to allow for controlled use of the model, for example by requiring that users adhere to usage guidelines or restrictions to access the model or implementing safety filters. 
        \item Datasets that have been scraped from the Internet could pose safety risks. The authors should describe how they avoided releasing unsafe images.
        \item We recognize that providing effective safeguards is challenging, and many papers do not require this, but we encourage authors to take this into account and make a best faith effort.
    \end{itemize}

\item {\bf Licenses for existing assets}
    \item[] Question: Are the creators or original owners of assets (e.g., code, data, models), used in the paper, properly credited and are the license and terms of use explicitly mentioned and properly respected?
    \item[] Answer: \answerNA{} 
    \item[] Justification: The paper does not use existing assets.
    \item[] Guidelines:
    \begin{itemize}
        \item The answer NA means that the paper does not use existing assets.
        \item The authors should cite the original paper that produced the code package or dataset.
        \item The authors should state which version of the asset is used and, if possible, include a URL.
        \item The name of the license (e.g., CC-BY 4.0) should be included for each asset.
        \item For scraped data from a particular source (e.g., website), the copyright and terms of service of that source should be provided.
        \item If assets are released, the license, copyright information, and terms of use in the package should be provided. For popular datasets, \url{paperswithcode.com/datasets} has curated licenses for some datasets. Their licensing guide can help determine the license of a dataset.
        \item For existing datasets that are re-packaged, both the original license and the license of the derived asset (if it has changed) should be provided.
        \item If this information is not available online, the authors are encouraged to reach out to the asset's creators.
    \end{itemize}

\item {\bf New Assets}
    \item[] Question: Are new assets introduced in the paper well documented and is the documentation provided alongside the assets?
    \item[] Answer: \answerNA{} 
    \item[] Justification: The paper does not release new assets.
    \item[] Guidelines:
    \begin{itemize}
        \item The answer NA means that the paper does not release new assets.
        \item Researchers should communicate the details of the dataset/code/model as part of their submissions via structured templates. This includes details about training, license, limitations, etc. 
        \item The paper should discuss whether and how consent was obtained from people whose asset is used.
        \item At submission time, remember to anonymize your assets (if applicable). You can either create an anonymized URL or include an anonymized zip file.
    \end{itemize}

\item {\bf Crowdsourcing and Research with Human Subjects}
    \item[] Question: For crowdsourcing experiments and research with human subjects, does the paper include the full text of instructions given to participants and screenshots, if applicable, as well as details about compensation (if any)? 
    \item[] Answer: \answerNA{} 
    \item[] Justification: The paper does not involve crowdsourcing nor research with human subjects.
    \item[] Guidelines:
    \begin{itemize}
        \item The answer NA means that the paper does not involve crowdsourcing nor research with human subjects.
        \item Including this information in the supplemental material is fine, but if the main contribution of the paper involves human subjects, then as much detail as possible should be included in the main paper. 
        \item According to the NeurIPS Code of Ethics, workers involved in data collection, curation, or other labor should be paid at least the minimum wage in the country of the data collector. 
    \end{itemize}

\item {\bf Institutional Review Board (IRB) Approvals or Equivalent for Research with Human Subjects}
    \item[] Question: Does the paper describe potential risks incurred by study participants, whether such risks were disclosed to the subjects, and whether Institutional Review Board (IRB) approvals (or an equivalent approval/review based on the requirements of your country or institution) were obtained?
    \item[] Answer: \answerNA{} 
    \item[] Justification: The paper does not involve crowdsourcing nor research with human subjects.
    \item[] Guidelines:
    \begin{itemize}
        \item The answer NA means that the paper does not involve crowdsourcing nor research with human subjects.
        \item Depending on the country in which research is conducted, IRB approval (or equivalent) may be required for any human subjects research. If you obtained IRB approval, you should clearly state this in the paper. 
        \item We recognize that the procedures for this may vary significantly between institutions and locations, and we expect authors to adhere to the NeurIPS Code of Ethics and the guidelines for their institution. 
        \item For initial submissions, do not include any information that would break anonymity (if applicable), such as the institution conducting the review.
    \end{itemize}

\end{enumerate}

\end{document}